\newtheorem{conj}{Conjecture}
\newtheorem{thm}{Theorem}
\newtheorem{rmk}{Remark}
\newtheorem{lemma}{Lemma}
\newtheorem*{lemmau}{Lemma}
\newtheorem{defn}[conj]{Definition}
\newtheorem{coro}{Corollary}
\newtheorem{assumption}{Assumption}
\newcommand{\f}[1]{\boldsymbol{#1}}
\newcommand{\bb}[1]{\mathbb{#1}}
\newcommand{\fl}[1]{\mathbf{#1}}
\newcommand{\ca}[1]{\mathcal{#1}}
\newcommand{\s}[1]{\mathsf{#1}}
\newcommand{\lr}[1]{\left|\left|#1\right|\right|}
\newcommand{\probuser}{P_{\s{N}}}
\newcommand{\remove}[1]{}
\title{Optimal Algorithms for Latent Bandits with Cluster Structure}
\author{
~~~Soumyabrata~Pal\footnote{S. Pal is with Google Research at Bangalore, INDIA (email: \texttt{soumyabrata@google.com}).}
~~~Arun~Sai~Suggala\footnote{A. Suggala is with Google Research at Bangalore, INDIA (email: \texttt{arunss@google.com}).}
~~~Karthikeyan~Shanmugam\footnote{K. Shanmugam is with Google Research at Bangalore, INDIA (email: \texttt{karthikeyanvs@google.com}).}
~~~Prateek Jain\footnote{P.Jain is with Google Research at Bangalore, INDIA (email: \texttt{prajain@google.com}).} }
\begin{document}

\maketitle
% If your paper is accepted and the title of your paper is very long,
% the style will print as headings an error message. Use the following
% command to supply a shorter title of your paper so that it can be
% used as headings.
%
%\runningtitle{I use this title instead because the last one was very long}

% If your paper is accepted and the number of authors is large, the
% style will print as headings an error message. Use the following
% command to supply a shorter version of the authors names so that
% they can be used as headings (for example, use only the surnames)
%
%\runningauthor{Surname 1, Surname 2, Surname 3, ...., Surname n}

\begin{abstract}
We consider the problem of latent bandits with cluster structure where there are multiple users, each with an associated multi-armed bandit problem. These users are grouped into \emph{latent} clusters such that the mean reward vectors of users within the same cluster are identical. At each round, a user, selected uniformly at random, pulls an arm and observes a corresponding noisy reward. The goal of the users is to maximize their cumulative rewards. This problem is central to practical recommendation systems and has received wide attention of late \cite{gentile2014online, maillard2014latent}. Now, if each user acts independently, then they would have to explore each arm independently and a regret of $\Omega(\sqrt{\s{MNT}})$ is unavoidable, where  $\s{M}, \s{N}$ are the number of arms and users, respectively. Instead, we propose LATTICE (Latent bAndiTs via maTrIx ComplEtion) which allows exploitation of the latent cluster structure to provide the minimax optimal regret of $\widetilde{O}(\sqrt{(\s{M}+\s{N})\s{T}})$, when the number of clusters is $\Tilde{O}(1)$. This is the first algorithm to guarantee such strong regret bound. LATTICE is based on a careful exploitation of arm information within a cluster while simultaneously clustering users. Furthermore, it is  computationally efficient and requires only $O(\log{\s{T}})$ calls to an offline matrix completion oracle across all $\s{T}$ rounds. 
\end{abstract}

\section{INTRODUCTION}

Bandit optimization is a very general framework for sequential decision making when the dynamics of the underlying environment are unknown a priori. It has been well studied over the past few decades, and has shown great empirical success in areas including ad placement, clinical trials~\citep{lattimore2020bandit, mate2022field}. Such standard bandit methods often assume the decision maker has access to the entire user context or user state. However, this assumption rarely holds in practice. For example, in movie-recommendation scenarios, users can be clustered according to their taste in movies, but the observed features like user's demographic information might only be a noisy indicator of their taste. Similarly, in educational settings, the true cognitive state of a user is unknown. Instead we only get to observe a noisy estimate of the cognitive state through assessments. %in healthcare settings, we only get to observe certain vital signs of the patient, and not the true internal state. In movie recommendations, we do not always have access to the user demographics and preferences. The missing data in both these settings heavily influences the users response to recommendations. Moreover, explicitly accounting for this latent information in the bandit framework can significantly improve the quality of recommendations. 
This shows that in practice, we need to optimize reward in the presence of observed as well as unobserved/latent context. Naturally, one option is to completely ignore the latent context as it can be thought of as part of the reward structure itself, but that generally leads to a significant increase in the sample complexity compared to the scenario when the latent structure is known a priori.

Several recent works have considered bandit optimization frameworks that explicitly factor in the latent data, and designed algorithms to maximize the cumulative rewards provided by the environment~\citep{maillard2014latent, hong2020latent, zhou2016latent}. However, as detailed below, even for simple latent structure like cluster of users, these papers either require strong assumptions or require additional side information, both of which are impractical. %most of theFor example,  \cite{maillard2014latent} provided a UCB style method under the assumption that the users can be partitioned into latent clusters, with similar reward distribution for users in the same cluster. However, the paper do not provide any rigorous guarantees for the method in the general setting. \cite{hong2020latent} considered more general latent structures, and provided a UCB style algorithm for regret minimization. But, the algorithm requires access to offline data for estimating the latent states. 

In this work, we consider the problem of multi-user multi-armed bandits with latent clusters (MAB-LC). This is a simple yet powerful setting that captures several practically important multi-user scenarios like recommendation systems, and was introduced in \cite{maillard2014latent}. Let there be $\s{N}$ users, $\s{M}$ arms and $\s{T}$ rounds ($\s{N}, \s{M} \approx 10^6$  in recommendation systems such as Youtube). The $\s{N}$ users can be partitioned into $\s{C}$ latent clusters where users in the same cluster have identical reward distributions; in other words, users in the same cluster have similar preferences for arms. In every round, one of the users, sampled uniformly at random, pulls one of the arms and obtains certain feedback. The goal of the decision maker is to maximize the cumulative reward of all the users. This problem was first introduced and studied in \cite{maillard2014latent} who provided theoretical guarantees for certain special settings (such as known cluster rewards or known cluster assignments) but not for the general problem. \cite{gentile2014online,li2016collaborative,li2019improved,gentile2017context,qi2022neural} considered a contextual bandit variant of MAB-LC, which is a generalization of our setting. But, most of the existing methods either provide sub-optimal regret bounds, or require strong assumptions on context vectors that might not hold in practice. \cite{hong2020latent} considered more general latent structures than the cluster structure we consider in this work, but required access to offline data for estimating the latent states. 
In another line of related work, \cite{jain2022online} studied the online low rank matrix completion  problem (a generalization of our setting). \cite{jain2022online} could only obtain minimax optimal regret bounds in the  special case of rank-$1$ setting. Obtaining optimal regret for the general rank-$r$ problem is still an open problem~\citep{jain2022online}. To summarize, while the MAB-LC problem  has been widely studied, to the best of our knowledge, designing an efficient method with nearly optimal regret bound is open. %have been various studies for this problem, the general problem  problem has been widely studied, to the best of our knowledge, a rigorous method with nearly-optimal regret bound  for the general problem %This shows that the problem of designing regret optimal algorithms for MAB-LC is pretty much unsolved.

%-- which boils down to restricting number of clusters to $2$ in our general problem. 

%Our results resolve this partially for certain useful instances of the general low rank problem; note  the $\s{N}\times\s{M}$ reward matrix has rank atmost $\s{C}$. We also extend our guarantees for when the reward matrix is low rank but users in same cluster have mean reward vectors that are not same but entry-wise close. 

% and another partially in \cite{jain2022online}
Before moving ahead, it is instructive to consider two hypothetical scenarios that illustrate the complexity of MAB-LC. If the cluster assignment (\emph{i.e.,} mapping between users and clusters) is known, we could have treated  users within a cluster as a single super-user and solved a separate multi-armed bandit for each super-user. This leads to a regret of $\widetilde{O}(\sqrt{\s{MCT}})$, which is minimax optimal. On the other hand, suppose the cluster assignment is unknown but the reward distributions of arms in a cluster are known. Then we could have played a separate multi-armed bandit problem for each user with the best arm from each cluster as a candidate arm. This leads to a regret of $\widetilde{O}(\sqrt{\s{NCT}})$, which is minimax optimal. However, in MAB-LC, both cluster assignment and reward distributions are unknown. Consequently, the \textit{reference} regret guarantee that one can hope to achieve is $O(\sqrt{(\s{M}+\s{N})\s{CT}})$. 
%To the best of our understanding, such guarantees do not exist in prior literature.

In this work, we propose a novel algorithm (LATTICE) for the problem of MAB-LC that achieves the above  reference regret bound. The key challenge  in solving this problem is to simultaneously cluster users, and quickly identify the optimal arms within each cluster. LATTICE addresses these problems using two key insights: (a) \textbf{(user clustering)} it uses low-rank matrix completion as an algorithmic tool to cluster users, and (b) \textbf{(arm elimination)} within each identified cluster, it discards sub-optimal arms by a careful exploitation of the  accrued arm information. LATTICE runs in phases and performs both user clustering and arm elimination in each phase. Computationally, our algorithm is efficient and requires only $O(\log{\s{T}})$ calls to an offline matrix completion oracle across all $\s{T}$ rounds. 
Furthermore, under certain incoherence assumptions on the user-arm matrix, we show that our algorithm achieves the minimax optimal regret of $\widetilde{O}(\sqrt{(\s{M}+\s{N})\s{T}})$, when $\s{C} = \widetilde{O}(1).$ 
We note the incoherence assumptions seems unavoidable for statistical recovery with partial observations. In addition to minimax optimal bounds, we also derive distribution-dependent regret bounds for our algorithm that inversely depend on the minimum gap between mean rewards of arms thus obtaining the optimal scaling with respect to gaps. %which show faster convergence when the arm rewards are well separated. 

We also consider a more general and practical setting where we relax the cluster definition as follows: (a) for any two users in the same cluster, we let their reward vectors be $\nu$ entry-wise close to each other, and (b) for any two users from different clusters, their respective best arm rewards are separated by more than $c \nu$, for some $c>1$. Note that, we don't require large separation between users in the mean rewards of sub-optimal arms across clusters.
We show that a modification of LATTICE obtains similar regret bounds as before in this general setting.

\subsection{Other Related Work}

\noindent \textbf{Contextual Bandits with Latent Structure.}
An extensive line of work \citep{gentile2014online,li2019improved,gentile2017context,li2016collaborative,qi2022neural} studies a variant of MAB-LC where every arm is associated with a context vector of dimension $d$ and expected arm reward for any user in a fixed cluster is a unknown linear function (depending only on the cluster and has unit norm) of the context vector. Importantly, in our setting, the arm contexts are not observed i.e. the context is hidden. In theory, one could apply the results in these works to MAB-LC by associating standard basis vectors to the arms and converting it into a contextual bandit problem. However, such a trivial conversion results in a  highly sub-optimal regret of $\tilde{O}(\sqrt{\s{M}^2\s{CT}} + \s{M}^3\s{N})$ due to a strong singular value assumption (see Appendix \ref{sec:comparison} for a detailed discussion). In other words,
the guarantees in this line of work is only useful when the dimension $d$ is much smaller than the number of arms. Furthermore, these papers also assume that the unknown parameter vectors corresponding to the clusters are significantly separated - this makes clustering easy with a few initial rounds. Importantly, our results/algorithm do not need such a condition - Assumption \ref{assum:matrix} allows the unknown cluster parameters to be as close as possible.
% Almost all the works in this line crucially relied on the assumption that the feature vector of an action is randomly sampled from a probability distribution in each round, which is impractical and doesn't hold in our setting
% where feature vectors for every action in an action is set available to the user and mean rewards come from a linear model that is dependent on the cluster identity of a user.  In the MAB-LC problem we consider feature vectors of actions are uninformative canonical basis vectors and results do not apply. 
In a similar line of work, \cite{zhou2016latent} proposed an explore-then-commit style algorithm, but with sub-optimal regret bound, which in some cases is linear in $\s{T}$. %\cite{gentile2017context} made similar assumptions as \cite{gentile2014online} on the context vectors that are impractical and do not hold for the MAB-LC problem. 
% In a recent work, \cite{qi2022neural} generalized the work of \cite{gentile2014online} by considering neural networks instead of linear models. However, this work has the same drawback (mentioned above) as others in this line.

\noindent \textbf{Online Low rank Matrix Completion (O-LRMC).} This is a more general problem than MAB-LC, but the existing results are significantly sub-optimal. As mentioned earlier, \cite{jain2022online}'s result applies to only rank-$1$ case. For the general rank-$r$ case, which corresponds to our $\s{C}$ clusters, the algorithm can be significantly suboptimal in terms of dependence on $\s{T}$. In a related work, \cite{sen2017contextual} studied an epsilon-greedy algorithm, and derived sub-optimal distribution-dependent bounds scaling inversely in the square of the gap between mean rewards.  In addition, their distribution-free regret bounds have sub-optimal dependence in $\s{T}$; $\s{T}^{2/3}$ instead of $\sqrt{\s{T}}$ provided by our method. \cite{dadkhahi2018alternating} provided an online alternating minimization heuristic for the general rank-$r$ problem, but do not provide any regret bounds. %This is a more general structure than the cluster structure we impose on the user-arm matrix. The closest work to our setting is in \cite{jain2022online,sen2017contextual} which have already been discussed. In a similar setting, \cite{sen2017contextual} only analyzes a sub-optimal greedy gorithm in the general rank-$r$ problem.  
 In a separate line of work, \citep{kveton2017stochastic,trinh2020solving,katariya2017bernoulli,hao2020low,jun2019bilinear,huang2020online,lu2021low} study a similar low-rank reward matrix setting but they consider a significantly easier objective of identifying the largest entry in the entire reward matrix/tensor instead of finding the most rewarding arms for each user/agent. 
 
\noindent \textbf{Online Collaborative Filtering.} A closely related line of work studies the user-based online Collaborative Filtering (CF) \cite{Bresler:2014,Bresler:2016,Heckel:2017,Mina2019,huleihel2021learning}. 
These works study the MAB-LC problem with the additional constraint that the same arm cannot be pulled by any particular user more than once. While this model is strictly more restricted than MAB-LC, no theoretical bounds are known on the regret in this setting. Instead, these works minimize pseudo-regret: assuming the mean rewards or arms lie in $[0,1]$, these works aim to maximize the number of arms pulled with reward more than $1/2$. We note that this is a simpler metric than cumulative regret because maximizing the latter requires identifying the best arm, whereas maximizing the former only requires identifying arms with reward more than $1/2$. 

\section{PROBLEM FORMULATION}\label{sec:problem}

\noindent \textbf{Notations:} We write $[n]$ to denote the set $\{1,2,\dots,n\}$.
For a matrix $\fl{A}\in \bb{R}^{m \times n}$, we will write $\fl{A}_i,\fl{A}_{\mid i}$ to denote the $i^{\s{th}}$ row and column of matrix $\fl{A}$ respectively. We will write $\fl{A}_{ij}$ to denote the entry of the matrix $\fl{A}$ in the $i^{\s{th}}$ row and $j^{\s{th}}$ column. We will write $\lr{\fl{A}}_{\infty}=\max_{ij}\left|\fl{A}_{ij}\right|$ to denote the largest entry of the matrix $\fl{A}$. For a subset $\ca{S}\subseteq [m]$ of indices, we will write $\fl{A}_{\ca{S}}$ and $\fl{A}_{\mid \ca{S}}$ to denote the sub-matrix of $\fl{A}$ restricted to the rows in $\ca{S}$ and columns in $\ca{S}$ respectively. Extending the above notations, $\fl{A}_{i \mid \ca{S}}$ denotes the $i^{\s{th}}$ row of $\fl{A}$ restricted to the columns in $\ca{S}$; $\fl{A}_{\ca{S},\ca{S}'}$ denotes the sub-matrix of $\fl{A}$ restricted to the rows in $\ca{S}$ and columns in $\ca{S}'$. We write $\fl{e}_i$ to denote the $i^{\s{th}}$ standard basis vector that is zero everywhere except in the $i^{\s{th}}$ position where it has a \texttt{1}. $\widetilde{O}(\cdot)$ notation hides logarithmic factors.

 Consider a multi-user multi-armed bandit (MAB) problem where we have a set of $\s{M}$ arms (denoted by the set $[\s{M}]$), $\s{N}$ users (denoted by the set $[\s{N}]$) and $\s{T}$ rounds. In each round, a user $u(t)$ is sampled independently from a distribution $\probuser$ over $[\s{N}]$ (for much of the paper, we assume $\probuser$ is the uniform distribution). The sampled user $u(t)$ pulls an arm $\rho(t)$ from the set $[\s{M}]$ and receives a reward $\fl{R}^{(t)}$, s.t., 
\begin{align}\label{eq:obs}
    \fl{R}^{(t)} = \fl{P}_{u(t)\rho(t)}+\eta^{(t)}
\end{align}
where $\eta^{(t)}$ denotes the additive noise that is added to each observation. We will assume that the noise sequence $\{\eta^{(t)}\}_{t \in [\s{T}]}$ is composed of i.i.d zero-mean sub-Gaussian
random variables with variance proxy at most $\sigma^2>0$. Also, $\fl{P}\in \bb{R}^{\s{N}\times \s{M}}$ is the user-arm reward matrix. We study the MAB-LC problem under two assumptions on the reward matrix $\fl{P}$. 

 \noindent \textbf{Cluster Structure ($\s{CS}$).} 
 %The goal of the users is to be collaborative\footnote{The users are allowed to share their entire history of arm pulls and rewards with other users} among themselves so as to maximize their cumulative reward (minimize their cumulative regret).  Of course, the collaboration is useful only if their is some structure across the MAB instances of users. Two particular forms of structure that arise naturally in practice are cluster and low-rank structures. In this work, we study the former.
Here, we assume the set of $\s{N}$ users can be partitioned into $\s{C}$ unknown clusters $\ca{C}^{(1)},\ca{C}^{(2)},\dots,\ca{C}^{(\s{C}) }$. Furthermore, 
%Consider the unknown mean reward matrix $\fl{P}\in \bb{R}^{\s{N}\times \s{M}}$ associated with the MAB-LC problem. In other words, the expected reward obtained by user $i\in [\s{N}]$ on pulling arm $j\in [\s{M}]$ is $\fl{P}_{ij}$. Here, we assume that the set of $\s{N}$ users can be partitioned into $\s{C}$ unknown clusters $\ca{C}^{(1)},\ca{C}^{(2)},\dots,\ca{C}^{(\s{C}) }$. 
in a particular cluster $\ca{C}^{(i)}$ for any $i\in [\s{C}]$, each user $u\in \ca{C}^{(i)}$ has an identical reward vector $\fl{P}_u$. Let $\fl{X}\in \bb{R}^{\s{C}\times \s{M}}$ be the sub-matrix of $\fl{P}$ that has the distinct rows of $\fl{P}$ corresponding to each cluster. %Another interpretation of the setting is the following: $\fl{P}$ is a rank-$\s{C}$ matrix such that $\fl{P}_{ij} \triangleq \langle \f{u}^{(i)}, \f{v}^{(j)} \rangle$ where $\fl{u}^{(i)}, \fl{v}^{(j)}\in \bb{R}^{\s{C}}$ denotes a latent embedding of the $i^{\s{th}}$ user and $j^{\s{th}}$ arm respectively. The cluster structure implies  that $\fl{u}^{(i)}$ belongs to the set of standard basis vectors $\{\fl{e}_i\}_{i \in [\s{C}]}$ spanning $\bb{R}^{\s{C}}$.
% Here, we assume the mean reward vectors of users can be grouped into clusters. Each user $i \in [\s{N}]$ is described by a latent vector embedding $\fl{u}^{(i)} \in \bb{R}^{\s{C}},$ where $\s{C} \ll \min(\s{M},\s{N})$ is known. Similarly, each arm $j \in [\s{M}]$ is described by a latent vector embedding $\fl{v}^{(j)} \in \bb{R}^{\s{C}}$. We define the rank $\s{C}$ reward matrix $\fl{P}\in \bb{R}^{\s{N}\times \s{M}}$ such that $\fl{P}_{ij} \triangleq \langle \f{u}^{(i)}, \f{v}^{(j)} \rangle$ which is unknown. We will further assume that $\fl{u}^{(i)}$ belongs to the set of standard basis vectors $\{\fl{e}_i\}_{i \in [\s{C}]}$ spanning $\bb{R}^{\s{C}}$.
% We will also call $\fl{U}\in \bb{R}^{\s{N}\times \s{C}}$ the user matrix where the $i^{\s{th}}$ row is $\fl{u}^{(i)}$. Similarly, we will also call $\fl{V}\in \bb{R}^{\s{M}\times \s{C}}$ the arm matrix where the $i^{\s{th}}$ row is $\fl{v}^{(i)}$. We will denote the set of vectors comprising the rows of the matrix $\fl{V}$ by $\s{row}(\fl{V})$. Clearly $\fl{P}=\fl{U}\fl{V}^{\s{T}}$ is a low-rank matrix i.e. $\s{rank}(\fl{P}) \le \s{C}$. 
% Another interpretation of the aforementioned assumption is the following; 
$\tau:=\max_{i,j\in [\s{C}]} |\ca{C}^{(i)}|/ |\ca{C}^{(j)}|$ denotes the ratio of the maximum and minimum cluster size. For each user $u$, $\pi_u:[\s{M}]\rightarrow [\s{M}]$ denotes a permutation that sorts the arms in descending order of their reward for user $u$, i.e., $\fl{P}_{u\pi_u(i)}\ge \fl{P}_{u\pi_u(j)}$ for $i\le j$. Also, $\pi_u(1)\triangleq \s{argmax}_{j \in [\s{M}]} \fl{P}_{uj}$ is the arm with the highest reward for user $u\in [\s{N}]$.

%($\s{RCS}$)
\noindent \textbf{Relaxed Cluster Structure ($\s{RCS}$):} Here, we relax the cluster definition so that the users in the same cluster might not have identical reward vectors. That is, the assumption is that the set of $\s{N}$ users can be partitioned into $\s{C}$ clusters $\ca{C}^{(1)},\ca{C}^{(2)},\dots,\ca{C}^{(\s{C})}$ s.t. the following holds for some known $\nu>0$: 1) For any two users $u,v$ in the {\em same cluster}, $\pi_u(1)=\pi_v(1)$ and $\left|\left|\fl{P}_{u}-\fl{P}_{v}\right|\right|_{\infty} \le \nu$, 2) For any two users $u,v$ in different clusters, we will have either
    $\left|\fl{P}_{u\pi_u(1)}-\fl{P}_{v\pi_u(1)}\right| > 20\nu$ or $\left|\fl{P}_{u\pi_v(1)}-\fl{P}_{v\pi_v(1)}\right| > 20\nu$. Note that the $\s{CS}$ structure is a special case of $\s{RCS}$ with $\nu=0$.  

\begin{rmk}
The constant $20$ in $\s{RCS}$ model formulation is arbitrary and can be replaced by any constant $>1$.
\end{rmk}

Thus, in the $\s{RCS}$ model formulation, users in the same cluster have the same best arm and the reward vectors are entry-wise close; users in different cluster have rewards corresponding to one of the best arms to be separated.

%In both $\s{CS}$ and $\s{RCS}$ settings, we consider the following model: in each round $t\in [\s{T}]$, one of the users $u(t)$ is chosen independently and uniformly at random from the entire set of $\s{N}$ users. The chosen user $u(t)$ then plays one of the arms $\rho(t)$ in the set $[\s{M}]$ and obtains a reward denoted by the random variable $\fl{R}^{(t)}$ such that 
Now, the goal is to minimize the regret assuming either $\s{CS}$ or $\s{RCS}$ structure on the reward matrix $\fl{P}$: %The objective in this setting is to design an algorithm so that the expected regret defined as
\begin{align}
\s{Reg}(\s{T})\triangleq \bb{E} \Big(\sum_{t\in [\s{T}]}\fl{P}_{u(t)\pi_{u(t)}(1)}- \sum_{t\in[\s{T}]}\fl{P}_{u(t)\rho(t)}\Big).\label{eqn:regretExpect}
\end{align}
Here the expectation is over the randomness in the algorithm and the sampled users. 

\begin{rmk}\label{rmk:trivial}
Note that a trivial approach is to treat each user as a separate multi-armed bandit problem. Such a strategy does not utilize the low rank structure and leads to a regret of $O(\sqrt{\s{MNT}})$ assuming $\lr{\fl{P}}_{\infty},\sigma=O(1)$. Another trivial approach is to recommend random arms to each user (exploration) and subsequently use low rank matrix completion guarantees to estimate $\fl{P}$ and exploit. This will lead to a regret guarantee of $O(\s{(M+N)}^{1/3}\s{T}^{2/3})$ \citep{jain2022online}.
The goal is to obtain a significantly smaller regret guarantee of $O(\sqrt{\s{(M+N)T}})$ with $\s{C}=O(1)$. 
%That is, we pay only $\sqrt{\s{N}\s{T}}$ additional regret for the latent cluster structure instead of a priori knowledge of the cluster structure.
\end{rmk}

\section{PRELIMINARIES}\label{sec:prelims}\label{subsec:mc}

%\subsection{Warm-up (Known Distributions)}

%\subsection{Low Rank Matrix Completion}

As mentioned earlier, the key algorithmic tool that we use is low rank matrix completion - a statistical estimation problem where the goal is to recover a low rank matrix from partially observed randomly sampled entries. Since the reward matrix $\fl{P}$ in MAB-LC is low rank, our strategy is to call the offline matrix completion oracle for relevant sub-matrices of $\fl{P}$ after we accumulate a sufficient number of random observations in each sub-matrix. In this work, we are interested in low rank matrix completion with non-trivial entry-wise guarantees that has been studied recently in \cite{chen2019noisy,abbe2020entrywise}. Below, we state a low rank matrix completion result that is adapted from \cite{jain2022online} which is in turn obtained with minor modifications from \cite{chen2019noisy}[Theorem 1] and is more suited to our setting:

\begin{lemma}[Lemma 2  in \cite{jain2022online}]\label{lem:min_acc}
Consider rank $\s{C}=O(1)$ reward matrix $\fl{P}\in \bb{R}^{\s{N} \times \s{M}}$ with SVD decomposition $\fl{P}=\fl{\bar{U}}\f{\Sigma}\fl{\bar{V}}^{\s{T}}$ satisfying $\|\fl{\bar{U}}\|_{2,\infty}\le \sqrt{\mu \s{C}/\s{N}}, \|\fl{\bar{V}}\|_{2,\infty}\le \sqrt{\mu \s{C}/\s{M}}$ and condition number $\kappa = O(1)$. Let $d_1=\max(\s{N},\s{M})$, $d_2=\min(\s{N},\s{M})$, and let $p$ be such that  
 $1 \ge p \ge c\mu^2d_2^{-1} \log^3 d_1$ for some constant $c>0$. 
% Suppose we observe noisy entries of $\fl{P}$ according to observation model in (\ref{eq:obs}).
For any positive integer $s>0$ satisfying $\frac{\sigma}{\sqrt{s}}=O\Big(\sqrt{\frac{pd_2}{\mu^3\log d_2}}\|\fl{P}\|_{\infty}\Big)$, Algorithm \ref{algo:estimate} with input $s,p,\sigma$ that uses  $m=\widetilde{O}\Big(sp\s{MN}\Big)$ observations to output $\widehat{\fl{P}}\in \bb{R}^{\s{N} \times \s{M}}$ for which, with probability at least $1-\widetilde{O}(\delta)$, we have 
\begin{align*}
\tiny
   \| \fl{P} - \widehat{\fl{P}}\|_{\infty} = O\left(\frac{\sigma  \sqrt{\mu^3\log d_1}}{\sqrt{spd_2}}\right)=\widetilde{O}\left(\frac{\sigma\sqrt{\s{MN}\mu^3}}{\sqrt{md_2}}\right).
\end{align*}
\end{lemma}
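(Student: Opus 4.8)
The plan is to reduce the statement to the offline entry-wise noisy matrix completion guarantee of \cite{chen2019noisy}[Theorem 1], which can be invoked as a black box once the observation model is massaged into the i.i.d.\ Bernoulli-sampling-with-sub-Gaussian-noise form that theorem assumes. The only genuinely new ingredient relative to the cited offline result is the parameter $s$, which I would interpret as a \emph{repeated-sampling / variance-reduction} device: Algorithm \ref{algo:estimate} draws a random mask $\Omega$ in which each of the $\s{MN}$ entries is included independently with probability $p$, and for every included entry it collects $s$ independent noisy observations and averages them. The budget spent is therefore $|\Omega|\cdot s$ observations, and since $\bb{E}|\Omega| = p\s{MN}$ this is $\widetilde{O}(sp\s{MN}) = m$ with high probability by a Chernoff bound, matching the claimed observation count.

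First I would analyze the averaged observation matrix. For each sampled entry $(i,j)$, the average of $s$ i.i.d.\ copies of $\fl{P}_{ij} + \eta$ equals $\fl{P}_{ij} + \tilde\eta_{ij}$, where $\tilde\eta_{ij}$ is zero-mean, independent across entries, and sub-Gaussian with variance proxy at most $\sigma^2/s$ (averaging $s$ independent sub-Gaussians shrinks the proxy by a factor $s$). Thus the averaged data is exactly an instance of noisy matrix completion on the rank-$\s{C}$ matrix $\fl{P}$, with Bernoulli$(p)$ sampling and effective noise level $\tilde\sigma \triangleq \sigma/\sqrt{s}$. This single reduction is what lets me hand the problem to the offline oracle.

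Next I would verify that the hypotheses of \cite{chen2019noisy}[Theorem 1] hold for $(\fl{P}, p, \tilde\sigma)$. The incoherence and conditioning requirements are inherited verbatim from the assumptions on $\fl{P}$ (the bounds on $\|\fl{\bar{U}}\|_{2,\infty}$ and $\|\fl{\bar{V}}\|_{2,\infty}$, $\kappa = O(1)$, $\s{C} = O(1)$), so the standard sample-complexity condition of the cited theorem collapses to $p \ge c\mu^2 d_2^{-1}\log^3 d_1$ after absorbing the $O(1)$ factors in $\s{C}$ and $\kappa$. The one condition that actually uses $s$ is the signal-to-noise requirement of the cited theorem, which demands that the per-entry noise be small relative to $\lr{\fl{P}}_{\infty}$ on the observed scale; substituting $\tilde\sigma = \sigma/\sqrt{s}$ turns this into exactly the stated hypothesis $\tfrac{\sigma}{\sqrt{s}} = O\big(\sqrt{\tfrac{pd_2}{\mu^3\log d_2}}\,\lr{\fl{P}}_{\infty}\big)$, so no extra assumption is introduced.

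Finally I would read off the conclusion: the cited theorem produces $\widehat{\fl{P}}$ with entry-wise error $O\big(\tilde\sigma \sqrt{\mu^3\log d_1}/\sqrt{pd_2}\big)$ with probability $1-\widetilde{O}(\delta)$, and plugging $\tilde\sigma = \sigma/\sqrt{s}$ gives the first displayed bound while substituting $sp = m/\s{MN}$ gives the second. The step I expect to be the main obstacle is the translation of the cited theorem's guarantees---which are phrased in terms of $\sigma_{\min}$, $\sigma_{\max}$, and \emph{relative} entry-wise error---into the absolute, $\lr{\fl{P}}_{\infty}$- and dimension-based form used here. This bookkeeping relies on the incoherence chain $\lr{\fl{P}}_{\infty} \le \|\fl{\bar{U}}\|_{2,\infty}\,\sigma_{\max}\,\|\fl{\bar{V}}\|_{2,\infty} \lesssim \mu\s{C}\,\sigma_{\max}/\sqrt{\s{MN}}$ together with $\kappa = O(1)$ (so that $\sigma_{\min}\asymp\sigma_{\max}$) to convert $\sigma_{\min}$-denominators into the $d_2$- and $\mu$-dependent factors above; lining up every power of $\mu$ and each logarithmic factor is the fiddly part, but it is routine given the cited result.
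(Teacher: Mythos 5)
Your high-level reduction is the same one the paper uses: interpret $s$ as an $s$-fold repetition of each sampled entry, note that averaging shrinks the sub-Gaussian variance proxy to $\sigma^2/s$, and hand the averaged Bernoulli-sampled data to the offline entrywise guarantee of \cite{chen2019noisy}. That part, including the Chernoff bound on $|\Omega|$ and the translation of the noise condition into the stated hypothesis on $\sigma/\sqrt{s}$, is correct and matches the paper's Lemma~\ref{lem:generalized_restatement}. However, there are two concrete steps you skip that the paper has to do real work for. First, the offline result you want to invoke (the paper's Lemma~\ref{lem:source2}, i.e.\ Theorem~2 of \cite{chen2019noisy}) is stated for square $d\times d$ matrices; it does not apply verbatim to an $\s{N}\times\s{M}$ matrix with arbitrary aspect ratio. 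Algorithm~\ref{algo:estimate} therefore randomly partitions the columns into $\lceil d_1/d_2\rceil$ groups and completes each roughly square block separately, and the proof (Lemma~\ref{thm:randomsample3}) must then show that each random column-submatrix inherits incoherence and an $O(1)$ condition number — this is done via a matrix Bernstein bound on $\fl{V}_{\s{sub}}^{\s{T}}\fl{V}_{\s{sub}}$, and it is where the extra $(d_1/d_2)^{1/2}$ and $\kappa$ factors in the intermediate bounds come from. Your proposal asserts the hypotheses are "inherited verbatim," which is not true for the sub-blocks the algorithm actually completes.

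Second, the cited offline theorem only succeeds with probability $1-O(d_2^{-c})$ for a fixed constant $c$; it cannot by itself deliver failure probability $\widetilde{O}(\delta)$ for an arbitrary (in the application, $\delta=\s{T}^{-4}$ with $\s{T}$ possibly much larger than any polynomial in $d_2$). This is why Algorithm~\ref{algo:estimate} runs $f=O(\log(\s{MNT}))$ independent copies and returns the entry-wise \emph{median}; the proof then applies a Chernoff bound per entry and a union bound over all $\s{MN}$ entries to amplify the success probability. Your proposal silently upgrades the oracle's $1-O(d^{-3})$ to $1-\widetilde{O}(\delta)$, which is a gap: without the median-of-repetitions argument the stated probability bound does not follow. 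Both omissions are repairable, but they are precisely the parts of the lemma that go beyond a black-box citation.
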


% We now state a corollary that bounds the total number of observations needed to obtain the above guarantee.

% \begin{coro}\label{coro:reco}
% Suppose, at each round $t$, the user $u(t)$ is sampled uniformly at random from the set of users $[\s{N}]$.
% Consider a rank $\s{C}$ reward matrix $\fl{P}$, noise $\sigma^2$ and parameters $s>0$, $0 \le p \le 1$ satisfying the conditions of Lemma \ref{lem:min_acc}. Then, there exists an algorithm with input $s,p,\sigma$ that uses $\widetilde{O}\Big(sp\s{MN}\Big)$ observations in total to recover a matrix $\widehat{\fl{P}}$ for which eq. (\ref{eq:final}) hold 
% with prob. exceeding $ 1-\widetilde{O}(\delta)$. 
% \end{coro}

%\begin{rmk}
%For the exact algorithm used in Lemma \ref{lem:min_acc}, we point the reader to Algorithm \ref{algo:estimate} in Appendix \ref{app:mc} that takes as input $\s{M},\s{N},\s{C},\s{T},\s{\sigma}^2$, an error parameter $\zeta>0$ and sets parameters $s,p$ as in Lemma \ref{lem:min_acc} such that we can obtain $\lr{\fl{P}-\widehat{\fl{P}}}_{\infty} \le \zeta$ with probability at least $1-\s{poly}(\s{T}^{-1})$.
%\end{rmk}

We now introduce a definition characterizing a \textit{nice} subset of users that we often use in the analysis.

\begin{defn}
A subset of users $\ca{S}\subseteq [\s{N}]$ will be called ``nice" if $\ca{S} \equiv \bigcup_{j \in \ca{A}} \ca{C}^{(j)}$ for some $\ca{A}\subseteq [\s{C}]$. In other words, $\ca{S}$ can be represented as the union of some subset of clusters.  
\end{defn}

\section{LATTICE ALGORITHM FOR $\s{CS}$}\label{sec:lattice}

\subsection{Algorithm and Proof Overview}

% Suppose the clusters $\ca{C}^{(1)},\ca{C}^{(2)},\dots,\ca{C}^{(\s{C})}$ are known. In that case, corresponding to each cluster, we are simply solving a single multi-armed bandit problem. Thus, one simple approach in this setting is to run a minimax optimal phased elimination algorithm \cite{lattimore2020bandit}[Ex. 6.8] that is run in phases of exponentially increasing length and in each phase, arms with decreasing sub-optimality gaps are identified and eliminated from consideration. On the other hand, consider a setting where the clusters are unknown but the distributions of the arms are known. In that case, we can again implement a phased algorithm where we gradually cluster the users and converge to pulling the best arm for each user. 

% In the setting where both the clusters and the distribution of the arms are unknown, the LATTICE algorithm combines the ideas presented above and attempts to jointly and iteratively cluster the users and eliminate bad arms for users in the same cluster. 

LATTICE runs in phases of exponentially increasing length.
In each phase, the goal is to divide the set of users into {\em nice} subsets. %maintain a list of subsets of users such that each of the subsets are \textit{nice} and the list of subsets partition the entire set of users. 
Moreover, for each subset of users, we have an active subset of arms that must contain the best arm for all users in the corresponding subset.   
That is, at the start of $\ell^{\s{th}}$ phase, we aim to create a list (of size $a_{\ell}\le \s{C}$) of \textit{nice} subsets of users $\ca{M}^{(\ell)}\equiv\{\ca{M}^{(\ell,i)}\}_{i \in [a_{\ell}]}$ and the corresponding subsets of arms $\ca{N}^{(\ell)}\equiv\{\ca{N}^{(\ell,i)}\}_{i\in [a_{\ell}]}$, s.t.  $\cup_{i \in [a_{\ell}]} \ca{M}^{(\ell,i)}= [\s{N}]$,   
% Since each set in $\ca{M}^{(\ell)}$ is \textit{nice}, for all $i\in [a_{\ell}]$, we will have $\ca{M}^{(\ell,i)}= \bigcup_{j \in \ca{G}^{(\ell,i)}} \ca{C}^{(j)}$ where the sets $\{\ca{G}^{(\ell,1)},\ca{G}^{(\ell,2)},\dots,\ca{G}^{(\ell,a_{\ell})}\}$ form a partition of the set $[\s{C}]$. 
      $\ca{N}^{(\ell,i)} \supseteq \{\pi_u(1) \mid u \in \ca{M}^{(\ell,i)}\}$, and 
\begin{align}\label{eq:gap_main}
    \left|\fl{P}_{u\pi_u(1)}-\min_{j\in \ca{N}^{(\ell,i)}}\fl{P}_{uj}\right| \le \epsilon_{\ell},\  \forall\ u\in \ca{M}^{(\ell,i)}. 
\end{align}
Above, $\epsilon_{\ell}$ is a fixed exponentially decreasing sequence in $\ell$. As we eliminate arms at each phase, the number of user subsets with more than $\gamma\s{C}$ active arms goes on shrinking with each phase. 
Since LATTICE is random, we define event $\ca{E}^{(\ell)}$ to be true if LATTICE maintains a list of user subsets and arm subsets satisfying the above properties.

Now, in round $t$, the sampled user $u(t)$ pulls arm $\rho(t)$ where $\rho(t)$ is sampled from $\ca{N}^{(\ell, i)}$ assuming $|\ca{N}^{(\ell, i)}|\geq \gamma \s{C}$, where $i$ is the index of subset $\ca{M}^{(\ell,i)}$ to which $u(t)$ belongs. If, $|\ca{N}^{(\ell, i)}|< \gamma \s{C}$, then the cluster  structure is ignored and arm $\rho(t)$ is selected from the active set of arms ($\ca{N}^{(\ell, i)}$) as determined by the Upper Confidence Bound (UCB) algorithm. Conditioned on $\ca{E}^{(\ell)}$ being true, our goal is to ensure $\ca{E}^{(\ell+1)}$ with high probability. Due to the arm pull strategy described above, for each subset of users $\ca{M}^{(\ell,i)}$ in $\ca{M}^{(\ell)}$ and their corresponding subset of active arms $\ca{N}^{(\ell,i)}$ (such that $
|\ca{N}^{(\ell,i)}| \ge \gamma\s{C}$) we observe random noisy entries from the sub-matrix $\fl{P}_{\ca{M}^{(\ell,i)},\ca{N}^{(\ell,i)}}$. Subsequently, we use low rank matrix completion  (Step 4 in Alg. \ref{algo:phased_elim}) and Lemma \ref{lem:min_acc} to obtain $ \widetilde{\fl{P}}\in \bb{R}^{\s{N}\times \s{M}}$ such that $\widetilde{\fl{P}}_{\ca{M}^{(\ell,i)},\ca{N}^{(\ell,i)}}$ is an entry-wise good estimate of $\fl{P}_{\ca{M}^{(\ell,i)},\ca{N}^{(\ell,i)}}$, i.e., 
\begin{align}\label{eq:entrywise}
    \lr{\widetilde{\fl{P}}_{\ca{M}^{(\ell,i)},\ca{N}^{(\ell,i)}}-\fl{P}_{\ca{M}^{(\ell,i)},\ca{N}^{(\ell,i)}}}_{\infty} \le \Delta_{\ell+1}
\end{align}
with high probability where $\Delta_{\ell+1}=\epsilon_{\ell+1}/40\s{C}$. We define the event $\ca{E}_2^{(\ell)}$ to be true if eq. (\ref{eq:entrywise}) is satisfied for all relevant sub-matrices. 

Next, conditioning on $\ca{E}_2^{(\ell)}$, consider a subset of users $\ca{M}^{(\ell,i)}$ for which the corresponding subset of active arms is large $|\ca{N}^{(\ell,i)}|\ge \gamma\s{C}$.  For next phase, the intuitive goal is to further partition $\ca{M}^{(\ell,i)}$ into subsets $\{\ca{M}^{(\ell,i,j)}\}_{j}$, each of which is \textit{nice} and find a list of corresponding subsets of active arms $\{\ca{N}^{(\ell,i,j)}\}_{j} \subseteq \ca{N}^{(\ell,i)}$ such that all arms in $\ca{N}^{(\ell,i,j)}$ have high reward (as in eq. \ref{eq:gap_main}) for all users in $\ca{M}^{(\ell,i,j)}$. To do so, for each user in the set $\ca{M}^{(\ell,i)}$, we find a subset of \textit{good arms} among the active arms  $\ca{T}_u^{(\ell)}\subseteq \ca{N}^{(\ell,i)}$ such that 
\small
\begin{align}\label{eq:good_arms}
    \ca{T}^{(\ell)}_u \equiv \{j \in \ca{N}^{(\ell,i)} \mid \max_{j'\in \ca{N}^{(\ell,i)}}\widetilde{\fl{P}}^{(\ell)}_{uj'}-\widetilde{\fl{P}}^{(\ell)}_{uj} \le 2\Delta_{\ell+1}\}
\end{align}
\normalsize
i.e. arms which have a high estimated reward for user $u$.

Subsequently, we design a graph whose nodes are users in $\ca{M}^{(\ell,i)}$ and an edge is drawn between two users $u,v\in \ca{M}^{(\ell,i)}$ if the following conditions are satisfied:
\small
\begin{align}\label{eq:edge}
     &\ca{T}^{(\ell)}_u \cap \ca{T}^{(\ell)}_v \neq \Phi, \left|\fl{\widetilde{P}}^{(\ell)}_{ux}-\fl{\widetilde{P}}^{(\ell)}_{vx}\right|\le 2\Delta_{\ell+1} \forall x\in \ca{N}^{(\ell,i)}
\end{align}
\normalsize
In other words, eq. (\ref{eq:edge}) defines an edge between two users in the same subset if reward estimates of active arms for the two users are close; secondly, there are common arms in their respective set of \textit{good arms} as defined in eq. (\ref{eq:good_arms}). We partition the set of users $\ca{M}^{(\ell,i)}$ into smaller \textit{nice} sets $\{\ca{M}^{(\ell,i,j)}\}$  by considering the connected components of the aforementioned graph and for users in each component $\ca{M}^{(\ell,i,j)}$, the updated trimmed common subset of arms
\begin{align}\label{eq:common_set}
\ca{N}^{(\ell,i,j)}\equiv \bigcup_{u \in \ca{M}^{(\ell,i,j)}} \ca{T}_u^{(\ell)}    
\end{align}
 with high reward is the union of set of good arms for all users in the connected component (see Step 8 in Alg. \ref{algo:phased_elim}). %Thus, for the next phase $\ell+1$, we form the list $\ca{M}^{(\ell+1)}$ such that it comprises of the sets $\ca{M}^{(\ell,i,j)}$ (for which $|\ca{N}^{(\ell,i)}|\ge \gamma\s{C}$ and the corresponding set of good arms $\ca{N}^{(\ell,i,j)}$ is inserted into the list $\ca{N}^{(\ell+1)}$) and the sets $\ca{M}^{(\ell,i)}$ (for which $|\ca{N}^{(\ell,i)}|\le \s{C}$ and the corresponding set of good arms $\ca{N}^{(\ell,i)}$ is inserted into the list $\ca{N}^{(\ell+1)}$). 
 We can show the following crucial and interesting lemma:
 \begin{lemmau}
Fix any $i\in [a_{\ell}]$ such that $\left|\ca{N}^{(\ell,i)}\right|\ge \gamma\s{C}$. Consider two users $u,v\in \ca{M}^{(\ell,i)}$ having a path in the constructed graph. Conditioned on $\ca{E}^{(\ell)},\ca{E}_2^{(\ell)}$, we have 
\begin{align*}
&\max_{x\in \ca{T}^{(\ell)}_u,y\in \ca{T}^{(\ell)}_v}\left|\fl{P}_{ux}-\fl{P}_{uy}\right|\le 32\s{C}\Delta_{\ell+1} \\
&\text{ and } \max_{x\in \ca{T}^{(\ell)}_u,y\in \ca{T}^{(\ell)}_v}\left|\fl{P}_{vx}-\fl{P}_{vy}\right|\le 32\s{C}\Delta_{\ell+1}.   
\end{align*}
\end{lemmau}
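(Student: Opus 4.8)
The plan is to prove the bound in three movements: first convert the graph's edge conditions together with the entrywise completion guarantee $\ca{E}_2^{(\ell)}$ into statements about \emph{true} rewards, then use the nice-set (cluster) structure guaranteed by $\ca{E}^{(\ell)}$ to remove any dependence on the path length, and finally assemble these with the near-optimality of good arms.

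Write $\Delta := \Delta_{\ell+1}$ and $\ca{N} := \ca{N}^{(\ell,i)}$, and for a user $w$ let $m_w := \max_{j \in \ca{N}} \fl{P}_{wj}$ denote its best true reward on the active set. Conditioned on $\ca{E}_2^{(\ell)}$, eq.~(\ref{eq:entrywise}) gives $|\widetilde{\fl{P}}^{(\ell)}_{wj} - \fl{P}_{wj}| \le \Delta$ for every $w \in \ca{M}^{(\ell,i)}$ and $j \in \ca{N}$. I would record two consequences up front. First, by the definition of $\ca{T}^{(\ell)}_w$ in~(\ref{eq:good_arms}), every good arm $x \in \ca{T}^{(\ell)}_w$ satisfies $\fl{P}_{wx} \in [m_w - 4\Delta,\, m_w]$: a good arm is within $2\Delta$ of the estimated max, the estimated max is within $\Delta$ of $m_w$, and passing back to true rewards costs one more $\Delta$. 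Second, the edge condition $|\widetilde{\fl{P}}^{(\ell)}_{wx} - \widetilde{\fl{P}}^{(\ell)}_{w'x}| \le 2\Delta$ for all $x \in \ca{N}$ in~(\ref{eq:edge}) upgrades, by adding the two estimation errors, to $\lr{\fl{P}_w - \fl{P}_{w'}}_{\infty} \le 4\Delta$ on $\ca{N}$. Thus any single edge of the graph forces its two endpoints' true reward vectors to be $4\Delta$-close on the active set.

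The crux is to turn this per-edge estimate into a bound for the endpoints $u,v$ of a \emph{path} without the $4\Delta$ accumulating with the uncontrolled path length. Here I invoke $\ca{E}^{(\ell)}$: the set $\ca{M}^{(\ell,i)}$ is \emph{nice}, hence a union of at most $\s{C}$ clusters, and (under $\s{CS}$) all users in a common cluster have identical reward vectors. I would form an auxiliary graph $H$ whose vertices are the distinct clusters touched by the path, placing an edge between two clusters whenever some edge of the original path joins a user of one to a user of the other; by the previous paragraph each such $H$-edge certifies that the two clusters' common reward vectors are $4\Delta$-close on $\ca{N}$. Since the path connects $u$ to $v$, the clusters of $u$ and $v$ lie in one connected component of $H$, and that component has at most $\s{C}$ vertices, so they are joined in $H$ by at most $\s{C}-1$ edges. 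Chaining the per-edge bounds along this short route (using that within-cluster rewards are identical) yields $\lr{\fl{P}_u - \fl{P}_v}_{\infty} \le 4(\s{C}-1)\Delta$ on $\ca{N}$ --- a bound depending only on the number of clusters, not on how long the path is. This compression is the main obstacle and the only place the latent structure is essential.

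It remains to assemble the pieces. For $x \in \ca{T}^{(\ell)}_u$ the first consequence gives $\fl{P}_{ux} \in [m_u - 4\Delta,\, m_u]$. For $y \in \ca{T}^{(\ell)}_v$, the same consequence applied to $v$ gives $\fl{P}_{vy} \in [m_v - 4\Delta,\, m_v]$, while the cluster-graph bound yields both $|\fl{P}_{uy} - \fl{P}_{vy}| \le 4(\s{C}-1)\Delta$ and $|m_u - m_v| \le 4(\s{C}-1)\Delta$; together these place $\fl{P}_{uy}$ within $(8\s{C}-4)\Delta$ of $m_u$. Subtracting the two intervals bounds $|\fl{P}_{ux} - \fl{P}_{uy}|$ by $(8\s{C}-4)\Delta \le 32\s{C}\Delta$, and the symmetric estimate for user $v$ follows by interchanging the roles of $u$ and $v$, since the cluster-graph bound is symmetric. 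The stated constant $32$ is loose; the essential content is the linear-in-$\s{C}$, path-length-free dependence produced by the compression step.
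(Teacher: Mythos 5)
Your proof is correct, and it reaches the stated bound (indeed a sharper one, $(8\s{C}-4)\Delta_{\ell+1}$) by a somewhat different route than the paper. The paper first proves the single-edge version (its Lemma~\ref{lem:interesting4}, constant $16\Delta_{\ell+1}$) by chaining through an arm $z\in\ca{T}^{(\ell)}_u\cap\ca{T}^{(\ell)}_v$, and then telescopes that combined bound along the path, using the clique property (Lemma~\ref{lem:interesting_clique}) to argue that a shortest path revisits no cluster non-consecutively and hence has at most $2\s{C}$ nodes, giving $16\cdot 2\s{C}\,\Delta_{\ell+1}$. You instead separate the two ingredients: you contract clusters into an auxiliary graph to get the path-length-free estimate $\lr{\fl{P}_u-\fl{P}_v}_\infty\le 4(\s{C}-1)\Delta_{\ell+1}$ on $\ca{N}^{(\ell,i)}$ first, and only then perform the good-arm assembly once at the endpoints. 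Both arguments rest on the same essential insight --- the nice-set structure under $\ca{E}^{(\ell)}$ compresses the effective path length to $O(\s{C})$ --- but your decomposition is cleaner: it never uses the non-empty-intersection half of the edge condition~(\ref{eq:edge}) (so it proves a slightly stronger statement), it avoids carrying the intersection arms $z_i$ through the telescope, and it yields a better constant. The only caution is that your compression step leans on users in a common cluster having \emph{identical} reward vectors, so unlike the paper's telescoping it would need the extra $\nu$-slack tracked explicitly to port to the $\s{RCS}$ setting; for the $\s{CS}$ lemma as stated it is fine.
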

This lemma shows that good arms for one user is good for another if they are connected by a path. If the number of active arms for a subset of users become less than $\gamma\s{C}$, then we start  UCB (Upper Confidence Bound \cite{lattimore2020bandit}[Ch. 7]) for each user in that subset with the active arms until end of algorithm. Therefore, conditioned on $\ca{E}^{(\ell)},\ca{E}^{(\ell)}_2$, the event $\ca{E}^{(\ell+1)}$ is  true w.h.p. Hence, conditioned on $\ca{E}^{(\ell)}$, we can bound the regret in each round of the $\ell^{\s{th}}$ phase by $\epsilon_{\ell}$; roughly speaking, the number of rounds in the $\ell^{\s{th}}$ phase is  $1/\epsilon^2_{\ell}$ and therefore the regret is $1/\epsilon_{\ell}$. By setting $\Delta_{\ell}$ as in Step 3 of Alg. \ref{algo:phased_elim} (and $\epsilon_{\ell}=\Delta_{\ell}/64\s{C}$), we can bound the regret of LATTICE and achieve the guarantee in Theorems \ref{thm:main_LBM} and \ref{thm:main_LBM2}.
\begin{rmk}
The low rank matrix completion oracle is obtained with slight modifications from \cite{jain2022online}. In line 6 of Algorithm~\ref{algo:phased_elim}, we require the matrix completion oracle (Algorithm~\ref{algo:estimate}) to be stateful (i.e., we want it to wait until appropriate data arrives).  This is because vanilla low rank matrix completion results work under the assumption of Bernoulli sampling i.e. each entry in the matrix is observed once with some probability $p$. So to mimic Bernoulli sampling, we require a stateful version of the algorithm. We create a Bernoulli mask at the beginning of the phase and  pull arms to observe only the masked entries in sequence. We also make multiple observations corresponding to the same mask and take the average in each of the mask indices to reduce the variance; similarly, we also compute several estimates of the same matrix with independently sampled mask and take an entry-wise median to reduce error probability. Using these tricks appropriately (see Appendix \ref{app:mc} for a detailed proof and discussion) can allow us to obtain the guarantee in Lemma \ref{lem:min_acc}.  
In between subsequent arrivals of users belonging to a cluster, the algorithm remains stateful and waits at line 8.
\end{rmk}
 
\subsection{Theoretical Guarantees}\label{subsec:theory}

\begin{algorithm*}[t]
\caption{LATTICE (Latent bAndiTs via maTrIx ComplEtion )
\label{algo:phased_elim}}
\begin{algorithmic}[1]
\REQUIRE Number of users $\s{N}$, arms $\s{M}$, clusters $\s{C}$, rounds $\s{T}$, noise $\sigma^2>0$,  $\gamma\ge 1$ and constant $C'\ge 0$. 
%an upper bound $\beta$ on $\max\Big(\left|\frac{\fl{v}_{j_{\max}}}{\fl{v}_{j_{\min}}}\right|,\left|\frac{\fl{v}_{j_{\min}}}{\fl{v}_{j_{\max}}}\right|\Big)$.
\STATE Set $\ca{M}^{(1)} \equiv [\ca{M}^{(1,1)}]$ where $\ca{M}^{(1,1)}=[\s{N}]$ and $\ca{N}^{(1)} \equiv [\ca{N}^{(1,1)}]$ where $\ca{N}^{(1,1)}=[\s{M}]$. Set round index $t$ to be a global parameter; initialize $t=0$. 
%\STATE  Set $\ca{M}^{(1,1)}=\ca{M}^{(1,2)}=\Phi$ and $\ca{B}^{(1)}=[\s{N}]$. Set $\ca{N}^{(1,1)}=\ca{N}^{(1,2)}=\Phi$. Set $f=O(\log (\s{MNT}))$ and suitable constants $a,c,C,C',C_{\lambda}>0$.
%Set  $a=3\max(1+\beta,1+\beta^{-1})$. 

\FOR{$\ell=1,2,\dots,$}

\STATE Set $\Delta_{\ell+1}=C'2^{-\ell}$ for some appropriate $C'>0$. Initialize $\ca{M}^{(\ell+1)}=[]$ and $\ca{N}^{(\ell+1)}=[]$ to be empty lists.

\STATE Collect data by running Alg. \ref{algo:compute_estimate} i.e. Alg. \textsc{Collect Data and Compute Estimate}($\s{C},\gamma,\sigma^2,\Delta_{\ell+1},\ca{M}^{(\ell)},\ca{N}^{(\ell)},\s{T}$) for the $\ell^{\s{th}}$ phase. Subsequently, for $\ca{M}^{(\ell,i)}\in \ca{M}^{(\ell)}$ such that $\left|\ca{N}^{(\ell,i)}\right|\ge \gamma\s{C}$, compute an estimate $\widetilde{\fl{P}}_{\ca{M}^{(\ell,i)},\ca{N}^{(\ell,i)}}$ of matrix $\fl{P}_{\ca{M}^{(\ell,i)},\ca{N}^{(\ell,i)}}$ from the data such that with high probability $\lr{\widetilde{\fl{P}}_{\ca{M}^{(\ell,i)},\ca{N}^{(\ell,i)}}-\fl{P}_{\ca{M}^{(\ell,i)},\ca{N}^{(\ell,i)}}}_{\infty} \le \Delta_{\ell+1}$.

\FOR {$i: | \ca{N}^{(\ell,i)} | \ge \gamma \s{C}$}

\STATE For every user $u\in \ca{M}^{(\ell,i)}$, compute $\ca{T}^{(\ell)}_u \equiv \{j \in \ca{N}^{(\ell,i)} \mid \max_{j'\in \ca{N}^{(\ell,i)}}\widetilde{\fl{P}}^{(\ell)}_{uj'}-\widetilde{\fl{P}}^{(\ell)}_{uj} \le 2\Delta_{\ell+1}\}$.

\STATE Construct graph $\ca{G}^{(\ell,i)}$ whose nodes are users in $\ca{M}^{(\ell,i)}$ and an edge exists between two users $u,v \in \ca{M}^{(\ell,i)}$ if $\ca{T}^{(\ell)}_u \cap \ca{T}^{(\ell)}_v \neq \Phi$ and $\left|\fl{\widetilde{P}}^{(\ell)}_{ux}-\fl{\widetilde{P}}^{(\ell)}_{vx}\right|\le 2\Delta_{\ell+1}$ for all arms $x\in \ca{N}^{(\ell,i)}$.

\STATE For each connected component $\ca{M}^{(\ell,i,j)}$ ($\cup_j \ca{M}^{(\ell,i,j)}\equiv \ca{M}^{(\ell,i)}$), compute $\ca{N}^{(\ell,i,j)}\equiv \cup_{u \in \ca{M}^{(\ell,i,j)}}\ca{T}_u^{(\ell)}$. Append $\ca{M}^{(\ell,i,j)}$ into $\ca{M}^{(\ell+1)}$ and $\ca{N}^{(\ell,i,j)}$ into $\ca{N}^{(\ell+1)}$.  
\ENDFOR

\STATE For each pair of sets $(\ca{M}^{(\ell,i)},\ca{N}^{(\ell,i)})$ such that $|\ca{N}^{(\ell,i)}|\le \gamma\s{C}$
%\STATE Start or continue running from previous phase (if ongoing) $\s{UCB}(1)$ algorithm for each user in $\ca{M}^{(\ell,i)}$ separately with set of arms $\ca{N}^{(\ell,i)}$.  
, append $\ca{M}^{(\ell,i)}$ to $\ca{M}^{(\ell+1)}$ and $\ca{N}^{(\ell,i)}$ to $\ca{N}^{(\ell+1)}$.

%\begin{rmk}\label{rmk:parallel}
% Steps 5,8 are run in parallel since in every round, we recommend one arm to each user
% uniformly sampled from $\ca{U}$
% and the sets in  $\ca{M}^{(\ell)}$ partition the set of users. We set the algorithm so that Step 5 complete at the same time for all sets in  $\ca{M}^{(\ell)}$ for which $|\ca{N}^{(\ell,i)}|\ge \gamma\s{C}$. If Step 5 finish earlier for one set of users (say $\ca{M}^{(\ell,1)}$), then we continue to recommend arbitrary arms from the allowed set of arms ($\ca{N}^{(\ell,1)}$ whenever some user $u\in \ca{M}^{(\ell,1)}$ is selected (until Step 5 is completed for all sets).
 % \end{rmk}

\ENDFOR

\end{algorithmic}
\end{algorithm*}

\begin{algorithm*}[t]
\caption{\textsc{Collect Data and Compute Estimate}   \label{algo:compute_estimate}}
\begin{algorithmic}[1]

\REQUIRE Number of clusters $\s{C}$, parameter $\gamma>0$, noise variance $\sigma^2>0$, desired error guarantee $\Delta_{\ell+1}>0$, partition of users $\ca{M}^{(\ell)}$ comprising \textit{nice} subsets of users, corresponding list of sets of arms $\ca{N}^{(\ell)}$, rounds $\s{T}$.

\FOR{the $i^{\s{th}}$ set of users $\ca{M}^{(\ell,i)}\in \ca{M}^{(\ell)}$ and $i^{\s{th}}$ set of arms $\ca{N}^{(\ell,i)}\in \ca{N}^{(\ell)}$}
\IF{$\ca{N}^{(\ell,i)}\ge \gamma\s{C}$}
\STATE Create a stateful instance of Algorithm \textsc{Low Rank Matrix Estimate} with parameters - \{ $\ca{V}=\ca{N}^{(\ell,i)}$ for users, $\ca{U}=\ca{M}^{(\ell,i)}$ (whenever a user $u\in \ca{U}$ is sampled to pull an arm), desired error $\Delta_{\ell+1}$ in estimate, noise $\sigma^2$ and total rounds $\s{T}$\}. 
\COMMENT{All instances of Algorithm \ref{algo:estimate} will wait until entire data is collected (Line 5 in Alg. \ref{algo:data_collection_matrix_estimation}) or finish.}
\ELSE
 \STATE For \textit{each} $u \in \ca{M}^{(\ell,i)}$, if not created already, create a stateful instance of Algorithm \textsc{UCB}$(u)$  with arm set $\ca{N}^{(\ell,i)}$ 
\ENDIF
\ENDFOR

\WHILE {There exists an instance of Algorithm \ref{algo:estimate} that waits (i.e. Alg. \ref{algo:data_collection_matrix_estimation} invoked by Alg. \ref{algo:estimate} waits at Line $5$.)}
\STATE Sample a user $u(t)$ from the environment. Determine $i_t \leftarrow \{i: u(t) \in \ca{M}^{(\ell,i)}\}$.

\IF {Algorithm \ref{algo:estimate}'s instance with $\ca{U} = \ca{M}^{(\ell,i_t)} $ is still waiting}
 \STATE Execute one step of the \textsc{DataCollectionSubRoutine} (Line 5 of Alg. \ref{algo:data_collection_matrix_estimation} with $u(t)$) invoked by Algorithm \ref{algo:estimate}'s instance (with $\ca{U} = \ca{M}^{(\ell,i_t)}$) 
\ELSIF {Instance of Algorithm \textsc{UCB($u(t)$)} (i.e. Alg. \ref{algo:estimate_ucb}) exists}
 \STATE Invoke one time step of Algorithm  \textsc{UCB($u(t)$)}.
 \ELSE
 \STATE pull a random arm from $\ca{N}^{(\ell,i_t)}$.
\ENDIF
\STATE $t \leftarrow t+1$. Stop the Algorithm when $t=\s{T}$.
\ENDWHILE

\FOR{the $i^{\s{th}}$ set of users $\ca{M}^{(\ell,i)}\in \ca{M}^{(\ell)}$ and $i^{\s{th}}$ set of arms $\ca{N}^{(\ell,i)}\in \ca{N}^{(\ell)}$}
\IF{$\ca{N}^{(\ell,i)}\ge \gamma\s{C}$}
\STATE Store estimate $\widetilde{\fl{P}}_{\ca{M}^{(\ell,i)},\ca{N}^{(\ell,i)}}$ obtained as output from Algorithm \textsc{Low Rank Matrix Estimate} for users $\ca{U}=\ca{M}^{(\ell,i)}$ and arms $\ca{V}=\ca{N}^{(\ell,i)}$. 
\ENDIF

\ENDFOR
\STATE Return all stored estimates $\widetilde{\fl{P}}_{\ca{M}^{(\ell,i)},\ca{N}^{(\ell,i)}}$ for $i:\left|\ca{N}^{(\ell,i)}\right|\ge \gamma\s{C}$.

\end{algorithmic}
\end{algorithm*}

To obtain regret bounds, we first make the following assumptions on the matrix $\fl{X}\in \bb{R}^{\s{C}\times \s{M}}$ whose rows correspond to cluster reward vectors in the $\s{CS}$ setting:

\begin{assumption}[Assumptions on $\fl{X}$]\label{assum:matrix}
Let  $\fl{X}=\fl{U}\f{\Sigma}\fl{V}^{\s{T}}$ be the SVD of $\fl{X}$. Also, let $\fl{X}$ satisfy the following: 1) Condition number: $\fl{X}$ is full-rank and has non zero singular values $\lambda_1> \dots > \lambda_{\s{C}}$ with condition number $\lambda_1/\lambda_{\s{C}}=O(1)$, 2) $\mu$-incoherence:  $\lr{\fl{V}}_{2,\infty}\le \sqrt{\mu \s{C}/\s{M}}$, 3) Subset Strong Convexity (SSC): For some $\alpha$ satisfying $\alpha \log \s{M} = \Omega(1)$, $\gamma = \widetilde{O}(1)$, for all subset of indices $\ca{S}\subseteq [\s{M}], |\ca{S}| \ge \gamma\s{C}$, the minimum non-zero singular value of $\fl{V}_{\ca{S}}$ must be at least $\sqrt{\alpha\left|\ca{S}\right|/\s{M}}$. %\textcolor{red}{Arun: change ``Subset Strong Smoothness'' to ``Subset Strong Convexity'' everywhere}
% we must have
% $\fl{x}^{\s{T}}\fl{V}_{\ca{S}}^{\s{T}}\fl{V}_{\ca{S}}\fl{x} \ge \alpha\gamma\s{C}/\s{M}$
% for all unit norm vectors $\fl{x}\in \bb{R}^{\s{C}}$. 
\end{assumption}

%\begin{rmk}\label{rmk:standard}
% {\em Feasibility of Assumption~\ref{assum:matrix}.} Assuming $\s{C}\ll \s{M}$, one can show that Assumption~\ref{assum:matrix} is satisfied by a large subset of matrices. That is, suppose each entry of $\fl{X}$ is generated according to $\ca{N}(0,1)$. Then, it is easy to show that $\fl{X}$ satisfies Assumption~\ref{assum:matrix} (see Appendix).

{\em Feasibility of Assumption~\ref{assum:matrix}.} The first two parts of Assumption \ref{assum:matrix} on  condition number and $\mu$-incoherence are fairly mild, and are satisfied by a variety of matrices. For example Gaussian random matrices are $\mu$-incoherent with $\mu =O(\log \s{N})$ \citep{candes2009exact}. However, the third part of Assumption \ref{assum:matrix}  on subset strong convexity (SSC) is relatively strong. It says that the minimum non-zero singular value of all reasonably sized sub-matrices of $\fl{X}$ must be large. This is helpful in showing that the sub-matrices estimated in Line 4 of Algorithm~\ref{algo:phased_elim} are incoherent (which in turn provides matrix-completion guarantees).
For $\s{C}=1$, this assumption is satisfied by any matrix whose entries are slight perturbations of a positive constant. For general $\s{C}>1$, interestingly, the matrices that satisfy this assumption are related to maximally erasure-robust frames (\cite{fickus2012numerically,wang2018random}). But, it is an open problem to identify such matrices for $\s{M}\gg \s{C}$.   Despite this, we note that the third part of Assumption~\ref{assum:matrix} can be significantly relaxed. We do not actually need the SSC condition on all the subsets of $[\s{M}]$. We only need it on sub-matrices that are formed by Algorithm~\ref{algo:phased_elim}. Interestingly, we show that the number of such sub-matrices is only $\tilde{O}(1)$, and consequently popular random matrices satisfy this condition (see Appendix \ref{sec:feasibility} for both empirical and theoretical evidence).  However, to simplify the analysis and presentation in the paper, we go with the condition stated in Assumption~\ref{assum:matrix}, and \emph{not} the more refined condition above.

{\em Justification of Assumption~\ref{assum:matrix}.} Assumption \ref{assum:matrix} is similar to the assumptions required by standard low-rank matrix completion methods \citep{candes2009exact,bhojanapalli2014universal}. The main purpose of it is to guarantee that the sub-matrices of the reward matrix $\fl{P}$ estimated in Line 4 in Algorithm \ref{algo:phased_elim} are incoherent and have low condition numbers - conditions necessary to invoke standard low rank matrix completion guarantees (Lemma \ref{lem:min_acc}) for the respective sub-matrices. 
Intuitively, the incoherence condition seems important to obtain small regret because to get small regret we require an arm pull of $i$-th user to provide good information for $j$-th user. That is, the matrix $\fl{X}$ should have information "well-spread" out instead of information being concentrated in a few entries or in a few directions only. To see this, consider an extreme example (when these assumptions are not satisfied) when $\fl{X}=[\fl{I}_{\s{C}\times \s{C}} \; \f{0}]$. In that case, most of the arms will give no information when pulled; all the arms need to be sampled for all users to get a good estimate of the reward matrix. Further exploration into necessity of Assumption~\ref{assum:matrix} is left for future work. 

\begin{assumption}\label{assum:cluster_ratio}
We will assume that $\tau,\s{C}=O(1)$ and does not scale with the number of rounds $\s{T}$.
\end{assumption}
Note that the above assumption is just for simplicity of exposition. Our algorithm is indeed polynomial in $\s{C}$ and $\tau$, so we can incorporate more general $\tau$ and $\s{C}$. But for simplicity, we ignore these factors by assuming them to be constants. 

Next, we characterize some properties namely the condition number and incoherence of sub-matrices of $\fl{P}$ restricted to a \textit{nice} subset of users in the $\s{CS}$ setting 

 \begin{lemma}\label{lem:condition_num}
Suppose Assumption \ref{assum:matrix} is true.
 Consider a sub-matrix $\fl{P}_{\s{sub}}$ of $\fl{P}$ having non-zero singular values $\lambda'_1>\dots> \lambda'_{\s{C}'}$ (for $\s{C}'\le \s{C}$). Then, if the rows of $\fl{P}_{\s{sub}}$ correspond to a nice subset of users, we have $\frac{\lambda'_1}{\lambda'_{\s{C}'}} \le \frac{\lambda_1}{\lambda_{\s{C}}}\sqrt{\tau}$.
 \end{lemma}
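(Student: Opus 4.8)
The plan is to express the row-restricted matrix $\fl{P}_{\s{sub}}$ as a product that cleanly separates the cluster-size information from the geometry of the cluster reward vectors, and then control the two pieces with elementary eigenvalue inequalities. Concretely, write the nice subset of users as $\ca{S}=\bigcup_{j\in\ca{A}}\ca{C}^{(j)}$ for some $\ca{A}\subseteq[\s{C}]$ with $|\ca{A}|=\s{C}'$, and let $\fl{X}_{\ca{A}}\in\bb{R}^{\s{C}'\times\s{M}}$ be the corresponding rows of $\fl{X}$. Since every user in $\ca{C}^{(j)}$ shares the reward vector $\fl{X}_j$, one can factor $\fl{P}_{\s{sub}}=\fl{B}\fl{X}_{\ca{A}}$, where $\fl{B}\in\bb{R}^{|\ca{S}|\times\s{C}'}$ is the membership matrix whose $u$-th row is the standard basis vector indicating the cluster of user $u$. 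The key structural facts are that $\fl{B}^{\s{T}}\fl{B}=\fl{D}$ is the diagonal matrix $\m{diag}(|\ca{C}^{(j)}|:j\in\ca{A})$, and that $\fl{X}_{\ca{A}}$ has full row rank $\s{C}'$ because $\fl{X}$ is full rank by Assumption~\ref{assum:matrix}.

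Next I would identify the squared non-zero singular values of $\fl{P}_{\s{sub}}$. Using $\fl{P}_{\s{sub}}^{\s{T}}\fl{P}_{\s{sub}}=\fl{X}_{\ca{A}}^{\s{T}}\fl{D}\fl{X}_{\ca{A}}$ and the fact that non-zero eigenvalues are preserved under the cyclic swap $\fl{A}^{\s{T}}\fl{A}\leftrightarrow\fl{A}\fl{A}^{\s{T}}$ (with $\fl{A}=\fl{D}^{1/2}\fl{X}_{\ca{A}}$), the squared non-zero singular values of $\fl{P}_{\s{sub}}$ are exactly the eigenvalues of the $\s{C}'\times\s{C}'$ matrix $\fl{D}^{1/2}\fl{G}\fl{D}^{1/2}$, where $\fl{G}=\fl{X}_{\ca{A}}\fl{X}_{\ca{A}}^{\s{T}}$ is the Gram matrix of the selected cluster vectors. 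Since $\fl{D}$ is positive definite and $\fl{G}$ is positive definite (full row rank), this matrix has exactly $\s{C}'$ positive eigenvalues, so $(\lambda'_1)^2=\lambda_{\max}(\fl{D}^{1/2}\fl{G}\fl{D}^{1/2})$ and $(\lambda'_{\s{C}'})^2=\lambda_{\min}(\fl{D}^{1/2}\fl{G}\fl{D}^{1/2})$.

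Then I would sandwich each of these via Rayleigh quotients. For any unit vector $\fl{x}$, writing $\fl{x}^{\s{T}}\fl{D}^{1/2}\fl{G}\fl{D}^{1/2}\fl{x}=(\fl{D}^{1/2}\fl{x})^{\s{T}}\fl{G}(\fl{D}^{1/2}\fl{x})$ and bounding the inner quadratic form by $\lambda_{\max}(\fl{G})$ (resp. $\lambda_{\min}(\fl{G})$) times $\fl{x}^{\s{T}}\fl{D}\fl{x}$ gives
\begin{align*}
\lambda_{\max}(\fl{D}^{1/2}\fl{G}\fl{D}^{1/2})&\le \lambda_{\max}(\fl{G})\,\lambda_{\max}(\fl{D}), \\
\lambda_{\min}(\fl{D}^{1/2}\fl{G}\fl{D}^{1/2})&\ge \lambda_{\min}(\fl{G})\,\lambda_{\min}(\fl{D}).
\end{align*}
Dividing yields a clean product bound on $(\lambda'_1/\lambda'_{\s{C}'})^2$ in terms of the condition numbers of $\fl{D}$ and of $\fl{G}$. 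The $\fl{D}$ factor is immediate: $\lambda_{\max}(\fl{D})/\lambda_{\min}(\fl{D})=\max_{j\in\ca{A}}|\ca{C}^{(j)}|/\min_{j\in\ca{A}}|\ca{C}^{(j)}|\le\tau$ by the definition of $\tau$.

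Finally, I would handle the $\fl{G}$ factor, which is where the only genuine subtlety lies: I need $\lambda_{\max}(\fl{G})\le\lambda_1^2$ and $\lambda_{\min}(\fl{G})\ge\lambda_{\s{C}}^2$. The crucial observation is that $\fl{G}=\fl{X}_{\ca{A}}\fl{X}_{\ca{A}}^{\s{T}}$ is precisely the $\ca{A}\times\ca{A}$ principal submatrix of $\fl{X}\fl{X}^{\s{T}}$, whose eigenvalues are $\lambda_1^2\ge\dots\ge\lambda_{\s{C}}^2$. Cauchy's interlacing theorem then gives $\lambda_{\max}(\fl{G})\le\lambda_1^2$ and $\lambda_{\min}(\fl{G})\ge\lambda_{\s{C}}^2$, so the condition number of $\fl{G}$ is at most $\lambda_1^2/\lambda_{\s{C}}^2$. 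Combining the two factors gives $(\lambda'_1/\lambda'_{\s{C}'})^2\le(\lambda_1^2/\lambda_{\s{C}}^2)\,\tau$, and taking square roots proves the claim. I expect the interlacing step for the smallest eigenvalue of $\fl{G}$ to be the part requiring the most care, since it is exactly what prevents the conditioning from degrading when whole clusters are dropped from $\fl{X}$; the remaining steps are bookkeeping with Rayleigh quotients.
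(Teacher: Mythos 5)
Your argument is correct, and in fact cleaner and more rigorous than the paper's, \emph{for the case where $\fl{P}_{\s{sub}}$ retains all $\s{M}$ columns}: the factorization $\fl{P}_{\s{sub}}=\fl{B}\fl{X}_{\ca{A}}$, the passage to the nonzero spectrum of $\fl{D}^{1/2}\fl{G}\fl{D}^{1/2}$, the Rayleigh-quotient splitting into a $\fl{D}$-factor (giving $\tau$) and a $\fl{G}$-factor, and the Cauchy interlacing step for the principal submatrix $\fl{G}=\fl{X}_{\ca{A}}\fl{X}_{\ca{A}}^{\s{T}}$ of $\fl{X}\fl{X}^{\s{T}}$ are all sound. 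This is a genuinely different route from the paper, which instead works directly with Rayleigh quotients of $\sum_i \fl{P}_i^{\s{T}}\fl{P}_i$ restricted to the orthogonal complement of the relevant null spaces.

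The gap is one of scope. The lemma is invoked in the paper for the sub-matrices $\fl{P}_{\ca{M}^{(\ell,i)},\ca{N}^{(\ell,i)}}$, which are restricted in their \emph{columns} (to the active arm set $\ca{N}^{(\ell,i)}$) as well as their rows, and the lemma statement only constrains the rows of $\fl{P}_{\s{sub}}$, leaving the column subset arbitrary; the paper's own proof accordingly carries a column index set $\ca{S}'$ through the entire argument. Your proof silently assumes $\fl{P}_{\s{sub}}=\fl{P}_{\ca{S}}$ with all columns present, and the step you yourself flag as the crux breaks once columns are dropped: with a column restriction $\ca{S}'$ the Gram matrix becomes $\fl{X}_{\ca{A}\mid\ca{S}'}\fl{X}_{\ca{A}\mid\ca{S}'}^{\s{T}}$, which is \emph{not} a principal submatrix of $\fl{X}\fl{X}^{\s{T}}$, so Cauchy interlacing gives you nothing; worse, this Gram matrix can be singular or arbitrarily ill-conditioned, so the bound $\lambda_{\min}(\fl{G})\ge\lambda_{\s{C}}^2$ is simply unavailable there. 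To match the statement as used, you would need to either add the column restriction to your decomposition and handle the resulting loss of the principal-submatrix structure (e.g., by comparing minimum \emph{nonzero} eigenvalues on the appropriate row spaces, as the paper attempts, or by invoking the Subset Strong Smoothness condition on $\fl{V}_{\ca{S}'}$ from Assumption~\ref{assum:matrix}), or explicitly restrict the lemma to row-only sub-matrices and argue separately about columns where it is applied.
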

 
 \begin{lemma}\label{lem:incoherence}
Suppose Assumption \ref{assum:matrix} is true. Consider a sub-matrix $\fl{P}_{\s{sub}}\in \bb{R}^{\s{N}'\times \s{M}'}$ (with SVD decomposition $\fl{P}_{\s{sub}}=\widetilde{\fl{U}}\widetilde{\f{\Sigma}}\widetilde{\fl{V}}$) of $\fl{P}$ whose rows correspond to a nice subset of users. Then, provided the number of columns in $\fl{P}_{\s{sub}}$ is larger than $\gamma\s{C}$, we must have $\lr{\widetilde{\fl{U}}}_{2,\infty} \le \sqrt{\frac{\s{C}\tau}{\s{N}'}}$ and $\lr{\widetilde{\fl{V}}}_{2,\infty} \le \sqrt{\frac{\mu \s{C}}{\alpha\s{M}'}}$. 
\end{lemma}

Lemmas \ref{lem:condition_num} and \ref{lem:incoherence} allow us to apply low rank matrix completion  (Lemma \ref{lem:min_acc}) to relevant sub-matrices of the reward matrix $\fl{P}$. Now, we are ready to present our main theorem:

%\newpage

\begin{thm}\label{thm:main_LBM}
Consider the MAB-LC problem in $\s{CS}$ framework with $\s{M}$ arms, $\s{N}$ users, $\s{C}$ clusters and $\s{T}$ rounds such that at every round $t\in [\s{T}]$, we observe reward $\fl{R}^{(t)}$ as defined in eq. (\ref{eq:obs}) with noise variance proxy $\sigma^2>0$. Let $\fl{P}\in \bb{R}^{\s{N}\times \s{M}}$ be the expected reward matrix and $\fl{X}\in \bb{R}^{\s{C}\times \s{M}}$ be the sub-matrix of $\fl{P}$ with distinct rows. Suppose  Assumption \ref{assum:matrix} is satisfied by $\fl{X}$ and Assumption \ref{assum:cluster_ratio} is true. Then Alg. \ref{algo:phased_elim} with $C'=c\s{C}^{-1}\min\Big(\|\fl{P}\|_{\infty},\frac{\sigma\sqrt{\mu}}{\log \s{M}}\Big)$ for some appropriate constant $c>0$ guarantees the regret $\s{Reg}(\s{T})$ to be:
\begin{align*}
 \widetilde{O}(\sigma\sqrt{\mu^3\s{T}(\s{M+N})}) +\sigma\sqrt{\s{NT}}).
\end{align*}
\end{thm}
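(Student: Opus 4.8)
The plan is to argue by induction on the phase index $\ell$ that the structural invariant encoded by the event $\ca{E}^{(\ell)}$ is maintained with high probability, and then convert this invariant into a regret bound by summing a fast-growing geometric series over the $O(\log\s{T})$ phases. First I would fix the two families of good events: $\ca{E}^{(\ell)}$ asserts that the list $\{(\ca{M}^{(\ell,i)},\ca{N}^{(\ell,i)})\}_i$ consists of \emph{nice} user subsets whose active arm sets contain every user's best arm and obey the gap bound \eqref{eq:gap_main} with parameter $\epsilon_\ell$, while $\ca{E}_2^{(\ell)}$ asserts that every matrix-completion estimate meets the entrywise accuracy \eqref{eq:entrywise}, $\|\widetilde{\fl{P}}-\fl{P}\|_\infty\le\Delta_{\ell+1}$. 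The base case $\ca{E}^{(1)}$ holds trivially since $\ca{M}^{(1,1)}=[\s{N}]$ and $\ca{N}^{(1,1)}=[\s{M}]$ contain all arms, so \eqref{eq:gap_main} is vacuous for $\epsilon_1$ taken as the largest gap.

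For the inductive step I would prove $\ca{E}^{(\ell)}\wedge\ca{E}_2^{(\ell)}\Rightarrow\ca{E}^{(\ell+1)}$ deterministically and $\ca{E}^{(\ell)}\Rightarrow\ca{E}_2^{(\ell)}$ with high probability. The probabilistic half is an application of Lemma \ref{lem:min_acc}: since each processed block $\fl{P}_{\ca{M}^{(\ell,i)},\ca{N}^{(\ell,i)}}$ has rows forming a nice subset, Lemmas \ref{lem:condition_num} and \ref{lem:incoherence} certify $O(1)$ condition number and incoherence parameter $O(\mu)$ (up to the $O(1)$ factors $\tau,\alpha$), so Lemma \ref{lem:min_acc} yields error exactly $\Delta_{\ell+1}$ once enough random observations are gathered. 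The deterministic half is the stated path-Lemma: on $\ca{E}_2^{(\ell)}$ any two users joined by an edge have good arms whose true rewards differ by at most $32\s{C}\Delta_{\ell+1}$, and this propagates along paths; hence every connected component $\ca{M}^{(\ell,i,j)}$ is nice (clusters are identical rows in $\s{CS}$, so they form cliques and never straddle components) and its union arm set $\ca{N}^{(\ell,i,j)}=\cup_u\ca{T}_u^{(\ell)}$ retains each user's best arm while preserving \eqref{eq:gap_main} with $\epsilon_{\ell+1}=40\s{C}\Delta_{\ell+1}\ge 32\s{C}\Delta_{\ell+1}$. A union bound over the $O(\log\s{T})$ phases makes all events hold simultaneously.

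The regret then splits into the two additive terms. For the exploration term, on $\ca{E}^{(\ell)}$ every arm pulled at random from an active set with $|\ca{N}^{(\ell,i)}|\ge\gamma\s{C}$ lies within $\epsilon_\ell\asymp\s{C}\Delta_\ell$ of the user's optimum, so each such round costs at most $\epsilon_\ell$. The phase length is governed by the slowest block: collecting the $m'_i=\widetilde{O}(\sigma^2\mu^3\max(\s{N}'_i,\s{M}'_i)/\Delta_{\ell+1}^2)$ observations demanded by Lemma \ref{lem:min_acc} under uniform user sampling needs $\asymp(\s{N}/\s{N}'_i)\,m'_i$ rounds, and since each user subset is a union of clusters Assumption \ref{assum:cluster_ratio} gives $\s{N}/\s{N}'_i\le\s{C}=O(1)$, so $R_\ell=\widetilde{O}(\sigma^2\mu^3(\s{M}+\s{N})/\Delta_{\ell+1}^2)$. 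Thus the phase-$\ell$ regret is $R_\ell\epsilon_\ell=\widetilde{O}(\sigma^2\mu^3(\s{M}+\s{N})/\epsilon_\ell)$; as $1/\epsilon_\ell$ grows geometrically the total is dominated by its final term, and equating $\sum_\ell R_\ell=\s{T}$ fixes $1/\epsilon_L\asymp\sqrt{\s{T}/(\sigma^2\mu^3(\s{M}+\s{N}))}$, giving $\widetilde{O}(\sigma\sqrt{\mu^3(\s{M}+\s{N})\s{T}})$.

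For the second term I would bound the UCB mode: once a subset's active arm set drops below $\gamma\s{C}=O(1)$, each user runs an independent UCB over $O(1)$ arms, so a user played $T_u$ such rounds incurs $\widetilde{O}(\sigma\sqrt{T_u})$, and Cauchy--Schwarz over the $\s{N}$ users with $\sum_u T_u\le\s{T}$ yields $\sum_u\widetilde{O}(\sigma\sqrt{T_u})\le\widetilde{O}(\sigma\sqrt{\s{N}\s{T}})$. The low-probability failure of the good events adds only $\widetilde{O}(1)$ regret since rewards are bounded and the failure probability is driven to $\le 1/\s{T}$ by the union bound. I expect the main obstacle to be the inductive step---proving the path-Lemma and verifying that the connected-component partition neither merges distinct clusters nor over-splits a cluster while driving $\epsilon_\ell$ down geometrically---together with the bookkeeping that turns the per-block observation counts of Lemma \ref{lem:min_acc} into the clean $(\s{M}+\s{N})$ phase-length scaling; the two geometric-sum and Cauchy--Schwarz regret computations are comparatively routine.
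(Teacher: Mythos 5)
Your proposal is correct and follows essentially the same route as the paper's proof: the same induction on the events $\ca{E}^{(\ell)},\ca{E}_2^{(\ell)}$ using Lemmas \ref{lem:condition_num} and \ref{lem:incoherence} to invoke Lemma \ref{lem:min_acc}, the same clique/path argument to show connected components are nice and the union arm sets preserve the gap invariant, and the same regret split into a geometric sum over phases (the paper phrases the final optimization as a threshold $\Phi$ on $\epsilon_\ell$ rather than equating $\sum_\ell R_\ell=\s{T}$, but the computation is identical) plus a Cauchy--Schwarz bound of $\widetilde{O}(\sigma\sqrt{\s{NT}})$ for the UCB mode.
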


To better understand the theorem, let's remove the scaling factors. Dividing the regret by $\sigma$ gives us a scale-free regret of $\sqrt{\s{T}(\s{M}+\s{N})}$. Now, even if we know the clustering structure apriori, the regret would be $\sqrt{\s{T}\s{M}}$, so we are only paying an additive factor of $\sqrt{\s{N}\s{T}}$ for the latent cluster structure, which is tight. 

Also, dividing the the regret by $\s{N}$ we get per-user regret of $\sqrt{(1+\frac{\s{M}}{\s{N}})} \sqrt{\s{T}'}$ where $\s{T}'=\s{T}/\s{N}$ is the average number of arm-pulls for each user. That is, when $\s{N}\gg \s{M}$, we need only $\log (\s{M}+\s{N})$ arm pulls to get reasonable estimate. Hence, per user, the number of arms that needs to be pulled decreases exponentially from $\Omega(\s{M})$ to $\log (\s{M}+\s{N})$. On the other hand, when the number of users is small, each user has to explore at least $\s{M}/\s{N}$ arms to collaboratively provide information about all the arms. This also matches the intuition, especially when the number of users is $1$ where the bound matches the standard single-user MAB bound.

% Finally, note that the second term in the regret bound is due to UCB analysis. This term does not vanish even if $\sigma=0$, which is due to weakness of standard UCB analysis (as it is for the worst case instance), and can potentially be improved by using better instance dependent UCB style bounds \cite{garivier2011kl}; we leave further refinement of this term for future work. 

%\begin{rmk}
% The first term in the regret bound corresponds to the sample complexity of various phases with matrix completion. Observe that when number of arms is a constant, then the regret per user scales as $\tilde{O} \sqrt{T/N}$ which is effectively a bandit for a single user with arms scaling in the number of clusters. 
%\end{rmk}

We would like to add the following two remarks: 
\begin{rmk}[Generalization]\label{rmk:gen_1}
Our results can be generalized to the setting when the users are sampled according to a known non-uniform distribution in  different ways. We can simulate the uniform distribution in each phase of the Alg. \ref{algo:phased_elim} by ignoring several observations; this approach is disadvantageous since users with very low probability of getting sampled will increase the number of sufficient observations significantly. Another approach is to partition the set of users into disjoint buckets such that the probabilities of getting sampled for users in the same bucket are within a factor of $2$ of each other. Now, in each bucket, we can run Alg. \ref{algo:phased_elim} separately and simulate the uniform distribution in each phase. Since the number of buckets will be logarithmic in $[\s{N}]$, the regret remains same up to logarithmic factors.   
\end{rmk}

\begin{rmk}[Generalization Continued]\label{rmk:gen_2}
 We can generalize our results to the setting where $\tau,\s{C},\kappa=\lambda_1/\lambda_{\s{C}}$ scales with the number of rounds $\s{T}$ by modifying Thm. 2 in \cite{chen2019noisy} appropriately. This will lead to the first term of regret guarantee in Thm. \ref{thm:main_LBM} being $\sqrt{\s{VT}}$ where $\s{V}=\widetilde{O}\Big(\s{poly}(r,\tau,\s{C},\kappa) \Big(\s{N}+\s{M}\Big)\Big)$; hence we will have a $\s{poly}(r,\tau,\s{C},\kappa)$ additional multiplicative factor in the regret. See Appendix \ref{app:general_proof} for details on this generalization.  
\end{rmk}

In the $\s{CS}$ framework, we can also provide instance-dependent regret bounds that are sharper than worst case guarantees in Theorem \ref{thm:main_LBM}. Let us introduce some definitions: for every cluster $c\in[\s{C}]$, define the subset of arms $\ca{G}_{c,\ell}$ for all users $u \in \ca{C}^{(c)}$ and for all $\ell>1$ as 
\begin{align*}
\ca{G}_{c,\ell} \equiv \{j \in [\s{M}] \mid \epsilon_{\ell} \le \left|\fl{P}_{uj} - \fl{P}_{u\pi_u(1)} \right| \le \epsilon_{\ell-1}\}    
\end{align*}
 and $\ca{G}_{c,1} \equiv \{j \in [\s{M}] \mid \epsilon_{1} \le \left|\fl{P}_{uj} - \fl{P}_{u\pi_u(1)} \right|\}$ for $\ell=1$; $\ca{G}_{c,\ell}$ ($\ca{G}_{c,1}$) corresponds to the subset of arms having a sub-optimality gap that is between $\epsilon_{\ell-1}$ and $\epsilon_{\ell}$ (greater than $\epsilon_1$ respectively) for all users belonging to the cluster $\ca{C}^{(c)}$. There is no ambiguity in the definition since all users in the same cluster $\ca{C}^{(c)}$ have the same mean rewards over all arms. 
Let us also define 
$
\ca{H}_c \equiv \bigcup \limits_{\ell> 1} \s{argmin}_{j\in \ca{G}_{c,\ell}} \left|\fl{P}_{uj}-\fl{P}_{u\pi_u(1)}\right|     
$ with the understanding that whenever  $\ca{G}_{c,\ell} = \Phi$, there is no $\s{argmin}$ to be counted in the set.
For brevity of notation, let $\Psi_{c,a} \triangleq \fl{P}_{u\pi_u(1)}-\fl{P}_{ua}$ be the sub-optimality gap in the reward of arm $a$ for any user $u$ in cluster $c$.

\begin{thm}\label{thm:main_LBM2}
Consider the setting in Theorem \ref{thm:main_LBM} and the sets $\{\ca{G}_{c,\ell},\ca{H}_c\}_{c,\ell}$ as defined above. Then Alg. \ref{algo:phased_elim} with $C'=c\s{C}^{-1}\min\Big(\|\fl{P}\|_{\infty},\frac{\sigma\sqrt{\mu}}{\log \s{M}}\Big)$ for some appropriate constant $c>0$ guarantees the regret $\s{Reg}(\s{T})$ to be 
\begin{align*}
\tiny
    & \widetilde{O}\Big(\frac{\lr{\fl{P}}_{\infty}\s{V}}{C'^2}\fl{1}[\ca{G}_{c,1} \neq \emptyset]+ \sum_{c\in [\s{C}], a\in \ca{H}_c} \frac{\Psi_{c,a}\s{T}^{-2}}{\s{C}}+\frac{\s{C}\s{V}}{\Psi_{c,a}}\Big) \\
    &+\s{N}^{-1}\widetilde{O}\Big(\sum_{\substack{c\in [\s{C}] \\ a \in \{\pi_c(s)\}_{s=1}^{\gamma\s{C}}}}
    \left|\ca{C}^{(c)}\right|\Big(\frac{\sigma}{\Psi_{c,a}}+3\Psi_{c,a}\Big)\Big).
\end{align*}
where $\s{V}=\widetilde{O}\Big(\sigma^2 \mu^3 \Big(\s{N}+\s{M}\Big)\Big)$, $\{\pi_c(s)\}_{s=1}^{\gamma\s{C}}$ are the $\gamma\s{C}$ arms for cluster $c$ with the smallest sub-optimality gap. 
\end{thm}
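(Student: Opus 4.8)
The plan is to bound the expected regret conditioned on the good events $\ca{E}^{(\ell)}$ and $\ca{E}_2^{(\ell)}$ holding in every phase, and to split the conditional regret into the regret accrued (i) while a user's subset still runs matrix completion (active arm set of size $\ge\gamma\s{C}$), and (ii) after the subset collapses to $<\gamma\s{C}$ active arms and hands its users to per-user UCB; the complementary low-probability event is charged separately. For that last piece I would bound the worst-case regret by $\lr{\fl{P}}_{\infty}\s{T}$ and multiply by the union-bounded failure probability, which the parameter choices behind Lemma \ref{lem:min_acc} drive to $\widetilde{O}(\s{T}^{-c})$; taking $c$ large enough produces the negligible $\frac{\Psi_{c,a}\s{T}^{-3}}{\s{C}}$ terms.

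For part (i) the first step is to sharpen elimination into a gap statement. Conditioned on $\ca{E}_2^{(\ell)}$ (entrywise accuracy $\Delta_{\ell+1}$ of eq. \ref{eq:entrywise}) together with the connected-components lemma from the overview, I would show that every arm $a$ with cluster-$c$ gap $\Psi_{c,a}>\epsilon_\ell$ is purged from all active sets $\ca{N}^{(\ell+1,\cdot)}$ while each cluster's optimal arm survives, so the invariant (eq. \ref{eq:gap_main}) is preserved at level $\epsilon_{\ell+1}$ (using the relation $\epsilon_{\ell+1}\asymp\s{C}\Delta_{\ell+1}$, which the lemma's $32\s{C}\Delta_{\ell+1}$ spread guarantees). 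It follows that the subset carrying cluster $c$ drops below $\gamma\s{C}$ active arms at the first phase $\ell_c$ where fewer than $\gamma\s{C}$ arms have gap $\le\epsilon_{\ell_c}$; thereafter $c$ pays only UCB regret.

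The second step converts rounds to regret. For a sub-matrix $(\ell,i)$ of shape $\s{N}'\times\s{M}'$, Lemmas \ref{lem:condition_num} and \ref{lem:incoherence} supply the condition number and incoherence needed to invoke Lemma \ref{lem:min_acc}, so accuracy $\Delta_{\ell+1}$ costs $\widetilde{O}(\sigma^2\mu^3\max(\s{N}',\s{M}')/\Delta_{\ell+1}^2)$ observations; since a user of $\ca{M}^{(\ell,i)}$ is sampled at rate $\s{N}'/\s{N}$ and $\s{N}'\ge\s{N}/(\tau\s{C})$, the sub-matrix fills in $\widetilde{O}(\s{V}/\Delta_{\ell+1}^2)$ rounds with $\s{V}=\widetilde{O}(\sigma^2\mu^3(\s{N}+\s{M}))$ after absorbing $\tau,\s{C}=O(1)$. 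As the sub-matrices fill in parallel, phase $\ell$ lasts $R_\ell=\max_i R_{\ell,i}=\widetilde{O}(\s{V}/\Delta_{\ell+1}^2)$ rounds, each with instantaneous regret $\le\epsilon_\ell$, so its matrix-completion regret is $\le R_\ell\epsilon_\ell=\widetilde{O}(\s{C}\s{V}/\Delta_{\ell+1})$. I would then charge this to the arms eliminated at scale $\ell$: for each cluster $c$ with $\ca{G}_{c,\ell}\neq\emptyset$ the representative $a=\s{argmin}_{j\in\ca{G}_{c,\ell}}\Psi_{c,j}\in\ca{H}_c$ has $\Psi_{c,a}\asymp\epsilon_\ell\asymp\s{C}\Delta_{\ell+1}$, so $\s{C}\s{V}/\Psi_{c,a}$ absorbs the phase-$\ell$ cost; summing over $\ell$ gives $\sum_{c,a\in\ca{H}_c}\s{C}\s{V}/\Psi_{c,a}$, while the coarsest phase $\ell=1$ (active set $[\s{M}]$, per-round regret $\lr{\fl{P}}_{\infty}$) contributes the $\frac{\lr{\fl{P}}_{\infty}\s{V}}{C'^2}\fl{1}[\ca{G}_{c,1}\neq\emptyset]$ term.

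Part (ii) is a direct application of the standard gap-dependent UCB bound to each user over its $\le\gamma\s{C}$ surviving arms (which include the $\s{C}$ smallest-gap arms $\{\pi_c(s)\}_{s}$): summing $\widetilde{O}(\sigma/\Psi_{c,a}+\Psi_{c,a})$ over the $|\ca{C}^{(c)}|$ users of each cluster and normalizing by the sampling rate gives the final $\s{N}^{-1}$ term. The step I expect to be hardest is the charging argument, because the phase length $R_\ell=\max_i R_{\ell,i}$ is set globally whereas the target sum is organized per cluster and per gap-band: I must argue that whenever a phase actually runs matrix completion some active set still exceeds $\gamma\s{C}$, which forces a non-empty $\ca{G}_{c,\ell'}$ with $\ell'\ge\ell$ and hence a valid representative in $\ca{H}_c$ to absorb the cost, and I must track the $\s{C}$-factors carefully (where $\epsilon_\ell\asymp\s{C}\Delta_{\ell+1}$ and the $\gamma\s{C}$ threshold interact) so that the bound lands at $\s{C}\s{V}/\Psi_{c,a}$ rather than a larger power of $\s{C}$.
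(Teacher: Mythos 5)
Your proposal follows essentially the same route as the paper's proof: condition on the good events $\ca{E}^{(\ell)},\ca{E}_2^{(\ell)}$, use the elimination invariant to show arms in $\ca{G}_{c,\ell}$ are only pulled through phase $\ell$ (at cost $\widetilde{O}(\s{V}/\Delta_{\ell+1}^2)$ rounds per phase), charge each phase's regret to the representative arms in $\ca{H}_c$ via $\epsilon_{\ell-1}\le 2\Psi_{c,\hat a}$, handle the failure event by multiplying a trivial regret bound by the $O(\s{T}^{-3})$ failure probability, and close with the standard gap-dependent UCB bound for users whose active set has shrunk below $\gamma\s{C}$. The only difference is bookkeeping — you sum regret phase-by-phase ($R_\ell\epsilon_\ell$) and then reassign to gap bands, whereas the paper organizes the count arm-band-by-arm-band via the decomposition $\sum_{c,a}\Psi_{c,a}\,\bb{E}[\s{T}^{(1)}_{c,a}]$ — and both land on the same final expression, with the charging subtlety you flag resolved by the geometric decay of $\epsilon_\ell$ exactly as you suggest.
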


Loosely speaking, the regret bound in Thm. \ref{thm:main_LBM2} scales as $\widetilde{O}((\s{M}+\s{N})/\Psi)$ where $\Psi$ is the minimum sub-optimality gap across all the $\s{N}$ users involved.
This is because arms with large sub-optimality gaps are quickly eliminated by LATTICE in the initial phases itself; therefore if most competing arms for a user has large sub-optimality gaps, then the user will end up pulling high reward arms more often .
Again, this guarantee improves over the $\widetilde{O}((\s{MN})/\Psi)$ regret trivially obtained without collaboration across users.
%Next, we provide a brief proof overview.

\subsubsection{Lower Bounds}
\label{sec:lower_bounds}
\begin{thm}[Distribution-free]
\label{thm:dist_free_lower_bounds}
Let $\s{C} \leq \min\{\s{M}, \s{N}\}$. Suppose the distributions of arm rewards are Bernoulli and suppose the user at round $t$ is sampled independently from  a distribution $\probuser$. Moreover, suppose the weighted fraction of users in the $i^{th}$ cluster is $\tau_i$. Let $\sup$ be the supremum over all problem instances and $\inf$ be the infimum over all algorithms with knowledge of $\s{M}, \s{N}, \s{C}$. Then
\[
\inf \sup \s{Reg}(\s{T}) \geq 0.02(R_1+R_2),
\]
where $R_1  = \sum_{n\in [\s{N}]} \mathbb{E}\left[\min\{\sqrt{\s{C}\s{T}_n}, \s{T}_n\}\right],$  $R_2  = \sum_{c\in [\s{C}]} \mathbb{E}\left[\min\{\sqrt{\s{M}\s{T}_c}, \s{T}_c\}\right]$. Here,
 $\s{T}_n\sim \text{Bin}(\s{T}, \probuser(n)),$ $\s{T}_c \sim \text{Bin}(\s{T}, \tau_c)$  are binomial random variables.
\end{thm}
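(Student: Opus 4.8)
The plan is to prove the two summands separately, via two different families of hard instances, and then combine them through the elementary bound $\max\{a,b\}\ge (a+b)/2$. Since the supremum runs over \emph{all} instances, any subfamily $\ca{F}$ gives $\inf\sup\s{Reg}(\s{T})\ge\inf\sup_{\ca{F}}\s{Reg}(\s{T})$, so it suffices to exhibit a family $\ca{F}_2$ forcing expected regret $\ge 0.04\,R_2$ and a family $\ca{F}_1$ forcing $\ge 0.04\,R_1$; the claim then follows from $\max\{0.04R_1,0.04R_2\}\ge 0.02(R_1+R_2)$. In each family I would hand the algorithm extra side information for free: since this only makes the problem easier, a lower bound for the easier problem is \emph{a fortiori} a lower bound for MAB-LC (any MAB-LC algorithm ignores the side info and achieves the same regret, so $\inf\sup_{\s{MAB\text{-}LC}}\ge\inf\sup_{\s{side\text{-}info}}$). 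The side information is chosen so that the residual problem decouples into independent classical bandits whose minimax regret is known.

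For $R_2$ (the price of finding the best arm inside each cluster), I would reveal the entire cluster-assignment map. The matrix is then fixed by the $\s{C}$ cluster vectors, chosen independently: for cluster $c$ pick a uniformly random good arm $a_c\in[\s{M}]$ with $\fl{X}^{(c)}_{a_c}=1/2+\Delta_c$ and $\fl{X}^{(c)}_{j}=1/2$ for $j\ne a_c$ (a valid $\s{CS}$ instance). A reward observed on a cluster-$c$ round depends only on $\fl{X}^{(c)}$, and the $a_c$ are independent, so the $\s{C}$ clusters present mutually independent $\s{M}$-armed bandits; cluster $c$ receives $\s{T}_c$ rounds in total. Conditioning on the exogenous user-sampling sequence fixes each $\s{T}_c$, the standard minimax bandit lower bound optimized over $\Delta_c$ (\cite{lattimore2020bandit}) gives per-cluster regret $\ge c_0\min\{\sqrt{\s{M}\s{T}_c},\s{T}_c\}$, and summing over clusters and taking the expectation over the sampling sequence (so $\s{T}_c\sim\text{Bin}(\s{T},\tau_c)$) yields $\ge c_0R_2$.

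For $R_1$ (the price of learning which cluster a user belongs to), I would instead reveal the $\s{C}$ cluster reward vectors and hide the assignment. Take a needle-in-a-haystack design: cluster $c$ has $\fl{X}^{(c)}_c=1/2+\Delta$, $\fl{X}^{(c)}_j=1/2$ for $j\in[\s{C}]\setminus\{c\}$, and $0$ on arms $>\s{C}$, and assign each user to a uniformly random cluster independently. The crucial point is that, with the vectors known, a pull by user $n$ returns $\text{Bernoulli}(\fl{X}^{(c^\star(n))}_j)$, which is informative about $c^\star(n)$ \emph{only}; because the assignments are independent across users, the pulls of every other user are independent of $c^\star(n)$. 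Hence the per-user cluster-identification problems are independent, and each is exactly a $\s{C}$-armed bandit with gap $\Delta$ over horizon $\s{T}_n$ (arms $>\s{C}$ are strictly dominated). Applying the same minimax bound per user and taking the expectation over the sampling sequence ($\s{T}_n\sim\text{Bin}(\s{T},\probuser(n))$) gives $\ge c_0R_1$.

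The main obstacle is precisely the independence argument in the $R_1$ family: a priori the within-cluster information sharing that LATTICE exploits could let an algorithm infer one user's cluster from another user's pulls, which would invalidate the naive sum of per-user bandit bounds. The resolution is the ``reveal the reward vectors, randomize the assignments independently'' construction above, under which cross-user observations are provably independent of $c^\star(n)$; the side information can then be integrated out and the conditional per-user problem is a genuine $\s{C}$-armed bandit. The remaining points are routine: (i) conditioning on the user-sampling process to turn the random per-user and per-cluster horizons into the stated binomials, and (ii) selecting the gaps $\Delta,\Delta_c$ to realize the $\min\{\sqrt{\cdot},\cdot\}$ form in both regimes (a constant gap when the horizon falls below the number of arms, and $\Theta(\sqrt{k/\text{horizon}})$ otherwise), while tracking constants so that each family delivers a factor of at least $0.04$.
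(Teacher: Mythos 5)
Your proposal is correct and follows essentially the same route as the paper: the paper likewise splits the bound into a ``known cluster assignments'' family (yielding $R_2$ via $\s{C}$ independent needle-in-a-haystack $\s{M}$-armed bandits with random horizons $\s{T}_c$) and a ``known cluster rewards'' family (yielding $R_1$ via $\s{N}$ independent $\s{C}$-armed bandits with horizons $\s{T}_n$), each handled by the standard randomized-instance/Pinsker minimax argument conditioned on the arrival sequence, and then combines them with $\max\{a,b\}\ge (a+b)/2$. Your explicit independence argument for the $R_1$ family (independent random assignments so that other users' pulls carry no information about $c^\star(n)$) is a welcome clarification of a step the paper treats only by analogy, but it is a refinement within the same proof strategy rather than a different one.
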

We now specialize the above result to the case where $\probuser$ is a uniform distribution, and the cluster sizes are uniform.
\begin{coro}
\label{cor:dist_free_lower_bounds_uniform}
Consider the setting of Theorem~\ref{thm:dist_free_lower_bounds}. Suppose $\s{T} \geq 10(\s{M}+\s{N})\s{C},$ $\probuser$ is the uniform distribution, and suppose each cluster has the same number of users. Then $\inf \sup \s{Reg}(\s{T}) = \Omega(\sqrt{(\s{M}+\s{N})\s{CT}})$
\end{coro}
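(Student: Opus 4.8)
The plan is to derive the corollary by directly evaluating the two sums $R_1$ and $R_2$ from Theorem~\ref{thm:dist_free_lower_bounds} under the stated specialization. With $\probuser$ uniform we have $\s{T}_n \sim \text{Bin}(\s{T}, 1/\s{N})$ with mean $\mathbb{E}[\s{T}_n] = \s{T}/\s{N}$, and with equal cluster sizes $\tau_c = 1/\s{C}$ so that $\s{T}_c \sim \text{Bin}(\s{T}, 1/\s{C})$ with mean $\mathbb{E}[\s{T}_c] = \s{T}/\s{C}$. The hypothesis $\s{T} \ge 10(\s{M}+\s{N})\s{C}$ is precisely what forces these means to be large: it gives $\mathbb{E}[\s{T}_n] = \s{T}/\s{N} \ge 10\s{C}$ and $\mathbb{E}[\s{T}_c] = \s{T}/\s{C} \ge 10\s{M}$. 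Everything then reduces to a single generic estimate: lower bounding $\mathbb{E}[\min\{\sqrt{a\,\s{B}}, \s{B}\}]$ for a binomial count $\s{B}$ whose mean satisfies $\mathbb{E}[\s{B}] \ge 10a$.

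The technical heart is this single-term estimate, and the one subtlety is that the square root is concave, so Jensen's inequality points the wrong way ($\mathbb{E}[\sqrt{\s{B}}] \le \sqrt{\mathbb{E}[\s{B}]}$) and I cannot merely substitute the mean. Instead I would pass to a ``typical'' event of constant probability. By a multiplicative Chernoff bound, $\Pr[\s{B} \ge \tfrac{1}{2}\mathbb{E}[\s{B}]] \ge 1 - e^{-\mathbb{E}[\s{B}]/8} \ge 0.7$, where the last step uses $\mathbb{E}[\s{B}] \ge 10$. On this event $\s{B} \ge \tfrac12\mathbb{E}[\s{B}] \ge 5a \ge a$, so the $\min$ resolves to its square-root branch and $\min\{\sqrt{a\s{B}}, \s{B}\} = \sqrt{a\s{B}} \ge \sqrt{a\cdot \tfrac12\mathbb{E}[\s{B}]}$. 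Taking expectations, $\mathbb{E}[\min\{\sqrt{a\s{B}}, \s{B}\}] \ge 0.7\,\sqrt{a\,\mathbb{E}[\s{B}]/2} = \Omega(\sqrt{a\,\mathbb{E}[\s{B}]})$.

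Applying this estimate with $a = \s{C}$, $\s{B} = \s{T}_n$, $\mathbb{E}[\s{B}] = \s{T}/\s{N}$ bounds each summand of $R_1$ below by $\Omega(\sqrt{\s{C}\s{T}/\s{N}})$, and summing the $\s{N}$ identical terms yields $R_1 = \Omega(\s{N}\sqrt{\s{C}\s{T}/\s{N}}) = \Omega(\sqrt{\s{N}\s{C}\s{T}})$. Applying it with $a = \s{M}$, $\s{B} = \s{T}_c$, $\mathbb{E}[\s{B}] = \s{T}/\s{C}$ bounds each summand of $R_2$ below by $\Omega(\sqrt{\s{M}\s{T}/\s{C}})$, and summing the $\s{C}$ terms yields $R_2 = \Omega(\s{C}\sqrt{\s{M}\s{T}/\s{C}}) = \Omega(\sqrt{\s{M}\s{C}\s{T}})$. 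Plugging these into $\inf\sup\s{Reg}(\s{T}) \ge 0.02(R_1+R_2)$ and using $\sqrt{\s{N}\s{C}\s{T}} + \sqrt{\s{M}\s{C}\s{T}} = \Theta(\sqrt{(\s{M}+\s{N})\s{C}\s{T}})$ gives the claimed $\Omega(\sqrt{(\s{M}+\s{N})\s{C}\s{T}})$ bound.

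The main obstacle is exactly the concavity issue flagged above: the naive heuristic $\mathbb{E}[\sqrt{\s{B}}] \approx \sqrt{\mathbb{E}[\s{B}]}$ is an \emph{upper} bound by Jensen, so the matching lower bound must be earned through concentration on a constant-probability event rather than asserted. At the same time one must verify that the binomial count clears the relevant threshold (namely $\s{T}_n \ge \s{C}$ for $R_1$ and $\s{T}_c \ge \s{M}$ for $R_2$) so that the $\min$ genuinely selects the $\sqrt{a\,\s{B}}$ branch and not the linear branch; both requirements are met simultaneously and solely because of the $\s{T} \ge 10(\s{M}+\s{N})\s{C}$ hypothesis, which is why that assumption appears in the statement.
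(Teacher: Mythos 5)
Your proof is correct, but it reaches the corollary by a different route than the paper. You treat Theorem~\ref{thm:dist_free_lower_bounds} as a black box and directly lower bound each summand $\bb{E}[\min\{\sqrt{a\s{B}},\s{B}\}]$ via a multiplicative Chernoff bound: since $\s{T}\ge 10(\s{M}+\s{N})\s{C}$ forces $\bb{E}[\s{T}_n]\ge 10\s{C}$ and $\bb{E}[\s{T}_c]\ge 10\s{M}$, the binomial count exceeds half its mean with probability at least $0.7$, on which event the $\min$ resolves to the square-root branch and contributes $\Omega(\sqrt{a\,\bb{E}[\s{B}]})$; summing gives $R_1=\Omega(\sqrt{\s{NCT}})$ and $R_2=\Omega(\sqrt{\s{MCT}})$. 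Your handling of the Jensen direction is exactly right --- the concavity of $\sqrt{\cdot}$ means the lower bound must come from concentration, not from substituting the mean. The paper instead never evaluates the $R_1,R_2$ sums in this form: it specializes the intermediate expressions from the theorem's proof, namely the quadratic-in-$\epsilon$ bound involving $\sum_c\bb{E}[\s{T}_c^{3/2}]$, using $\bb{E}[\s{T}_c^{3/2}]=\Theta(\s{T}^{3/2}\s{C}^{-3/2})$ for uniform clusters, to get $0.05\sqrt{\s{MCT}}$ (when $\s{T}>0.5\s{MC}$) and symmetrically $0.05\sqrt{\s{NCT}}$ for the known-cluster-rewards instance, then combines the two. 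Your argument is more modular (it would apply to any restatement of the theorem in the $\min\{\sqrt{a\s{B}},\s{B}\}$ form and makes transparent why the hypothesis $\s{T}\ge 10(\s{M}+\s{N})\s{C}$ is needed), while the paper's yields explicit constants and also covers the small-$\s{T}$ regime where the bound degrades to $\Omega(\s{T})$. Both are sound.
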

Together with Theorem~\ref{thm:main_LBM}, the above result shows that LATTICE achieves minimax optimal regret when $\s{C} = O(1)$, and the reward matrix $\fl{X}$ satisfies the incoherence condition.
\begin{thm}[Distribution-dependent]
\label{thm:dist_dep_lower_bounds}
Consider the setting of Theorem~\ref{thm:dist_free_lower_bounds}. Suppose there is a unique best arm for each cluster. Moreover, suppose our algorithm is uniformly efficient, \textit{i.e.,} for any sub-optimal arm $a$ of any user $u$, $\mathbb{E}[N_{a,u}(T)] = o(T^{\alpha})$ for all $\alpha \in (0,1)$. Then
\[
\lim_{\s{T}\to\infty}\frac{\s{Reg}(\s{T})}{\log{\s{T}}} \geq \sum_{c\in[\s{C}]} \sum_{a} \frac{\fl{P}_{u(c)\pi_{u(c)}(1)}(1-\fl{P}_{u(c)\pi_{u(c)}(1)})}{\Psi_{c,a}},
\]
where the inner summation is over the set of all sub-optimal arms in cluster $c$. Here, $u(c)$ is any user in cluster $c$, and $\fl{P}_{u(c)\pi_{u(c)}(1)}$ is the mean reward of the best arm in cluster $c$, and  $\Psi_{c,a} = \fl{P}_{u(c)\pi_{u(c)}(1)} - \fl{P}_{u(c)a}$.
\end{thm}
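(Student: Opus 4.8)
The plan is to reduce the multi-user problem to a family of single-agent bandit problems, one per cluster, and then invoke the classical change-of-measure lower bound of Lai--Robinson type. First I would decompose the regret by cluster and arm. Since every user $u\in\ca{C}^{(c)}$ shares the reward vector $\fl{P}_{u(c)}$, the instantaneous regret incurred whenever a cluster-$c$ user pulls a sub-optimal arm $a$ is exactly $\Psi_{c,a}=\mu^{\star}_{u(c)}-\fl{P}_{u(c)a}$. Writing $N_{a,c}(\s{T})=\sum_{u\in\ca{C}^{(c)}}N_{a,u}(\s{T})$ for the \emph{pooled} number of pulls of arm $a$ by cluster-$c$ users, we obtain the exact identity $\s{Reg}(\s{T})=\sum_{c\in[\s{C}]}\sum_{a}\Psi_{c,a}\,\mathbb{E}[N_{a,c}(\s{T})]$, the inner sum running over the sub-optimal arms of cluster $c$. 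It therefore suffices to prove $\liminf_{\s{T}\to\infty}\mathbb{E}[N_{a,c}(\s{T})]/\log\s{T}\ge 1/\m{kl}(\fl{P}_{u(c)a},\mu^{\star}_{u(c)})$ for each such pair $(c,a)$, where $\m{kl}(p,q)$ denotes the Bernoulli KL divergence.

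For a fixed sub-optimal pair $(c,a)$, I would construct an alternative instance $\nu'$ identical to the true instance $\nu$ except that the reward mean of arm $a$ for every user in cluster $c$ is raised to $\mu^{\star}_{u(c)}+\varepsilon$ for small $\varepsilon>0$. This keeps the cluster structure intact (all cluster-$c$ users still share one reward vector), leaves $\probuser$ and all other clusters untouched, and—using the unique-best-arm hypothesis—makes $a$ the unique optimal arm of cluster $c$ under $\nu'$. Because $\nu$ and $\nu'$ differ only in the law of the observation received when a cluster-$c$ user pulls arm $a$, the divergence-decomposition identity gives $\m{KL}(\mathbb{P}_{\nu}\,\|\,\mathbb{P}_{\nu'})=\mathbb{E}_{\nu}[N_{a,c}(\s{T})]\cdot \m{kl}(\fl{P}_{u(c)a},\mu^{\star}_{u(c)}+\varepsilon)$, which is precisely where information pools across all users of the cluster. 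Applying the Bretagnolle--Huber inequality \cite{lattimore2020bandit} to the event $E=\{N_{a,c}(\s{T})\ge \s{T}_c/2\}$ (with $\s{T}_c$ the number of cluster-$c$ arrivals) and invoking uniform efficiency yields $\mathbb{P}_{\nu}(E)=o(\s{T}^{\alpha-1})$ (arm $a$ sub-optimal under $\nu$) and $\mathbb{P}_{\nu'}(E^{c})=o(\s{T}^{\alpha-1})$ (arm $a$ optimal under $\nu'$, so the remaining finitely many arms are pulled $o(\s{T}^{\alpha})$ times), for every $\alpha\in(0,1)$. Hence $\m{KL}(\mathbb{P}_{\nu}\,\|\,\mathbb{P}_{\nu'})\ge(1-\alpha-o(1))\log\s{T}$, and letting $\alpha\downarrow 0$ and then $\varepsilon\downarrow0$ (using continuity of $\m{kl}$) delivers the per-pair count bound.

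Finally I would convert the KL bound into the stated form. Using $\m{kl}(p,q)\le\chi^2(\m{Ber}(p)\,\|\,\m{Ber}(q))=\tfrac{(p-q)^2}{q(1-q)}$ with $p=\fl{P}_{u(c)a}$ and $q=\mu^{\star}_{u(c)}$ gives $\m{kl}(\fl{P}_{u(c)a},\mu^{\star}_{u(c)})\le \Psi_{c,a}^2\big/\big(\mu^{\star}_{u(c)}(1-\mu^{\star}_{u(c)})\big)$, so that $\Psi_{c,a}\big/\m{kl}(\fl{P}_{u(c)a},\mu^{\star}_{u(c)})\ge \mu^{\star}_{u(c)}(1-\mu^{\star}_{u(c)})/\Psi_{c,a}$. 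Multiplying the per-pair count bound by $\Psi_{c,a}$ and summing over all clusters $c$ and their sub-optimal arms $a$ yields $\liminf_{\s{T}\to\infty}\s{Reg}(\s{T})/\log\s{T}\ge\sum_{c\in[\s{C}]}\sum_{a}\mu^{\star}_{u(c)}(1-\mu^{\star}_{u(c)})/\Psi_{c,a}$, as required.

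The main obstacle I anticipate is the randomness of the per-cluster horizon $\s{T}_c\sim\m{Bin}(\s{T},\tau_c)$: the classical lower bound is stated for a fixed horizon, whereas here the number of rounds devoted to cluster $c$ is itself random and the algorithm need not know the cluster assignment. I would resolve this by concentration—$\s{T}_c\ge \tfrac12\tau_c\s{T}$ with probability $1-o(1)$ and $\log\s{T}_c=(1+o(1))\log\s{T}$—so that the Markov/uniform-efficiency estimates in the Bretagnolle--Huber step go through with $\s{T}$ replaced by $\s{T}_c$ up to vanishing corrections. Care is also needed in the exchange of the limits $\s{T}\to\infty$, $\alpha\downarrow0$, $\varepsilon\downarrow0$, and in verifying that $\nu'$ remains a legitimate instance of the model class (Bernoulli parameters in $(0,1)$, a unique best arm per cluster, and unchanged $\s{C}$ and user-to-cluster assignment); these are routine but must be checked.
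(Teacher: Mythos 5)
Your proposal is correct and follows essentially the same route as the paper: a change-of-measure argument with an alternative instance that raises the mean of one sub-optimal arm in one cluster above the optimum, a data-processing/Bretagnolle--Huber step applied to a "half the cluster's pulls" event combined with uniform efficiency to force the KL to grow like $\log \s{T}$, divergence decomposition to isolate $\mathbb{E}[N_{a,c}(\s{T})]$, and the $\chi^2$ upper bound on the Bernoulli KL to obtain the stated $\mu^{*}(1-\mu^{*})/\Psi_{c,a}$ form. The only cosmetic differences are your explicit handling of the random per-cluster horizon $\s{T}_c$ (which the paper glosses over) and your choice of test event, neither of which changes the argument.
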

% https://arxiv.org/pdf/1602.07182.pdf
The lower bound in Theorem~\ref{thm:dist_dep_lower_bounds} can be tightened a bit more, albeit at the expense of readability. We provide this improved bound in the Appendix.

\section{LATTICE ALGORITHM FOR $\s{RCS}$}\label{sec:GLBM}

\begin{algorithm}[t]
\caption{\textsc{LATTICE Algorithm for GCS}   \label{algo:phased_elim_gen}}
\begin{algorithmic}[1]
\REQUIRE Number of users $\s{N}$, arms $\s{M}$, clusters $\s{C}$, rounds $\s{T}$, noise $\sigma^2$, separation $\nu>0$, parameters $\gamma \ge 1$, $\s{C}'\ge 0$.

\STATE Initialize as in Step 1 of Alg. \ref{algo:phased_elim}.
\FOR{$\ell=1,2,\dots,$}

\STATE Run Steps 3-4 as in
Algorithm \ref{algo:phased_elim}.

\IF{$\Delta_{\ell+1} \ge 2\nu$ and $\left|\ca{M}^{(\ell)}\right| < \s{C}$}

\STATE Run Steps 5-9 as in Algorithm \ref{algo:phased_elim}

\ELSE 

\STATE For each pair of sets $(\ca{M}^{(\ell,i)},\ca{N}^{(\ell,i)})$ s.t. $|\ca{N}^{(\ell,i)}|\ge \gamma\s{C}$, append $\ca{M}^{(\ell,i)}$ to $\ca{M}^{(\ell+1)}$ and its corresponding set of arms $\bigcap_{u\in \ca{M}^{(\ell,i)}} \ca{T}_u^{(\ell)}$ to $\ca{N}^{(\ell+1)}$. 

\ENDIF

\STATE Run Step 10 in Algorithm \ref{algo:phased_elim}.
 
% \begin{rmk}
%  Steps 5,15 are run in parallel. See Remark \ref{rmk:parallel} for more details.
%  \end{rmk}

\ENDFOR

\end{algorithmic}
\end{algorithm}

LATTICE for $\s{RCS}$ (Alg. \ref{algo:phased_elim_gen}) is very similar to Alg. \ref{algo:phased_elim} with the main novelty being cluster-wise elimination of arms in Steps $7$ that needs a more aggressive approach. In essence, Alg. \ref{algo:phased_elim_gen} has three components:

\begin{itemize}[leftmargin=*,noitemsep, nolistsep]

    \item \textbf{Joint Arm Elimination: } As in Algorithm \ref{algo:phased_elim}, we run a phased algorithm where in the $\ell^{\s{th}}$ phase, we maintain a partition of users $\ca{M}^{(\ell)}$ and a family of subsets of active arms $\ca{N}^{(\ell)}$ having a one-to-one mapping. For any set of users in $\ca{M}^{(\ell)}$ that has more than $\gamma\s{C}$ active arms, we use Matrix Completion techniques to jointly shrink their set of active arms and partition them even further. We stop this component if we end up with $\s{C}$ groups of users for the first time or if $\Delta_{\ell+1}\le 2\nu$. In essence, in each phase, we eliminate arms for multiple clusters of users together.
    
    \item \textbf{Cluster-wise Arm Elimination:} In the second part, we no longer seek to partition each subset of users any further since users in the same subset provably correspond to the same cluster. Here, for elimination of bad arms, we pursue an intersection-based approach of good arms over all users in the same subset (Step 7 in Alg. \ref{algo:phased_elim_gen}); this is more aggressive elimination as compared to the union-based approach (Step 8 in Alg. \ref{algo:phased_elim}) that was pursued in the previous component.
    
    \item \textbf{Upper Confidence Bound:} If number of active arms $\left|\ca{N}^{(\ell,i)}\right|$ for users in a subset $\ca{M}^{(\ell,i)}\in \ca{M}^{(\ell)}$ falls below $\gamma\s{C}$, then we start/continue the Upper Confidence Bound (UCB) algorithm for each user in $\ca{M}^{(\ell,i)}$ separately with their subset of active arms $\ca{N}^{(\ell,i)}$ (Step 10 in Alg. \ref{algo:phased_elim}).
\end{itemize}

\noindent \textbf{Theoretical guarantees:} We make similar assumptions on the reward matrix $\fl{P}\in \bb{R}^{\s{N}\times \s{M}}$ as in the $\s{CS}$ framework:

\begin{assumption}[Assumptions on reward matrix $\fl{P}$]\label{assum:matrix2}
We assume that $\fl{P}$ with SVD decomposition $\fl{P}=\fl{U}\f{\Sigma}\fl{V}^{\s{T}}$ satisfies the following  properties 1) (Condition Number) $\fl{P}$ has rank $\s{C}$ and has non zero singular values $\lambda_1>\lambda_2 > \dots > \lambda_{\s{C}}$ with $\lambda_1/\lambda_{\s{C}}=O(1)$ 2) ($\mu$-incoherence)  $\lr{\fl{U}}_{2,\infty}\le \sqrt{\mu \s{C}/\s{N}}$ and $\lr{\fl{V}}_{2,\infty}\le \sqrt{\mu \s{C}/\s{M}}$. 3) (Subset Strong Convexity (a)) For some constant $\beta>0$ and for any subset of indices $\ca{S}\subseteq [\s{N}], \ca{S}= \ca{C}^{(j)}$ (corresponding to some cluster of users), we must have $\fl{x}^{\s{T}}\fl{U}_{\ca{S}}^{\s{T}}\fl{U}_{\ca{S}}\fl{x} \ge \beta\tau /\s{C}$ for all unit norm vectors $\fl{x}\in \bb{R}^{\s{C}}$.
4) (Subset Strong Convexity (b)) 
%For some $\alpha=O(1)$ and $\gamma =\widetilde{O}(1)$, for any subset of indices $\ca{S}\subseteq [\s{M}], |\ca{S}|=\gamma\s{C}$, we must have $\fl{x}^{\s{T}}\fl{V}_{ \ca{S}}^{\s{T}}\fl{V}_{ \ca{S}}\fl{x} \ge \alpha \gamma \s{C}/\s{M}$ for all unit norm vectors $\fl{x}\in \bb{R}^{\s{C}}$. 
For some $\alpha$ satisfying $\alpha \log \s{M} = \Omega(1)$, $\gamma = \widetilde{O}(1)$, for all subset of indices $\ca{S}\subseteq [\s{M}], |\ca{S}| \ge \gamma\s{C}$, the minimum non-zero singular value of $\fl{V}_{\ca{S}}$ must be at least $\sqrt{\alpha\left|\ca{S}\right|/\s{M}}$.
\end{assumption}

\begin{rmk}
Note that the Subset Strong Convexity (a) of Assumption \ref{assum:matrix2} (used for proving incoherence guarantees of relevant sub-matrices of $\fl{P}$-Lemma \ref{lem:incoherence2}) will be satisfied only if the separation $\nu$ is bounded from below (since for $\ca{S}=\ca{C}^{(j)}$, $\fl{U}_{\ca{S}}$ loses rank when $\nu=0$). However, when $\nu=0$, $\s{RCS}$ reduces to the $\s{CS}$ framework; here, we do not need (Subset Strong Convexity (a)) since we have a different analysis for proving incoherence guarantees (Lemma \ref{lem:incoherence}) of relevant sub-matrices. For extremely small $\nu$, we can combine the two analyses to obtain similar sufficient guarantees (by using triangle inequality for instance).
%\color{red} Put remark regarding lower bound on $\nu$ due to third part of Assumption 3.
\end{rmk}

\begin{figure*}[t]
  \begin{subfigure}[t]{0.33\textwidth}
    \centering 
    \includegraphics[scale = 0.35]{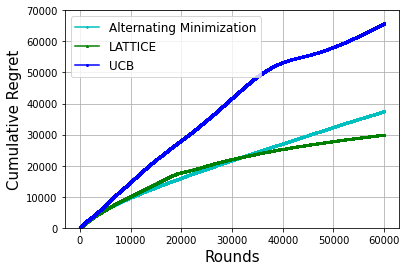}\vspace*{-5pt}
    \caption{\small Movielens dataset}
 ~\label{fig:gaussian2}
  \end{subfigure}
 \hfill
\begin{subfigure}[t]{0.33\textwidth}
\centering
  \includegraphics[scale = 0.35]{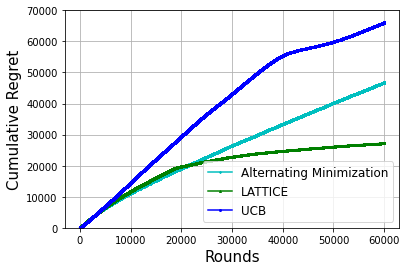}\vspace*{-5pt}
  \caption{\small Netflix dataset}
      ~\label{fig:uniform2}
 \end{subfigure}%
 \hfill
\begin{subfigure}[t]{0.33\textwidth}
\centering
  \includegraphics[scale = 0.35]{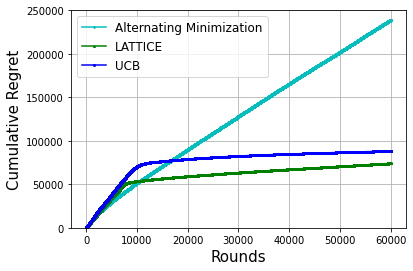}\vspace*{-5pt}
  \caption{\small Jester dataset}
      ~\label{fig:uniform3}
 \end{subfigure}%
 \vspace{-10pt}
 \caption{
 \small Cumulative Regret of the Alternating Minimization (AM) algorithm in \cite{dadkhahi2018alternating}, LATTICE (Alg. 3) and Upper Confidence Bound (UCB) algorithm with  $\s{T}=60000$ rounds for 3 datasets a) \textbf{Movielens 10m dataset:} $\s{N}=200$ users and $\s{M}=200$ arms b) \textbf{Netflix dataset:} $\s{N}=200$ users and $\s{M}=200$ arms c) \textbf{Jester dataset:} $\s{N}=100$ users and $\s{M}=100$ arms. }
\vspace{-15pt}
\end{figure*}

As before, we characterize the condition number and the incoherence of the relevant sub-matrices of $\fl{P}$ that will allow us to apply low rank matrix completion techniques and provide theoretical guarantees (see Lemma \ref{lem:min_acc}).

\begin{lemma}\label{lem:condition_num2}
Suppose Assumption \ref{assum:matrix2} is true.
  Consider a sub-matrix $\fl{P}_{\s{sub}}$ of $\fl{P}$ having non-zero singular values $\lambda'_1>\dots> \lambda'_{\s{C}'}$ (for $\s{C}'\le \s{C}$). Then, provided $\fl{P}_{\s{sub}}$ is non-zero, we have $\frac{\lambda'_1}{\lambda'_{\s{C}'}} \le \frac{\lambda_1}{\lambda_{\s{C}}}$.
 \end{lemma}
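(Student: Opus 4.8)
The plan is to reduce the claim to how the singular values of $\fl{P}$ behave under restriction, using the SVD $\fl{P}=\fl{U}\f{\Sigma}\fl{V}^{\s{T}}$ guaranteed by Assumption~\ref{assum:matrix2}. Writing the chosen sub-matrix as $\fl{P}_{\s{sub}}=\fl{P}_{\ca{S}_1,\ca{S}_2}$ for a row (user) set $\ca{S}_1$ and column (arm) set $\ca{S}_2$, I would first record the factorization $\fl{P}_{\s{sub}}=\fl{U}_{\ca{S}_1}\f{\Sigma}\fl{V}_{\ca{S}_2}^{\s{T}}$. The squared singular values of $\fl{P}_{\s{sub}}$ are exactly the nonzero eigenvalues of the symmetric matrix $\fl{G}_2^{1/2}\f{\Sigma}\,\fl{G}_1\,\f{\Sigma}\,\fl{G}_2^{1/2}$, where $\fl{G}_1\triangleq\fl{U}_{\ca{S}_1}^{\s{T}}\fl{U}_{\ca{S}_1}$ and $\fl{G}_2\triangleq\fl{V}_{\ca{S}_2}^{\s{T}}\fl{V}_{\ca{S}_2}$. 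Both $\fl{G}_1,\fl{G}_2$ are positive semidefinite and, since $\fl{U},\fl{V}$ have orthonormal columns, satisfy $\fl{G}_1\preceq\fl{I}$ and $\fl{G}_2\preceq\fl{I}$; so the whole problem reduces to controlling the extreme eigenvalues of these two Gram matrices together with $\f{\Sigma}$.

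For the largest singular value I would use an elementary extend-and-compare argument: any unit test vectors supported on $\ca{S}_1,\ca{S}_2$ can be zero-padded to unit vectors on $[\s{N}],[\s{M}]$, whence $\lambda'_1=\sigma_{\max}(\fl{P}_{\s{sub}})\le\sigma_{\max}(\fl{P})=\lambda_1$. Equivalently, $\lambda_{\max}\!\left(\fl{G}_2^{1/2}\f{\Sigma}\fl{G}_1\f{\Sigma}\fl{G}_2^{1/2}\right)\le\lambda_{\max}(\fl{G}_2)\,\lambda_1^2\,\lambda_{\max}(\fl{G}_1)\le\lambda_1^2$ using $\fl{G}_1,\fl{G}_2\preceq\fl{I}$.

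The crux, and the main obstacle, is the matching lower bound on the smallest nonzero singular value $\lambda'_{\s{C}'}$: restriction can in general collapse this quantity (a sub-matrix of a perfectly conditioned matrix can be arbitrarily ill-conditioned), so structural input is indispensable here. I would supply it through Subset Strong Smoothness in Assumption~\ref{assum:matrix2}: part~(a) lower bounds $\lambda_{\min}(\fl{G}_1)$ whenever $\ca{S}_1$ corresponds to a cluster, and part~(b) lower bounds $\lambda_{\min}(\fl{G}_2)$ for the arm subsets of size $\gamma\s{C}$ that the algorithm actually encounters. Writing $\f{\Sigma}\fl{G}_1\f{\Sigma}=(\fl{G}_1^{1/2}\f{\Sigma})^{\s{T}}(\fl{G}_1^{1/2}\f{\Sigma})$ gives $\lambda_{\min}(\f{\Sigma}\fl{G}_1\f{\Sigma})\ge\lambda_{\s{C}}^2\,\lambda_{\min}(\fl{G}_1)$, and hence $\lambda_{\min}\!\left(\fl{G}_2^{1/2}\f{\Sigma}\fl{G}_1\f{\Sigma}\fl{G}_2^{1/2}\right)\ge\lambda_{\min}(\fl{G}_2)\,\lambda_{\s{C}}^2\,\lambda_{\min}(\fl{G}_1)$; the $\mu$-incoherence (part~2) supplies the matching upper bounds on $\lambda_{\max}(\fl{G}_1),\lambda_{\max}(\fl{G}_2)$ via their traces, so that the column-count and cluster-size scalings present in both the numerator and denominator cancel.

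Taking the ratio of the two displays yields $\lambda'_1/\lambda'_{\s{C}'}\le(\lambda_1/\lambda_{\s{C}})\sqrt{\kappa(\fl{G}_1)\,\kappa(\fl{G}_2)}$, and the smoothness/incoherence constants $\alpha,\beta,\mu,\s{C}=O(1)$ force $\kappa(\fl{G}_1),\kappa(\fl{G}_2)=O(1)$. The key contrast with the $\s{CS}$ case (Lemma~\ref{lem:condition_num}) is that there the comparison must be routed through the reduced matrix $\fl{X}$, whose cluster multiplicities contribute the explicit factor $\sqrt{n_{\max}/n_{\min}}=\sqrt{\tau}$; in the $\s{RCS}$ setting Assumption~\ref{assum:matrix2} is imposed \emph{directly} on $\fl{P}$ (rank $\s{C}$, incoherent, well-conditioned), so the cluster-size multiplicities are already folded into $\lambda_1,\lambda_{\s{C}}$ and the bound tightens to the stated $\lambda_1/\lambda_{\s{C}}$. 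I expect the only delicate bookkeeping to be verifying that every restriction that arises in the algorithm genuinely meets the hypotheses of parts~(a)/(b) (the row set touches a cluster and $|\ca{S}_2|\ge\gamma\s{C}$), and handling the rank-deficient case $\s{C}'<\s{C}$ by restricting all eigenvalue statements to the nonzero block of $\fl{G}_2^{1/2}\f{\Sigma}\fl{G}_1\f{\Sigma}\fl{G}_2^{1/2}$.
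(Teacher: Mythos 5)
Your reduction to $\fl{G}_1=\fl{U}_{\ca{S}_1}^{\s{T}}\fl{U}_{\ca{S}_1}$ and $\fl{G}_2=\fl{V}_{\ca{S}_2}^{\s{T}}\fl{V}_{\ca{S}_2}$ and the ensuing chain of eigenvalue inequalities only deliver $\lambda'_1/\lambda'_{\s{C}'}\le(\lambda_1/\lambda_{\s{C}})\sqrt{\kappa(\fl{G}_1)\kappa(\fl{G}_2)}$, which is strictly weaker than the asserted $\lambda'_1/\lambda'_{\s{C}'}\le\lambda_1/\lambda_{\s{C}}$: the Gram matrices of restricted singular-vector blocks are not isotropic in general, so the extra factor does not ``tighten to'' $1$, and you give no argument that it does. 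Two further points undermine the route. First, the Subset Strong Smoothness hypotheses you invoke to lower-bound $\lambda_{\min}(\fl{G}_1)$ and $\lambda_{\min}(\fl{G}_2)$ only cover row sets that are (unions of) clusters and column sets of size at least $\gamma\s{C}$, whereas the lemma is stated for an arbitrary non-zero sub-matrix. Second, in the genuinely rank-deficient case $\s{C}'<\s{C}$ (e.g.\ $|\ca{S}_2|<\s{C}$, so $\fl{G}_2$ is singular) the product bound $\lambda_{\min}(\fl{G}_2)\,\lambda_{\s{C}}^2\,\lambda_{\min}(\fl{G}_1)$ equals $0$ and says nothing about the smallest \emph{nonzero} eigenvalue of $\fl{G}_2^{1/2}\f{\Sigma}\fl{G}_1\f{\Sigma}\fl{G}_2^{1/2}$; restricting to ``the nonzero block'' is exactly the hard step, not bookkeeping, because the smallest nonzero eigenvalue of a product of singular PSD factors is not controlled by the smallest nonzero eigenvalues of the factors.

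The paper's proof takes a different route that uses none of the incoherence or smoothness assumptions. It establishes a standalone comparison (Lemma~\ref{lemma:cnd_bound}): for a column restriction $\fl{P}_{\ca{S}}$ with non-trivial row space, $\lambda_{\max}(\fl{P}_{\ca{S}}^{\s{T}}\fl{P}_{\ca{S}})\le\lambda_{\max}(\fl{P}^{\s{T}}\fl{P})$ by zero-padding test vectors (this matches your argument for $\lambda'_1$), while the smallest \emph{nonzero} eigenvalue is argued to only \emph{increase} under restriction, by comparing the infimum of the Rayleigh quotient over the full row space against the infimum over row-space vectors supported on $\ca{S}$. This comparison is then applied twice, once to the columns and once (after transposing) to the rows, yielding the bound with no extra factor. (That min-nonzero-eigenvalue step is itself delicate: it hinges on identifying row-space vectors of $\fl{P}$ supported on $\ca{S}$ with the row space of $\fl{P}_{\ca{S}}$, which is exactly the same pressure point your proposal runs into.) As written, your argument would at best establish $\kappa(\fl{P}_{\s{sub}})=O(\kappa(\fl{P}))$ for the specific sub-matrices the algorithm encounters, which is what Lemma~\ref{lem:min_acc} actually consumes, but it does not prove the inequality stated in Lemma~\ref{lem:condition_num2}.
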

 
 \begin{lemma}\label{lem:incoherence2}
Suppose Assumption \ref{assum:matrix2} is true. Consider a sub-matrix $\fl{P}_{\s{sub}}\in \bb{R}^{\s{N}'\times \s{M}'}$ (with SVD decomposition $\fl{P}_{\s{sub}}=\widetilde{\fl{U}}\widetilde{\f{\Sigma}}\widetilde{\fl{V}}$) of $\fl{P}$ whose rows correspond to a nice subset of users. Then, provided the number of columns in $\fl{P}_{\s{sub}}$ is larger than $\gamma\s{C}$, we must have $\lr{\widetilde{\fl{U}}}_{2,\infty} \le \sqrt{\frac{\s{C}\tau}{\s{N}'}}$ and $\lr{\widetilde{\fl{V}}}_{2,\infty} \le \sqrt{\frac{\mu \s{C}}{\alpha\s{M}'}}$. 
\end{lemma}

Now, we are ready to state our main theorems

\begin{thm}\label{thm:main_GLBM}
Consider the MAB-LC problem in $\s{RCS}$ framework with $\s{M}$ arms, $\s{N}$ users, $\s{C}$ clusters and $\s{T}$ rounds such that at every round $t\in [\s{T}]$, we observe reward $\fl{R}^{(t)}$ as defined in eq. (\ref{eq:obs}) with noise variance proxy $\sigma^2>0$. Let $\fl{P}\in \bb{R}^{\s{N}\times \s{M}}$ be the expected reward matrix such that Assumption \ref{assum:matrix2} is satisfied by $\fl{P}$. Moreover, suppose Assumption \ref{assum:cluster_ratio} is true. Then Alg. \ref{algo:phased_elim_gen} with $C'=c\s{C}^{-1}\min\Big(\|\fl{P}\|_{\infty},\frac{\sigma\sqrt{\mu}}{\log \s{M}}\Big)$ for some appropriate constant $c>0$ guarantees the regret $\s{Reg}(\s{T})$ to be:
\begin{align}\label{eq:regret2}
    \widetilde{O}(\sigma\sqrt{\mu^3\s{T}(\s{M+N})}) +\sigma \sqrt{\s{NT}}).
\end{align}
\end{thm}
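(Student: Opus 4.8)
The plan is to follow the phase-by-phase induction used for Theorem \ref{thm:main_LBM}, tracking the same high-probability events while carefully incorporating the $\nu$-slack permitted in the $\s{RCS}$ model. For each phase $\ell$ I define the event $\ca{E}^{(\ell)}$ to hold when $\ca{M}^{(\ell)}$ is a partition into \emph{nice} subsets whose active arm sets $\ca{N}^{(\ell,i)}$ contain every best arm $\pi_u(1)$ and satisfy the gap bound (\ref{eq:gap_main}) with parameter $\epsilon_\ell$. Because Lemmas \ref{lem:condition_num2} and \ref{lem:incoherence2} certify that every relevant sub-matrix of $\fl{P}$ restricted to a nice subset inherits $O(1)$ condition number and $\widetilde{O}(1)$ incoherence, I invoke Lemma \ref{lem:min_acc} on each $\fl{P}_{\ca{M}^{(\ell,i)},\ca{N}^{(\ell,i)}}$ to obtain an estimate $\widetilde{\fl{P}}$ with $\lr{\widetilde{\fl{P}}-\fl{P}}_{\infty}\le \Delta_{\ell+1}$, defining $\ca{E}_2^{(\ell)}$. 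Conditioning on $\ca{E}^{(\ell)}\cap\ca{E}_2^{(\ell)}$, the aim is to show $\ca{E}^{(\ell+1)}$ holds with high probability.

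First I handle the \textbf{joint arm elimination} component (executed while $\Delta_{\ell+1}\ge 2\nu$ and $|\ca{M}^{(\ell)}|<\s{C}$), which is essentially identical to the $\s{CS}$ analysis: the graph-connectivity lemma guarantees that connected components are nice subsets and that the union-based arm set $\ca{N}^{(\ell,i,j)}$ retains the best arm for every member while the gap tightens to $\epsilon_{\ell+1}$. The new ingredient is that the $\nu$-perturbation inflates the per-user reward comparisons by $O(\nu)$, so I re-run the graph argument with $\Delta_{\ell+1}$ effectively replaced by $\Delta_{\ell+1}+O(\nu)$; since this component is active only when $\Delta_{\ell+1}\ge 2\nu$, the extra term is absorbed into constants and the edge versus non-edge decisions remain correct.

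The genuinely new step is the \textbf{cluster-wise elimination} component (Step 7 of Alg.~\ref{algo:phased_elim_gen}), triggered once the groups have stabilized to the $\s{C}$ true clusters or $\Delta_{\ell+1}$ has dropped below $2\nu$. Here I argue that the $20\nu$-separation across clusters guarantees that by this point no two users from different clusters can remain in the same subset, so each $\ca{M}^{(\ell,i)}$ equals one true cluster $\ca{C}^{(c)}$. Since all such users share $\pi_u(1)$ and have $\nu$-close reward vectors, every user's good-arm set $\ca{T}_u^{(\ell)}$ contains this common best arm; hence the intersection $\bigcap_{u\in \ca{M}^{(\ell,i)}}\ca{T}_u^{(\ell)}$ is nonempty and still contains $\pi_u(1)$, which justifies the more aggressive elimination while preserving the gap invariant (\ref{eq:gap_main}).

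Finally, I bound the per-round regret in phase $\ell$ by $\epsilon_\ell=\Delta_\ell/64\s{C}$, with roughly $\Delta_\ell^{-2}$ rounds in the phase by the sample complexity in Lemma \ref{lem:min_acc}, giving phase regret $\widetilde{O}(\sqrt{(\s{M}+\s{N})\s{T}})$ after summing the geometric series over the $O(\log\s{T})$ phases; the UCB tail for subsets with fewer than $\gamma\s{C}$ active arms contributes the additive $\sigma\sqrt{\s{NT}}$ term exactly as in Theorem \ref{thm:main_LBM}. \textbf{The main obstacle} I anticipate is making the transition argument fully rigorous: I must show that joint elimination terminates \emph{precisely} when the active subsets coincide with true clusters --- neither merging two clusters (ruled out by the $20\nu$-separation on some best arm) nor prematurely splitting a single cluster (ruled out by the $\nu$-closeness together with the $\Delta_{\ell+1}\ge 2\nu$ stopping rule). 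Quantifying the interaction between the accumulating $O(\s{C}\Delta_\ell)$ graph-path slack and the fixed separation $\nu$ is the delicate part of the argument.
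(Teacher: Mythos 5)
Your proposal follows essentially the same route as the paper's proof: the same phase-wise induction on $\ca{E}^{(\ell)},\ca{E}_2^{(\ell)}$ via Lemmas \ref{lem:condition_num2}, \ref{lem:incoherence2} and \ref{lem:min_acc}, the same $O(\nu)$-inflation of the graph thresholds during joint elimination, the same transition argument (the $20\nu$ best-arm separation forbids inter-cluster edges once $\Delta_{\ell+1}\le 2\nu$ while $\nu$-closeness keeps each cluster a clique, which is exactly what the paper's Lemmas \ref{lem:interesting_clique_general1}--\ref{lem:interesting_clique_general3} establish), the same intersection-preserves-the-common-best-arm argument for cluster-wise elimination, and the same $\sum_\ell \epsilon_\ell m_\ell$ regret summation. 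The "delicate part" you flag is precisely the content of those three lemmas in the paper, so the plan is correct and matches the paper's proof.
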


Note that the regret bound above is similar to that of Theorem~\ref{thm:main_LBM}, despite the stricter setting. Here again, the "scale-free" regret is $\sqrt{\s{T}(\s{M}+\s{N})}$ which as discussed in remarks below Theorem~\ref{thm:main_LBM}, is intuitive, is practical in realistic regimes, and is nearly optimal. 
%The proof of Thm. \ref{thm:main_GLBM} is mostly similar to that of Thm. \ref{thm:main_LBM}. The detailed proof is provided in Appendix \ref{app:glbm}.

\begin{rmk}
Note that the generalization remarks \ref{rmk:gen_1},\ref{rmk:gen_2} in Sec. \ref{subsec:theory} also extend to Theorem \ref{thm:main_GLBM}.
Also, recall the definitions of $\ca{H}_c,\ca{G}_{c,\ell}$ for clusters $c\in [\s{C}]$ and phases indexed by $\ell \ge 1$ depending on the sub-optimality gap from Section \ref{subsec:theory}. With equivalent definitions for the $\s{RCS}$ setting, the gap dependent regret bounds in Theorem \ref{thm:main_LBM2} can be achieved by Algorithm \ref{algo:phased_elim_gen} as well provided Assumptions  \ref{assum:cluster_ratio} and \ref{assum:matrix2}  are true.
\end{rmk}

\section{EXPERIMENTS}

We have provided detailed experiments on synthetic datasets (deferred to Appendix \ref{sec:experiments}) and popular real world recommendation data-sets namely 1) Movielens 10m dataset 2) Netflix dataset and 3) Jester dataset. For simplicity, we implement a significantly simplified version of our algorithm (Alg. \ref{algo:simple_lattice} in Appendix \ref{sec:experiments}).  
In addition, we have also compared with a highly competitive heuristic - the Alternating Minimization (AM) algorithm described in \cite{dadkhahi2018alternating} and the standard Upper Confidence Bound (see \cite{lattimore2020bandit}) algorithm individually for each user. However, we stress that the AM algorithm does not have any theoretical guarantees.
 For the Movielens dataset, we restricted ourselves to the $200$ users ($\s{N}$) who have rated most movies and $200$ movies ($\s{M}$) that have been rated the most. For Netflix and Jester, with a similar pre-processing, the values of $\s{N},\s{M}$ are $(200,200)$ and $(100,100)$ respectively. We compared the performance of our algorithm LATTICE (for $\s{GCS}$ - i.e. after a few phases, we run UCB individually for each user with their active items) with the AM algorithm in \cite{dadkhahi2018alternating} (with the hyper-parameters provided in \cite{dadkhahi2018alternating} for Movielens and Jester datasets; for Netflix dataset, we used the hyperparameters provided for Movielens).
 %However, we found that further finetuning of the hyper-parameters can often improve the performance of AM algorithm. 
 In Figures \ref{fig:gaussian2},\ref{fig:uniform2} and \ref{fig:uniform3}, we have shown the cumulative regret of the three algorithms -clearly, LATTICE outperforms the other baselines empirically as well. In particular, LATTICE successfully removes large chunks of bad items for large groups of users jointly in few rounds. Further details about implementation are deferred to Appendix \ref{sec:experiments}.

\section{CONCLUSION}

For the multi-user multi-armed latent bandit problem introduced in \cite{maillard2014latent} we provided a novel, computationally efficient algorithm LATTICE. %with provable theoretically rigorous guarantees. Note that \cite{maillard2014latent} only provides a heuristic for this problem while analyzing only certain special cases of the problem. LATTICE
Ours is the first algorithm to obtain $\widetilde{O}(\sqrt{(\s{M}+\s{N})\s{T}})$  regret guarantee in this challenging and practically important setting, as latent cluster structure in users/agents is commonplace and is a standard modeling tool for practitioners. %We have also demonstrated minimax optimality of the dependence on $\s{M},\s{N},\s{T}$; at the same time trivial algorithms without collaboration or greedy algorithms will be highly sub-optimal (see Remark \ref{rmk:trivial}). 
Our work also resolves open problems posed in \cite{jain2022online} and \cite{sen2017contextual} for online low rank matrix completion in certain special case. Finally, it would be interesting to optimize the regret dependence on other factors such as the number of clusters ($\s{C}$), $\s{RCS}$-gap ($\nu$), as well as other parameters incoherence and condition number. 
%Finally, providing lower-bounds in terms of $\fl{X}$ (Assumption~\ref{assum:matrix}) would also be of interest. 

%To the best of our understanding, the paper does not have adverse societal implications. 
% \noindent \textbf{Limitations and Societal Impact:} Since we crucially utilize low rank matrix completion guarantees on different sub-matrices of the reward matrix, we need several assumptions on the arms for our theoretical guarantees; although they are standard, it would be interesting to get rid of them completely. It would also be important to consider explicitly a repetition constraint under which the same arm cannot be recommended multiple times to the same user. Finally, it would be of practical importance to consider a fuzzy clustering of the users (and not a hard one) as it can model many real world scenarios. 

%\bibliographystyle{unsrtnat}
%\bibliographystyle{plain}
\bibliographystyle{unsrtnat}
\bibliography{references.bib}

\newpage
\onecolumn
\appendix

\begin{algorithm*}[!htbp]
\caption{Simplified LATTICE
\label{algo:simple_lattice}}
\begin{algorithmic}[1]
\REQUIRE Number of users $\s{N}$, arms $\s{M}$, clusters $\s{C}$, rounds $\s{T}$, noise $\sigma^2>0$,  phase lengths $\{\Delta_{\ell}\}_{\ell \ge 1}$ satisfying $\sum_{\ell} \Delta_{\ell} = \s{T}$. Gap Parameters $\{\nu_{\ell}\}_{\ell \ge 1}$. Parameter $\lambda \ge 0$ for nuclear norm minimization. Phase parameter $\s{L} \ge 0$ and robust intersection parameter $0 \le \rho \le 1$.
%an upper bound $\beta$ on $\max\Big(\left|\frac{\fl{v}_{j_{\max}}}{\fl{v}_{j_{\min}}}\right|,\left|\frac{\fl{v}_{j_{\min}}}{\fl{v}_{j_{\max}}}\right|\Big)$.
\STATE Partition entire time period into phases $[1,\Delta_{1}],[\Delta_{1}+1,\Delta_1+\Delta_2],\dots$. 
\STATE Set $\ca{M}^{(1)} \equiv [\ca{M}^{(1,1)}]$ where $\ca{M}^{(1,1)}=[\s{N}]$ and $\ca{N}^{(1)} \equiv [\ca{N}^{(1,1)}]$ where $\ca{N}^{(1,1)}=[\s{M}]$.
\FOR{$t = 1,2,\dots,\s{T}$}
\STATE Sample user $u(t)$ from $[\s{N}]$, determine phase $\ell$ in which round $t$ belongs
\STATE Determine the set $\ca{M}^{(\ell,i)}$ in $\ca{M}^{(\ell)}$ in which $u(t)$ belongs. 
\STATE User $u(t)$ pulls an arm $\rho(t)$ uniformly at random from $\ca{N}^{(\ell,i)}$ and observes feedback $\fl{R}^{(t}$.
\IF{last round of phase and $\ell \le \s{L}$}
\STATE Initialize $\ca{M}^{(\ell+1)} = [],\ca{N}^{(\ell+1)} = []$
\FOR{each set $\ca{M}^{(\ell,i)}\in \ca{M}^{(\ell)}$} 
 \STATE Consider Matrix $\fl{Q}\in \bb{R}^{\s{N}\times \s{M}}$. Fill entries in the sub-matrix $\fl{Q}_{\ca{M}^{(\ell,i)},\ca{N}^{(\ell,i)}}$ by assigning $\fl{Q}_{ij}=\text{mean}\{\fl{R}^{(t)}\mid t \in \text{ phase }\ell, i=u(t),j = \rho(t), i \in \ca{M}^{(\ell,i)}\}$.
 \STATE Let $\Omega$ be the filled entries in $\fl{Q}_{\ca{M}^{(\ell,i)},\ca{N}^{(\ell,i)}}$. Complete the matrix $\fl{Q}_{\ca{M}^{(\ell,i)},\ca{N}^{(\ell,i)}}$ by solving the convex program 
\begin{align}\label{eq:convex2}
    \min_{\fl{T}\in \bb{R}^{\s{N}\times \s{M}}} \frac{1}{2}\sum_{(i,j)\in \Omega}\Big(\fl{Q}_{ij}-\fl{T}_{ij}\Big)^2+\lambda\|\fl{T}_{\ca{M}^{(\ell,i)},\ca{N}^{(\ell,i)}}\|_{\star},
\end{align}
\IF{$\ell \le \s{L}$}
\STATE Solve $k$-means for users in $\ca{M}^{(\ell,i)}$ using the vector embedding formed by the rows in $\fl{T}_{\ca{M}^{(\ell,i)},\ca{N}^{(\ell,i)}}$. Choose best $k\le \s{C}$ by using ELBOW method. Denote the cluster of users by $\{\ca{M}^{(\ell,i,j)}\}_j$.
\FOR{each cluster of users $\ca{M}^{(\ell,i,j)}$} 
\STATE Compute $\ca{N}^{(\ell,i,j)}$ as $\{s \in \ca{N}^{(\ell,i)} \mid |\fl{T}_{us}-\max_{s'\in \ca{N}^{(\ell,i)}}\fl{T}_{us'}| \le \nu_{\ell} \text{ for some } u \in \ca{M}^{(\ell,i,j)}\}$.
\STATE Append $\ca{M}^{(\ell,i,j)}$ to $\ca{M}^{(\ell+1)}$  and $\ca{N}^{(\ell,i,j)}$ to $\ca{N}^{(\ell+1)}$.
\ENDFOR
 \ELSE
 \STATE Compute set of active arms $\ca{N}^{(\ell,i)}$ as $\{s \in \ca{N}^{(\ell,i)} \mid |\fl{T}_{us}-\max_{s'\in \ca{N}^{(\ell,i)}}\fl{T}_{us'}| \le \nu_{\ell} \text{ for at least $\rho$-fraction of users in }  \ca{M}^{(\ell,i)}\}$.
 \STATE Append $\ca{M}^{(\ell,i)}$ to $\ca{M}^{(\ell+1)}$  and $\ca{N}^{(\ell,i)}$ to $\ca{N}^{(\ell+1)}$. \textit{\#Instead of Steps 20,21, we can also start running UCB individually for each user in $\ca{M}^{(\ell,i)}$ with the set of active items $\ca{N}^{(\ell,i)}$ for the remaining rounds. This can be more practical since cluster structures are not always satisfied exactly.}
\ENDIF

\ENDFOR

\ENDIF
\ENDFOR 

\end{algorithmic}
\end{algorithm*}

\paragraph{Organization:} The Appendix is organized as follows: in Section \ref{sec:experiments}, we provide detailed synthetic experiments with a simplified version of the LATTICE algorithm. In Section \ref{sec:comparison}, we provide a more detailed comparison with the online clustering line of work. In Section \ref{sec:feasibility}, we provide detailed proof for feasibility of Assumption \ref{assum:matrix}. In Section \ref{app:mc}, we provide details on proofs of results presented in Section \ref{sec:prelims}. In Section \ref{app:detailed_lbm}, we provide detailed proof of Theorems \ref{thm:main_LBM} and \ref{thm:main_LBM2}. In Section \ref{app:lower_bounds}, we provide detailed proofs of the lower bounds on cumulative regret. In Section \ref{app:glbm}, we provide detailed proof of Theorem \ref{thm:main_GLBM}. Finally in Section \ref{app:general_proof}, we provide a proof of a general version of Lemma \ref{lem:min_acc} and the regret guarantee claimed in Remark \ref{rmk:gen_2}.   

\section{Further Experiments}\label{sec:experiments}

\subsection{Synthetic Datasets}

\begin{figure*}[!htbp]
  \begin{subfigure}[t]{0.5\textwidth}
    \centering 
    \includegraphics[scale = 0.5]{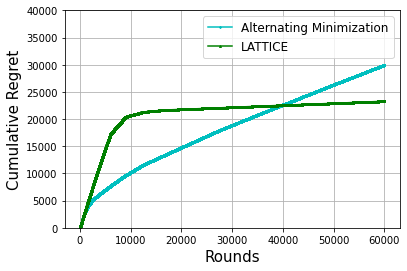}\vspace*{-5pt}
    \caption{\small $\fl{V}$ has entries distributed according to standard normal.}
 ~\label{fig:gaussian}
  \end{subfigure}
 \hfill
\begin{subfigure}[t]{0.5\textwidth}
\centering
  \includegraphics[scale = 0.5]{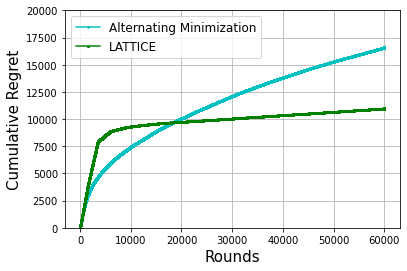}\vspace*{-5pt}
  \caption{\small $\fl{V}$ has entries generated according to a uniform distribution}
      ~\label{fig:uniform}
 \end{subfigure}%
 \caption{Cumulative Regret of the Alternating Minimization (AM) algorithm in \cite{dadkhahi2018alternating} and LATTICE (Alg. \ref{algo:simple_lattice}) with $\s{N}=200$ users, $\s{M}=200$ arms, $\s{C}=4$ clusters and $\s{T}=60000$ rounds. The reward matrix $\fl{P}=\fl{U}\fl{V}^{\s{T}}$ is generated in the following way: each row of $\fl{U}$ is a standard basis vector while each entry of $\fl{V}$ is sampled independently from a standard normal distribution $\ca{N}(0,1)$ in (a) and each entry of $\fl{V}$ is sampled independently from $\ca{U}[0,5]$ in (b). Notice that LATTICE becomes superior to AM as the number of rounds increase.}
\vspace{-10pt}
\end{figure*}

For experimentation, we will run Algorithm \ref{algo:simple_lattice} that involves the following  simplifications of a) \textit{for each matrix completion step in Algorithm \ref{algo:phased_elim}} - every user randomly pulls arms in the active set of arms (see Step 7 in Alg. \ref{algo:simple_lattice}) and subsequently, a single optimization problem with nuclear norm minimizer is solved at Step 12 b) \textit{the clustering step using graphs in Steps 7-8 of Alg. \ref{algo:phased_elim}} - we use $k$-means to cluster the users (see Step 14) where the vector embedding of each user is the row in the sub-matrix estimate that we computed by completing the sub-matrix corresponding to the subset of users (that the said user belongs to) and its active subset of arms. 

Next we perform detailed experiments with Algorithm \ref{algo:simple_lattice} on synthetic datasets that are generated as described below. Note that we compare against the Alternating Minimization (AM) algorithm presented in \cite{dadkhahi2018alternating} for solving the online multi-user multi-armed bandit problem when the reward matrix is of low-rank. Note that the AM algorithm is a very strong baseline in practice for our problem setting. In \cite{dadkhahi2018alternating}, it was experimentally demonstrated for both synthetic and real datasets that the AM algorithm outperforms previously designed algorithms in the literature that can be applied to our problem setting (\cite{sen2017contextual} and \cite{kawale2015efficient}) by a significant margin. %Therefore, in evidence of the strong experimental results presented in favor of the AM algorithm, we only compare with the AM algorithm and do not consider the algorithms presented in \cite{sen2017contextual} and \cite{kawale2015efficient}. 

\paragraph{Dataset Generation:} We take $\s{N}=200$ users, $\s{M}=200$ arms, $\s{C}=4$ clusters and the number of rounds $\s{T}=60000$. We generate the ground truth matrix $\fl{P}=\fl{U}\fl{V}^{\s{T}}$ where $\fl{U}\in \bb{R}^{\s{N}\times \s{C}}$, $\fl{V}\in \bb{R}^{\s{M}\times \s{C}}$ in the following manner: in the $i^{\s{th}}$ row of $\fl{U}$, the $(i\%\s{C})^{\s{th}}$ entry is set to be \texttt{1} while the other entries are $0$, each entry of $\fl{V}$ is sampled uniformly at random from (a) standard normal distribution $\ca{N}(0,1)$  (b) uniform distribution $[0,5]$. For (a), we assume that the noise added to each observed entry is sampled independently from $\ca{N}(0,0.5)$. For (b), we assume that the noise added is uniformly distributed in $[-0.5,0.5]$.

\paragraph{Algorithm Details:} We tune both the AM  and the LATTICE algorithm (Alg. \ref{algo:simple_lattice}). The AM algorithm \cite{dadkhahi2018alternating} has two hyper-parameters $\lambda_1,\lambda_2$ which are set to be $0.5$ and $0.01$ respectively for both data-sets generated according to (a) and (b). Alg. \ref{algo:simple_lattice} has several hyperparameters - we set the phase length $\Delta_{\ell}=1500+500*(\ell-1)$, the gap parameters $\nu_{\ell} = \lr{\fl{P}}_{\infty}/6\cdot 8^{\ell}$ and $\s{L}=5$. We also take $\lambda=5\sqrt{\Delta_{\ell}/200}$ for the convex relaxation problem in \ref{eq:convex2}.
In Step 14 of Alg. \ref{algo:simple_lattice}, we choose the best $k$ using the following heuristic: we go on increasing $k$ by $1$ if the objective function of $k$-means reduces by a factor of at least $0.6$; also if the objective is less than $100$, we do not split the cluster anymore.
Again, these hyperparameters remain same for both datasets generated according to (a) and (b).

\paragraph{Results and Insights:} The cumulative regret (averaged over $5$ independent runs) is plotted for both the AM  and  LATTICE algorithms (Alg. \ref{algo:simple_lattice}) in Figures \ref{fig:gaussian} (Gaussian) and \ref{fig:uniform} (Uniform) respectively. Notice that for both synthetic datasets (Gaussian and Uniform), in the initial periods, AM has a better performance while in latter stages LATTICE improves significantly and eventually beats it. The reason is that the AM algorithm starts creating confidence sets for arm pulls for every user from the first few rounds itself. However, in almost all the runs, the AM algorithm fails to converge to the best arm for many users (although it does converge to arms with very small sub-optimality gap for each user). On the other hand, LATTICE, in the initial few phases mimics pure exploration but it converges to the best arm for most users almost always. Therefore, the cumulative regret of LATTICE hardly increases after a certain number of rounds whereas the cumulative regret of AM goes on increasing. Therefore, we can conclude that in synthetic datasets where our assumptions namely the cluster structure is satisfied, LATTICE is not only competitive with the AM algorithm but also has superior performance when the number of rounds is large. One disadvantage of the AM algorithm that it is quite sensitive to the choice of hyperparameters - a slightly incorrect choice leads to diverging of the regret guarantees from the first few rounds itself; in comparison, LATTICE is much more stable with respect to the choice of hyperparameters.

\subsection{Real-world datasets (Implementation details)}

For all three datasets 1) Movielens 10m 2) Netflix and 3) Jester, we set the phase length in Algorithm \ref{algo:simple_lattice} to be $\Delta_{\ell}=2000+500*(\ell-1)$, the gap parameters $\nu_{\ell} = \lr{\fl{P}}_{\infty}/4\cdot 2^{\ell}$ and $\s{L}=5$ for Movielens and Netflix; for Jester dataset, we took $\nu_{\ell} = \lr{\fl{P}}_{\infty}/6\cdot 8^{\ell}$ and $\s{L}=5$. After five phases, instead of Steps 20,21 in Alg. \ref{algo:simple_lattice}, we start running standard UCB for each user with the set of active items for the remaining rounds. 
We also take $\lambda=5\sqrt{\Delta_{\ell}/200}$ for the convex relaxation problem in \ref{eq:convex2}.
In Step 14 of Alg. \ref{algo:simple_lattice}, we choose the best $k$ using the following heuristic: we go on increasing $k$ by $1$ if the objective function of $k$-means reduces by a factor of at least $0.6$; also if the objective is less than $50$, we do not split the cluster anymore. 

\section{Detailed comparison with Online Clustering}\label{sec:comparison}
\cite{gentile2014online,gentile2017context,li2019improved} study the contextual version of the MAB-LC problem considered in our work. In their set up, a random user $u \in {\cal U}$ arrives at time $t$, the online algorithm is presented with an action space ${\cal A}_t$ where each action $a \in {\cal A}_t$ has a feature vector $\mathbf{x}_a$. If the action chosen is $k_t$, then the mean rewards obtained is $\mathbf{x}^T_{k_t} \mathbf{\beta}_{c(u)}$ where $\beta_c$ are the model parameters for cluster $c$. All users $u$ such that $c(u) = c$ have an identical reward model. 

We can map our problem to this problem by presenting a fixed action set in every slot, i.e. $\mathbf{x}_i = \mathbf{e}_i$ for all $i\in[\s{M}]$ where $\{\mathbf{e}_i\}$ is the canonical basis in $\mathbb{R}^{\s{M}}$, and $\mathcal{A}_t = \{\mathbf{x}\}_{i=1}^{\s{M}}$ for all $t$. However, such a conversion results in highly sub-optimal regret of $\tilde{O}(\sqrt{\s{M}^2\s{CT}} + \s{M}^3\s{N})$. There are two main reasons for this. One is that this conversion leads to extremely high feature vector dimension of $\s{M}$. The other is that the algorithms in \cite{gentile2014online,gentile2017context,li2019improved} \textit{crucially} depend on the assumption that for a fixed $a$ at time $t$, the feature vector $\mathbf{x}_a$ is sampled i.i.d from a distribution on the unit sphere such that minimum singular value of $\mathbb{E}[\mathbf{x}_a \mathbf{x}_a^T]$ is at least a constant. Based on our conversion above, it is easy to see that $\mathbb{E}[\mathbf{x}_a \mathbf{x}_a^T] = \frac{1}{\s{M}}$  for MAB-LC. Since $\s{M}$ is very large, the minimum singular value in our setting is quite small which leads to poor regret. This assumption is crucial to the analysis of \cite{gentile2014online,gentile2017context,li2019improved}, and removing it is non-trivial. Consider a user $u$ in the system and the Gram matrix $S_{u,t} = \sum_{s<t: u_s=u} \mathbf{x}_{k_s} \mathbf{x}_{k_s}^T$ formed for user $u$ based on feature vectors $x_{k_s}$ of actions chosen at times when the user $u$ arrived in the system. Crucial property that is needed for online clustering to proceed in the works of \cite{gentile2014online,gentile2017context,li2019improved} is that the minimum singular value of $S_{u,t}$ is $\Omega(T_{u,t})$ where $T_{u,t}$ is the number of time slots user $u$ arrived till time $t$ with very high probability (see for instance Claim $1$ in \cite{gentile2014online}, Lemma $4$ in \cite{li2019improved}). This is ensured through the randomness assumption for $\mathbf{x}_a$. In our case with mapping to canonical basis vectors, minimum singular value of $S_{u,t}$ will scale sub-linearly ($o(T_{u,t})$ for large $T_{u,t}$ ) if the algorithm is doing well on user $u$ in terms of regret, i.e. focusing on arms close to the best arm.

Note that this can be seen as a motivation for our elimination style algorithm, since we only rely on the overlap in the set of `good arms' of every user for clustering, while in \cite{gentile2014online,gentile2017context,li2019improved}, the authors use the estimate of the entire mean reward vector to cluster. This requires the estimation error to be low in 'all directions' for the Gram matrix.

% %\paragraph{Collaborative Bandits.}

\section{Further Discussion on Feasibility of Assumptions}\label{sec:feasibility}

\subsection{$\s{C}=1$}

For the special case of $\s{C}=1$, Assumption \ref{assum:matrix} is satisfied by any $\fl{X}\in \bb{R}^{1\times \s{M}}$ that satisfies the following: for all $i\in [\s{M}]$, we have that $\fl{X}_i$ denoting the $i^{\s{th}}$ entry of $\fl{X}$ is bounded from below by $\nu>0$ and from above by $1$. In that case, the SVD of $\fl{X}$ is denoted by $u \sigma \fl{v}^{\s{T}}$ where $u=1$, $\sigma=\lr{\fl{X}}_2$ and $\fl{v}=\fl{X}/\lr{\fl{X}}_2$. Clearly, the condition number of $\fl{X}$ is $1$. Next, note that $\lr{\fl{v}}_{\infty} \le 1/\sqrt{\nu \s{M}}$. For any sub-set $\ca{J}\subseteq [\s{M}]$, we can have a similar conclusion on $\fl{X}$ restricted to the indices in $\ca{J}$. Hence Assumption \ref{assum:matrix} is satisfied for such $\fl{X}$. 

\subsection{Gaussian ensemble (Simulations)}

\begin{figure*}[!htbp]
  \begin{subfigure}[t]{0.4\textwidth}
    \centering 
    \includegraphics[scale = 0.5]{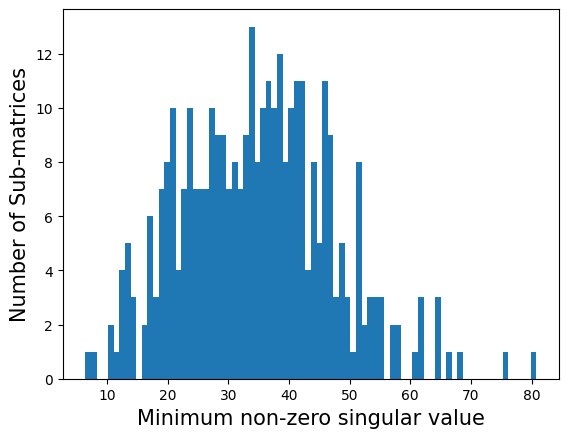}\vspace*{-5pt}
    \caption{\small $\fl{V}$ has entries distributed according to standard normal. We plot the minimum singular value of $\fl{V}$ restricted to the items corresponding to unique sub-matrices of users and items that we estimate in Step 11 of Alg. \ref{algo:simple_lattice}- Alg. \ref{algo:simple_lattice} is run $50$ times for a single sample of $\fl{V}$.}
 ~\label{fig:sim1}
  \end{subfigure}
 \hfill
\begin{subfigure}[t]{0.5\textwidth}
\centering
  \includegraphics[scale = 0.5]{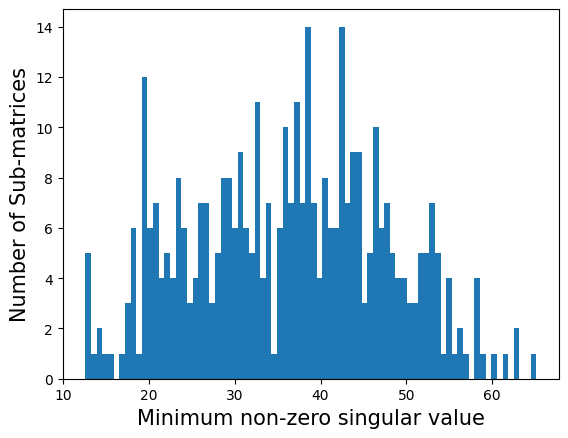}\vspace*{-5pt}
  \caption{\small $\fl{V}$ has entries distributed according to standard normal. We plot the minimum singular value of $\fl{V}$ restricted to the items corresponding to unique sub-matrices of users and items that we estimate in Step 11 of Alg. \ref{algo:simple_lattice}- Alg. \ref{algo:simple_lattice} is run $10$ times each for $10$ samples of $\fl{V}$.}
      ~\label{fig:sim2}
 \end{subfigure}%
 \caption{Minimum non-zero singular value via simulations on a Gaussian ensemble. }
\vspace{-10pt}
\end{figure*}

We consider the setting in Section \ref{sec:experiments} with $\s{M}=200$ users, $\s{N}=200$ items and $\s{C}=4$. Here, $\fl{P}=\fl{U}\fl{V}^{\s{T}}$ where $\fl{U}\in \bb{R}^{\s{N}\times \s{C}}$, $\fl{V}\in \bb{R}^{\s{M}\times \s{C}}$ in the following manner: in the $i^{\s{th}}$ row of $\fl{U}$, the $(i\%\s{C})^{\s{th}}$ entry is set to be \texttt{1} while the other entries are $0$, each entry of $\fl{V}$ is sampled uniformly at random from a standard normal distribution $\ca{N}(0,1)$. For all sub-matrices with $\ca{U}'\subseteq [\s{N}]$, $\ca{V}'\subseteq [\s{M}]$ that we need to estimate in Step 11 of Alg. \ref{algo:simple_lattice}, we report the histogram of minimum singular value of $\fl{V}_{\ca{V}'}$ for all unique $\fl{V}'$ in two sets of experiments 1) We take one sample of $\fl{V}$ where each entry of $\fl{V}$ is sampled from $\ca{N}(0,1)$ and run Algorithm \ref{algo:simple_lattice} $50$ times. 2) We take $10$ samples of $\fl{V}$ where each entry of $\fl{V}$ is sampled from $\ca{N}(0,1)$ and run Algorithm \ref{algo:simple_lattice} $10$ times for each of them.
In both case, we notice that the minimum singular value is sufficiently large - in particular, more than a large enough constant.

\subsection{Relaxing Subset Strong Convexity Assumption}\label{sec:ssc_relaxed}

In this section we are going to show that when the entries of the matrix $\fl{X}$ are independently generated according to $\ca{N}(0,1)$, we can ensure that in all phases indexed by $\ell$, for all sub-matrices $\fl{P}_{\ca{M}^{(\ell,i)},\ca{N}^{(\ell,i)}}$  (corresponding to a nice subset of users $\ca{M}^{(\ell,i)}$ and their active items $\ca{N}^{(\ell,i)}$) that are estimated in Line 4 of Algorithm \ref{algo:phased_elim}, we will have the following properties for the SVD of $\widetilde{\fl{U}}\widetilde{\Sigma}\widetilde{\fl{V}}$ of the sub-matrix $\fl{P}_{\ca{M}^{(\ell,i)},\ca{N}^{(\ell,i)}}$:

\begin{enumerate}
    \item \textbf{(P1:)} The condition number of the matrix $\fl{P}_{\ca{M}^{(\ell,i)},\ca{N}^{(\ell,i)}}$ that is, the ratio of the maximum and minimum non-zero singular value is bounded from above by a constant.
    \item \textbf{(P2:)} The orthonormal matrices $\widetilde{\fl{U}},\widetilde{\fl{V}}$ are incoherence i.e. we will have $\lr{\widetilde{\fl{U}}}_{2,\infty} =O(\sqrt{\mu /\left|\ca{M}^{(\ell,i)}\right|})$ and $\lr{\widetilde{\fl{V}}}_{2,\infty} =O(\sqrt{\mu /\left|\ca{N}^{(\ell,i)}\right|})$ for some small $\mu$.
\end{enumerate}

The only minor algorithmic modification that we need is to use $C'\le 1/3\sqrt{\log \s{M}}$ and $\Delta_{\ell+1}=C' \s{C}^{-\ell}$ in Step 3 of Algorithm \ref{algo:phased_elim}. The reasons for these minor modifications will become apparent in the analysis - however such a change will only lead to additional multiplicative logarithmic factors in the regret.

We start by showing the following lemmas (recall that $\fl{X}_{\mid \ca{S}}$ corresponds to the matrix $\fl{X}$ restricted to the columns in $\ca{S}$.)

\begin{lemma}\label{lem:sss}
If $\fl{x}^{\s{T}}\fl{X}_{\mid \ca{S}}\fl{X}_{\mid \ca{S}}^{\s{T}}\fl{x} \ge \alpha \gamma\s{C}\lambda_1^2/\s{M}$ for a subset $\ca{S}\subseteq [\s{M}], |\ca{S}|=\gamma\s{C}$ for all unit vectors $\fl{x}\in \bb{R}^{\s{C}}$, then the minimum eigenvalue of $\fl{V}_{\ca{S}}^{\s{T}}\fl{V}_{\ca{S}} \ge \alpha\gamma \s{C}/\s{M}$. In other words, Subset Strong Convexity (SSC) of $\fl{X}$ with SVD decomposition $\fl{X}=\fl{U}\f{\Sigma}\fl{V}^{\s{T}}$ implies SSC of $\fl{V}$. 
\end{lemma}

\begin{proof}
 If $\fl{x}^{\s{T}}\fl{X}_{\mid \ca{S}}\fl{X}_{\mid \ca{S}}^{\s{T}}\fl{x} \ge \alpha \gamma\s{C}\lambda_1^2/\s{M}$ for a subset $\ca{S}\subseteq [\s{M}], |\ca{S}|=\gamma\s{C}$ for all unit vectors $\fl{x}\in \bb{R}^{\s{C}}$, then the minimum eigenvalue of $\fl{V}_{\ca{S}}^{\s{T}}\fl{V}_{\ca{S}} \ge \alpha\gamma \s{C}/\s{M}$. To see this, note  $\fl{X}_{\mid \ca{S}}=\fl{U}\f{\Sigma}\fl{V}^{\s{T}}_{\ca{S}}$ implying that $\fl{V}^{\s{T}}_{ \ca{S}}=(\fl{U}\f{\Sigma})^{-1}\fl{X}_{\mid \ca{S}}$. Hence, $\fl{V}^{\s{T}}_{\ca{S}}\fl{V}_{\ca{S}} = (\fl{U}\f{\Sigma})^{-1}\fl{X}_{\mid \ca{S}}\fl{X}_{\mid \ca{S}}^{\s{T}}(\fl{U}\f{\Sigma})^{-\s{T}}$ implying that $(\fl{V}^{\s{T}}_{\ca{S}}\fl{V}_{\ca{S}})^{-1} = (\fl{U}\f{\Sigma})^{\s{T}}(\fl{X}_{\mid\ca{S}}\fl{X}_{\mid\ca{S}}^{\s{T}})^{-1}(\fl{U}\f{\Sigma})$. Taking the operator norm on both sides, we have  $\lambda_{\min}(\fl{V}^{\s{T}}_{\ca{S}}\fl{V}_{\ca{S}}) \ge \lambda_1^{-2} \lambda_{\min}(\fl{X}_{\mid \ca{S}}\fl{X}^{\mid \s{T}}_{\ca{S}})$ implying that $\fl{x}^{\s{T}}\fl{V}_{\ca{S}}^{\s{T}}\fl{V}_{\ca{S}}\fl{x} \ge \alpha\gamma\s{C}/\s{M}$.
\end{proof}

\begin{lemma}
Suppose the entries of $\fl{X}$ are generated independently according to $\ca{N}(0,1)$. Then, $\fl{X}$ with SVD decomposition $\fl{X}=\fl{U}\f{\Sigma}\fl{V}^{\s{T}}$ satisfies $\lr{\fl{V}}_{2,\infty} \le 16\sqrt{\frac{\s{C}\log \s{M}}{\s{M}}}$ with probability at least $1-O(\s{M}^{-1})$.
\end{lemma}

\begin{proof}
We must have $\sqrt{\s{M}}-\sqrt{\s{C}}-t\le \lambda_{\s{C}} \le \lambda_{1} \le \sqrt{\s{M}}+\sqrt{\s{C}}+t$ with probability at least $1-2e^{-t^2/2}$ implying that $\sqrt{\s{M}}/2 \le \lambda_{\s{C}} \le \lambda_1 \le 2\sqrt{\s{M}}$; hence we must have $\lambda_1/\lambda_{\s{C}}=O(1)$ with probability at least $1-O(e^{-\s{M}})$. Moreover, we have $\fl{X}^{\s{T}}\fl{X}=\fl{V}\f{\Sigma}^2\fl{V}^{\s{T}}$. Clearly, we must have $\lr{\fl{X}}_{\infty,2}\lambda_{\s{C}}^{-1} \le \lr{\fl{V}}_{2,\infty} \le \lr{\fl{X}}_{\infty,2}\lambda_1^{-1}$. For any column $\fl{X}_{\mid i}$, we have $\lr{\fl{\fl{X}_{\mid i}}}_2^2$ is a chi-squared random variable with $\s{C}$ degrees of freedom. Using standard  concentration inequalities for chi-squared random variables, we have $\lr{\fl{\fl{X}_{\mid i}}}_2 \le 8\sqrt{\s{C}\log \s{M}}$ w.p. at least $1-\s{M}^{-2}$. By taking a union bound over all $i\in [\s{M}]$, we have $\lr{\fl{X}}_{\infty,2} \le 8\sqrt{\s{C}\log \s{M}}$ w.p. at least $1-\s{M}^{-1}$. Hence $\lr{\fl{V}}_{2,\infty} \le 16\sqrt{\frac{\s{C}\log \s{M}}{\s{M}}}$.
\end{proof}

\begin{lemma}\label{lem:for_any}
Suppose the entries of $\fl{X}$ are generated independently according to $\ca{N}(0,1)$. Then for any subset of columns $\ca{S}\subseteq [\s{M}]$ satisfying $\ca{S}=\Omega(\s{C}\log \s{M})$, we must have that the minimum singular value of $\fl{X}_{\mid \ca{S}}\fl{X}_{\mid \ca{S}}^{\s{T}}$ is at least $\left|\ca{S}\right|/2$ with probability at least $1-2e^{-\Omega(\s{C}\log \s{M})}$.
\end{lemma}

\begin{proof}
On the other hand, for a subset $\ca{S}\subseteq [\s{M}], |\ca{S}|=\gamma\s{C}$, we must have the minimum singular value of $\fl{X}_{\mid \ca{S}}$ to be at least $\sqrt{\s{C}}(\sqrt{\gamma}-1)-t$ w.p. at least $1-2e^{-t^2/2}$. 
Taking $t=\sqrt{\s{C}\gamma}/2$, we must have the minimum singular value of $\fl{X}_{\mid \ca{S}}$ to be at least $\sqrt{\s{C}\gamma}/2$ with probability at least $1-2e^{-\Omega(\s{C}\log\s{M})}$. 
\end{proof}

Note that ideally, to handle the arbitrary sub-matrices $\fl{P}_{\s{sub}}$ that might arise in Line 4 of Algorithm \ref{algo:phased_elim}, we could have taken a union bound over all possible subsets of $[\s{M}]$ of size $\Omega(\s{C}\log \s{M})$ but the total number of subsets is too large. However, interestingly, the sub-matrices $\fl{P}_{\s{sub}}$ are not completely arbitrary as we show below:

\textit{Discussion:} Note that in Line 6 of Algorithm \ref{algo:phased_elim}, in the $\ell^{\s{th}}$ phase, we construct a set of good items $\ca{T}_u^{(\ell)}$ for users in a relevant nice subset $\ca{M}^{(\ell,i)}$ as follows:
we compute $\ca{T}^{(\ell)}_u \equiv \{j \in \ca{N}^{(\ell,i)} \mid \max_{j'\in \ca{N}^{(\ell,i)}}\widetilde{\fl{P}}^{(\ell)}_{uj'}-\widetilde{\fl{P}}^{(\ell)}_{uj} \le 2\Delta_{\ell+1}\}$ where $\ca{N}^{(\ell,i)}$ is the set of active items. In Line 8 of Algorithm \ref{algo:phased_elim}, at the end of the $\ell^{\s{th}}$ phase, we construct updated \textit{nice} subset of users  $\{\ca{M}^{(\ell,i,j)}\}$ as the connected components of a relevant graph. We also construct the corresponding active set of items $\{\ca{N}^{(\ell,i,j)}\}_j$ where $\ca{N}^{(\ell,i,j)} \equiv \cup_{u\in \ca{M}^{(\ell,i,j)}} \ca{T}_u^{(\ell)}$. In the $(\ell+1)^{\s{th}}$ phase, it is important that properties \textbf{P1} and \textbf{P2} are satisfied by each of the sub-matrices $\{\fl{P}_{\ca{M}^{(\ell,i,j)},\ca{N}^{(\ell,i,j)}}\}_j$.
In order to do so, we slightly expand the active set of items. Without loss of generality, consider $\ca{M}^{(\ell,i,1)}$ and denote it as $\ca{M}$ for brevity. Similarly, the set of active items constructed at the end of Line 8 for $\ca{M}^{(\ell,i,1)}$ is denoted by $\ca{N}$ for brevity.

We generalize the above definition of $\ca{T}_u^{(\ell)}$ (for a fixed user $u\in \ca{M}$, active set of items $\ca{N}^{(\ell,i)}$ at beginning of $\ell^{\s{th}}$ phase and parameter $\Delta_{\ell+1}$) in the following way:
\begin{align*}
    \ca{T}^{(\ell,a)}_u \equiv \{j \in \ca{N}^{(\ell,i)} \mid \max_{j'\in \ca{N}^{(\ell,i)}}\widetilde{\fl{P}}^{(\ell)}_{uj'}-\widetilde{\fl{P}}^{(\ell)}_{uj} \le a\Delta_{\ell+1}\}
\end{align*}
where $a>0$ is any positive constant. Furthermore, for a fixed $\Delta_{\ell+1}$, we also define 
\begin{align*}
    \ca{S}(u,a) \equiv \{j \in [\s{N}] \mid \max_{j'\in [\s{N}]}\fl{P}_{u\pi_u(1)}-\fl{P}_{uj} \le a\Delta_{\ell+1}\}
\end{align*}
Note that, conditioned on $\fl{P}_u$, the sets $\ca{S}(u,a)$ are deterministic sets. We can show the following lemmas: 

\begin{lemma}\label{lem:expec_max}[\cite{kamath2015bounds}]
Suppose we have $n$ independent random variables $x_1,x_2,\dots,x_n\sim \ca{N}(0,\sigma^2)$. In that case, we have 
\begin{align*}
    \frac{\sigma\sqrt{\log n}}{\sqrt{\pi\log 2}}\le \bb{E}[\max_{j\in [n]}x_j] \le \sigma\sqrt{2\log n}
\end{align*}
\end{lemma}

\begin{lemma}\label{lem:conc_max}[Borell-TIS inequality]
Suppose we have $n$ independent random variables $x_1,x_2,\dots,x_n\sim \ca{N}(0,\sigma^2)$. In that case, we have for all $t>0$
\begin{align*}
    \Pr\Big(\max_{j\in [n]}x_i-\bb{E}[\max_{j\in [n]}x_j] \ge t\Big) \le 2\exp(-t^2/2\sigma^2).
\end{align*}
\end{lemma}

\begin{lemma}\label{lem:minsing_feasible}
Assume that $\left|\ca{S}(u,a)\right|=\Omega(\log^4\s{M})$ and $a\Delta_{\ell+1}\le 1/2$. In that case, with probability at least $1-(\s{poly}(\s{M}))^{-1}$, the minimum singular value of $\fl{X}_{\mid \ca{S}(u,a)}$ is at least $c\left|\ca{S}(u,a)\right|$ for some constant $c>0$.
\end{lemma}
%{\color{red} KAR: No need for "we must..". I think it is atleast that much w.h.p.}
\begin{proof}
Denote the minimum singular value of $\fl{X}_{\mid \ca{S}(u,a)}$ by $\lambda_{\s{C}}$.
Recall that $\fl{X}_{\mid i}$ denotes the $i^{\s{th}}$ column of $\fl{X}$, $\fl{X}_{j,i}$ denotes the entry in the $j^{\s{th}}$ row and $i^{\s{th}}$ column. $\fl{X}_{\setminus j, i}$ denotes the $i^{\s{th}}$ column of $\fl{X}$ without the $j^{\s{th}}$ entry.
Without loss of generality, let us assume that the user $u$ belongs to the first cluster i.e. $\fl{P}_{ui}=\fl{X}_{1i}$ for all $i\in [\s{M}]$. Note that although $\fl{X}$ is a random matrix the set $\ca{S}(u,a)$ depends on the values of $\fl{X}$. Therefore, we condition on the first row of $\fl{X}$ and the event $\ca{E}$ that $\min_{u} \max_{j} \fl{P}_{uj} \ge \sqrt{\log {\s{M}}}/3$ with  probability at least $1-\s{poly}(\s{M})$ (by combining Lemmas \ref{lem:expec_max},\ref{lem:conc_max}).
In other words, we condition on a particular instance of the random variable $\fl{X}_{1,i}$ that is, in the following analysis we consider $\fl{X}_{1,i}$ to be fixed for all $i\in \ca{S}(u,a)$ - note that such a conditioning does not provide any information about the random variables $\fl{X}_{\setminus 1, \ca{S}(u,a)}$ (rows other than the first row in the matrix $\fl{X}$ restricted to columns in $\ca{S}(u,a)$).
%{\color{red} KAR: Above sentence does not read well. But $\fl{X}_{\setminus 1, \ca{S}(u,a)} \subseteq \cup \limits_{i \in \ca{S}(u,a)} \fl{X}_{\mid i}$}
By definition of the minimum singular value, we have 
\begin{align*}
    \lambda_{\s{C}}^2 &= \inf_{\fl{w}:\lr{\fl{w}}_2=1} \sum_{i \in \ca{S}(u,a)} \fl{w}^{\s{T}}\fl{X}_{\mid i}\fl{X}_{\mid i}^{\s{T}}\fl{w} \\
    &= \inf_{\fl{w}:\lr{\fl{w}}_2=1} \sum_{i \in \ca{S}(u,a)} (\fl{w}_{1}\fl{X}_{1,i}+\fl{w}_{\setminus 1}\fl{X}_{\setminus 1,i})^2 \\
    &= \inf_{\fl{w}:\lr{\fl{w}}_2=1} \sum_{i \in \ca{S}(u,a)} (\fl{w}_{1}\fl{X}_{1,i})^2+\inf_{\fl{w}:\lr{\fl{w}}_2=1} \sum_{i \in \ca{S}(u,a)} (\fl{w}_{\setminus 1}\fl{X}_{\setminus 1,i})^2+\inf_{\fl{w}:\lr{\fl{w}}_2=1} \sum_{i \in \ca{S}(u,a)} 2(\fl{w}_{1}\fl{X}_{1,i})(\fl{w}_{\setminus 1}\fl{X}_{\setminus 1,i})
\end{align*}
 Now, we consider each of the three terms above: we have 
\begin{align*}
    \inf_{\fl{w}:\lr{\fl{w}}_2=1} \sum_{i \in \ca{S}(u,a)} (\fl{w}_{1}\fl{X}_{1,i})^2 \overset{(a)}{\ge} \fl{w}_1^2 \left|\ca{S}(u,a)\right| \log \s{M} \cdot \frac{1}{36}.
\end{align*}
(a) This is because, conditioned on the event $\ca{E}$ and the fact $a\Delta_{\ell+1}<\frac{1}{2}$, we must have  $\min_{j \in \ca{S}_{(u,a)}} \fl{P}_{uj} \ge \sqrt{\log \s{M}}/6$. 

Next, we look at the second term which corresponds to the square of minimum singular value of the matrix $\fl{X}_{\setminus 1,\ca{S}(u,a)}$ which, normalized by $\lr{\fl{w}_{\setminus 1}}_2$ is a random Gaussian matrix of dimensions $\s{C}-1 \times \left|\ca{S}(u,a)\right|$.
Therefore, by using standard tools from random matrix theory, we must have that 
\begin{align*}
    \inf_{\fl{w}:\lr{\fl{w}}_2=1} \sum_{i \in \ca{S}(u,a)} (\fl{w}_{\setminus 1}\fl{X}_{\setminus 1,i})^2 = \inf_{\fl{w}:\lr{\fl{w}}_2=1} \lr{\fl{w}_{\setminus 1}}_2^2 \sum_{i \in \ca{S}(u,a)} (\frac{\fl{w}^{\s{T}}_{\setminus 1}}{\lr{\fl{w}_{\setminus 1}}_2}\fl{X}_{\setminus 1,i})^2 \ge \frac{\lr{\fl{w}_{\setminus 1}}_2^2\left|\ca{S}(u,a)\right|}{10}
\end{align*}
with probability $1-\s{poly}(\s{M})$ provided $\left|\ca{S}(u,a)\right|=\Omega(\log \s{M})$. Now, we consider the third term corresponding to the sums of inner products
\begin{align*}
    \ca{T}_w \triangleq  \sum_{i \in \ca{S}(u,a)} 2(\fl{w}_{1}\fl{X}_{1,i})(\fl{w}_{\setminus 1}^{\s{T}}\fl{X}_{\setminus 1,i}) = \sum_{j \in \s{C}\setminus \{1\}}2(\fl{w}_{1}\fl{w}_j)\underbrace{\sum_{i \in \ca{S}(u,a)}\fl{X}_{1,i}\fl{X}_{j,i}}_{\triangleq \ca{Y}_{j}} 
\end{align*}
Clearly, due to the randomness in $ \fl{X}_{\setminus 1,i}$, we have $\bb{E}\ca{Y}_j=0$.
% Note that the variance of the random variable $\ca{Y}_j$ is bounded from above by $\bb{E}\ca{Y}_j^2$ - we have 
% \begin{align*}
%     \bb{E}\ca{Y}_j^2 = \sum_{i \in \ca{S}(u,a)} \bb{E}(\fl{X}_{1,i})^2(\fl{X}_{j,i})^2 = \left|\ca{S}(u,a)\right|\max_i(\fl{X}_{1,i})^2 \le 8\left|\ca{S}(u,a)\right|\log \s{M}.
% \end{align*}
% In the above sequence of inequality, we used the fact that the expectations of the cross terms are zero and the AM-GM inequality to prove that $\fl{w}_1^2\lr{\fl{w}_{\setminus 1}^{\s{T}}}_2^2 \le 1$.

Note that for any $j\in \s{C}\setminus\{1\}$, we will have $\sum_{i \in \ca{S}(u,a)}\fl{X}_{1,i}\fl{X}_{j,i}\sim \ca{N}(0,\sum_{i\in \ca{S}(u,a)}\fl{X}_{1,i}^2)$ and by standard Gaussian tail bounds, we have that $\sum_{i \in \ca{S}(u,a)}\fl{X}_{1,i}\fl{X}_{j,i}\le 10\sqrt{\sum_{i\in \ca{S}(u,a)}\fl{X}_{1,i}^2}\log \s{M}$ with probability at least $1-(\s{poly}(\s{M}))^{-1}$. Therefore by taking a union bound over all $j\in \s{C}\setminus \{1\}$, we have that 
\begin{align*}
    \left|\ca{T}_w\right| &\le 2\left|\fl{w}_1\right|(\sum_{j}\left|\fl{w}_j\right|)\max_j \left|\sum_{i \in \ca{S}(u,a)}\fl{X}_{1,i}\fl{X}_{j,i}\right| \le  10\left|\fl{w}_1\right|(\sum_{j}\left|\fl{w}_j\right|)\sqrt{\sum_{i\in \ca{S}(u,a)}\fl{X}_{1,i}^2}\log \s{M} \\
    &\le 20\sqrt{\s{C} \left|\ca{S}(u,a)\right|} |\fl{w}_1| \lr{\fl{w}_{\setminus 1}}_2 \log^2 (\s{MC}) \le 20\sqrt{\s{C} \left|\ca{S}(u,a)\right|}  \log^2 (\s{MC}).
\end{align*}
Here we used Cauchy Schwarz inequality to say that $\lr{\fl{w}_{\setminus 1}}_1 \le \sqrt{\s{C}} \lr{\fl{w}_{\setminus 1}}_2$. Furthermore, we also used that with probability at least $1-\s{poly}(\s{M})$, we have that $\max_{1,i} \fl{X}_{1,i}^2 \le 4\log \s{MC}$ - hence, this implies $\sum_{i\in \ca{S}(u,a)}\fl{X}_{1,i}^2 \le \sqrt{2}\left|\ca{S}(u,a)\right|\log (\s{MC}) $. Note that the above inequality holds for any unit norm vector $\fl{w}$.
Thus, combining all of these, we can conclude that provided $\left|\ca{S}(u,a)\right|=\Omega(\log ^4 \s{M})$, we must have for some constant $c>0$.
\begin{align*}
    \lambda_{\s{C}}^2 \ge c\left|\ca{S}(u,a)\right| \text{ with probability at least } 1-(\s{poly}(\s{M}))^{-1}.
\end{align*}
with probability at least $1-\s{poly}(\s{M})$. Note that the above statement is taken after conditioning on the first row of $\fl{X}$ restricted to the columns in $\ca{S}(u,a)$ and by invoking a union bound on the maximum value of the entire matrix $\fl{X}$ and the minimum singular value of the matrix $\fl{X}_{\setminus 1, \ca{S}(u,a)}$. Therefore the lower bound on $\lambda_{\s{C}}$ holds for all possible realizations of the first row of $\fl{X}$ provided the high probability events involving the sub-matrix of $\fl{X}$ restricted to columns in $\ca{S}(u,a)$ hold true. Finally, we do the same analysis for all rows of $\fl{X}$ - that is, take another union bound over all clusters $\s{C}$ to arrive at the statement of the lemma.
\end{proof}

Next, we have the following tail bounds for a Gaussian random variable $x\sim \ca{N}(0,1)$:

\begin{align}\label{eq:gaussian_tail}
\frac{1}{\sqrt{\Delta^2}+\sqrt{\Delta^2+16}}\sqrt{\frac{8}{\pi}}\exp\Big(-\frac{\Delta^2}{8}\Big) \le \Pr(x\ge \frac{\Delta}{2}) \le \frac{1}{\sqrt{\Delta^2}+\sqrt{\Delta^2+\frac{32}{\pi}}}\sqrt{\frac{8}{\pi}}\exp\Big(-\frac{\Delta^2}{8}\Big)    
\end{align}

For simplicity, for any $x$, we will use the notation $x\approx t$ to imply that $x\in [c_1 t, c_2 t]$ for some constants $c_1,c_2$. From Lemmas \ref{lem:expec_max} and \ref{lem:conc_max}, we can conclude the following lemma:

\begin{lemma}\label{lem:gaussian_max}
For all $j\in [\s{C}]$, we must have that $\max_{i\in [\s{M}]}\fl{X}_{j,i} \approx \sqrt{\log \s{M}}$ that is,  $\max_{i\in [\s{M}]}\fl{X}_{j,i}\in [c_1\sqrt{\log \s{M}},c_2\sqrt{\log \s{M}}]$ for some constants $c_1\le c_2$ with probability at least $1-\s{poly}(\s{M})$.
\end{lemma}

Next, we will show the following result:

\begin{lemma}\label{lem:lb_feasible}
Suppose $1/\s{M} \le a\Delta_{\ell+1}<1/3\sqrt{c_2^2\log \s{M}}$. For all users $u \in [\s{N}]$, with probability at least $1-\s{poly}(\s{M})$, we have $\left|\ca{S}(u,a\s{C})\right|=O(\log {\s{M}})\left|\ca{S}(u,a)\right|$.
\end{lemma}

\begin{proof}
Let us fix a user $u\in [\s{N}]$. Again, without loss of generality, let us assume that the user $u$ belongs to the first cluster i.e. $\fl{P}_{ui}=\fl{X}_{1,i}$ for all $i\in [\s{M}]$.
To prove the statement of the lemma, we will discretize the interval $[c_1\sqrt{\log \s{M}},\sqrt{(c_2^2+1)\log \s{M}}]$ into a grid $\ca{F}$ with equally spaced points with spacing $a\Delta_{\ell+1}/4$  where $c_1,c_2$ is defined in Lemma \ref{lem:gaussian_max}. For any point $\sqrt{t\log \s{M}}\in \ca{F}$ (for some constant $t\in [c_1^2,c_2^2+1]$), we must have for a gaussian random variable $x\in \ca{N}(0,1)$ (see equation \ref{eq:gaussian_tail}),
\begin{align*}
    \Pr(x \ge \sqrt{t\log \s{M}}) \approx \frac{1}{t\s{M}^{t/2}\log \s{M}}.
\end{align*}
Furthermore, we will also have for some constant $c'$
\begin{align*}
\sqrt{t\log \s{M}-c'}  \le \sqrt{t\log \s{M}-2a\Delta_{\ell+1}\sqrt{t\log \s{M}}+(a\Delta_{\ell+1})^2} = \sqrt{t\log \s{M}}-a\Delta_{\ell+1}.    
\end{align*}
Therefore, we will also have (since $\s{C}$ is a constant) - see equation \ref{eq:gaussian_tail}
\begin{align}\label{eq:probs}
   & \Pr(x \ge \sqrt{t\log \s{M}-a\Delta_{\ell+1}}) \approx  \frac{1}{t\s{M}^{t/2}\log \s{M}} \\
   &\text{ and similarly }\Pr(\sqrt{t\log \s{M}} \ge x \ge \sqrt{t\log \s{M}-2a\s{C}\Delta_{\ell+1}}) \approx  \frac{1}{t\s{M}^{t/2}\log \s{M}}.
\end{align}
Therefore, for any constant $t\in [c_1^2,c_2^2+1]$, we will have that 
\begin{align*}
    \Pr\Big(\sqrt{t\log \s{M}}\le x \le \sqrt{t \log \s{M}-2a\s{C}\Delta_{\ell+1}}\Big) \approx \Pr\Big(\sqrt{t\log \s{M}}\le x \le \sqrt{t \log \s{M}-a\Delta_{\ell+1}}\Big).
\end{align*}

Now consider, $\s{M}$ independent gaussian random variables $x_1,x_2,\dots,x_{\s{M}}\sim \ca{N}(0,1)$.
At this point, we can use the multiplicative version of the Chernoff bound which says the following: for independent random variables $x_1,x_2,\dots,x_n$ that take values in $\{0,1\}$, we have for any $\delta>0$, 
\begin{align*}
    \Pr(\sum_i x_i -\bb{E}\sum_i x_i \ge \delta \bb{E}\sum_i x_i) \le 2\exp\Big(-\delta^2(\bb{E}\sum_i x_i)/3\Big).
\end{align*}
By using the multiplicative Chernoff bound (substituting $\delta=\sqrt{\log \s{M}}$ for $\bb{E}\sum_i x_i=O(\sqrt{\log \s{M}})$ and $\delta=1/2$ otherwise), we can conclude that with probability $1-\s{poly}(\s{M})$, we have the following (we also use the fact that if the expected sum in the multiplicative chernoff inequality is $o(1/\s{M})$, then it is dominated by a set of independent random variables $x_1',x_2'\dots,x_\s{M}'\in \{0,1\}$ such that $\bb{E}\sum_i x_i'= \Theta(1)$):
\begin{align}\label{eq:bound_1}
    &\text{ for all $t\in \ca{F}$, }\sum_{i\in [\s{M}]} \mathds{1}[x_i \in [\sqrt{t\log \s{M}},\sqrt{t\log \s{M}-2a\s{C}\Delta_{\ell+1}}]] \\
    &= O(\sqrt{\log \s{M}}) \bb{E}\sum_{i\in [\s{M}]}\mathds{1}[x_i \in [\sqrt{t\log \s{M}},\sqrt{t\log \s{M}-2a\s{C}\Delta_{\ell+1}}]] \nonumber
\end{align}
and similarly, when $\bb{E}\sum_{i\in [\s{M}]}\mathds{1}[x_i \in [\sqrt{t\log \s{M}},\sqrt{t\log \s{M}-a\Delta_{\ell+1}}]]=\Omega(\sqrt{\log\s{M}})$, we have
\begin{align}\label{eq:bound_2}
    &\text{ for all $t\in \ca{F}$, }\sum_{i\in [\s{M}]} \mathds{1}[x_i \in [\sqrt{t\log \s{M}},\sqrt{t\log \s{M}-a\Delta_{\ell+1}}]] \\
    &= \Omega(1) \bb{E}\sum_{i\in [\s{M}]}\mathds{1}[x_i \in [\sqrt{t\log \s{M}},\sqrt{t\log \s{M}-a\Delta_{\ell+1}}]] \nonumber
\end{align}
Let us define the event $\ca{G}$ when equations \ref{eq:bound_1} and \ref{eq:bound_2} are true for all $t\in \ca{F}$.
%Now, consider some user $u\in [\s{N}]$ and the corresponding row in $\fl{X}$ mapping to the cluster to which $u$ belongs. Without loss of generality suppose the first row in $\fl{X}$ corresponds to the reward vector of the user $u$.
In that case, condition on events that $\max_{i}\fl{X}_{1,i}\in [c_1\sqrt{\log \s{M}},c_2\sqrt{\log \s{M}}]$ and the event $\ca{G}$ related to the intervals formed by discretizing the range of the max reward value is true. 
Suppose $t^{\star}\in \ca{F}=\s{argmin}_{t \le \ca{F}} \mathds{1}[t \ge \max_i \fl{X}_{1,i}]$ is the smallest value in the grid $\ca{F}$ larger than the maximum value in the first row of $\fl{X}$.
In that case, note that by definition, $\ca{S}(u,a)$ must have the following property
\begin{align*}
    \ca{S}(u,a)\triangleq \{j \in [\s{M}] \mid \max_{j'\in [\s{N}]}\fl{X}_{1,j'}-\fl{X}_{1,j} \le a\Delta_{\ell+1}\} \supseteq \{j\in [\s{M}]\mid \fl{X}_{1,j} \in \{t^{\star},t^{\star}-a\Delta_{\ell+1}\}\}.
\end{align*}
This is because, by definition, $t^{\star}$ lies to the right of $\max_{1,i}\fl{X}_{1,i}\equiv \fl{P}_{u\pi_u(1)}$. Similarly, we will have 
\begin{align*}
     \ca{S}(u,a\s{C})\subseteq \{j\in [\s{M}]\mid \fl{X}_{1,j} \in \{t^{\star},t^{\star}-2a\s{C}\Delta_{\ell+1}\}\}.
\end{align*}
since $\left|t^{\star}-\max_{1,i}\fl{X}_{1,i}\right|\le a\Delta_{\ell+1}/4$ due to the construction of our grid. Note that the sets $\ca{S}(u,a)$ and $\{j\in [\s{M}]\mid \fl{X}_{1,j} \in \{t^{\star},t^{\star}-a\Delta_{\ell+1}\}\}$ should have a size of at least $1$ - since the grid spacing is $a\Delta_{\ell+1}/4$ implying that both sets must contain the element $\max_{j'\in [\s{N}]}$. We use this fact in the special case when $\bb{E}\sum_{i\in [\s{M}]}\mathds{1}[x_i \in [\sqrt{t^{\star}\log \s{M}},\sqrt{t^{\star}\log \s{M}-a\Delta_{\ell+1}}]]=O(\sqrt{\log\s{M}})$ and equation \ref{eq:bound_2} does not hold. However, in this special case, we will have $\left|\{j\in [\s{M}]\mid \fl{X}_{1,j} \in \{t^{\star},t^{\star}-2a\s{C}\Delta_{\ell+1}\}\}\right|=O(\log \s{M})$ (by using equations \ref{eq:probs} and \ref{eq:bound_1}). 
Since, otherwise 
\begin{align*}
    \left|\{j\in [\s{M}]\mid \fl{X}_{1,j} \in \{t^{\star},t^{\star}-2a\s{C}\Delta_{\ell+1}\}\}\right| \approx O(\sqrt{\log \s{M}}) \left|\{j\in [\s{M}]\mid \fl{X}_{1,j}\in \{t^{\star},t^{\star}-a\Delta_{\ell+1}\}\}\right|
\end{align*}
we must have that $
\left|\ca{S}(u,a\s{C})\right|= O(\log \s{M})\left|\ca{S}(u,a)\right|$. The failure probability for this event is $1/\s{poly}(\s{M})$.

\end{proof}

\begin{coro}\label{coro:feasibility}
Fix any user $u\in [\s{N}]$. Assume that $\left|\ca{S}(u,a)\right|=\Omega(\log^4\s{M})$ and $a\Delta_{\ell+1}\le 1/3\sqrt{\log \s{M}}$.
Consider any subset of columns $\ca{J}\subseteq [\s{M}]$ such that $\ca{S}(u,a\s{C}) \supseteq \ca{J}\supseteq \ca{S}(u,a)$. In that case, with probability at least $1-(\s{poly}(\s{M}))^{-1}$, we will have the minimum singular value of $\fl{X}_{\mid \ca{J}}$ to be $\Omega(\left|\ca{J}\right|/\sqrt{\log \s{M}})$.
\end{coro}

\begin{proof}
The proof follows from Lemmas \ref{lem:lb_feasible}, \ref{lem:minsing_feasible} and the fact that the minimum singular value of $\fl{X}_{\mid \ca{J}}$ is larger than the minimum singular of $\fl{X}_{\mid \ca{S}(u,a)}$ by definition as $\ca{J}\supseteq \ca{S}(u,a)$. 
\end{proof}

Finally, we take a union bound over the event in Lemma \ref{lem:lb_feasible} over all the phases (at most $O(\log \s{T})$). Note that the smallest error tolerance remains above $1/\s{M}$ when $\s{T}\ll \s{M}$. Hence, there will be no issue in applying Lemma \ref{lem:lb_feasible}.

\paragraph{With high probability, we only estimate sub-matrices of $\fl{P}$ in Step 4 of Algorithm \ref{algo:phased_elim}:}
Consider any user $u\in \ca{M} \equiv \ca{M}^{(\ell,i)}$ with active items $\ca{N} $ (equivalently $\ca{N}^{(\ell,i)}$) and any phase $\ell$.
Condition on the events $\ca{E}^{(\ell)},\ca{E}_2^{(\ell)}$ (defined rigorously in Section \ref{sec:detailed_proof} and implying that the matrix completion steps in every phase of Algorithm \ref{algo:phased_elim} has been successful till the end of phase $\ell$) and  $\ca{N}^{(\ell,i)}\supseteq \ca{S}(u,a+2)$. In that case, we have $\ca{T}_u^{(\ell,a)}\supseteq \ca{S}(u,a)$. Furthermore, we will also have that $\ca{T}_u^{(\ell,a)} \subseteq \ca{N}\subseteq 
\ca{S}(u,a\s{C})$. 
The previous statement follows from standard applications of the triangle inequality (see for instance Lemma \ref{lem:interesting5} and its proof). We choose $a$ to be $3$ for all phases and in Corollary \ref{coro:feasibility}, we take a union bound over all the phases (and corresponding fixed $\Delta_{\ell+1}$) and possible sub-matrices. Also, note that the error tolerance in  phase $\ell+1$ is decreased by a factor of $\s{C}$ - since the set $\ca{N}$ contains $\ca{S}(u,a)$ for all users $u\in \ca{M}$, the subsequent phase can also have a similar property.

Note that with high probability, in each phase, each sub-matrix of the reward matrix that we need to estimate corresponds only to a \textit{nice subset} of users - therefore the total number of matrices that we need to take a union bound on is at most $O(2^{\s{C}}\cdot \log \s{T})$.
Hence, with high probability in phase $\ell$, all sub-matrices (to be estimated) restricted to the set of users $\ca{M}$ and $\ca{N}$ satisfies the following:
\begin{enumerate}
        \item \textbf{(P1:)} The condition number of the matrix $\fl{P}_{\ca{M},\ca{N}}$ that is, the ratio of the maximum and minimum non-zero singular value is bounded from above by a constant. This is proved by using Assumption \ref{assum:cluster_ratio} and Lemma \ref{lem:condition_num}.
    \item \textbf{(P2:)} The orthonormal matrices $\widetilde{\fl{U}},\widetilde{\fl{V}}$ are incoherent i.e. we will have $\lr{\widetilde{\fl{U}}}_{2,\infty} =O(\sqrt{\mu /\left|\ca{M}\right|})$ and $\lr{\widetilde{\fl{V}}}_{2,\infty} =O(\sqrt{\mu /\left|\ca{N}\right|})$ with $\mu=\Omega(1/\log \s{M})$. This follows from using Corollary \ref{coro:feasibility} and the proof of Lemma \ref{lem:incoherence}. 
\end{enumerate}

Hence we can proceed with the rest of the analysis as presented in Section \ref{sec:detailed_proof}.

\section{Missing Details in Section \ref{subsec:mc}}\label{app:mc}
\paragraph{Low Rank Matrix Completion algorithm.} Algorithm~\ref{algo:estimate} describes the low-rank matrix completion algorithm we use in our work. This algorithm is adapted from \cite{jain2022online}, with minor modifications that are necessary for our setting. At its core, the algorithm solves a nuclear norm regularized convex objective to complete the matrix (equation~\eqref{eq:convex}). This procedure is repeated $f = O(\log{\s{NMT}})$ times, and the final estimate of the matrix is computed as the entry-wise median of the $f$ solutions. 

Algorithm~\ref{algo:data_collection_matrix_estimation} collects the data needed for matrix completion in Equation~\eqref{eq:convex}. At a high level, this subroutine randomly selects entries in the matrix (each entry is selected with probability $p$). It then computes an estimate of each of the selected entries by pulling the arm corresponding to the entry multiple times ($b$ times) and taking an average of the obtained rewards. The collected data is then shared with Algorithm~\ref{algo:estimate} for matrix completion. The main difficulty in implementing this algorithm is that in MAB-LC, the users arrive randomly in each iteration. Consequently, one has to wait for the required users to arrive to collect the necessary data. The question now is, how long does the algorithm wait to collect all the necessary data? By mapping this problem to the popular \emph{Coupon Collector Problem}, it can be show that the the sample complexity of the algorithm increases atmost by $\log$ factors.
\begin{algorithm*}[!t]
\caption{\textsc{Low Rank Matrix Estimate} (Adapted with slight modifications from \cite{jain2022online})   \label{algo:estimate}}
\begin{algorithmic}[1]
%\REQUIRE Total number of rounds $m$, Number of rounds in each iteration $b$, Set of indices $\Omega$,  set of users $\ca{U}\subseteq[\s{N}]$, set of arms $\ca{V}\subseteq[\s{M}]$, regularization parameter $\lambda$.

\REQUIRE users $\ca{U}\subseteq[\s{N}]$, arms $\ca{V}\subseteq[\s{M}]$, rank $r$, incoherence $\mu$ of reward matrix $\fl{P}_{\ca{U},\ca{V}}$, total rounds $\s{T}$, noise variance $\sigma^2$, desired error in estimate $\zeta$.

%\STATE Initialize $\Omega = \Phi$.
\STATE Set $d_2=\min(\left|\ca{U}\right|,\left|\ca{V}\right|)$, sampling probability $p=C\mu^2 d_2^{-1}\log^3 d_2$, variance reduction factor $b=\Big\lceil \Big(\frac{c\sigma r \sqrt{\mu}}{\zeta\log d_2}\Big)^2 \Big\rceil$, number of repetitions of algorithm $f=O(\log \s{MNT})$, regularization parameter $\lambda = C_{\lambda}\sigma \sqrt{d_2 p}$ for suitable constants $c,C, C_{\lambda}>0$.
\FOR{$k=0,1,2,\dots,f$} 
 \STATE $\fl{Z} \leftarrow$ \textsc{DataCollectionSubRoutine}$(\ca{U}, \ca{V}, p, b)$
%\COMMENT{$t$ is in the corresponding phase for a fixed $k$ in which this algorithm is instantiated and waiting.}
%\STATE Simulate observations of all marked entries in $\fl{Z}$ that are not in $\ca{U}\times \ca{V}$ by setting them to be zero.

\STATE Without loss of generality, assume $|\ca{U}| \le |\ca{V}|$ (if $|\ca{U}| \ge |\ca{V}|$, we simply swap rows and columns). Randomly partition the columns into $k=\lceil|\ca{V}|/|\ca{U}|\rceil$ sets. Precisely, for each $i\in \ca{V}$, independently set $\delta_i$ to be a value in the set $\{1,2 \dots k\}$ uniformly at random. Partition indices in $\ca{V}$ into $\ca{V}^{(1)},\ca{V}^{(2)},\dots,\ca{V}^{(k)}$ where $\ca{V}^{(q)}=\{i\in \ca{V}\mid \delta_i =q\}$ for each $q\in [k]$. Set $\Omega^{(q)}\leftarrow \Omega \cap (\ca{U}\times \ca{V}^{(q)})$ for all $q\in [k]$. 

\FOR{$q\in [k]$}
\STATE Solve convex program \vspace*{-15pt}
\begin{align}\label{eq:convex}
    \min_{\fl{Q}^{(q)}\in \bb{R}^{|\ca{U}|\times|\ca{V}^{(q)}|}} \frac{1}{2}\sum_{(i,j)\in \Omega^{(q)}}\Big(\fl{Q}^{(q)}_{i\pi(j)}-\fl{Z}_{ij}\Big)^2+\lambda\|\fl{Q}^{(q)}\|_{\star},
\end{align}
where $\|\fl{Q}^{(q)}\|_{\star}$ denotes nuclear norm of matrix $\fl{Q}^{(q)}$ and $\pi(j)$ is index of $j$ in set $\ca{V}^{(q)}$. 
%\#\textit{We estimate each of the smaller matrices by minimizing the MSE restricted to the observed indices in $\Omega^{(q)}$ with a nuclear norm regularizer.}
\ENDFOR
\STATE Compute matrix $\widetilde{\fl{P}}^{(k)}\in \bb{R}^{\s{N}\times \s{M}}$ such that $\widetilde{\fl{P}}^{(k)}_{\ca{U},\ca{V}^{(q)}}=\fl{Q}^{(q)}$ for all $q\in [k]$ and for every index $(i,j)\not \in \ca{U}\times \ca{V}$, $\widetilde{\fl{P}}^{(k)}_{ij}=0$. \#\textit{We combine the estimates of each of the smaller matrices to form an estimate of the larger matrix i.e. without partitioning the columns. Moreover, $\{\widetilde{\fl{P}}_{\ca{U},\ca{V}}^{(k)}\}_{k=1}^{f}$ correspond to $f$ independent estimates of $\fl{P}_{\ca{U},\ca{V}}$.}
\ENDFOR
\RETURN entry-wise median of $\{\widehat{\fl{P}}^{(1)},\widehat{\fl{P}}^{(2)},\dots,\widehat{\fl{P}}^{(f)}\}$.
%\#\textit{With the $f$ independent estimates of $\fl{P}_{\ca{U},\ca{V}}$, we compute the final estimate by taking the median of entries in each coordinate. This step is designed to boost the success probability of the matrix estimate.}

\end{algorithmic}
\end{algorithm*}

\begin{algorithm*}[!t]
\caption{\textsc{DataCollectionSubRoutine}   \label{algo:data_collection_matrix_estimation}}
\begin{algorithmic}[1]
\REQUIRE users $\ca{U}\subseteq[\s{N}]$, arms $\ca{V}\subseteq[\s{M}]$, sampling probability $p$, variance reduction factor $b$.
\STATE For each tuple $(u,v)\in \ca{U}\times \ca{V}$, independently set $\delta_{uv}=1$ with probability $p$ and $\delta_{uv}=0$ with probability $1-p$. Let $\Omega\subseteq \ca{U}\times \ca{V}$ to be the set of indices with $\delta_{uv}=1$. 
%Denote starting round for Steps 4-9 by $t_{\s{start},k}$. 
%\#\textit{We sample $f$ independent Bernoulli masks corresponding to each value of the iterator $k$.}
\FOR{$\ell=1,2,\dots,b$} 
\STATE For all $(i,j)\in \Omega$, set $\s{Mask}_{ij}=0$.
%\STATE Set $\ell'=1$
\WHILE{there exists $(i,j)\in \Omega$ such that $\s{Mask}_{ij}=0$}
\STATE WAIT UNTIL next round $t$  such that some $u(t) \in \ca{U}$ is sampled.
\STATE For user $u(t)\in \ca{U}$, pull arm $\rho(t)$ in $\{j \in \ca{V}\mid (u(t),j)\in \Omega, \s{Mask}_{u(t)j}=0\}$ and set $\s{Mask}_{u(t)\rho(t)}=1$. 
If not possible then pull any arm $\rho(t)$ in $\ca{V}$ such that $(u(t),\rho(t))\not \in \Omega$. Observe $\fl{R}^{(t)}$. 
%\#\textit{In other words, in every round, the sampled user pulls an arm until all indices in $\Omega$ are observed at least once.}
%\STATE Set $\ell'=\ell'+1$
\ENDWHILE
\ENDFOR

\STATE Initialize a matrix $\fl{Z}$ of size $|\ca{U}|\times |\ca{V}|$ to 0's.
\STATE 
For each tuple $(i,j)\in \Omega$, set $\fl{Z}_{ij}$ to be average of  rewards collected for that entry.
\RETURN $\fl{Z}$
\end{algorithmic}
\end{algorithm*}

\begin{algorithm*}[!t]
\caption{\textsc{Upper Confidence Bound (UCB)} \label{algo:estimate_ucb} \cite{bubeck2012regret}}
\begin{algorithmic}[1]

\REQUIRE user $u\in [\s{N}]$, set of arms $\ca{V}\subseteq[\s{M}]$, error $\zeta$, noise $\sigma^2$, total rounds $\s{T}$.

\STATE WAIT UNTIL round $t$ when user $u(t)$ is sampled. 
\STATE Choose arm $j_t \in \ca{V}$ according to $j_t = \s{argmax}_{j \in \ca{V}} \s{UCB}_j(s-1,\s{T}^{-3})$ where $s-1$ is the number of rounds ran in this instantiation of the UCB algorithm so far and (suppose $t_j(s-1)$ is the number of times arm $j\in \ca{V}$ has been pulled by $u$ in the previous $s-1$ rounds and $\widehat{\mu}_j(s-1)$ is the empirical mean of the arm $j$ due to the previous $t_j(s-1)$ observations)
\begin{align*}
    \s{UCB}_j(s-1,\s{T}^{-3}) = 
    \begin{cases}
    \infty  \quad \text{ if arm $j$ has not been played before} \\
    \widehat{\mu}_j(s-1)+\sigma\sqrt{\frac{6\log \s{T}}{t_j(s-1)}}
    \end{cases}
\end{align*}

\end{algorithmic}
\end{algorithm*}

% \begin{proof}[Proof of Corollary \ref{coro:reco}]
%  Consider a fixed user $u\in [\s{N}]$.
%  Let $X$ be the number of rounds alloted to user $u$ among the $m'=cs\s{N}\log^2 (\s{MN}\delta^{-1})(\s{M}p+\sqrt{\s{M}p\log \s{N}\delta^{-1}})$  recommendations for some appropriate constant $c>0$. Since, at each round, the user is chosen uniformly at random from the entire pool of users, $\bb{E}X =cs\log^2 (\s{MN}\delta^{-1})(\s{M}p+\sqrt{\s{M}p\log \s{N}\delta^{-1}})= \Omega(m\log (\s{MN}\delta^{-1}))$ where $m$ is the quantity described in the statement of Lemma \ref{lem:min_acc}. By using Chernoff bound, $X\ge m$ with probability at least $1-(\s{MN}\delta^{-1})^{-1}$. Hence, after taking a union bound over all users and accounting for the failure probability in Lemma \ref{lem:min_acc}, we get the result of the corollary.
% \end{proof}

\subsection{Proof of Lemma~\ref{lem:incoherence}}
\begin{proof}[Proof of Lemma \ref{lem:incoherence}]

Suppose the reward matrix $\fl{P}$ has the SVD decomposition $\fl{U}\f{\Sigma}\fl{V}^{\s{T}}$. We are looking at a sub-matrix of $\fl{P}$ denoted by $\fl{P}_{\s{sub}}\in \bb{R}^{\s{N}'\times \s{M}'}$ which can be represented as $\fl{U}_{\s{sub}}\f{\Sigma}\fl{V}_{\s{sub}}^{\s{T}}$ where $\fl{U}_{\s{sub}},\fl{V}_{\s{sub}}$ are sub-matrices of $\fl{U},\fl{V}$ respectively and are not necessarily orthogonal. Here, the rows in $\fl{P}_{\s{sub}}$ corresponds to some cluster of users $\s{C}^{(a)}$ for $a\in \s{C}$ and the columns in $\fl{P}_{\s{sub}}$ corresponds to the trimmed set of arms in $[\s{M}]$.
Suppose the rows of the sub-matrix $\fl{P}_{\s{sub}}$ correspond to the users in $\s{C}'\le \s{C}$ clusters. Note that $\fl{U}_{\s{sub}}$ can be further represented as $\fl{U}^{(1)}_{\s{sub}}\fl{U}^{(2)}_{\s{sub}}$ where $\fl{U}^{(1)}_{\s{sub}}\in \bb{R}^{\s{N}'\times \s{C}'}$ is a binary matrix with orthonormal columns and $1$-sparse rows (the non-zero index with value $1/\sqrt{\text{cluster size}}$ in the row indicates the cluster of the user). $\fl{U}^{(2)}_{\s{sub}}\in \bb{R}^{\s{C}'\times \s{C}'}$ indicate the distinct rows of $\fl{U}_{\s{sub}}$ corresponding to each of the $\s{C}$' clusters (multiplied by $\sqrt{\text{cluster size}}$). 

Hence we can write (provided $\fl{V}_{\s{sub}}$ is invertible) 
\begin{align*}
    \fl{P}_{\s{sub}} = \fl{U}^{(1)}_{\s{sub}}\fl{U}^{(2)}_{\s{sub}}\f{\Sigma}(\fl{V}_{\s{sub}}^{\s{T}}\fl{V}_{\s{sub}})^{1/2}(\fl{V}_{\s{sub}}^{\s{T}}\fl{V}_{\s{sub}})^{-1/2}\fl{V}_{\s{sub}}^{\s{T}} = \fl{U}^{(1)}_{\s{sub}}\widehat{\fl{U}}\widehat{\fl{\Sigma}}\widehat{\fl{V}}^{\s{T}}(\fl{V}_{\s{sub}}^{\s{T}}\fl{V}_{\s{sub}})^{-1/2}\fl{V}_{\s{sub}}^{\s{T}} 
\end{align*}
where $\widehat{\fl{U}}\widehat{\fl{\Sigma}}\widehat{\fl{V}}^{\s{T}}$ is the SVD of the matrix $\fl{U}^{(2)}_{\s{sub}}\f{\Sigma}(\fl{V}_{\s{sub}}^{\s{T}}\fl{V}_{\s{sub}})^{1/2}$. 
Since $\widehat{\fl{U}}$ is orthogonal, $\fl{U}^{(1)}_{\s{sub}}\widehat{\fl{U}}$ is orthogonal as well. 
Similarly, $\widehat{\fl{V}}^{\s{T}}(\fl{V}_{\s{sub}}^{\s{T}}\fl{V}_{\s{sub}})^{-1/2}\fl{V}_{\s{sub}}^{\s{T}}$ is orthogonal as well whereas $\widehat{\f{\Sigma}}$ is diagonal. Hence $\fl{U}^{(1)}_{\s{sub}}\widehat{\fl{U}}\widehat{\fl{\Sigma}}\widehat{\fl{V}}^{\s{T}}(\fl{V}_{\s{sub}}^{\s{T}}\fl{V}_{\s{sub}})^{-1/2}\fl{V}_{\s{sub}}^{\s{T}}$ indeed corresponds to the SVD of $\fl{P}_{\s{sub}}$ and we only need to argue about the incoherence of $\fl{U}^{(1)}_{\s{sub}}\widehat{\fl{U}}$ and $(\widehat{\fl{V}}^{\s{T}}(\fl{V}_{\s{sub}}^{\s{T}}\fl{V}_{\s{sub}})^{-1/2}\fl{V}_{\s{sub}}^{\s{T}})^{\s{T}}$. Notice that $\max_i \|(\fl{U}^{(1)}_{\s{sub}}\widehat{\fl{U}})^{\s{T}}\fl{e}_i\|_2 \le \|(\fl{U}^{(1)}_{\s{sub}})^{\s{T}}\fl{e}_i\|_2 \le 1/\sqrt{\s{J}_{\min}}$ where $\s{J}_{\min}$ is the minimum cluster size. Since $\tau=\frac{\s{J}_{\max}}{\s{J}_{\min}}$ and $\s{C}\s{J}_{\max} \ge \s{N}'$, we must have $\s{J}_{\min} \ge \frac{\s{N}'}{\s{C}\tau}$. Hence $\max_{i}\|(\fl{U}^{(1)}_{\s{sub}})^{\s{T}}\fl{e}_i\|_2 \le \sqrt{\frac{\s{C}\tau}{\s{N}'}}$.
% On the other hand, consider any user $u\in [\s{N}]$ represented in the sub-matrix $\fl{P}_{\s{sub}}$. There are possible cases: 1) Suppose the set of arms comprise the entire set of arms $[\s{M}]$. In that case $\fl{Q}=\fl{V}$ and we have $\max_i \|(\fl{V}\widehat{\fl{V}})^{\s{T}}\fl{e}_i\|_2 \le \|\fl{V}^{\s{T}}\fl{e}_i\|_2 \le \sqrt{\mu r/ \s{M}}$. 2) In the other case, we will have that the set of arms represented as columns in $\fl{P}_{\s{sub}}$ is a proper subset of $[\s{M}]$. Let us denote the embedding of a user $u\in [\s{N}]$ represented in $\fl{P}_{\s{sub}}$ by $\fl{u}\in \bb{R}^r$ and an arm $v\in [\s{M}]$ represented in $\fl{P}_{\s{sub}}$ by $\fl{v}\in \bb{R}^r$. We must have $\max_{j \in [\s{M}]}\fl{P}_{uj}-\min_{j \in [\s{M}]}\fl{P}_{uj} \ge c_1 \epsilon_{\ell}$ for some suitable constant $c_1>0$ and $\max_{j \in [\s{M}]}\fl{P}_{uj}-\fl{P}_{uv} \ge c_2 \epsilon_{\ell}$. Hence, we have for suitable constants $c_3,c_4 \ge 0$, (using Assumption \ref{assum:matrix}) 
% \begin{align*}
%     &\max_{j \in [\s{M}]}\fl{P}_{uj}-\fl{P}_{uv} \le c_3(\max_{j \in [\s{M}]}\fl{P}_{uj}-\min_{j \in [\s{M}]}\fl{P}_{uj})  \\
%     &\implies \lambda_1 \lr{\fl{u}}_2 \lr{\fl{v}}_2 \ge  \fl{P}_{uv} \ge c_4 \max_{j \in [\s{M}]}\fl{P}_{uj} \ge  \frac{c_5r\mu\lambda_1}{\sqrt{\s{CN}}} \\
%     &\implies \lr{\fl{v}}_2 \ge c_5 r\mu \sqrt{\frac{\s{J}_{\min}}{\s{CN}}}
% \end{align*}
% implying that $\lr{\fl{Q}\widehat{\fl{V}}}_{2,\infty} \ge $ thus proving the lemma.
Similarly, we have
\begin{align*}
    &\max_i \|\widehat{\fl{V}}(\fl{V}_{\s{sub}}^{\s{T}}\fl{V}_{\s{sub}})^{-1/2}\fl{V}_{\s{sub}}^{\s{T}}\fl{e}_i\| \le \max_i \|(\fl{V}_{\s{sub}}^{\s{T}}\fl{V}_{\s{sub}})^{-1/2}\fl{V}_{\s{sub}}^{\s{T}}\fl{e}_i\| \\
    &\le \frac{\|\fl{V}_{\s{sub}}\|_{2,\infty}}{\sqrt{\lambda_{\min}(\fl{V}_{\s{sub}}^{\s{T}}\fl{V}_{\s{sub}})}} \le \frac{\lr{\fl{V}}_{2,\infty}}{\sqrt{\lambda_{\min}(\fl{V}_{\s{sub}}^{\s{T}}\fl{V}_{\s{sub}})}} \le \sqrt{\frac{\mu \s{C}}{\alpha \s{M}'}}
\end{align*}
where the last line follows from the fact that $\min_{\fl{x}\in \bb{R}^{\s{C}}} \fl{x}^{\s{T}}\fl{V}_{\s{sub}}^{\s{T}}\fl{V}_{\s{sub}}\fl{x} = \min_{\fl{x}\in \bb{R}^{\s{C}}} \sum_{i \in \ca{S}} \fl{x}^{\s{T}}\fl{V}_{i}\fl{V}_{i}^{\s{T}}\fl{x}$ where $\ca{S},|\ca{S}|=\s{M}'$ is the set of rows in $\fl{V}$ represented in $\fl{V}_{\s{sub}}$. Here, we use Assumption \ref{assum:matrix} to conclude that $\lambda_{\min}(\fl{V}_{\s{sub}}^{\s{T}}\fl{V}_{\s{sub}}) \ge \alpha \s{M}'/\s{M}$.

%Hence we can partition the set $\ca{S}$ into $|\ca{S}|/(\gamma\s{C})$ disjoint sets, each of size $\gamma\s{C}$; from Assumption \ref{assum:matrix}, we get that the minimum over each of those sets is $\alpha \gamma\s{C}/\s{M}$ and therefore,  the conclusion follows. 

\end{proof}

\subsection{Proof of Lemma~\ref{lem:condition_num}}
\begin{proof}[Proof of Lemma \ref{lem:condition_num}]
Suppose $\fl{P}_{\s{sub}}$ has dimensions $m' \times n'$ such that its rows correspond to the set of indices $\ca{S}$ and columns correspond to the set of indices $\ca{S}'$.
We have that $\lambda_1^2 = \sup_{\fl{x} \in \bb{R}^n\mid \lr{\fl{x}}_2=1} \fl{x}^{\s{T}}\fl{P}^{\s{T}}\fl{P}\fl{x}$ and $(\lambda'_1)^2 = \sup_{\fl{x} \in \bb{R}^{n'}\mid \lr{\fl{x}}_2=1} \fl{x}^{\s{T}}\fl{P}_{\s{sub}}^{\s{T}}\fl{P}_{\s{sub}}\fl{x}$. Similarly, we can write $\lambda'_1 = \sup_{\fl{x} \in \bb{R}^{n'}\mid \lr{\fl{x}}_2=1} \fl{x}^{\s{T}}\sum_{i\in \ca{S}}(\fl{P}_{i\mid \ca{S}'}^{\s{T}}\fl{P}_{i\mid \ca{S}'})\fl{x}$ where $\fl{P}_{i\mid \ca{S}'}$ is the $i^{\s{th}}$ row of the matrix $\fl{P}$ restricted to the column indices in $\ca{S}'$. Since $\fl{x}^{\s{T}}(\fl{b}\fl{b}^{\s{T}})\fl{x}>0$ for any vector $\fl{x}$, we have 
\begin{align*}
    &\s{J}_{\max} \lambda^2_{\max} \ge \s{J}_{\max}\sup_{\fl{x} \in \bb{R}^n\mid \lr{\fl{x}}_2=1} \fl{x}^{\s{T}}\sum_{i\in [\s{C}]}(\fl{X}_{i}^{\s{T}}\fl{X}_{i})\fl{x} \overset{(a)}{\ge} \sup_{\fl{x} \in \bb{R}^n\mid \lr{\fl{x}}_2=1} \fl{x}^{\s{T}}\sum_{i\in [n]}(\fl{P}_{i} ^{\s{T}}\fl{P}_{i})\fl{x} \\
    & \ge \sup_{\fl{x} \in \bb{R}^{n}\mid \lr{\fl{x}}_2=1} \fl{x}^{\s{T}}\sum_{i\in \ca{S}}(\fl{P}_i^{\s{T}} \fl{P}_i)\fl{x} 
    \ge \sup_{\fl{x} \in \bb{R}^{n}\mid \lr{\fl{x}}_2=1,\fl{x}_{\mid [n]\setminus \ca{S}'}=\fl{0}} \fl{x}^{\s{T}}\sum_{i\in \ca{S}}(\fl{P}_i^{\s{T}}\fl{P}_i)\fl{x} \ge \sup_{\fl{x} \in \bb{R}^{n'}\mid \lr{\fl{x}}_2=1} \fl{x}^{\s{T}}\sum_{i\in \ca{S}}(\fl{P}_{i\mid \ca{S}'}^{\s{T}} \fl{P}_{i\mid \ca{S}'})\fl{x}
\end{align*}
implying that $\sqrt{\s{J}_{\max}} \lambda_{\max} \ge \lambda_1'$. The inequality (a) follows from the fact that $\s{J}_{\max} \fl{X}_i^{\s{T}}\fl{X}_i \ge \sum_{u \in [n]\mid u \in \ca{C}^{(i)}}\fl{P}_i^{\s{T}}\fl{P}_i$. In order to prove the inequality on $\lambda_r'$, we need to do some more work. 
%Let us denote the nullspace of the vectors $\{\fl{P}_i\}_{i \in \ca{S}}$ by $\ca{K}$ and the subspace orthogonal to $\ca{K}$ by $\ca{K}^c$. 
Let us denote the SVD of $\fl{X}=\fl{U}\f{\Sigma}\fl{V}^{\s{T}}$ where $\fl{U}\in \bb{R}^{\s{C}\times \s{C}}, \fl{V}^{\s{T}}\in \bb{R}^{\s{C}\times \s{M}}$. Consider the matrix $\fl{P}$ restricted to the rows in the set $\ca{S}$ denoted by $\fl{P}_{\ca{S}}$. Notice that $\fl{P}_{\ca{S}}^{\s{T}}\fl{P}_{\ca{S}}=\fl{V}\f{\Sigma}\fl{U}^{\s{T}}\fl{D}\fl{U}\f{\Sigma}\fl{V}^{\s{T}}$ where $\fl{D}$ is a diagonal matrix whose $i^{\s{th}}$ entry corresponds to the number of users in the set $\ca{S}$ belonging to the $i^{\s{th}}$ cluster. If the set $\ca{S}$ is a union of clusters, then the minimum diagonal entry in $\fl{D}$ must be $\s{J}_{\min}$. 
Let us denote the nullspace of the matrix $\fl{P}_{\ca{S}}$ by $\ca{K}$.
We can write the minimum non-zero eigenvalue of $\fl{P}_{\ca{S}}^{\s{T}}\fl{P}_{\ca{S}}$ as $\min_{\fl{x}\in \ca{K}^c \mid \lr{\fl{x}}_2=1} \fl{x}^{\s{T}}\fl{V}\f{\Sigma}\fl{U}^{\s{T}}\fl{D}\fl{U}\f{\Sigma}\fl{V}^{\s{T}}\fl{x}$. 
Let us write the vector $\fl{z}=\f{\Sigma}\fl{V}^{\s{T}}\fl{x}$; $\fl{z}$ must belong to the sub-space $\ca{T}$ spanned by the rows of $\fl{U}$ corresponding to the non-zero diagonal indices of $\fl{D}$ as otherwise $\fl{x}$ will have non-zero projection on the null-space of $\fl{P}_{\ca{S}}$. Note that the rows of $\fl{U}$ are orthonormal as well (since $\fl{U}$ is a square matrix i.e. $\fl{U}^{\s{T}}=\fl{U}^{-1}$). Hence, $\min_{\fl{z}\in \ca{T}}\fl{z}^{\s{T}}\fl{U}^{\s{T}}\fl{D}\fl{U}\fl{z}=\s{J}_{\min}\lr{\fl{z}}_2^2 \ge \s{J}_{\min} \s{\lambda}^2_{\min}$.  
Next, we have that 
\begin{align*}
    & \inf_{\fl{x} \in  \ca{K}^c \mid \lr{\fl{x}}_2=1} \fl{x}^{\s{T}}\sum_{i\in \ca{S}}(\fl{P}_{i}^{\s{T}}\fl{P}_{i})\fl{x} \\
    & \overset{(b)}{\le} \inf_{\fl{x} \in  \ca{K}^c \mid \lr{\fl{x}}_2=1,\fl{x}_{\mid [n]\setminus \ca{S}'}=\fl{0}} \fl{x}^{\s{T}}\sum_{i\in \ca{S}}(\fl{P}_i^{\s{T}} \fl{P}_i)\fl{x}
     \le  \inf_{\fl{x} \in  (\ca{K}')^c \cap \ca{K}^c \mid \lr{\fl{x}}_2=1} \fl{x}^{\s{T}}\sum_{i\in \ca{S}}(\fl{P}_{i\mid \ca{S}'}^{\s{T}}\fl{P}_{i\mid \ca{S}'})\fl{x}.              
\end{align*}
where $\ca{K}'$ is the null-space of the matrix $\fl{P}$ restricted to the columns in $\ca{S}'$. Note that the step (b) follows as long as the there exists a vector with non-zero entries only on ${\cal S}'$ in the row space which implies that the sub-matrix is non-zero.

Therefore $\lambda'_{|\s{C}|'} \ge \sqrt{\s{J}_{\min}}\lambda_{\min}$. Using the fact that $\tau=\s{J}_{\max}/\s{J}_{\min}$, the lemma is proved.

\end{proof}

\section{Proofs of Theorems \ref{thm:main_LBM} and \ref{thm:main_LBM2}}\label{app:detailed_lbm}

\subsection{Proof Overview}\label{subsec:proof_overview}

For any phase indexed by $\ell$, we are going to prove that conditioned on the events $\ca{E}^{(\ell)}$, the event $\ca{E}^{(\ell+1)}$ is also going to be true with high probability with proper choice of $\Delta_{\ell+1},\epsilon_{\ell+1}$. First, inspired by low rank matrix completion techniques, conditioned on $\ca{E}^{(\ell)}$ and by using Lemmas \ref{lem:condition_num}, \ref{lem:incoherence} along with the fact that each set of users in $\ca{M}^{(\ell)}$ is \textit{nice}, we can show with Lemma \ref{lem:min_acc} that in phase $\ell$, by using $m_{\ell}=O(\s{V}/\Delta^2_{\ell+1})$ (where $\s{V}=\widetilde{O}(\sigma^2 \mu^3 (\s{N}+\s{M}))$) rounds, $\ca{E}_2^{(\ell)}$ is true with high probability (see Alg. \ref{algo:estimate} in Appendix \ref{app:mc}). Next, we can show the following series of lemmas (let $\ca{B}^{(\ell)} =\bigcup_{i \in [a_{\ell}] \mid \left| \ca{N}^{(\ell,i)}\right| \ge \gamma\s{C}} \ca{M}^{(\ell,i)}$ denote the set of users having more than $\gamma\s{C}$ active arms) regarding the sets $\ca{T}_u^{(\ell)}$ of good arms obtained from the estimates of the relevant reward sub-matrices (Step 6 in Alg. \ref{algo:estimate}):

\begin{lemma}\label{lem:interesting2}
Conditioned on  $\ca{E}^{(\ell)},\ca{E}_2^{(\ell)}$, for every $u\in \ca{B}^{(\ell)}$,  $\pi_u(1)\in \ca{T}_u^{(\ell)}$ and $\max_{s,s'\in \ca{T}_u^{(\ell)}}\left|\fl{P}_{us}-\fl{P}_{us'}\right|\le 4\Delta_{\ell+1}$.
\end{lemma}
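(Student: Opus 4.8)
The plan is to derive both conclusions directly from the defining inequality of the good-arm set $\ca{T}_u^{(\ell)}$ together with the entrywise accuracy granted by $\ca{E}_2^{(\ell)}$; no matrix-completion machinery beyond the guarantee already encoded in $\ca{E}_2^{(\ell)}$ is needed. Throughout, fix $u \in \ca{B}^{(\ell)}$ and let $i$ be the index with $u \in \ca{M}^{(\ell,i)}$ and $|\ca{N}^{(\ell,i)}| \ge \gamma\s{C}$, so that $\ca{T}_u^{(\ell)}$ is defined via estimates over the active set $\ca{N}^{(\ell,i)}$; write $M_u \triangleq \max_{j' \in \ca{N}^{(\ell,i)}} \widetilde{\fl{P}}^{(\ell)}_{uj'}$ for the top estimated reward and let $j^\star$ attain this maximum.

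For the first claim, I would first invoke $\ca{E}^{(\ell)}$, which guarantees $\pi_u(1) \in \ca{N}^{(\ell,i)}$, so the best arm is a candidate for $\ca{T}_u^{(\ell)}$. To check that it passes the membership test, I bound $M_u - \widetilde{\fl{P}}^{(\ell)}_{u\pi_u(1)}$ by substituting true rewards: under $\ca{E}_2^{(\ell)}$ each estimate differs from its true value by at most $\Delta_{\ell+1}$, so $M_u - \widetilde{\fl{P}}^{(\ell)}_{u\pi_u(1)} \le (\fl{P}_{uj^\star} - \fl{P}_{u\pi_u(1)}) + 2\Delta_{\ell+1}$. Since $\pi_u(1)$ is the optimal arm for $u$, we have $\fl{P}_{uj^\star} - \fl{P}_{u\pi_u(1)} \le 0$, hence $M_u - \widetilde{\fl{P}}^{(\ell)}_{u\pi_u(1)} \le 2\Delta_{\ell+1}$, which is exactly the condition defining membership in $\ca{T}_u^{(\ell)}$. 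Therefore $\pi_u(1) \in \ca{T}_u^{(\ell)}$.

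For the second claim, I take arbitrary $s, s' \in \ca{T}_u^{(\ell)}$. By definition of $\ca{T}_u^{(\ell)}$, both $\widetilde{\fl{P}}^{(\ell)}_{us}$ and $\widetilde{\fl{P}}^{(\ell)}_{us'}$ lie in the interval $[M_u - 2\Delta_{\ell+1}, M_u]$, so their estimated rewards satisfy $|\widetilde{\fl{P}}^{(\ell)}_{us} - \widetilde{\fl{P}}^{(\ell)}_{us'}| \le 2\Delta_{\ell+1}$. Passing from estimates to true rewards via a triangle inequality and two applications of $\ca{E}_2^{(\ell)}$ (one for each of $s, s'$, each contributing $\Delta_{\ell+1}$) yields $|\fl{P}_{us} - \fl{P}_{us'}| \le 4\Delta_{\ell+1}$, as claimed.

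In terms of difficulty, there is no substantive obstacle: the argument is a pair of three-term triangle inequalities. The only points requiring care are bookkeeping ones — ensuring that the error guarantee of $\ca{E}_2^{(\ell)}$ is applied only to entries of the sub-matrix $\fl{P}_{\ca{M}^{(\ell,i)}, \ca{N}^{(\ell,i)}}$ over which it actually holds, and invoking $\ca{E}^{(\ell)}$ (rather than $\ca{E}_2^{(\ell)}$) to place $\pi_u(1)$ inside the active set before the membership test can even be posed. Both statements hold deterministically once we condition on $\ca{E}^{(\ell)}$ and $\ca{E}_2^{(\ell)}$.
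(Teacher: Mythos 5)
Your proposal is correct and follows essentially the same route as the paper's proof: the membership of $\pi_u(1)$ is established by the identical three-term decomposition using $\ca{E}_2^{(\ell)}$ twice plus optimality of $\pi_u(1)$, and the $4\Delta_{\ell+1}$ bound comes from the same combination of the $2\Delta_{\ell+1}$ defining inequality of $\ca{T}_u^{(\ell)}$ with two more applications of the entrywise error bound. The only cosmetic difference is that you phrase the second step as both estimates lying in an interval of length $2\Delta_{\ell+1}$, whereas the paper writes out the four-term telescoping sum explicitly; the content is identical.
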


Lemma \ref{lem:interesting2} states that for every relevant user, the best arm always belongs to the set of good arms and characterizes how good the remaining arms are. 

\begin{lemma}\label{lem:interesting_clique}
Fix any $i\in [a_{\ell}]$ such that $\left|\ca{N}^{(\ell,i)}\right|\ge \gamma\s{C}$. Conditioned on the events $\ca{E}^{(\ell)},\ca{E}_2^{(\ell)}$, nodes in $\ca{M}^{(\ell,i)}$ corresponding to the same cluster form a clique.
Also, users in each connected component of the graph $\ca{G}^{(\ell,i)}$ form a nice subset.
\end{lemma}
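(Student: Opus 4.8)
The plan is to prove the two assertions in sequence: the clique claim carries all the real content, and the niceness claim then follows almost immediately from it. Throughout I work conditioned on $\ca{E}^{(\ell)}$ and $\ca{E}_2^{(\ell)}$, and I fix $i$ with $\left|\ca{N}^{(\ell,i)}\right|\ge\gamma\s{C}$, so that the estimate $\widetilde{\fl{P}}^{(\ell)}$ and the sets $\ca{T}^{(\ell)}_u$ are defined for all $u\in\ca{M}^{(\ell,i)}$.

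First I would establish the clique claim. Fix two users $u,v\in\ca{M}^{(\ell,i)}$ lying in the same cluster $\ca{C}^{(c)}$, so that their reward rows coincide, $\fl{P}_u=\fl{P}_v$, and in particular $\pi_u(1)=\pi_v(1)$. I must verify both edge conditions of eq. (\ref{eq:edge}). For the entrywise-closeness condition, for each arm $x\in\ca{N}^{(\ell,i)}$ I apply the triangle inequality $\left|\widetilde{\fl{P}}^{(\ell)}_{ux}-\widetilde{\fl{P}}^{(\ell)}_{vx}\right|\le\left|\widetilde{\fl{P}}^{(\ell)}_{ux}-\fl{P}_{ux}\right|+\left|\fl{P}_{ux}-\fl{P}_{vx}\right|+\left|\fl{P}_{vx}-\widetilde{\fl{P}}^{(\ell)}_{vx}\right|$; the middle term vanishes because $\fl{P}_{ux}=\fl{P}_{vx}$, and conditioned on $\ca{E}_2^{(\ell)}$ each outer term is at most $\Delta_{\ell+1}$, giving the required bound $2\Delta_{\ell+1}$. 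For the intersection condition I would invoke Lemma \ref{lem:interesting2}, which guarantees $\pi_u(1)\in\ca{T}^{(\ell)}_u$ and $\pi_v(1)\in\ca{T}^{(\ell)}_v$; since $\pi_u(1)=\pi_v(1)$, this common best arm lies in $\ca{T}^{(\ell)}_u\cap\ca{T}^{(\ell)}_v$, so the intersection is non-empty. Hence an edge exists between every pair of same-cluster users, i.e. they form a clique.

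The niceness claim then follows from the clique claim together with the fact that $\ca{M}^{(\ell,i)}$ is itself nice (a consequence of $\ca{E}^{(\ell)}$). Let $S$ be a connected component of $\ca{G}^{(\ell,i)}$ and let $c$ be any cluster with $\ca{C}^{(c)}\cap S\neq\emptyset$, witnessed by some $u\in\ca{C}^{(c)}\cap S$. Because $\ca{M}^{(\ell,i)}$ is nice, membership $u\in\ca{M}^{(\ell,i)}$ forces the whole cluster $\ca{C}^{(c)}\subseteq\ca{M}^{(\ell,i)}$. By the clique claim every member of $\ca{C}^{(c)}$ is joined to $u$ by an edge, hence lies in the same component $S$, so $\ca{C}^{(c)}\subseteq S$. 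Since every user of $S$ belongs to some such cluster, we conclude $S=\bigcup_{c:\,\ca{C}^{(c)}\cap S\neq\emptyset}\ca{C}^{(c)}$, which is a union of clusters, i.e. nice.

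I do not anticipate a serious obstacle here; the only step that genuinely needs care is the intersection condition, where it is essential to invoke Lemma \ref{lem:interesting2} to place the shared best arm $\pi_u(1)=\pi_v(1)$ inside both good-arm sets — the closeness bound alone does not, by itself, produce a \emph{common} good arm. The niceness claim is then immediate once one remembers to use the niceness of the parent set $\ca{M}^{(\ell,i)}$, which is precisely what ensures a witnessed user's entire cluster is available in $\ca{M}^{(\ell,i)}$ to be absorbed into its component.
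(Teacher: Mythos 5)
Your proof is correct and follows essentially the same route as the paper: a triangle inequality through the true (identical) reward rows for the entrywise condition, and the clique fact combined with the niceness of $\ca{M}^{(\ell,i)}$ to show each connected component is a union of clusters. You are in fact slightly more complete than the paper, which omits the explicit verification that $\ca{T}^{(\ell)}_u\cap\ca{T}^{(\ell)}_v\neq\Phi$; your invocation of Lemma \ref{lem:interesting2} to place the common best arm in both good-arm sets is exactly the right way to close that step.
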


Recall that we draw a graph $\ca{G}^{(\ell,i)}$ with nodes corresponding to users in $\ca{M}^{(\ell,i)}$ and edges drawn according to eq. (\ref{eq:edge}). Lemma \ref{lem:interesting_clique} says that users in same cluster always form a clique; however, this does not rule out inter-cluster edges. Nevertheless, if two users have an edge, then the next lemma shows that good arms for one are good for the other:

\begin{lemma}\label{lem:interesting4}
Fix any $i\in [a_{\ell}]$ such that $\left|\ca{N}^{(\ell,i)}\right|\ge \gamma\s{C}$. Consider two users $u,v\in \ca{M}^{(\ell,i)}$ having an edge in the graph $\ca{G}^{(\ell,i)}$. Conditioned on the events $\ca{E}^{(\ell)},\ca{E}_2^{(\ell)}$, we must have 
\begin{align*}
&\max_{x\in \ca{T}^{(\ell)}_u,y\in \ca{T}^{(\ell)}_v}\left|\fl{P}_{ux}-\fl{P}_{uy}\right|\le 16\Delta_{\ell+1} \\
&\text{ and } \max_{x\in \ca{T}^{(\ell)}_u,y\in \ca{T}^{(\ell)}_v}\left|\fl{P}_{vx}-\fl{P}_{vy}\right|\le 16\Delta_{\ell+1}   
\end{align*}
\end{lemma}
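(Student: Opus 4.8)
The plan is to prove the bound with a disciplined triangle-inequality chain, routing every comparison through a common good arm. Throughout I would condition on both $\ca{E}^{(\ell)}$ and $\ca{E}_2^{(\ell)}$, so that the entry-wise estimation guarantee of eq.~(\ref{eq:entrywise}), namely $|\widetilde{\fl{P}}^{(\ell)}_{wz}-\fl{P}_{wz}|\le \Delta_{\ell+1}$, holds for all $w\in\ca{M}^{(\ell,i)}$ and $z\in\ca{N}^{(\ell,i)}$, and so that Lemma~\ref{lem:interesting2} applies.

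First I would convert the edge condition into a statement about the \emph{true} rewards. Since $u,v$ share an edge, the second clause of eq.~(\ref{eq:edge}) gives $|\widetilde{\fl{P}}^{(\ell)}_{ux}-\widetilde{\fl{P}}^{(\ell)}_{vx}|\le 2\Delta_{\ell+1}$ for every $x\in\ca{N}^{(\ell,i)}$. Combining this with the two entry-wise errors yields, for every arm $x\in\ca{N}^{(\ell,i)}$,
\begin{align*}
|\fl{P}_{ux}-\fl{P}_{vx}| \le |\fl{P}_{ux}-\widetilde{\fl{P}}^{(\ell)}_{ux}| + |\widetilde{\fl{P}}^{(\ell)}_{ux}-\widetilde{\fl{P}}^{(\ell)}_{vx}| + |\widetilde{\fl{P}}^{(\ell)}_{vx}-\fl{P}_{vx}| \le \Delta_{\ell+1} + 2\Delta_{\ell+1} + \Delta_{\ell+1} = 4\Delta_{\ell+1}.
\end{align*}
This \emph{cross-user} closeness at every active arm is the key intermediate fact.

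Next, the first clause of eq.~(\ref{eq:edge}) supplies a common good arm $w\in\ca{T}^{(\ell)}_u\cap\ca{T}^{(\ell)}_v$. Fixing any $x\in\ca{T}^{(\ell)}_u$ and $y\in\ca{T}^{(\ell)}_v$, I would decompose
\begin{align*}
|\fl{P}_{ux}-\fl{P}_{uy}| \le |\fl{P}_{ux}-\fl{P}_{uw}| + |\fl{P}_{uw}-\fl{P}_{vw}| + |\fl{P}_{vw}-\fl{P}_{vy}| + |\fl{P}_{vy}-\fl{P}_{uy}|.
\end{align*}
The first term is at most $4\Delta_{\ell+1}$ since $x,w\in\ca{T}^{(\ell)}_u$ (Lemma~\ref{lem:interesting2} for user $u$); the third is at most $4\Delta_{\ell+1}$ since $w,y\in\ca{T}^{(\ell)}_v$ (Lemma~\ref{lem:interesting2} for user $v$); and the second and fourth are each at most $4\Delta_{\ell+1}$ by the cross-user bound above evaluated at arms $w$ and $y$ respectively. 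Summing gives $|\fl{P}_{ux}-\fl{P}_{uy}|\le 16\Delta_{\ell+1}$, and maximizing over $x\in\ca{T}^{(\ell)}_u$, $y\in\ca{T}^{(\ell)}_v$ yields the first inequality; the second (for user $v$) follows from the identical four-term decomposition with $u$ and $v$ interchanged.

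I do not expect a serious obstacle: the content is a single triangle-inequality chain, and the only genuine idea is to pass through the shared arm $w$ so that each hop is controlled either by the within-user spread of Lemma~\ref{lem:interesting2} or by the edge-induced cross-user closeness. The one point demanding care is the bookkeeping of which comparisons are ``same user, two good arms'' (governed by Lemma~\ref{lem:interesting2}) versus ``two users, same arm'' (governed by the edge condition), since conflating these two types of hop is what would inflate or corrupt the constant $16$.
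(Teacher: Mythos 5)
Your proof is correct and follows essentially the same route as the paper: the paper's own argument also first converts the edge condition into the cross-user bound $|\fl{P}_{ux}-\fl{P}_{vx}|\le 4\Delta_{\ell+1}$, then uses the identical four-term telescoping through a shared arm $z\in\ca{T}^{(\ell)}_u\cap\ca{T}^{(\ell)}_v$, bounding each hop by $4\Delta_{\ell+1}$ via either Lemma~\ref{lem:interesting2} or the edge condition. No gaps.
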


We can extend Lemma \ref{lem:interesting4} to the case when two users have a path joining them in the constructed graph:

\begin{lemma}\label{lem:interesting5}
Fix any $i\in [a_{\ell}]$ such that $\left|\ca{N}^{(\ell,i)}\right|\ge \gamma\s{C}$. Consider two users $u,v\in \ca{M}^{(\ell,i)}$ having a path in the graph $\ca{G}^{(\ell,i)}$. Conditioned on the events $\ca{E}^{(\ell)},\ca{E}_2^{(\ell)}$, we must have 
\begin{align*}
&\max_{x\in \ca{T}^{(\ell)}_u,y\in \ca{T}^{(\ell)}_v}\left|\fl{P}_{ux}-\fl{P}_{uy}\right|\le 32\s{C}\Delta_{\ell+1} \\
&\text{ and } \max_{x\in \ca{T}^{(\ell)}_u,y\in \ca{T}^{(\ell)}_v}\left|\fl{P}_{vx}-\fl{P}_{vy}\right|\le 32\s{C}\Delta_{\ell+1}.   
\end{align*}
\end{lemma}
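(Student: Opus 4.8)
The plan is to reduce everything to the single-edge statement of Lemma~\ref{lem:interesting4}, but the crux is to avoid a telescoping bound that scales with the raw length of the path. A path inside $\ca{G}^{(\ell,i)}$ can have as many as $|\ca{M}^{(\ell,i)}|$ vertices, so blindly chaining Lemma~\ref{lem:interesting4} edge by edge would produce a useless, length-dependent bound. The observation I would exploit is Lemma~\ref{lem:interesting_clique}: since the vertices of any single cluster form a clique, an arbitrary path can be \emph{compressed} into one that visits each cluster at most once, and there are only $\s{C}$ clusters. This is exactly what caps the number of hops at $\s{C}$ and produces the factor $\s{C}$ in the bound.

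First I would record a clean per-edge estimate. If $w,w'$ are adjacent, the second condition in~(\ref{eq:edge}) gives $|\widetilde{\fl{P}}^{(\ell)}_{wx}-\widetilde{\fl{P}}^{(\ell)}_{w'x}|\le 2\Delta_{\ell+1}$ for every $x\in \ca{N}^{(\ell,i)}$; combining this with the entry-wise guarantee of the event $\ca{E}_2^{(\ell)}$ in~(\ref{eq:entrywise}) and the triangle inequality yields $|\fl{P}_{wx}-\fl{P}_{w'x}|\le 4\Delta_{\ell+1}$ for \emph{all} active arms $x$. This uniform-over-arms estimate is the object I will telescope. Next I would carry out the compression: given a path $u=w_0,w_1,\dots,w_k=v$, whenever two of its vertices lie in the same cluster they are adjacent by Lemma~\ref{lem:interesting_clique}, so the intervening segment can be short-circuited; iterating produces a path $u=z_0,z_1,\dots,z_m=v$ with $m\le \s{C}-1$, all $z_j$ in distinct clusters, and each consecutive pair genuinely adjacent in $\ca{G}^{(\ell,i)}$. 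Telescoping the per-edge estimate along this compressed path gives $|\fl{P}_{ux}-\fl{P}_{vx}|\le 4(\s{C}-1)\Delta_{\ell+1}\le 4\s{C}\Delta_{\ell+1}$ for every $x\in\ca{N}^{(\ell,i)}$, in particular for every arm in $\ca{T}^{(\ell)}_v$ and for $\pi_u(1)$ (which lies in $\ca{T}^{(\ell)}_u\subseteq\ca{N}^{(\ell,i)}$ by Lemma~\ref{lem:interesting2}).

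Finally I would convert this uniform-over-arms closeness into the stated gap bound using Lemma~\ref{lem:interesting2}. Fix $x\in \ca{T}^{(\ell)}_u$ and $y\in\ca{T}^{(\ell)}_v$. For one direction, $\fl{P}_{uy}-\fl{P}_{ux}\le 4\Delta_{\ell+1}$ follows from $\fl{P}_{uy}\le \fl{P}_{u\pi_u(1)}$ together with $\fl{P}_{ux}\ge \fl{P}_{u\pi_u(1)}-4\Delta_{\ell+1}$ (both $x$ and $\pi_u(1)$ lie in $\ca{T}^{(\ell)}_u$, so Lemma~\ref{lem:interesting2} applies). For the reverse direction I would chain the best-arm inequalities: $\fl{P}_{uy}\ge \fl{P}_{vy}-4\s{C}\Delta_{\ell+1}\ge \fl{P}_{v\pi_v(1)}-4\Delta_{\ell+1}-4\s{C}\Delta_{\ell+1}\ge \fl{P}_{v\pi_u(1)}-4\Delta_{\ell+1}-4\s{C}\Delta_{\ell+1}\ge \fl{P}_{u\pi_u(1)}-4\Delta_{\ell+1}-8\s{C}\Delta_{\ell+1}\ge \fl{P}_{ux}-4\Delta_{\ell+1}-8\s{C}\Delta_{\ell+1}$, where I used $y\in\ca{T}^{(\ell)}_v$ (Lemma~\ref{lem:interesting2}), optimality of $\pi_v(1)$ for $v$, the uniform closeness applied to the arm $\pi_u(1)$, and $\fl{P}_{u\pi_u(1)}\ge \fl{P}_{ux}$. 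Combining the two directions gives $|\fl{P}_{ux}-\fl{P}_{uy}|\le 4\Delta_{\ell+1}+8\s{C}\Delta_{\ell+1}\le 32\s{C}\Delta_{\ell+1}$, and the symmetric computation at $v$ gives the second inequality.

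The main obstacle is the compression step: I must argue carefully that repeatedly short-circuiting same-cluster vertices terminates in a path with at most $\s{C}$ vertices whose consecutive vertices remain genuinely adjacent — this is precisely where Lemma~\ref{lem:interesting_clique} is indispensable, as it is what turns ``same cluster'' into ``adjacent.'' Once the path length is controlled by $\s{C}$, the remaining work is routine triangle-inequality bookkeeping, and the constant $32$ is loose enough to absorb any slack in the chaining.
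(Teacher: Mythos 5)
Your proposal is correct and follows essentially the same route as the paper: both arguments use the clique structure from Lemma~\ref{lem:interesting_clique} to cap the effective path length at $O(\s{C})$ and then telescope the per-edge estimates coming from the edge definition and the event $\ca{E}_2^{(\ell)}$. The differences are only bookkeeping — you compress the path to at most $\s{C}$ distinct-cluster hops and telescope the uniform arm-wise closeness before converting via Lemma~\ref{lem:interesting2}, whereas the paper takes a shortest path of at most $2\s{C}$ nodes and telescopes through the intersection arms $z_i\in\ca{T}_{a_i}^{(\ell)}\cap\ca{T}_{a_{i+1}}^{(\ell)}$; both land on the stated $32\s{C}\Delta_{\ell+1}$ bound, yours with some slack to spare.
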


Thus we can create the new sets of nice users as the connected components of the graph and the corresponding set of arms that are good for all is constructed by the union of the set of good arms (eq. \ref{eq:common_set}). If the number of active arms for a set of users become less than $\gamma\s{C}$, then we start a UCB algorithm for each of them until the end of the number of rounds. Therefore, conditioned on the events $\ca{E}^{(\ell)},\ca{E}^{(\ell)}_2$, the event $\ca{E}^{(\ell)}$ is also going to be true. Hence, conditioned on the event $\ca{E}^{(\ell)}$, we can bound the regret in each round of the $\ell^{\s{th}}$ phase by $\epsilon_{\ell}$; roughly speaking, the number of rounds in the $\ell^{\s{th}}$ phase is  $1/\epsilon^2_{\ell}$ and therefore the regret is $1/\epsilon_{\ell}$. By setting $\Delta_{\ell}$ as in Step 3 of Alg. \ref{algo:phased_elim} (and $\epsilon_{\ell}=\Delta_{\ell}/64\s{C}$), we can bound the regret of LATTICE and achieve the guarantee in Theorems \ref{thm:main_LBM} and \ref{thm:main_LBM2}.

\subsection{Detailed Proof}\label{sec:detailed_proof}

LATTICE is run in phases indexed by $\ell=1,2,\dots$. In the beginning of each phase $\ell$, we have the following set of desirable properties:
\begin{enumerate}
    \item Maintain a list of groups of users $\ca{M}^{(\ell)}\equiv\{\ca{M}^{(\ell,1)},\ca{M}^{(\ell,2)},\dots,\ca{M}^{(\ell,a_{\ell})}\}$ and arms $\ca{N}^{(\ell)}\equiv\{\ca{N}^{(\ell,1)},\ca{N}^{(\ell,2)},\dots,\ca{N}^{(\ell,a_{\ell})}\}$  where $a_{\ell}\le \s{C}$ such that  $\cup_{i \in [a_{\ell}]} \ca{M}^{(\ell,i)}= [\s{N}]$ and $\cup_{i \in [a_{\ell}]} \ca{N}^{(\ell,i)} \subseteq  [\s{M}]$.
    \item Moreover, for all $i\in [a_{\ell}]$, we will have $\ca{M}^{(\ell,i)}= \bigcup_{j \in \ca{G}^{(\ell,i)}} \ca{C}^{(j)}$ where the sets $\{\ca{G}^{(\ell,1)},\ca{G}^{(\ell,2)},\dots,\ca{G}^{(\ell,a_{\ell})}\}$ form a partition of the set $[\s{C}]$. This implies that every set of users in the family $\ca{M}^{(\ell)}$ is \textit{nice} and the sets of users in $\ca{M}^{(\ell)}$ form a partition of $[\s{N}]$.
    \item For each group $\ca{M}^{(\ell,i)}$ in the list $\ca{M}^{(\ell)}$, we will have an active set of arms denoted by $\ca{N}^{(\ell,i)}$ such that $\ca{N}^{(\ell,i)} \supseteq \{\s{argmax}_{j} \fl{P}_{uj} \mid u \in \ca{M}^{(\ell,i)}\}$ i.e. for each user $u$ in the set $\ca{M}^{(\ell,i)}$, their best arm must belong to the set $\ca{N}^{(\ell,i)}$. Furthermore, for all $i\in [a_{\ell}]$ such that $\left|\ca{N}^{(\ell,i)}\right| \ge \gamma\s{C}$,
the set $\ca{N}^{(\ell,i)}$ must also satisfy the following:
\begin{align}\label{eq:gap}
    \left|\max_{j\in \ca{N}^{(\ell,i)}}\fl{P}_{uj}-\min_{j\in \ca{N}^{(\ell,i)}}\fl{P}_{uj}\right| \le \epsilon_{\ell} \text{ for all }u\in \ca{M}^{(\ell,i)}
\end{align}
where $\epsilon_{\ell}$ is a fixed exponentially decreasing sequence in $\ell$ (in particular, we choose $\epsilon_1=\lr{\fl{P}}_{\infty}$ and $\epsilon_{\ell}=C'2^{-\ell}\min\Big(\|\fl{P}\|_{\infty},\frac{\sigma\sqrt{\mu}}{\log \s{N}}\Big)$ for $\ell>1$ for some constant $C'>0$).
\item Let $\ca{B}^{(\ell)}\subseteq [\s{N}]$ be a subset of users satisfying $\ca{B}^{(\ell)} =\bigcup_{i \in [a_{\ell}] \mid \left| \ca{N}^{(\ell,i)}\right| \ge \gamma\s{C}} \ca{M}^{(\ell,i)}$ i.e. $\ca{B}^{(\ell)}$ corresponds to the set of users which belong to a group $\ca{M}^{(\ell,i)}$ at the beginning of the $\ell^{\s{th}}$ phase having more than $\s{C}$ active arms. We will also maintain that $\ca{B}^{(i)} \supseteq  \ca{B}^{(j)}$ for any phase $i \le j$ i.e. the set of users with more than $\s{C}$ active arms goes on shrinking.
\end{enumerate}

Since LATTICE is random, we will say that our algorithm is $\epsilon_{\ell}-$\textit{good} at the beginning of the $\ell^{\s{th}}$ phase if the algorithm can maintain a list of users and arms satisfying the above properties at the start of phase $\ell$. Let us also define the event $\ca{E}^{(\ell)}$ to be true if properties $(1-4)$ are satisfied at the beginning of phase $\ell$ by the phased elimination algorithm. We are going to prove inductively that the phased elimination algorithm is $\epsilon_{\ell}$-good for all phases indexed by $\ell$ for our choice of $\{\epsilon_{\ell}\}$ with high probability as long as the number of phases are small.

\noindent \textbf{Base Case:} For $\ell=1$ (the first phase), we initialize  $\ca{M}^{(1,1)} = [\s{N}]$, $\ca{N}^{(1,1)}=[\s{M}]$ and therefore, we have $$\left|\max_{j\in \ca{N}^{(\ell,1)}}\fl{P}_{uj}-\min_{j\in \ca{N}^{(\ell,1)}}\fl{P}_{uj}\right| \le \lr{\fl{P}}_{\infty} \text{ for all }u\in [\s{M}].$$ 
Hence, we also have  $\ca{B}^{(1)}=[\s{N}]$.
Moreover, the set of users $[\s{N}]$ satisfies $[\s{N}]=\cup_{j \in \ca{G}^{(1,1)}}\ca{C}^{(j)}$ where $\ca{G}^{(1,1)}=[\s{C}]$ and finally for every user $u\in [\s{N}]$, the best arm $\s{argmax}_{j} \fl{P}_{uj}$ belongs to the entire set of arms. 
Thus for $\ell=1$, our initialization makes the algorithm $\lr{\fl{P}}_{\infty}$-good. 

\noindent \textbf{Inductive Argument:} Suppose, at the beginning of the phase $\ell$, we condition on the event $\ca{E}^{(\ell)}$ that Algorithm is $\epsilon_{\ell}-$\textit{good}. 
Next, our goal is to run Matrix completion in order to estimate each of the sub-matrices corresponding to $\{(\ca{M}^{(\ell,i)},\ca{N}^{(\ell,i)})\}_{i \in [\s{C}]}$. Fix the quantity $\Delta_{\ell+1}>0$.
We can show the following lemma:

\begin{lemma}\label{lem:interesting1}
Let us fix $\Delta_{\ell+1}>0$ and condition on the event $\ca{E}^{(\ell)}$. Suppose Assumptions \ref{assum:matrix} and \ref{assum:cluster_ratio} are satisfied. In that case, in phase $\ell$, by using
 $$m_{\ell}=O\Big(\frac{\sigma^2 \s{C}^2 (\s{C} \bigvee \mu\alpha^{-1})^3 \log \s{M}}{\Delta_{\ell+1}^2}\Big(\s{N}\bigvee\s{MC}\Big)\log^2 (\s{MNC}\delta^{-1})\Big)\Big)$$ rounds, we can compute an estimate $\widetilde{\fl{P}}^{(\ell)}\in \bb{R}^{\s{N}\times \s{M}}$ such that with probability $1-\delta$, we have for a nice subset of users $\ca{M}^{(\ell,i)}$
 \begin{align}
 \lr{\widetilde{\fl{P}}^{(\ell)}_{\ca{M}^{(\ell,i)},\ca{N}^{(\ell,i)}}-\fl{P}_{\ca{M}^{(\ell,i)},\ca{N}^{(\ell,i)}}}_{\infty} \le \Delta_{\ell+1} \text{ for all }i\in [a_{\ell}] \text{ satisfying } \left|\ca{N}^{(i,\ell)}\right| \ge \gamma\s{C}.  
 \end{align}
 \end{lemma}
 
 \begin{proof}[Proof of Lemma \ref{lem:interesting1}]
We are going to use Lemma \ref{lem:min_acc} in order to compute an estimate $\widetilde{\fl{P}}^{(\ell)}_{\ca{M}^{(\ell,i)},\ca{N}^{(\ell,i)}}$ of the sub-matrix $\fl{P}_{\ca{M}^{(\ell,i)},\ca{N}^{(\ell,i)}}$ satisfying $\lr{\widetilde{\fl{P}}^{(\ell)}_{\ca{M}^{(\ell,i)},\ca{N}^{(\ell,i)}}-\fl{P}_{\ca{M}^{(\ell,i)},\ca{N}^{(\ell,i)}}}_{\infty} \le \Delta_{\ell+1}$. From Lemma \ref{lem:min_acc}, we know that by using $m_{\ell}=O\Big(s\s{N}\log^2 (\s{MN}\delta^{-1})(\left|\ca{N}^{(\ell,i)}\right|p+\sqrt{\left|\ca{N}^{(\ell,i)}\right|p\log \s{N}\delta^{-1}})\Big)$ rounds (see Lemma \ref{lem:min_acc}) restricted to users in $\ca{M}^{(\ell,i)}$ such that with probability at least $1-\delta$,
\begin{align*}
    \lr{\widetilde{\fl{P}}^{(\ell)}_{\ca{M}^{(\ell,i)},\ca{N}^{(\ell,i)}}-\fl{P}_{\ca{M}^{(\ell,i)},\ca{N}^{(\ell,i)}}}_{\infty} \le O\Big(\frac{\sigma r }{\sqrt{sd_2}}\sqrt{\frac{\widetilde{\mu}^3\log d_2}{p}}\Big).
\end{align*}
where $d_2=\min(|\ca{M}^{(\ell,i)}|,|\ca{N}^{(\ell,i)}|)$, $\widetilde{\mu}$ is the incoherence factor of the matrix $\fl{P}_{\ca{M}^{(\ell,i)},\ca{N}^{(\ell,i)}}$ and $r$ is the rank of the matrix bounded from above by the number of clusters. In order for the right hand side to be less than $\Delta_{\ell+1}$, we can set $sp=O\Big(\frac{\sigma^2 r^2 \widetilde{\mu}^3 \log d_2}{\Delta_{\ell+1}^2d_2}\Big)$. Since the event $\ca{E}^{(\ell)}$ is true, we must have that $\left|\ca{M}^{(\ell,i)}\right|\ge \s{N}/(\kappa\s{C})$ ($\kappa=O(1)$ is the ratio of the sizes of maximum cluster and minimum cluster); hence $d_2 \ge \min\Big(\frac{\s{N}}{\kappa\s{C}},\left|\ca{N}^{(\ell,i)}\right|\Big)$. Therefore, we must have that 
\begin{align*}
    m_{\ell}=O\Big(\frac{\sigma^2 \s{C}^2 \widetilde{\mu}^3 \log \s{M}}{\Delta_{\ell+1}^2}\max\Big(\s{N},\s{MC}\Big)\log^2 (\s{MNC}\delta^{-1})\Big)\Big)
\end{align*}
where we take a union bound over all sets comprising the partition of the users $[\s{N}]$ (at most $\s{C}$ of them). Finally, from Lemma \ref{lem:incoherence}, we know that $\widetilde{\mu}$ can be bounded from above by $\max(\s{C},2\mu /\alpha)$ which we can use to say that 
\begin{align*}
    m_{\ell}=O\Big(\frac{\sigma^2 \s{C}^2 (\s{C} \bigvee \mu\alpha^{-1})^3 \log \s{M}}{\Delta_{\ell+1}^2}\Big(\s{N}\bigvee\s{MC}\Big)\log^2 (\s{MNC}\delta^{-1})\Big)\Big)
\end{align*}
to complete the proof of the lemma.
\end{proof}

In the following part of the analysis, we will repeatedly condition on the events $\ca{E}^{(\ell)}, \ca{E}_{2}^{(\ell)}$ which are described/reiterated below:

\begin{enumerate}
    \item The event $\ca{E}^{(\ell)}$ is true when the properties $(1-4)$ described at the beginning of Section \ref{app:detailed_lbm} are satisfied by the algorithm.
    \item We will denote the event described in Lemma  \ref{lem:interesting1} equation \ref{eq:infty} to be $\ca{E}^{(\ell)}_2$. In other words, if $\ca{E}^{(\ell)}_2$ is true, then the algorithm has successfully computed an estimate $\widetilde{\fl{P}}^{(\ell)}\in \bb{R}^{\s{N}\times \s{M}}$ such that 
 \begin{align*}
 \lr{\widetilde{\fl{P}}^{(\ell)}_{\ca{M}^{(\ell,i)},\ca{N}^{(\ell,i)}}-\fl{P}_{\ca{M}^{(\ell,i)},\ca{N}^{(\ell,i)}}}_{\infty} \le \Delta_{\ell+1} \text{ for all }i\in [a_{\ell}] \text{ satisfying } \left|\ca{N}^{(i,\ell)}\right| \ge \gamma\s{C}.  
 \end{align*} 
\end{enumerate}

Fix any $i \in [a_{\ell}]$. 
For each user $u\in \ca{M}^{(\ell,i)}$, let us denote a set of good arms for the user $u$ by $\ca{T}^{(\ell)}_u \equiv \{j \in \ca{N}^{(\ell,i)} \mid \max_{j'\in \ca{N}^{(\ell,i)}}\widetilde{\fl{P}}^{(\ell)}_{uj'}-\widetilde{\fl{P}}^{(\ell)}_{uj} \le 2\Delta_{\ell+1}\}$. If we condition on the event $\ca{E}^{(\ell)}_2$, then we can show the following statements to be true:

\begin{lemmau}[Restatement of Lemma \ref{lem:interesting2}]
Condition on the events $\ca{E}^{(\ell)},\ca{E}_2^{(\ell)}$ being true. In that case, for every user $u\in \ca{B}^{(\ell)}$, the arm with the highest reward $\fl{P}_{u\pi_u(1)}$ must belong to the set $\ca{T}_u^{(\ell)}$. Moreover, $\max_{s,s'\in \ca{T}_u^{(\ell)}}\left|\fl{P}_{us}-\fl{P}_{us'}\right|\le 4\Delta_{\ell+1}$.
\end{lemmau}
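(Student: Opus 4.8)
The plan is to reduce both assertions to the entrywise estimation guarantee supplied by $\ca{E}_2^{(\ell)}$, combined with the structural facts guaranteed by $\ca{E}^{(\ell)}$, through elementary triangle-inequality bookkeeping. First I would fix the group $\ca{M}^{(\ell,i)}$ containing $u$; since $u\in \ca{B}^{(\ell)}$, this group satisfies $|\ca{N}^{(\ell,i)}|\ge \gamma\s{C}$, so property (3) of $\ca{E}^{(\ell)}$ gives $\pi_u(1)\in \ca{N}^{(\ell,i)}$ and hence $\fl{P}_{u\pi_u(1)}=\max_{j\in \ca{N}^{(\ell,i)}}\fl{P}_{uj}$, while $\ca{E}_2^{(\ell)}$ gives $|\widetilde{\fl{P}}^{(\ell)}_{uj}-\fl{P}_{uj}|\le \Delta_{\ell+1}$ for every $j\in \ca{N}^{(\ell,i)}$.

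For the membership claim, I would let $j^\star\in \ca{N}^{(\ell,i)}$ maximize the estimated reward $\widetilde{\fl{P}}^{(\ell)}_{uj'}$, and bound the quantity defining $\ca{T}_u^{(\ell)}$ evaluated at $j=\pi_u(1)$. Applying the estimation error at both $j^\star$ and $\pi_u(1)$, together with global optimality $\fl{P}_{uj^\star}\le \fl{P}_{u\pi_u(1)}$, yields
\[
\max_{j'\in \ca{N}^{(\ell,i)}}\widetilde{\fl{P}}^{(\ell)}_{uj'}-\widetilde{\fl{P}}^{(\ell)}_{u\pi_u(1)}\le \big(\fl{P}_{uj^\star}-\fl{P}_{u\pi_u(1)}\big)+2\Delta_{\ell+1}\le 2\Delta_{\ell+1},
\]
which is precisely the defining condition for $\pi_u(1)\in \ca{T}_u^{(\ell)}$.

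For the diameter claim, I would take arbitrary $s,s'\in \ca{T}_u^{(\ell)}$. By definition both estimated rewards lie within $2\Delta_{\ell+1}$ of the common maximum $\max_{j'}\widetilde{\fl{P}}^{(\ell)}_{uj'}$, so $|\widetilde{\fl{P}}^{(\ell)}_{us}-\widetilde{\fl{P}}^{(\ell)}_{us'}|\le 2\Delta_{\ell+1}$. Passing back to the true rewards with two further applications of the $\ca{E}_2^{(\ell)}$ bound gives
\[
|\fl{P}_{us}-\fl{P}_{us'}|\le |\fl{P}_{us}-\widetilde{\fl{P}}^{(\ell)}_{us}|+|\widetilde{\fl{P}}^{(\ell)}_{us}-\widetilde{\fl{P}}^{(\ell)}_{us'}|+|\widetilde{\fl{P}}^{(\ell)}_{us'}-\fl{P}_{us'}|\le 4\Delta_{\ell+1},
\]
and taking the maximum over $s,s'$ completes the argument.

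I do not expect any serious obstacle: the entire proof is a careful conversion between estimated and true rewards, and the constants $2\Delta_{\ell+1}$ and $4\Delta_{\ell+1}$ fall out directly from the two-step triangle inequalities. The only point genuinely requiring attention is verifying that the matrix-completion guarantee $\ca{E}_2^{(\ell)}$ is actually available for the arm set containing $u$; this is exactly why the statement is restricted to $u\in \ca{B}^{(\ell)}$, the users whose group retains at least $\gamma\s{C}$ active arms and for which the low-rank estimate $\widetilde{\fl{P}}^{(\ell)}$ was computed in Lemma~\ref{lem:interesting1}.
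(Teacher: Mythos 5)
Your proposal is correct and follows essentially the same route as the paper: the membership claim is the same two-sided error bound at the estimated maximizer and at $\pi_u(1)$ combined with $\fl{P}_{uj^\star}\le \fl{P}_{u\pi_u(1)}$, and the diameter claim is the same four-term triangle inequality (the paper routes it through $t_1=\s{argmax}_{j}\widetilde{\fl{P}}^{(\ell)}_{uj}$, which is equivalent to your observation that both estimates lie within $2\Delta_{\ell+1}$ of the common maximum). Your explicit remark that property (3) of $\ca{E}^{(\ell)}$ is what places $\pi_u(1)$ in $\ca{N}^{(\ell,i)}$ is a prerequisite the paper uses implicitly.
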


\begin{proof}
Let us fix a user $u\in \ca{M}^{(\ell,i)}$ with active set of arms $\ca{N}^{(\ell,i)}$.
Recall that $\pi_u(1)=\s{argmax}_{j}\fl{P}_{uj}$ and let us denote $t_1=\s{argmax}_{j\in \ca{N}^{(\ell,i)}}\widetilde{\fl{P}}^{(\ell)}_{uj}$for brevity of notation.
% {\color{red} KS: it seems like $t_1$ is not used in this Lemma. Should it be $\pi_u(1)$ ??}
Now, we will have
\begin{align*}
    \widetilde{\fl{P}}_{ut_1}-\widetilde{\fl{P}}^{(\ell)}_{u\pi_u(1)} = \fl{\widetilde{P}}^{(\ell)}_{ut_1}-\fl{P}_{ut_1}+\fl{P}_{ut_1}-\fl{P}_{u\pi_u(1)}+\fl{P}_{u\pi_u(1)}-\widetilde{\fl{P}}^{(\ell)}_{u\pi_u(1)} \le 2\Delta_{\ell+1}
\end{align*}
which implies that $\pi_u(1)\in \ca{T}_u^{(\ell)}$.
Here we used the fact that $\fl{\widetilde{P}}^{(\ell)}_{ut_1}-\fl{P}_{ut_1} \le \Delta_{\ell+1}$, $\fl{P}_{u\pi_u(1)}-\widetilde{\fl{P}}^{(\ell)}_{u\pi_u(1)} \le \Delta_{\ell+1}$ and $\fl{P}_{ut_1}-\fl{P}_{u\pi_u(1)}\le 0$.

% {\color{red} KS: May be this argument is simple and I am not getting it. So the first terms and last terms are bounded by $\Delta_{\ell+1}$ since that is the error in matrix completion in the $\ell$-th phase correct ? $\fl{\widetilde{P}}^{(\ell)}_{uj_{\max}}-\fl{\widetilde{P}}^{(\ell)}_{u\pi_u(1)}$ would be negative because $j_{\max},\pi_u(1)$ both belong to $\ca{N}^{(\ell,i)}$ Is this correct?. I know its obvious but for better readability can we just right the justification down.}

Next, notice that for any $s,s'\in \ca{T}_u^{(\ell)}$
\begin{align*}
    \fl{P}_{us}-\fl{P}_{us'} = \fl{P}_{us}-\fl{\widetilde{P}}^{(\ell)}_{us}+\fl{\widetilde{P}}^{(\ell)}_{us}-\fl{\widetilde{P}}^{(\ell)}_{ut_1} 
    +\fl{\widetilde{P}}^{(\ell)}_{ut_1}-\fl{\widetilde{P}}^{(\ell)}_{us'}+\fl{\widetilde{P}}^{(\ell)}_{us'}-\fl{P}_{us'} \le 4\Delta_{\ell+1}.
\end{align*}

\end{proof}

Again, fix any $i \in [a_{\ell}]$ such that $\left|\ca{N}^{(\ell,i)}\right|\ge \gamma\s{C}$. Consider a graph $\ca{G}^{(\ell,i)}$ whose nodes are given by the users in $\ca{M}^{(\ell,i)}$. Now, we draw an edge between two users $u,v \in \ca{M}^{(\ell,i)}$ if $\ca{T}^{(\ell)}_u \cap \ca{T}^{(\ell)}_v \neq \Phi$ and $\left|\fl{\widetilde{P}}^{(\ell)}_{ux}-\fl{\widetilde{P}}^{(\ell)}_{vx}\right|\le 2\Delta_{\ell+1}$ for all arms $x\in \ca{N}^{(\ell,i)}$.

% $\max_{x\in \ca{T}^{(\ell)}_u,y\in \ca{T}^{(\ell)}_v}\left|\fl{\widetilde{P}}^{(\ell)}_{ux}-\fl{\widetilde{P}}^{(\ell)}_{uy}\right|\le 8\Delta_{\ell+1}$, $\max_{x\in \ca{T}^{(\ell)}_u,y\in \ca{T}^{(\ell)}_v}\left|\fl{\widetilde{P}}^{(\ell)}_{vx}-\fl{\widetilde{P}}^{(\ell)}_{vy}\right|\le 8\Delta_{\ell+1}$
% and  $\max_{x\in \ca{T}^{(\ell)}_u,y\in \ca{T}^{(\ell)}_v}\left|\fl{\widetilde{P}}^{(\ell)}_{ux}-\fl{\widetilde{P}}^{(\ell)}_{vy}\right|\le 6\Delta_{\ell+1}$.  

\begin{lemmau}[Restatement of Lemma \ref{lem:interesting_clique}]
Fix any $i\in [a_{\ell}]$ such that $\left|\ca{N}^{(\ell,i)}\right|\ge \gamma\s{C}$. Conditioned on the events $\ca{E}^{(\ell)},\ca{E}_2^{(\ell)}$, nodes in $\ca{M}^{(\ell,i)}$ corresponding to the same cluster form a clique.
Also, users in each connected component of the graph $\ca{G}^{(\ell,i)}$ form a nice subset.
\end{lemmau}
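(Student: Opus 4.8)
The plan is to establish the two assertions separately, relying on the entrywise accuracy guaranteed by $\ca{E}_2^{(\ell)}$ together with Lemma~\ref{lem:interesting2}.

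\textbf{Step 1 (same-cluster users form a clique).} Fix two users $u,v \in \ca{M}^{(\ell,i)}$ lying in the same cluster, so that $\fl{P}_u = \fl{P}_v$ entrywise. I would verify both conditions defining an edge in $\ca{G}^{(\ell,i)}$. For the reward-closeness condition, for every arm $x \in \ca{N}^{(\ell,i)}$ I would add and subtract $\fl{P}_{ux}=\fl{P}_{vx}$ and apply the triangle inequality: conditioned on $\ca{E}_2^{(\ell)}$ each of $|\widetilde{\fl{P}}^{(\ell)}_{ux}-\fl{P}_{ux}|$ and $|\widetilde{\fl{P}}^{(\ell)}_{vx}-\fl{P}_{vx}|$ is at most $\Delta_{\ell+1}$, hence $|\widetilde{\fl{P}}^{(\ell)}_{ux}-\widetilde{\fl{P}}^{(\ell)}_{vx}| \le 2\Delta_{\ell+1}$. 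For the nonempty-intersection condition, I would invoke Lemma~\ref{lem:interesting2}: the best arm of $u$ lies in $\ca{T}^{(\ell)}_u$ and the best arm of $v$ lies in $\ca{T}^{(\ell)}_v$; since $u$ and $v$ share the same reward vector they share the same best arm $\pi_u(1)=\pi_v(1)$, which therefore lies in $\ca{T}^{(\ell)}_u \cap \ca{T}^{(\ell)}_v$. Both edge conditions hold, so every pair of same-cluster users is joined by an edge, i.e.\ the cluster is a clique.

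\textbf{Step 2 (connected components are nice).} Here I would leverage Step~1: each cluster contained in $\ca{M}^{(\ell,i)}$ is a clique, hence connected, and so lies entirely within a single connected component. Conditioned on $\ca{E}^{(\ell)}$, the maintained property that $\ca{M}^{(\ell,i)}$ is itself \emph{nice} guarantees it is a union of whole clusters; thus every user in a component belongs to a cluster fully contained in $\ca{M}^{(\ell,i)}$, and by the clique property that whole cluster lies in the same component. Consequently each connected component is exactly a union of complete clusters, which is the definition of a nice subset.

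The main thing to be careful about is not over-claiming in Step~2: inter-cluster edges are permitted, so distinct clusters may merge into a single component, and the statement only asserts that no cluster is ever \emph{split} across components. The clique property from Step~1 is precisely what rules out such splitting. The only genuinely load-bearing input is Lemma~\ref{lem:interesting2} (used for the intersection condition); everything else is triangle-inequality bookkeeping that follows directly from the $\ell_\infty$ estimation accuracy in $\ca{E}_2^{(\ell)}$ and the niceness of $\ca{M}^{(\ell,i)}$ supplied by $\ca{E}^{(\ell)}$.
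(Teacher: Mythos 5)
Your proof is correct and follows essentially the same route as the paper: a triangle-inequality argument via $\ca{E}_2^{(\ell)}$ for the closeness condition, and the clique property combined with the niceness of $\ca{M}^{(\ell,i)}$ under $\ca{E}^{(\ell)}$ to conclude that no cluster is split across components. In fact you are slightly more complete than the paper, which verifies only the closeness condition for an edge and leaves the nonempty-intersection condition $\ca{T}^{(\ell)}_u \cap \ca{T}^{(\ell)}_v \neq \Phi$ implicit, whereas you correctly supply it via Lemma~\ref{lem:interesting2} and the shared best arm of same-cluster users.
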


\begin{proof}
For any two users $u,v\in \ca{M}^{(\ell,i)}$ belonging to the same cluster, consider an arm $x\in \ca{N}^{(\ell,i)}$. We must have
\begin{align*}
    \fl{\widetilde{P}}^{(\ell)}_{ux}-\fl{\widetilde{P}}^{(\ell)}_{vx} = \fl{\widetilde{P}}^{(\ell)}_{ux}-\fl{P}_{ux}+\fl{P}_{ux}-\fl{P}_{vx}+\fl{P}_{vx}-\fl{\widetilde{P}}^{(\ell)}_{vx}
    \le 2\Delta_{\ell+1}.
\end{align*}

In order to prove the next statement, 
consider two users $u,v\in \ca{M}^{(\ell,i)}$ that belongs to different clusters $\ca{P},\ca{Q}$
respectively. Note that since the event $\ca{E}^{(\ell)}$ is true, $\ca{M}^{(\ell,i)}$ is a union of clusters comprising $\ca{P},\ca{Q}$. Furthermore, we have already established that nodes in $\ca{G}^{(\ell,i)}$ (users in $\ca{M}^{(\ell,i)}$) restricted to the same cluster form a clique. There every connected component of the graph $\ca{G}^{(\ell,i)}$ can be represented as a union of a subset of clusters.

\end{proof}

\begin{lemmau}[Restatement of Lemma \ref{lem:interesting4}]
Fix any $i\in [a_{\ell}]$ such that $\left|\ca{N}^{(\ell,i)}\right|\ge \gamma\s{C}$. Consider two users $u,v\in \ca{M}^{(\ell,i)}$ having an edge in the graph $\ca{G}^{(\ell,i)}$. Conditioned on the events $\ca{E}^{(\ell)},\ca{E}_2^{(\ell)}$, we must have 
\begin{align*}
&\max_{x\in \ca{T}^{(\ell)}_u,y\in \ca{T}^{(\ell)}_v}\left|\fl{P}_{ux}-\fl{P}_{uy}\right|\le 16\Delta_{\ell+1} 
\text{ and }
\max_{x\in \ca{T}^{(\ell)}_u,y\in \ca{T}^{(\ell)}_v}\left|\fl{P}_{vx}-\fl{P}_{vy}\right|\le 16\Delta_{\ell+1}
\end{align*}
\end{lemmau}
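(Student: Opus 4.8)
The plan is to chain together triangle inequalities, using the common good arm guaranteed by the first edge condition as a pivot, and using the second edge condition to transfer reward estimates between the two users. Throughout, I will freely use that conditioning on $\ca{E}_2^{(\ell)}$ gives $|\widetilde{\fl{P}}^{(\ell)}_{wx}-\fl{P}_{wx}|\le \Delta_{\ell+1}$ for every relevant user $w$ and arm $x\in\ca{N}^{(\ell,i)}$, and that Lemma \ref{lem:interesting2} gives $\max_{s,s'\in \ca{T}_u^{(\ell)}}|\fl{P}_{us}-\fl{P}_{us'}|\le 4\Delta_{\ell+1}$ together with the analogous bound for $v$.

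First I would record a ``cross-user transfer'' inequality: for any arm $x\in\ca{N}^{(\ell,i)}$, the second edge condition $|\widetilde{\fl{P}}^{(\ell)}_{ux}-\widetilde{\fl{P}}^{(\ell)}_{vx}|\le 2\Delta_{\ell+1}$ combined with two applications of $\ca{E}_2^{(\ell)}$ yields $|\fl{P}_{ux}-\fl{P}_{vx}|\le |\fl{P}_{ux}-\widetilde{\fl{P}}^{(\ell)}_{ux}|+|\widetilde{\fl{P}}^{(\ell)}_{ux}-\widetilde{\fl{P}}^{(\ell)}_{vx}|+|\widetilde{\fl{P}}^{(\ell)}_{vx}-\fl{P}_{vx}|\le 4\Delta_{\ell+1}$. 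This is the only place the second edge condition enters, and it is what lets me move between the two rows of $\fl{P}$.

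Next I would fix a common good arm $z\in\ca{T}^{(\ell)}_u\cap\ca{T}^{(\ell)}_v$, which exists by the first edge condition. For the first claim, take $x\in\ca{T}^{(\ell)}_u$ and $y\in\ca{T}^{(\ell)}_v$. I would bound $|\fl{P}_{uy}-\fl{P}_{uz}|$ by routing through row $v$: $|\fl{P}_{uy}-\fl{P}_{vy}|\le 4\Delta_{\ell+1}$ (transfer), then $|\fl{P}_{vy}-\fl{P}_{vz}|\le 4\Delta_{\ell+1}$ (since $y,z\in\ca{T}^{(\ell)}_v$, by Lemma \ref{lem:interesting2} applied to $v$), then $|\fl{P}_{vz}-\fl{P}_{uz}|\le 4\Delta_{\ell+1}$ (transfer), giving $|\fl{P}_{uy}-\fl{P}_{uz}|\le 12\Delta_{\ell+1}$. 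Since $x,z\in\ca{T}^{(\ell)}_u$, Lemma \ref{lem:interesting2} gives $|\fl{P}_{ux}-\fl{P}_{uz}|\le 4\Delta_{\ell+1}$, and a final triangle inequality yields $|\fl{P}_{ux}-\fl{P}_{uy}|\le 16\Delta_{\ell+1}$. The second claim follows by the identical argument with the roles of $u$ and $v$ interchanged, since every hypothesis used (both edge conditions, $\ca{E}_2^{(\ell)}$, and Lemma \ref{lem:interesting2}) is symmetric in $u$ and $v$.

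There is no real analytic obstacle here; the entire content is the transfer inequality together with the choice of the pivot arm $z$. The only thing to watch is the bookkeeping of constants: each ``transfer'' costs $4\Delta_{\ell+1}$ rather than $2\Delta_{\ell+1}$, because switching rows requires two uses of the entrywise guarantee in addition to the edge condition. Hence the three-term detour through row $v$ sums to $12\Delta_{\ell+1}$, and adding the intra-$\ca{T}_u$ bound of $4\Delta_{\ell+1}$ produces exactly the stated constant $16$.
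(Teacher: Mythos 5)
Your proof is correct and is essentially identical to the paper's: both decompose $\fl{P}_{ux}-\fl{P}_{uy}$ into the same four-term telescope through the pivot arm $z\in\ca{T}^{(\ell)}_u\cap\ca{T}^{(\ell)}_v$, bounding the two intra-row terms by $4\Delta_{\ell+1}$ via Lemma \ref{lem:interesting2} and the two cross-user terms by $4\Delta_{\ell+1}$ via the edge condition plus the entrywise guarantee. The constant bookkeeping matches the paper exactly.
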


\begin{proof}
From the construction of $\ca{G}^{(\ell,i)}$, we know that
users $u,v\in \ca{M}^{(\ell,i)}$ have an edge if $\left|\widetilde{\fl{P}}^{(\ell)}_{ux}-\widetilde{\fl{P}}^{(\ell)}_{vx}\right|\le 2\Delta_{\ell+1}$ (implying that $\left|\fl{P}_{ux}-\fl{P}_{vx}\right|\le 4\Delta_{\ell+1}$) for all $x \in \ca{N}^{(\ell,i)}$ and $\ca{T}_u\cap \ca{T}_v \neq \Phi$. Suppose $z\in \ca{T}_u\cap \ca{T}_v$. Now, for any pair of arms 
$x\in \ca{T}^{(\ell)}_u,y\in \ca{T}^{(\ell)}_v$; in that case, we have
\begin{align*}
    \fl{P}_{ux}-\fl{P}_{uy} = \fl{P}_{ux} - \fl{P}_{uz}+\fl{P}_{uz}-\fl{P}_{vz}+\fl{P}_{vz}-\fl{P}_{vy}+\fl{P}_{vy}-\fl{P}_{uy} \le 16\Delta_{\ell+1}
\end{align*}
where we used Lemma \ref{lem:interesting2}.

% $\ca{T}_u\cap\ca{T}_v \neq \Phi$, $\max_{x\in \ca{T}^{(\ell)}_u,y\in \ca{T}^{(\ell)}_v}\left|\fl{\widetilde{P}}^{(\ell)}_{ux}-\fl{\widetilde{P}}^{(\ell)}_{uy}\right|\le 8\Delta_{\ell+1}$, $\max_{x\in \ca{T}^{(\ell)}_u,y\in \ca{T}^{(\ell)}_v}\left|\fl{\widetilde{P}}^{(\ell)}_{vx}-\fl{\widetilde{P}}^{(\ell)}_{vy}\right|\le 8\Delta_{\ell+1}$ and $\max_{x\in \ca{T}^{(\ell)}_u,y\in \ca{T}^{(\ell)}_v}\left|\fl{\widetilde{P}}^{(\ell)}_{ux}-\fl{\widetilde{P}}^{(\ell)}_{vy}\right|\le 6\Delta_{\ell+1}$. Immediately, by using equation \ref{eq:infty} and triangle inequality, we can obtain the statement of the lemma.
\end{proof}

\begin{lemmau}[Restatement of Lemma \ref{lem:interesting5}]
Fix any $i\in [a_{\ell}]$ such that $\left|\ca{N}^{(\ell,i)}\right|\ge \gamma\s{C}$. Consider two users $u,v\in \ca{M}^{(\ell,i)}$ having a path in the graph $\ca{G}^{(\ell,i)}$. Conditioned on the events $\ca{E}^{(\ell)},\ca{E}_2^{(\ell)}$, we must have 
\begin{align*}
\max_{x\in \ca{T}^{(\ell)}_u,y\in \ca{T}^{(\ell)}_v}\left|\fl{P}_{ux}-\fl{P}_{uy}\right|\le 32\s{C}\Delta_{\ell+1} \text{ and } \max_{x\in \ca{T}^{(\ell)}_u,y\in \ca{T}^{(\ell)}_v}\left|\fl{P}_{vx}-\fl{P}_{vy}\right|\le 32\s{C}\Delta_{\ell+1}.   
\end{align*}
\end{lemmau}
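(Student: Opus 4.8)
The plan is to reduce the path to length at most $\s{C}$ and then chain the per-edge estimates, controlling how error accumulates along the chain. First I would invoke Lemma \ref{lem:interesting_clique}: since all users in $\ca{M}^{(\ell,i)}$ belonging to the same cluster form a clique in $\ca{G}^{(\ell,i)}$, a \emph{shortest} path $u=w_0,w_1,\dots,w_k=v$ cannot contain two nodes from the same cluster (otherwise the clique edge between them would yield a strictly shorter path). As there are at most $\s{C}$ clusters, the shortest path has at most $\s{C}$ distinct nodes, hence $k\le \s{C}-1\le \s{C}$. This bound on the path length is the crucial structural input; without it the accumulated error would scale with the a priori unbounded number of users on the path rather than with $\s{C}$.

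Next I would establish entry-wise closeness of the reward vectors of consecutive users. For each edge $(w_j,w_{j+1})$ the construction of $\ca{G}^{(\ell,i)}$ guarantees $|\widetilde{\fl{P}}^{(\ell)}_{w_j x}-\widetilde{\fl{P}}^{(\ell)}_{w_{j+1} x}|\le 2\Delta_{\ell+1}$ for every $x\in \ca{N}^{(\ell,i)}$; combined with the event $\ca{E}_2^{(\ell)}$ (each estimate is within $\Delta_{\ell+1}$ of the truth), the triangle inequality gives $|\fl{P}_{w_j x}-\fl{P}_{w_{j+1} x}|\le 4\Delta_{\ell+1}$ for all active arms. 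Summing along the path yields the uniform bound $|\fl{P}_{ux}-\fl{P}_{vx}|\le 4k\Delta_{\ell+1}$ for every $x\in \ca{N}^{(\ell,i)}$, i.e.\ $u$ and $v$ have nearly identical reward on the active arms.

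I would then bound $\fl{P}_{ux}-\fl{P}_{uy}$ for $x\in \ca{T}_u^{(\ell)}$, $y\in \ca{T}_v^{(\ell)}$ by passing through $v$'s frame:
\begin{align*}
|\fl{P}_{ux}-\fl{P}_{uy}|\le |\fl{P}_{ux}-\fl{P}_{vx}|+|\fl{P}_{vx}-\fl{P}_{vy}|+|\fl{P}_{vy}-\fl{P}_{uy}|.
\end{align*}
The first and third terms are each at most $4k\Delta_{\ell+1}$ by the entry-wise bound. For the middle term, $y\in \ca{T}_v^{(\ell)}$ lies within $4\Delta_{\ell+1}$ of $v$'s optimal reward by Lemma \ref{lem:interesting2}, while $x\in \ca{T}_u^{(\ell)}$ lies within $4\Delta_{\ell+1}$ of $u$'s optimal reward; transferring $x$'s near-optimality for $u$ into $v$'s frame costs an additional $2\cdot 4k\Delta_{\ell+1}$ (once to move the reward of $x$, once to compare the two users' optima via $\fl{P}_{u\pi_u(1)}\ge \fl{P}_{u\pi_v(1)}\ge \fl{P}_{v\pi_v(1)}-4k\Delta_{\ell+1}$), so $x$ is within $(4+8k)\Delta_{\ell+1}$ of $v$'s optimum. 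Since both $\fl{P}_{vx}$ and $\fl{P}_{vy}$ then lie in an interval of length $(4+8k)\Delta_{\ell+1}$ below $v$'s maximum reward, the middle term is at most $(4+8k)\Delta_{\ell+1}$. Collecting the three contributions gives $|\fl{P}_{ux}-\fl{P}_{uy}|\le (4+16k)\Delta_{\ell+1}\le 32\s{C}\Delta_{\ell+1}$ using $k\le \s{C}$, and the bound for $v$ follows by the symmetric argument.

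The main obstacle is the careful bookkeeping of the frame-conversion errors in the middle term, and in particular ensuring these errors stay proportional to the path length rather than blowing up; this is exactly where bounding $k$ by $\s{C}$ via the clique structure of Lemma \ref{lem:interesting_clique} is essential. An alternative, equivalent route is a direct induction on the path length that promotes Lemma \ref{lem:interesting4} (the single-edge case) to paths, using the overlap $\ca{T}_{w_j}^{(\ell)}\cap \ca{T}_{w_{j+1}}^{(\ell)}\neq \Phi$ at each step; the constants work out to the same $32\s{C}\Delta_{\ell+1}$.
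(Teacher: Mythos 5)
Your argument is correct in substance and lands on the same constant, but it takes a genuinely different route from the paper. The paper proves the single-edge case (Lemma \ref{lem:interesting4}) by pivoting through a common good arm $z\in\ca{T}_u^{(\ell)}\cap\ca{T}_v^{(\ell)}$, and then extends to paths by a long telescoping sum that threads through the overlap arms $z_i\in\ca{T}_{a_i}^{(\ell)}\cap\ca{T}_{a_{i+1}}^{(\ell)}$ at every step, accruing $16\Delta_{\ell+1}$ per node. You instead first chain only the entry-wise closeness condition of the edges to get the uniform bound $\left|\fl{P}_{ux}-\fl{P}_{vx}\right|\le 4k\Delta_{\ell+1}$ on all active arms, and then compare $x$ and $y$ in $v$'s frame via the best arms $\pi_u(1),\pi_v(1)$ and Lemma \ref{lem:interesting2}; notably your main route never uses the overlap condition $\ca{T}_{w_j}^{(\ell)}\cap\ca{T}_{w_{j+1}}^{(\ell)}\neq\Phi$ at all (you only mention it in your alternative induction). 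Your decomposition is arguably cleaner bookkeeping; the paper's version is what generalizes directly to its single-edge statement. Both hinge on the same structural input, namely the clique property of Lemma \ref{lem:interesting_clique} bounding the length of a shortest path.

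One slip to fix: your claim that a shortest path ``cannot contain two nodes from the same cluster'' is not quite right. If $w_i$ and $w_{i+1}$ are \emph{adjacent} on the path and lie in the same cluster, the clique edge between them is the path edge itself and yields no shortcut. The correct statement is that no two \emph{non-adjacent} nodes share a cluster, so each cluster contributes at most two consecutive nodes and the shortest path has at most $2\s{C}$ nodes (this is exactly what the paper asserts), i.e.\ $k\le 2\s{C}-1$ edges rather than $k\le\s{C}$. Fortunately your final bound survives: $(4+16k)\Delta_{\ell+1}\le(4+16(2\s{C}-1))\Delta_{\ell+1}=(32\s{C}-12)\Delta_{\ell+1}\le 32\s{C}\Delta_{\ell+1}$, so the constant still closes. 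You should restate the path-length argument with the corrected bound.
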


\begin{proof}
Consider $3$ users $u,v,w\in \ca{M}^{(\ell,i)}$ such that $u,v$ have an edge and similarly, $v,w$ have an edge. From 
Lemma \ref{lem:interesting2}, we have that 
\begin{align*}
&\ca{T}_u^{(\ell)} \cap \ca{T}_v^{(\ell)} \neq \Phi \text{ and }\left|\fl{P}_{ux}-\fl{P}_{vx}\right|\le 4\Delta_{\ell+1} \text{ for all }x \in \ca{N}^{(\ell,i)} \\
&\ca{T}_v^{(\ell)} \cap \ca{T}_w^{(\ell)} \neq \Phi \text{ and }\left|\fl{P}_{vx}-\fl{P}_{wx}\right|\le 4\Delta_{\ell+1} \text{ for all }x \in \ca{N}^{(\ell,i)}
\end{align*}
Let $z\in \ca{T}_u^{(\ell)} \cap \ca{T}_v^{(\ell)}$ and 
$z'\in \ca{T}_v^{(\ell)} \cap \ca{T}_w^{(\ell)}$. Therefore, for any $x\in \ca{T}^{(\ell)}_u,y\in \ca{T}^{(\ell)}_z$, we must have 
\begin{align*}
    &\fl{P}_{ux}-\fl{P}_{uy}=\fl{P}_{ux} - \fl{P}_{uz}+\fl{P}_{uz}-\fl{P}_{vz}+\fl{P}_{vz}-\fl{P}_{vz'}+\fl{P}_{vz'}-\fl{P}_{wz'}+\fl{P}_{wz'}-\fl{P}_{wy}+\fl{P}_{wy}-\fl{P}_{vy}+\fl{P}_{vy} -\fl{P}_{uy} \\
    &\le 32\Delta_{\ell+1}
\end{align*}

Note that the shortest path between the two users $u,v\in \ca{M}^{(\ell,i)}$ must be a sequence of at most  $2\s{C}$ nodes. Now, applying the above analysis at most $2\s{C}$ times, we get statement of the Lemma. In other words, consider a path connecting two users $u,v$ as denoted by $u=a_1,a_2,\dots,v=a_{\s{L}}$. Let us denote $z_i=\ca{T}_{a_i}^{(\ell)}\cap \ca{T}_{a_{i+1}}^{(\ell)}$ (definition of edge). For  $x\in \ca{T}^{(\ell)}_u,y\in \ca{T}^{(\ell)}_v$, we will have 
\begin{align*}
    \fl{P}_{ux}-\fl{P}_{uy}&= \fl{P}_{ux}-\fl{P}_{uz_1}+\sum_{i=1}^{\s{L}-1}\Big(\fl{P}_{a_iz_i}-\fl{P}_{a_{i+1}z_i}+\fl{P}_{a_{i+1}z_i}-\fl{P}_{a_{i+1}z_{i+1}}\Big)+\fl{P}_{vz_{\s{L}-1}}-\fl{P}_{vz'}+\sum_{j=\s{L}}^{j=2}\Big(\fl{P}_{a_{j}z'}-\fl{P}_{a_{j-1}z'}\Big) \\
    &\le 16\s{L}\Delta_{\ell+1}.
\end{align*}
Since the path connecting the two users can be of length at most $2\s{C}-1$ (conditioned on the events $\ca{E}^{(\ell)},\ca{E}^{(\ell)}_2$), the proof of our lemma is complete.
\end{proof}

For the subsequent iteration indexed by $\ell+1$, we compute the updated groups of users $\ca{M}^{(\ell+1)}$ in the following way: each set corresponds to the connected components of the graphs $\{\ca{G}^{(\ell,i)}\}$ for those indices $i\in [a_{\ell}]$ where $\left|\ca{N}^{(\ell,i)}\right|\ge \gamma\s{C}$ plus the groups of users $\ca{M}^{(\ell,i)}$ where $\left|\ca{N}^{(\ell,i)}\right|\le \s{C}$. More precisely, let $\ca{T}\subseteq [a_{\ell}]$ be the subset of indices for which 
$\left|\ca{N}^{(\ell,i)}\right|\ge \gamma\s{C}$; $\{\ca{G}^{(\ell,i,j)}\}$ be the connected components of the graph $\ca{G}^{(\ell,i)}$ for $i\in \ca{T}$. In that case, $\ca{M}^{(\ell+1)}=\{\ca{G}^{(\ell,i,j)} \mid i \in \ca{T}, \ca{G}^{(\ell,i,j)} \text{ is a connected component of graph }\ca{G}^{(\ell,i)}\}+\{\ca{M}^{(\ell,i)}\mid i \in [a_{\ell}]\setminus\ca{T}\}$.
Similarly, we update the family of sets of active arms as follows: for users corresponding to each connected component $\ca{M}^{(\ell+1,s)}=\ca{G}^{(\ell,i,j)}$ of some graph, we define the active set of arms $\ca{N}^{(\ell+1,s)}$ to be $\cup_{u\in \ca{G}^{(\ell,i,j)}}\ca{T}_u^{(\ell)}$ and for each group $\{\ca{M}^{(\ell,i)}\}_{i \in [a_{\ell}]\setminus\ca{T}}$, we keep the corresponding set of active  arms $\{\ca{N}^{(\ell,i)}\}_{i \in [a_{\ell}]\setminus\ca{T}}$ same. With $a_{\ell+1}=\left|\ca{M}^{(\ell+1)}\right|$, we will also update $\ca{B}^{(\ell+1)}=\bigcup_{i \in [a_{\ell+1}] \mid \left| \ca{N}^{(\ell+1,i)}\right| \ge \gamma\s{C}} \ca{M}^{(\ell+1,i)}$ to be the set of users with more than $\s{C}$ active arms. 

\begin{lemma}\label{lem:interesting6}
Condition on the events $\ca{E}^{(\ell)}$ being true. In that case, with probability $1-\s{T}^{-4}$, with the groups of users $\ca{M}^{(\ell+1)}$ and their respective group of arms given by $\ca{N}^{(\ell+1)}$ being updated as described above and $\Delta_{\ell+1}=\epsilon_{\ell}/64\s{C}$, the event $\ca{E}^{(\ell+1)}$ is also going to be true with $\epsilon_{\ell+1} \le \epsilon_{\ell}/2$.
\end{lemma}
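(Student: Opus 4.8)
The plan is to treat Lemma~\ref{lem:interesting6} as the inductive step of the main argument: conditioned on $\ca{E}^{(\ell)}$, I want to show that the update rule producing $(\ca{M}^{(\ell+1)},\ca{N}^{(\ell+1)})$ re-establishes properties (1)--(4) with high probability. The only source of randomness in the update is whether matrix completion succeeds on each relevant sub-matrix, so the whole statement reduces to (a) controlling the failure probability of the estimation step, and (b) a purely deterministic verification of (1)--(4) on the event that estimation succeeds, i.e. on $\ca{E}^{(\ell)}\cap\ca{E}_2^{(\ell)}$.

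For (a), I would invoke Lemma~\ref{lem:interesting1} with $\delta=\s{T}^{-4}$. Since $\ca{E}^{(\ell)}$ holds, every set $\ca{M}^{(\ell,i)}$ is \emph{nice} (property (2)), so each relevant sub-matrix $\fl{P}_{\ca{M}^{(\ell,i)},\ca{N}^{(\ell,i)}}$ has rows indexed by a union of clusters. Lemmas~\ref{lem:condition_num} and~\ref{lem:incoherence} then supply the bounded condition number and incoherence needed to apply the matrix-completion guarantee of Lemma~\ref{lem:min_acc}, and Lemma~\ref{lem:interesting1} packages this into the statement that $\ca{E}_2^{(\ell)}$ (the bound \eqref{eq:infty}) holds with probability at least $1-\s{T}^{-4}$. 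All remaining steps are deterministic once we condition on $\ca{E}^{(\ell)}\cap\ca{E}_2^{(\ell)}$.

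For (b), properties (1), (2) and (4) follow quickly. By Lemma~\ref{lem:interesting_clique}, nodes of $\ca{G}^{(\ell,i)}$ lying in a common cluster form a clique, so every connected component is a union of whole clusters; hence each new set in $\ca{M}^{(\ell+1)}$ is nice, and the components (together with the unchanged small-arm groups) still induce a partition of $[\s{C}]$ at the cluster level, giving $a_{\ell+1}\le\s{C}$ and $\cup_i\ca{M}^{(\ell+1,i)}=[\s{N}]$. Property (4) holds because each new arm set satisfies $\ca{N}^{(\ell+1,s)}=\bigcup_{u}\ca{T}_u^{(\ell)}\subseteq\ca{N}^{(\ell,i)}$, so arms are only ever removed; consequently the family of groups with at least $\gamma\s{C}$ active arms can only shrink, i.e. $\ca{B}^{(\ell+1)}\subseteq\ca{B}^{(\ell)}$.

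The substantive part is property (3), which is where I expect the real work. For any user $u$ in a component, Lemma~\ref{lem:interesting2} gives $\pi_u(1)\in\ca{T}_u^{(\ell)}\subseteq\ca{N}^{(\ell+1,s)}$, so the best arm is retained and $\max_{j\in\ca{N}^{(\ell+1,s)}}\fl{P}_{uj}=\fl{P}_{u\pi_u(1)}$. To control the spread, fix $u$ and any arm $y\in\ca{N}^{(\ell+1,s)}$; then $y\in\ca{T}_v^{(\ell)}$ for some $v$ joined to $u$ by a path in $\ca{G}^{(\ell,i)}$, so Lemma~\ref{lem:interesting5} (applied with $x=\pi_u(1)$) yields $\fl{P}_{u\pi_u(1)}-\fl{P}_{uy}\le 32\s{C}\Delta_{\ell+1}$. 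Taking the worst case over $y$ shows \eqref{eq:gap} holds with $\epsilon_{\ell+1}=32\s{C}\Delta_{\ell+1}$, and plugging in $\Delta_{\ell+1}=\epsilon_{\ell}/64\s{C}$ gives $\epsilon_{\ell+1}=\epsilon_{\ell}/2$, which is exactly the claimed contraction. The main obstacle is precisely this telescoping along paths: Lemma~\ref{lem:interesting5} must be used so that the accumulated error stays linear in $\s{C}$ (the path length is bounded by the number of clusters on $\ca{E}^{(\ell)}$), ensuring the per-phase blow-up factor $32\s{C}$ is cancelled exactly by the choice of $\Delta_{\ell+1}$. I would separately dispatch the boundary case of groups that transition to fewer than $\gamma\s{C}$ active arms, where \eqref{eq:gap} is no longer required and retention of $\pi_u(1)$ suffices to hand off correctly to UCB.
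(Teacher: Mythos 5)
Your proposal is correct and follows essentially the same route as the paper's proof: establish $\ca{E}_2^{(\ell)}$ via Lemma~\ref{lem:interesting1} with $\delta=\s{T}^{-4}$, verify niceness of the connected components via Lemma~\ref{lem:interesting_clique}, retain the best arm via Lemma~\ref{lem:interesting2}, and bound the reward spread over the unioned arm sets via Lemma~\ref{lem:interesting5}, so that $\epsilon_{\ell+1}=32\s{C}\Delta_{\ell+1}=\epsilon_{\ell}/2$. Your write-up is in fact somewhat more explicit than the paper's (e.g., in spelling out why any $y\in\ca{N}^{(\ell+1,s)}$ lies in some $\ca{T}_v^{(\ell)}$ with $v$ path-connected to $u$, and in handling the groups that fall below $\gamma\s{C}$ active arms), but there is no substantive difference in the argument.
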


\begin{proof}
Conditioned on the event $\ca{E}^{(\ell)}$ being true, the event $\ca{E}^{(\ell)}_2$ holds true with probability with $1-\s{T}^{-4}$ (by substituting $\delta=\s{T}^{-4}$ in Lemma \ref{lem:interesting1}). Now, conditioned on the event $\ca{E}^{(\ell)},\ca{E}_2^{(\ell)}$ being true, the properties $(1-4)$ hold true at the beginning of the $(\ell+1)^{\s{th}}$ phase as well with our construction of $\ca{M}^{(\ell+1)},\ca{N}^{(\ell+1)}$. 
For the $(\ell+1)^{\s{th}}$ phase
from Lemma \ref{lem:interesting5}, we know that for any pair of users $u,v$ in the same cluster $\ca{M}^{(\ell+1,i)}$ in the updated set of clusters $\ca{M}^{(\ell+1)}$, we must have $\max_{x\in \ca{T}^{(\ell)}_u,y\in \ca{T}^{(\ell)}_v}\left|\fl{P}_{ux}-\fl{P}_{vy}\right|\le 32\s{C}\Delta_{\ell+1}$. From Lemma \ref{lem:interesting2} we know that $\s{argmax}_j \fl{P}_{uj}\in \ca{T}_u^{(\ell)} \subseteq \ca{N}^{(\ell,i)}$ where $\ca{N}^{(\ell,i)}$ is the active set of arms for users in $\ca{M}^{(\ell,i)}$ in the updated set $\ca{M}$. For $\ell > 1$, we will set $\Delta_{\ell+1}=\epsilon_{\ell}/64\s{C}$ which would give us that $\epsilon_{\ell+1}= \epsilon_{\ell}/2$. Finally, also note that we maintain the set of users $\ca{B}^{(\ell+1)}$ as stipulated in Property 4 for the beginning of the $(\ell+1)^{\s{th}}$ phase.
\end{proof}

\begin{proof}[Proof of Theorem \ref{thm:main_LBM}]
 We condition on the events $\ca{E}^{(\ell)},\ca{E}_2^{(\ell)}$ being true for all $\ell$. The probability that there exists any $\ell$ such that the events $\ca{E}^{(\ell)},\ca{E}_2^{(\ell)}$ is false is $O(\s{T}^{-4})$ (by setting $\delta=\s{T}^{-4}$ in the proof of Lemma \ref{lem:interesting1}); hence the probability that $\ca{E}^{(\ell)},\ca{E}_2^{(\ell)}$ is true for all $\ell$ is at least $1-O(\s{T}^{-3})$ (the total number of iterations can be at most $\s{T}$).  
Let us also denote the set of rounds in phase $\ell$ by $\ca{T}_{\ell}\subseteq [\s{T}]$ (therefore $\left|\ca{T}_{\ell}\right|=m_{\ell}$). Let us compute the  regret $\sum_{t\in \ca{T}^{(\ell)}}\fl{P}_{u(t)\pi_{u(t)}(1)}- \sum_{t\in \ca{T}^{(\ell)}}\fl{P}_{u(t),\rho(t)}$ restricted to the rounds in $\ca{T}^{(\ell)}$ conditioned on the events $\ca{E}^{(\ell)},\ca{E}_2^{(\ell)}$ being true for all $\ell$. We can bound the regret quantity in the $\ell^{\s{th}}$ phase from above by the sum of two quantities: 1) the first quantity is the regret incurred by users in $\ca{B}^{(\ell)}$ with active arms more than $\s{C}$ 2) the second quantity is the regret incurred by the UCB algorithm played separately for each user  $u\in\s{N}\setminus [\ca{B}^{(\ell)}]$ with active arms less than $\gamma\s{C}$. Let us denote the regret incurred by such an user $u$ in the $\ell^{\s{th}}$ phase by $\s{Reg}_{\s{UCB}}(u,\ca{T}_u^{(\ell)})$ where $\ca{T}_u^{(\ell)}$ is the number of rounds in the $\ell^{\s{th}}$ phase where user $u$ pulled an arm.

The first quantity can be bounded from above by $m_{\ell}\epsilon_{\ell}$ while the second quantity can be bounded by using standard results in the literature. Substituting from Lemma \ref{lem:interesting1}, we have that \begin{align*}
  &\bb{E}\Big(\sum_{t\in \ca{T}^{(\ell)}}\fl{P}_{u(t)\pi_{u(t)}(1)}- \sum_{t\in \ca{T}^{(\ell)}}\fl{P}_{u(t),\rho(t)}\mid \bigcap_{\ell}\ca{E}^{(\ell)}\bigcap_{\ell}\ca{E}_2^{(\ell)}\Big) \\
  &= O\Big(\frac{\epsilon_{\ell}\sigma^2 \s{C}^2 (\s{C} \bigvee \mu\alpha^{-1})^3 \log \s{M}}{\Delta_{\ell+1}^2}\Big(\s{N}\bigvee\s{MC}\Big)\log^2 (\s{MNC}\delta^{-1})\Big)\Big)+\sum_{u\in [\s{N}]\setminus [\ca{B}^{(\ell)}]} \s{Reg}_{\s{UCB}}(u,\ca{T}_u^{(\ell)}) 
\end{align*} 
For simplicity, let us denote $\s{V}=\sigma^2 \s{C}^2 (\s{C} \bigvee \mu\alpha^{-1})^3 \Big(\s{N}\bigvee\s{MC}\Big)\log^3 (\s{ABCT})$. 
We can now bound the regret as follows (after removing the conditioning on the events $\bigcap_{\ell}\ca{E}^{(\ell)}\bigcap_{\ell}\ca{E}_2^{(\ell)}$):
\begin{align*}
    &\bb{E}\Big(\sum_{t\in [\s{T}]}\fl{P}_{u(t)\pi_{u(t)}(1)}- \sum_{t\in [\s{T}]}\fl{P}_{u(t),\rho(t)}\Big)
    =\bb{E}\Big(\sum_{\ell}\Big(\sum_{t\in \ca{T}^{(\ell)}}\fl{P}_{u(t)\pi_{u(t)}(1)}- \sum_{t\in \ca{T}^{(\ell)}}\fl{P}_{u(t),\rho(t)}\Big)\Big) \\
    &= \sum_{\ell}  O\Big(\epsilon_{\ell}m_{\ell}\mid \bigcap_{\ell}\ca{E}^{(\ell)}\bigcap_{\ell}\ca{E}_2^{(\ell)}\Big)+O(\s{T}^{-3}\|\fl{P}\|_{\infty})+\sum_{\ell}\sum_{u\in\s{N}\setminus [\ca{B}^{(\ell)}]} \s{Reg}_{\s{UCB}}(u,\ca{T}_u^{(\ell)}) 
\end{align*}
The last term of the regret can be bounded from above $\sum_{u \in [\s{N}]}\s{Reg}_{\s{UCB}}(u,\s{T}_b)$ where $\ca{T}_b$ is the number of rounds user $u$ pulled an arm according to the $\s{UCB}$ algorithm; hence, $\sum_{u \in [\s{N}]}\s{Reg}_{\s{UCB}}(u,\s{T}_b) \le \sum_{u \in [\s{N}]} \sqrt{\s{T}_b \gamma\s{C}\log \s{T}}\cdot\sigma \le \sqrt{\gamma\s{NCT}\log \s{T}}\cdot\sigma$ by using the Cauchy Schwartz inequality.
 Moving on, we can decompose the first term regret as follows (we use $\Delta_{\ell+1}=\epsilon_{\ell}/64\s{C}$): 
\begin{align*}
    O\Big(\sum_{\ell: \epsilon_{\ell}\le\Phi} \epsilon_{\ell}m_{\ell}\mid \ca{E}^{(\ell)},\ca{E}^{(\ell)}_2 \text{ is true for all } \ell\Big)
     +O\Big(\sum_{\ell: \epsilon_{\ell}>\Phi} \epsilon_{\ell}\s{V}\Delta_{\ell+1}^{-2}\mid \ca{E}^{(\ell)},\ca{E}^{(\ell)}_2 \text{ is true for all } \ell\Big) =
     \s{T}\Phi+O\Big(\sum_{\ell: \epsilon_{\ell}>\Phi} \s{C}^2\s{V}\epsilon_{\ell}^{-1}\Big) \\
\end{align*}

We choose $\epsilon_{\ell}=C'2^{-\ell}\min\Big(\|\fl{P}\|_{\infty},\frac{\sigma\sqrt{\mu}}{\log \s{N}}\Big)$ (so that the condition on $\sigma>0$ in Lemma \ref{lem:min_acc} is automatically satisfied for all $\ell$) for some constant $C'>0$, the maximum number of phases $\ell$ for which $\epsilon_{\ell}>\Phi$ can be bounded from above by $\s{J}=O\Big(\log \Big(\frac{1}{\Phi}\min\Big(\|\fl{P}\|_{\infty},\frac{\sigma\sqrt{\mu}}{\log \s{N}}\Big)\Big)\Big)$.
Moreover, the constraints on $p,\sigma$ present in Lemma \ref{lem:min_acc} can be satisfied with such a choice of $\epsilon_{\ell}$ for all $\ell$.
Hence, with $\s{V}=\sigma^2 \s{C}^2 (\s{C} \bigvee \mu\alpha^{-1})^3 \Big(\s{N}\bigvee\s{MC}\Big)\log^3 (\s{ABCT})$, we have
\begin{align*}
    \s{Reg}(\s{T}) &\le O(\s{T}^{-3}\|\fl{P}\|_{\infty})+O(\s{T}\Phi)+O\Big(\s{JV}\s{C}^2\Phi^{-1}\Big)+O(\sqrt{\s{NCT}\log \s{T}}\cdot\sigma) \\
     &= O(\s{T}^{-3}\|\fl{P}\|_{\infty})+O(\s{CJ}\sqrt{\s{TV}})+O(\sqrt{\s{NCT}\log \s{T}}\cdot\sigma)
\end{align*}
where we substituted $\Phi=\sqrt{\s{V}\s{C}^2\s{T}^{-1}}$ and hence $\s{J}=O\Big(\log \Big(\frac{1}{\sqrt{\s{V\s{C}^2T}^{-1}}}\min\Big(\|\fl{P}\|_{\infty},\frac{\sigma\sqrt{\mu}}{\log \s{N}}\Big)\Big)\Big)$ in the final step. 
\end{proof}

\begin{proof}[Proof of Theorem \ref{thm:main_LBM2}]

For every cluster $c\in[\s{C}]$, let us define the subset of arms  $\ca{G}_{c,\ell} \equiv \{j \in [\s{M}] \mid \epsilon_{\ell} \le \left|\fl{P}_{uj} - \fl{P}_{u\pi_u(1)} \right| \le \epsilon_{\ell-1} \; \forall \; u \in \ca{C}^{(c)}\}$ for all $\ell>1$ and $\ca{G}_{c,1} \equiv \{j \in [\s{M}] \mid \epsilon_{1} \le \left|\fl{P}_{uj} - \fl{P}_{u\pi_u(1)} \right|  \; \forall \; u \in \ca{C}^{(c)}\}$; $\ca{G}_{c,\ell}$ ($\ca{G}_{c,1}$) corresponds to the subset of arms having a sub-optimality gap that is between $\epsilon_{\ell-1}$ and $\epsilon_{\ell}$ (greater than $\epsilon_1$) for all users belonging to the cluster $\ca{C}^{(c)}$. There is no ambiguity in the definition since all users in the same cluster $\ca{C}^{(c)}$ have the same mean rewards over all arms. 

 Let us also define $\ca{H}_c \equiv \bigcup \limits_{\ell> 1} \s{argmin}_{j\in \ca{G}_{c,\ell}} \left|\fl{P}_{uj}-\fl{P}_{u\pi_u(1)}\right|$ with the understanding that whenever  $\ca{G}_{c,\ell} = \Phi$, there is no $\s{argmin}$ to be counted in the set.
For brevity of notation, let $\Psi_{c,a} \triangleq \fl{P}_{u\pi_u(1)}-\fl{P}_{ua}$ be the sub-optimality gap in the reward of arm $a$ for any user $u$ in cluster $c$,  and $\s{T}^{(1)}_{c,a}$ be the number of times arm $a$ has been pulled by the users in cluster $c$ during the phases indexed by $\ell$ when the users in cluster $c$ belonged to $\ca{B}^{(\ell)}$; $\s{T}^{(2)}_{u,a}$ is the number of times arm $a$ has been pulled by the user $u$ according to the UCB algorithm i.e. when users in cluster $c$ belonged to $[\s{N}]\setminus \ca{B}^{(\ell)}$.

Since the length of the phases increases exponentially with $\ell$, hence the total number of phases is $\widetilde{O}(1)$; hence the size of $\ca{H}_c$ is at most $\widetilde{O}(1)$. Again, for all users $u\in [\s{N}]$ who participated in the UCB algorithm, let us denote $\widehat{\ca{H}}_u$ to be the set of arms ($\left|\widehat{\ca{H}}_u\right|\le \s{C}$ and $\widehat{\ca{H}}_u\supseteq \pi_u(1)$; recall from Lemma \ref{lem:interesting2} that the best arm $\pi_u(1)$ always belongs to the active set of arms) that were used in the UCB algorithm; evidently, the regret incurred due to set $\widehat{\ca{H}}_u$ will be dominated by the set $\{\pi_u(i)\}_{i=1}^{\s{C}}$ which corresponds to the best $\s{C}$ arms for user $u$. We can decompose the regret by using the standard regret decomposition i.e. 
\begin{align*}
    \s{Reg}(\s{T}) & = \bb{E} \left[ \sum_{c\in [\s{C}],a \in [\s{M}]}\Psi_{c,a}  \s{T}^{(1)}_{c,a} \right] +\sum_{u\in [\s{N}],a \in \{\pi_u(s)\}_{s=1}^{\gamma|\s{C}|}}\Psi_{u,a} \bb{E} \s{T}^{(2)}_{u,a} \\
      & \leq \bb{E} \left[ \sum_{c\in [\s{C}],a \in [\s{M}]}\Psi_{c,a}  \s{T}^{(1)}_{c,a} \Big \lvert \bigcap \limits_{\ell} {\cal E}^{(\ell)} \right] + \Pr\Big(\bigcup_{\ell} ({\cal E}^{(\ell)})^c\Big)\bb{E} \left[ \sum_{c\in [\s{C}],a \in [\s{M}]}\Psi_{c,a}  \s{T}^{(1)}_{c,a} \Big \lvert \bigcup \limits_{\ell} ({\cal E}^{(\ell)})^c \right] \\
      &+ \sum_{u\in [\s{N}],a \in \{\pi_u(s)\}_{s=1}^{\gamma|\s{C}|}}\Psi_{u,a} \bb{E} \s{T}^{(2)}_{u,a} 
    %\le \sum_{c\in \s{C},a \in \ca{H}_c} 2 \Psi_{c,a} \bb{E} \s{T}^{(1)}_{c,a} +\sum_{u\in [\s{N}],a \in \{\pi_u(s)\}_{s=1}^{\gamma|\s{C}|}}\Psi_{u,a} \bb{E} \s{T}^{(2)}_{u,a}.
\end{align*}

% {\color{red} KS: There is some issue in the last inequality. It is not clear how one obtains that suddenly. Do we want to say number of times cluster $c$ is pulled in the right hand side i.e. $\s{T}^{(1)}_{c}$??}
We now show the following lemma:

\begin{lemma}
Fix any $\ell> 1$ and cluster $c\in [\s{C}]$.We must have $\bb{E}[\sum_{a\in \ca{G}_{c,\ell}}\s{T}^{(1)}_{c,a} \mid \cup_{\ell} (\ca{E}^{(\ell)})^c ] = \s{C}^{-1}O\Big(\s{T}^{-3}\Big)$ and $\bb{E}[\sum_{a\in \ca{G}_{c,\ell}}\s{T}^{(1)}_{c,a} \mid \cap_{\ell} \ca{E}^{(\ell)}] = \s{C}^{-1}O\Big(\frac{\s{C}^2\s{V}}{\epsilon_{\ell-1}^2}\Big)$ provided that $\ca{G}_{c,\ell}\neq \Phi$.
\end{lemma}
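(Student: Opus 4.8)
The plan is to prove the two bounds separately, conditioning respectively on the bad event $\cup_\ell(\ca{E}^{(\ell)})^c$ and the good event $\cap_\ell\ca{E}^{(\ell)}$, and to exploit two facts: that an arm's sub-optimality gap controls the last phase in which it can remain active, and that cluster $c$ receives only an $O(1/\s{C})$ share of each phase's rounds. For the bad-event term I would use the crude deterministic inequality $\sum_{a\in\ca{G}_{c,\ell}}\s{T}^{(1)}_{c,a}\le N_c$, where $N_c:=\sum_{t\in[\s{T}]}\fl{1}[u(t)\in\ca{C}^{(c)}]$ counts rounds in which a cluster-$c$ user is sampled, since each $\s{T}^{(1)}_{c,a}$ only tallies pulls by such users. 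Here $N_c\sim\text{Bin}(\s{T},|\ca{C}^{(c)}|/\s{N})$ has mean at most $\tau\s{T}/\s{C}$ and depends only on the arrival sequence. Choosing the per-phase failure probability $\delta=\s{T}^{-4}$ in Lemma~\ref{lem:interesting1} and taking a union bound over the $\widetilde{O}(1)$ phases gives $\Pr[\cup_\ell(\ca{E}^{(\ell)})^c]=\widetilde{O}(\s{T}^{-4})$; splitting on whether $N_c$ exceeds $2\tau\s{T}/\s{C}$ and using a Chernoff tail bound on $N_c$ together with this probability yields $\bb{E}[N_c\,\fl{1}[\cup_\ell(\ca{E}^{(\ell)})^c]]=\s{C}^{-1}\widetilde{O}(\s{T}^{-3})$, as claimed.

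For the good-event term, the key structural observation is that, conditioned on $\cap_\ell\ca{E}^{(\ell)}$, every arm in $\ca{G}_{c,\ell}$ is pulled (in the phased part) only during the first $\ell$ phases. Let $i$ index the set $\ca{M}^{(\ell',\cdot)}$ containing cluster $c$ in phase $\ell'$. Property~\eqref{eq:gap} guarantees $\max_{j\in\ca{N}^{(\ell',i)}}\fl{P}_{uj}-\min_{j\in\ca{N}^{(\ell',i)}}\fl{P}_{uj}\le\epsilon_{\ell'}$ for $u\in\ca{C}^{(c)}$, and Lemma~\ref{lem:interesting2} shows the best arm $\pi_u(1)$ always stays active, so $\max_{j\in\ca{N}^{(\ell',i)}}\fl{P}_{uj}=\fl{P}_{u\pi_u(1)}$ and hence every active arm has sub-optimality gap at most $\epsilon_{\ell'}$. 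Since $a\in\ca{G}_{c,\ell}$ has gap at least $\epsilon_\ell>\epsilon_{\ell'}$ for every $\ell'>\ell$, it cannot belong to $\ca{N}^{(\ell',i)}$ once $\ell'>\ell$, so it is pulled only in phases $\ell'\le\ell$.

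It then remains to bound the cluster-$c$ pulls in each such phase. Conditioned on $\ca{E}^{(\ell')}$, Lemma~\ref{lem:interesting1} shows the phase lasts $m_{\ell'}=O(\s{C}^2\s{V}\epsilon_{\ell'}^{-2})$ rounds (using $\Delta_{\ell'+1}=\epsilon_{\ell'}/64\s{C}$ and $\s{V}=\widetilde{O}(\sigma^2\mu^3(\s{N}+\s{M}))$), and in these rounds cluster-$c$ users are sampled only an $O(\tau/\s{C})=O(1/\s{C})$ fraction of the time, so the total number of cluster-$c$ pulls in phase $\ell'$ is $O(m_{\ell'}/\s{C})$. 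Summing over the arms of $\ca{G}_{c,\ell}$ (a subset of the active arms) and over $\ell'\le\ell$, and noting that $\{\epsilon_{\ell'}\}$ halves each phase so $\{m_{\ell'}\}$ grows geometrically and the sum is dominated by its largest term $m_\ell=O(\s{C}^2\s{V}\epsilon_{\ell-1}^{-2})$ (using $\epsilon_\ell=\epsilon_{\ell-1}/2$), gives $\bb{E}[\sum_{a\in\ca{G}_{c,\ell}}\s{T}^{(1)}_{c,a}\mid\cap_\ell\ca{E}^{(\ell)}]=\s{C}^{-1}O(\s{C}^2\s{V}\epsilon_{\ell-1}^{-2})$.

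The main obstacle is the per-phase counting step. The quantity $m_{\ell'}$ is the \emph{waiting time} until the Bernoulli-mask observations for every sub-matrix have been collected, a coupon-collector quantity, so I must argue that cluster-$c$'s actual pull count is $O(m_{\ell'}/\s{C})$ even though cluster-$c$ users may keep pulling active arms after their own sub-matrix estimate is complete while other sub-matrices are still collecting data. Making the fraction $O(1/\s{C})$ rigorous requires the arrival concentration afforded by $\tau=O(1)$ and care that the phase length and the user-arrival process are the only sources of randomness entering this count; the boundary phase $\ell'=\ell$ affects only constants because $\epsilon_\ell=\epsilon_{\ell-1}/2$.
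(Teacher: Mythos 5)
Your proposal is correct and follows essentially the same route as the paper: the bad-event term is controlled by its $O(\s{T}^{-4})$ probability, and on the good event an arm in $\ca{G}_{c,\ell}$ survives only through phase $\ell$, so its cluster-$c$ pulls are bounded by $\s{C}^{-1}\sum_{j\le \ell} m_j$, a geometric sum dominated by $\s{C}^{-1}O(\s{C}^2\s{V}/\epsilon_{\ell-1}^2)$. You are in fact somewhat more careful than the paper, which simply asserts the bad-event bound and the $O(m_{\ell'}/\s{C})$ per-phase share that you explicitly justify via the truncation of $N_c$ and the arrival concentration.
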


\begin{proof}
Fix cluster $c\in [\s{C}]$. From definition, we know that all arms $a$ in $\ca{G}_{c,\ell}$ satisfy the following for all users $u\in \ca{C}^{(c)}$: $\epsilon_{\ell} \le  \Psi_{c,a} =|\fl{P}_{ua}-\fl{P}_{u\pi_{u}(1)}| \le \epsilon_{\ell-1}$ for $\ell> 1$. In that case, with probability at least $1-\s{T}^{-3}$, the event $\ca{E}^{(j)}$ is true for all $j$ implying that the algorithm is $\epsilon_{\ell}$-good (see Lemma \ref{lem:interesting6}).
Hence, we must have $\Pr(\cup_{\ell} (\ca{E}^{(\ell)})^c)\bb{E}[\sum_{a\in \ca{G}_{c,\ell}}\s{T}^{(1)}_{c,a} \mid \cup_{\ell} (\ca{E}^{(\ell)})^c ] = \s{C}^{-1}O\Big(\s{T}^{-2}\Big)$.
Therefore, conditioning on all $\ca{E}^{(j)}$ being true, by definition using property ($3$), at the beginning of the $\ell^{\s{th}}$ phase, if $u\in \ca{C}^{(c)}\cap\ca{M}^{(\ell,i)}$, it must be the case that $a\not\in \ca{N}^{(\ell,i)}$ for all $a\in \ca{G}_{c,\ell}$. Hence, we must have (by plugging in the sample complexity in Lemma \ref{lem:interesting1} with
with $\s{V}=\sigma^2 \s{C}^2 (\s{C} \bigvee \mu\alpha^{-1})^3 \Big(\s{N}\bigvee\s{MC}\Big)\log^3 (\s{ABCT})$ and $\Delta_{\ell+1}=\epsilon_{\ell}/40\s{C}$)

\begin{align*}
    \bb{E}[\sum_{a\in \ca{G}_{c,\ell}}\s{T}^{(1)}_{c,a}\mid \cap \ca{E}^{(\ell)}] &\le \sum_{j=1}^{\ell} \frac{m_j}{\s{C}} = 
    \frac{1}{\s{C}}O\Big(\sum_{j=1}^{\ell-1} \frac{\s{V}}{\Delta^2_{j+1}}\Big) =  \s{C}^{-1}O\Big(\sum_{j=1}^{\ell-1} \frac{\s{C}^2\s{V}}{\epsilon^2_{j}}\Big)= \s{C}^{-1}O\Big( \frac{\s{C}^2\s{V}}{\epsilon^2_{\ell-1}}\Big)
\end{align*}

% \begin{align*}
%     \bb{E}[\sum_{a\in \ca{G}_{c,\ell}}\s{T}^{(1)}_{c,a}\mid \cap \ca{E}^{(\ell)}] &\le \frac{\s{T}}{\s{C}}\Pr(\ca{E}^{(\ell)} \text{ false for some  } \ell)+\sum_{j=1}^{\ell-1} \frac{m_{\ell}}{\s{C}} = 
%     \frac{1}{\s{C}}O\Big(\s{T}^{-3}+\sum_{j=1}^{\ell-1} \frac{\s{V}}{\Delta^2_{\ell+1}}\Big) =  \s{C}^{-1}O\Big(\s{T}^{-3}+\sum_{j=1}^{\ell-1} \frac{\s{C}^2\s{V}}{\epsilon^2_{\ell}}\Big)
% \end{align*}
% {\color{red} Is the above not an expression just for $\mathbb{E}[\s{T}^{(1)}_c]$}

% where $\Delta_{\ell+1}=\epsilon_{\ell}/40\s{C}$. 
where we substituted the fact that  $\epsilon_{\ell}=C'2^{-\ell}\min\Big(\|\fl{P}\|_{\infty},\frac{\sigma\sqrt{\mu}}{\log \s{N}}\Big)$ for some constant $\s{C}'>0$. 
% we must have that 
% \begin{align*}
%     \sum_{a\in \ca{G}_{c,\ell}}\bb{E}[\s{T}^{(1)}_{u,a}] = \s{C}^{-1}O\Big(\s{T}^{-3}+\s{C}^2\s{V}\Big(\min\Big(\|\fl{P}\|_{\infty},\frac{\sigma\sqrt{\mu}}{\log \s{N}}\Big)\Big)^{-2} 2^{2\ell-2} \Big) =  \s{C}^{-1}O\Big(\s{T}^{-3}+\frac{\s{C}^2\s{V}}{\epsilon^2_{\ell-1}} \Big) = \s{C}^{-1}O\Big(\s{T}^{-3}+\frac{\s{C}^2\s{V}}{\Psi_{a',c}^2}\Big)
% \end{align*}
\end{proof}

Note that we will have the following set of equations; we use from definition that for every arm $a\in \ca{G}_{c,\ell}$, there exists a representative arm $\hat{a}$ of $\ca{G}_{c,\ell}$ in $\ca{H}_c$ such that $\Psi_{c,\hat{a}} \le \min(\epsilon_{\ell-1}, 2\Psi_{c,a})$ and $\epsilon_{\ell-1} \le 2\Psi_{c,\hat{a}}$.
\begin{align*}
  &\Pr(\cup_{\ell} (\ca{E}^{(\ell)})^c)\Big(\sum_{c\in [\s{C}]}\sum_{\ell}\bb{E}[\sum_{a\in \ca{G}_{c,\ell}}\Psi_{c,a}\s{T}^{(1)}_{c,a} \mid \cup_{\ell} (\ca{E}^{(\ell)})^c ]\Big) \\ 
  &\le \Pr(\cup_{\ell} (\ca{E}^{(\ell)})^c)\Big(\sum_{c\in [\s{C}]}\sum_{\ell} (\max_{a\in \ca{G}_{c,\ell}} \Psi_{c,a}) \sum_{a \in \ca{G}_{c,\ell}}  \bb{E}[\s{T}^{(1)}_{c,a} \mid \cup_{\ell} (\ca{E}^{(\ell)})^c ]\Big) \le \sum_{c \in [\s{C}],a \in \ca{H}_c} \Psi_{c,a} \s{C}^{-1}O\Big(\s{T}^{-2}\Big).  \end{align*}
 and similarly, we will also have 
  \begin{align*}
    &\sum_{c\in [\s{C}]}\sum_{\ell}\bb{E}[\sum_{a\in \ca{G}_{c,\ell}}\Psi_{c,a}\s{T}^{(1)}_{c,a} \mid \cap_{\ell} \ca{E}^{(\ell)}]\le \sum_{c\in [\s{C}]}\sum_{\ell} (\max_{a\in \ca{G}_{c,\ell}} \Psi_{c,a}) \sum_{a \in \ca{G}_{c,\ell}}  \bb{E}[\s{T}^{(1)}_{c,a} \mid \cap_{\ell} \ca{E}^{(\ell)}] \\
    &=  \sum_{c\in [\s{C}]} (\max_{a\in \ca{G}_{c,1}} \Psi_{c,a}) \sum_{a \in \ca{G}_{c,1}}  \bb{E}[\s{T}^{(1)}_{c,a} \mid \cap_{\ell} \ca{E}^{(\ell)}]+\sum_{c\in [\s{C}]}\sum_{\ell>1}\bb{E}[\sum_{a\in \ca{G}_{c,\ell}}\Psi_{c,a}\s{T}^{(1)}_{c,a} \mid \cap_{\ell} \ca{E}^{(\ell)}] \\
    &= \lr{\fl{P}}_{\infty}\cdot\frac{\s{V}}{\epsilon_1^2}\fl{1}[\ca{G}_{c,1} \neq \Phi]+ 
    \sum_{c \in [\s{C}]}\sum_{\ell>1:\ca{G}_{c,\ell}\neq \Phi} \s{C}^{-1}O\Big(\frac{\s{C}^2\s{V}}{\epsilon_{\ell}}\Big) \le \sum_{c \in [\s{C}],a \in \ca{H}_c} \s{C}^{-1}O\Big(\frac{\s{C}^2\s{V}}{\Psi_{c,a}}\Big). 
\end{align*}

Similarly, from well known analysis of $\s{UCB}$ algorithm, we know that (recall that $\{\pi_u(s)\}_{s=1}^{|\gamma\s{C}|}$ are the top $
\gamma\s{C}$ arms for the user $u$). 
\begin{align*}
    \sum_{u\in [\s{N}], a \in \{\pi_u(s)\}_{s=1}^{\gamma|\s{C}|}}\Psi_{u,a} \bb{E} \s{T}^{(2)}_{u,a} \le \s{N}^{-1}O\Big(\sum_{u\in [\s{N}], a \in \{\pi_u(s)\}_{s=1}^{|\gamma\s{C}|}}
    \Big(\frac{\sigma\log \s{T}}{\Psi_{u,a}}+3\Psi_{u,a}\Big)\Big) 
\end{align*}

% https://www.stat.berkeley.edu/~bartlett/courses/2014fall-cs294stat260/lectures/bandit-lower-bound-notes.pdf 
Therefore, we can bound the regret from above as $$\s{Reg}(\s{T}) =\lr{\fl{P}}_{\infty}\cdot\frac{\s{V}}{\epsilon_1^2}\fl{1}[\ca{G}_{c,1} \neq \Phi]+ \s{C}^{-1}O\Big(\sum_{c\in [\s{C}], a\in \ca{H}_c} \Psi_{c,a}\s{T}^{-2}+\frac{\s{C}^2\s{V}}{\Psi_{c,a}}\Big)+\s{N}^{-1}O\Big(\sum_{u\in [\s{N}], a \in \{\pi_u(s)\}_{s=1}^{\gamma|\s{C}|}}
    \Big(\frac{\sigma\log \s{T}}{\Psi_{u,a}}+3\Psi_{u,a}\Big)\Big).$$ Loosely speaking, this bound translates as $\s{Reg}(\s{T}) = \widetilde{O}((\s{M}+\s{N})/\Psi)$ where $\Psi$ is the minimum sub-optimality gap; $\widetilde{O}(\cdot)$ hides factors in $\s{C}$ and other logarithmic terms.
\end{proof}

\section{Proofs of Theorems~\ref{thm:dist_free_lower_bounds},~\ref{thm:dist_dep_lower_bounds}}\label{app:lower_bounds}

% TODO: change theorem statement to include the cluster sizes.

\subsection{Proofs of Theorem~\ref{thm:dist_free_lower_bounds}, Corollary~\ref{cor:dist_free_lower_bounds_uniform}}
We first derive lower bounds for the following two settings: (a) known cluster assignment, and  (b) known cluster rewards. The lower bound for MAB-LC follows by taking the maximum of these two bounds.

% If the cluster assignment (\emph{i.e.,} mapping between users and clusters) is known, we could have treated  users within a cluster as a single super-user and solved a separate multi-armed bandit for each super-user. This leads to a regret of $\widetilde{O}(\sqrt{\s{MCT}})$, which is minimax optimal. On the other hand, suppose the cluster assignment is unknown but the reward distributions of arms in a cluster are known. Then we could have played a separate multi-armed bandit problem for each user with the best arm from each cluster as a candidate arm. This leads to a regret of $\widetilde{O}(\sqrt{\s{NCT}})$, which is minimax optimal.

\paragraph{Known Cluster Assignments.} Suppose we know the mapping between users and clusters. In this setting, it is easy to see that the optimal strategy is to treat users within a cluster as a single super-user, and reduce the problem to that of solving $\s{C}$ different multi-armed bandit problems (each corresponding to the $\s{C}$ clusters). %and run a independent multi-armed bandit algorithm on each super-user. So deriving regret lower bounds in this setting boils down to deriving lower bounds for the $\s{C}$ independent multi-armed bandit instances (each corresponding to one super-user). In the rest of the proof, we rely on this observation.
%It is well known that the expected regret of stochastic MAB with $K$ arms and $\s{T}$ online rounds is lower bounded by $0.05\sqrt{K\s{T}}$~\citep{ lattimore2020bandit, Cesa-Bianchi:2006, Bubeck12}. 
One could rely on the regret lower bound of stochastic MAB~\citep{ lattimore2020bandit, Cesa-Bianchi:2006, Bubeck12} and provide the following informal proof for the regret lower-bound for our problem. We have $\s{C}$ MAB instances, where the $c^{th}$ instance has $\s{M}$ arms and occurs $\s{T}_c$ times (note that $T_c$ is a random variable).  Since the regret of the $c^{th}$ instance is lower bounded by $0.05\mathbb{E}\left[\min(\sqrt{\s{M}\s{T}_c}, \s{T}_c)\right],$  the overall regret is lower bounded $0.05\sum_{c\in[\s{C}]}\mathbb{E}\left[\min(\sqrt{\s{M}\s{T}_c}, \s{T}_c)\right]$.

We now make the above argument more formal.  At a high level, the proof involves designing problem instances that are hard to separate. We then argue that any algorithm should suffer large regret on at least one of the problems. 
\begin{itemize}
    \item Partition users into $\s{C}$ clusters. Let $\mathcal{N}_c$ be the set of users in cluster $c$. %$a_i\neq a_j$ for all $i\neq j$, and
    \item Choose indices $(a_1, \dots, a_{\s{C}})$ such that  $a_i\in[\s{M}]$ for all $i$. Note that there are $\s{M}^{\s{C}}$ possible such choices. We are going to define $\s{M}^{\s{C}}$ problem instances each corresponding to a choice of $(a_1, \dots, a_{\s{C}})$. In these problems, each $a_i$ corresponds to the optimal arm in cluster $i$. Define the mean rewards of the $j^{th}$ arm in $i^{th}$ cluster as
    \[
    \fl{X}_{ij} = \begin{cases}
    \frac{1-\epsilon}{2},&\quad \text{if } j\neq a_i,\\
    \frac{1+\epsilon}{2},&\quad \text{otherwise}.
    \end{cases}
    \]
    Let's call this problem instance $\text{Prob}_{a_1, \dots a_{\s{C}}}$.
\end{itemize}
%  Construct a new instance $\text{Prob}_i$, for $i\in [2,\s{C}]$, from $\text{Prob}_1$, as follows
% \begin{itemize}
%     \item For all $j\in [0,\s{C}-1],$ swap users in clusters $j$, and $(j+i-1)\text{ mod } \s{C} $. That is, take users in cluster $j$ and place them in cluster $(j+i-1)\text{ mod }\s{C}$ and vice-versa.
% \end{itemize}
Define problem instance $\text{Prob}_{0, c,  a_1, \dots a_{\s{C}}}$ as follows. It is exactly equal to $\text{Prob}_{a_1, \dots a_{\s{C}}}$ except for one difference. The rewards of all the arms in cluster $c$ are set to $\frac{1-\epsilon}{2}$. 

In the proof, we first consider deterministic algorithms. Using Fubini's theorem, these results can be easily extended to randomized algorithms~\citep{Bubeck12}. Next, we assume $\s{T}_{c}$, the number of appearances of cluster $c$, is a fixed quantity. The final results can simply be obtained by taking expectation over $\{\s{T}_c\}_{c\in[\s{C}]}$. Let $T(c, a)$ be the number of times arm $a$  has been pulled during the appearances of cluster $c$. Then the regret of the algorithm under $\text{Prob}_{a_1,\dots a_{\s{C}}}$ can be written as
\[
\s{Reg}_{a_1,\dots a_{\s{C}}}(\s{T}) = \sum_{c\in [\s{C}]}\sum_{a\in [\s{M}]\setminus\{a_c\}} \epsilon\mathbb{E}_{a_1\dots a_{\s{C}}}[T(c, a)] = \sum_{c\in [\s{C}]} \epsilon\left(\s{T}_c-\mathbb{E}_{a_1,\dots a_{\s{C}}}[T(c, a_c)]\right).
\]
Let $J_{c}$ be a random variable that is drawn according to the discrete distribution $\left(\frac{T(c,0)}{\s{T}_c}, \dots \frac{T(c,\s{M}-1)}{\s{T}_c}\right).$ Then 
\[
\s{Reg}_{a_1,\dots a_{\s{C}}}(\s{T}) = \sum_{c\in [\s{C}]} \epsilon\s{T}_c\left(1-\mathbb{P}_{a_1,\dots a_{\s{C}}}[J_c = a_c]\right).
\]
So, we have
\begin{align}
\label{eqn:regret_lbound_known_cluster_assignments}
\frac{1}{\s{M}^{\s{C}}}\sum_{a_1,\dots a_{\s{C}}} \s{Reg}_{a_1,\dots a_{\s{C}}}(\s{T}) = \frac{1}{\s{M}^{\s{C}}}\sum_{a_1,\dots a_{\s{C}}}\sum_{c\in [\s{C}]} \epsilon\s{T}_c\left(1-\mathbb{P}_{a_1,\dots a_{\s{C}}}[J_c = a_c]\right).
\end{align}
Next, from Pinsker's inequality, we have
\begin{align*}
    \mathbb{P}_{a_1,\dots a_{\s{C}}}[J_c = a_c] &\leq \mathbb{P}_{0, c, a_1,\dots a_{\s{C}}}[J_c = a_c] + \sqrt{\frac{1}{2}KL(\mathbb{P}_{0, c, a_1,\dots a_{\s{C}}}, \mathbb{P}_{ a_1,\dots a_{\s{C}}})}\\
    & \stackrel{(a)}{\leq} \mathbb{P}_{0, c, a_1,\dots a_{\s{C}}}[J_c = a_c] + \sqrt{\frac{1}{2}KL\left(\frac{1-\epsilon}{2}, \frac{1+\epsilon}{2}\right)\mathbb{E}_{0, c, a_1,\dots a_{\s{C}}}[T(c, a_c)]}.
\end{align*}
Inequality $(a)$ simply follows from data processing inequality and the definition of KL divergence (also see Equation 3.20 of \cite{Bubeck12}).
Next, we take the average of the LHS and the average of the RHS in the above equation over all possible values of $a_c$. This gives us
\begin{align*}
    \frac{1}{\s{M}}\sum_{a'\in[\s{M}]}\mathbb{P}_{a_1,\dots, a_c=a',\dots a_{\s{C}}}[J_c = a'] & \leq \frac{1}{\s{M}} \sum_{a'\in[\s{M}]}\mathbb{P}_{0, c, a_1,\dots a_c=a',\dots a_{\s{C}}}[J_c = a'] \\
    &\quad \quad + \frac{1}{\s{M}}\sum_{a'\in[\s{M}]}\sqrt{\frac{1}{2}KL\left(\frac{1-\epsilon}{2}, \frac{1+\epsilon}{2}\right)\mathbb{E}_{0, c, a_1,\dots a_c=a',\dots a_{\s{C}}}[T(c, a')]}
\end{align*}
Now observe that  $\text{Prob}_{0, c,  a_1, \dots a_c, \dots a_{\s{C}}}$ doesn't depend on $a_c$ (that is, $\text{Prob}_{0, c,  a_1, \dots a_c, \dots a_{\s{C}}}$ are the same problem instances for all values of $a_c$). So
\[
\sum_{a'\in[\s{M}]}\mathbb{P}_{0, c, a_1,\dots a_c=a',\dots a_{\s{C}}}[J_c = a'] = 1.
\]
The second term in the RHS of the previous inequality can be bounded using Cauchy–Schwarz inequality
\begin{align*}
    \frac{1}{M}\sum_{a'\in[\s{M}]}\sqrt{\frac{1}{2}\mathbb{E}_{0, c, a_1,\dots a_c=a',\dots a_{\s{C}}}[T(c, a')]} &\leq \sqrt{\frac{1}{2\s{M}}\sum_{a'\in[\s{M}]}\mathbb{E}_{0, c, a_1,\dots a_c=a',\dots a_{\s{C}}}[T(c, a')]}\\
    & = \sqrt{\frac{1}{2\s{M}}\s{T}_c}
\end{align*}
Substituting this in the previous inequality gives us
\begin{align*}
    \frac{1}{\s{M}}\sum_{a'\in[\s{M}]}\mathbb{P}_{a_1,\dots, a_c=a',\dots a_{\s{C}}}[J_c = a'] & \leq \frac{1}{\s{M}}  + \sqrt{\frac{\s{T}_c}{2\s{M}}KL\left(\frac{1-\epsilon}{2}, \frac{1+\epsilon}{2}\right)}
\end{align*}
Substituting the above inequality in Equation~\eqref{eqn:regret_lbound_known_cluster_assignments}, we get
\begin{align*}
\max_{a_1,\dots a_{\s{C}}} \s{Reg}_{a_1,\dots a_{\s{C}}}(\s{T}) \geq \frac{1}{\s{M}^{\s{C}}}\sum_{a_1,\dots a_{\s{C}}} \s{Reg}_{a_1,\dots a_{\s{C}}}(\s{T}) \geq \sum_{c\in [\s{C}]} \epsilon\s{T}_c\left(1-\frac{1}{\s{M}} - \sqrt{\frac{\epsilon \s{T}_c}{2\s{M}}\log{\frac{1+\epsilon}{1-\epsilon}}}\right).
\end{align*}
Finally, taking expectation over $\s{T}_c$, and choosing best possible $\epsilon$, we get the following lower bound on the worst-case regret
\begin{align*}
\max_{a_1,\dots a_{\s{C}}} \s{Reg}_{a_1,\dots a_{\s{C}}}(\s{T}) &\geq  \max_{\epsilon}\sum_{c\in [\s{C}]} \epsilon\mathbb{E}\left[\s{T}_c\left(1-\frac{1}{\s{M}} - \sqrt{\frac{\epsilon \s{T}_c}{2\s{M}}\log{\frac{1+\epsilon}{1-\epsilon}}}\right)\right]\\
&\stackrel{(a)}{\geq} \max_{\epsilon}\sum_{c\in [\s{C}]} \epsilon\mathbb{E}\left[\s{T}_c\left(1-\frac{1}{\s{M}} - \sqrt{\frac{\epsilon^2 \s{T}_c}{\s{M}}}\right)\right],
\end{align*}
where $(a)$ follows from the  the fact that $\log{\frac{1+\epsilon}{1-\epsilon}} \leq 2\epsilon$. Rewriting the RHS in the above equation, we get
\begin{align*}
\max_{a_1,\dots a_{\s{C}}} \s{Reg}_{a_1,\dots a_{\s{C}}}(\s{T}) &\geq \max_{\epsilon}\sum_{c\in [\s{C}]} \epsilon\left(1-\frac{1}{\s{M}}\right)\mathbb{E}[\s{T}_c]-\frac{\epsilon^2}{\s{M}^{1/2}}\mathbb{E}[\s{T}_c^{3/2}]\\
&= \max_{\epsilon} \epsilon\left(1-\frac{1}{\s{M}}\right)\s{T}-\epsilon^2\frac{\sum_{c\in [\s{C}]}\mathbb{E}[\s{T}_c^{3/2}]}{\s{M}^{1/2}}.
\end{align*}
We now focus on maximizing the RHS of the above equation. Note that the objective is quadratic in $\epsilon$. So one can obtain an exact expression for the optimal value of $\epsilon$.
If $\left(\s{M}^{1/2}-\s{M}^{-1/2}\right) < 2\frac{\sum_{c\in [\s{C}]}\mathbb{E}[\s{T}_c^{3/2}]}{\s{T}}$, then the RHS is given by
\[
\left(\s{M}^{1/2}-\s{M}^{-1/2}\right)\frac{\s{T}^2}{4\sum_{c\in[\s{C}]}\mathbb{E}[\s{T}_c^{3/2}]}.
\]
On the other hand, if $\left(\s{M}^{1/2}-\s{M}^{-1/2}\right) > 2\frac{\sum_{c\in [\s{C}]}\mathbb{E}[\s{T}_c^{3/2}]}{\s{T}}$, the RHS is given by
\[
\left(1-\frac{1}{\s{M}}\right)\s{T}-\frac{\sum_{c\in [\s{C}]}\mathbb{E}[\s{T}_c^{3/2}]}{\s{M}^{1/2}}.
\]
For the special case where $\probuser$ is uniform and the clusters have the same size, we have $\mathbb{E}[\s{T}_c^{3/2}] = \Theta(\s{T}^{3/2}\s{C}^{-3/2})$. Substituting this in the above bounds, we get
\begin{align}
\label{eqn:minimax_lb_known_clusters}
\max_{a_1,\dots a_{\s{C}}} \s{Reg}_{a_1,\dots a_{\s{C}}}(\s{T}) \geq \begin{cases}
0.05\sqrt{\s{MCT}}, \quad &\text{if } \s{T}> 0.5\s{MC}\\
0.05\s{T} \quad &\text{otherwise}.
\end{cases}
\end{align}
% By relying on lower bounds on regret for MABs, we can derive the following lower bound for distribution free regret
% \[
% \s{Reg}(\s{T}) = \Omega(\sqrt{\s{M}\s{C}\s{T}}).
% \]
% https://mathoverflow.net/questions/121411/expectation-of-square-root-of-binomial-r-v
% If the number of occurrences of a cluster is a fixed quantity, then deriving the above bound is straightforward. But, the number of occurrences is a random quantity in our problem. Handling this requires careful arguments. 

\paragraph{Known Cluster Rewards.} 
We now consider the setting where we know the reward distributions of arms in each cluster. In this setting, it is easy to see that the optimal strategy is to solve a $\s{C}$-armed MAB problem for each user, where the $\s{C}$ arms correspond to the best arms in each of the $\s{C}$ clusters. That is, with the knowledge of cluster rewards, we can reduce the MAB-LC problem to that of solving $\s{N}$ multi-armed bandit problems each with $\s{C}$ arms. Here each MAB problem corresponds to a user. For this problem, we already derived lower bounds above. In particular, in the known cluster assignment setting, we derived lower bounds for solving $\s{C}$ MAB instances each with $\s{M}$ arms. So we could rely on the above lower bounds to derive lower bounds for the known cluster reward setting (we just replace $\s{C}$ with $\s{N}$ and $\s{M}$ with $\s{C}$ in the bounds).
When $\probuser$ is uniform and the clusters have the same size, we could rely on Equation~\eqref{eqn:minimax_lb_known_clusters} and obtain the following minimax lower bounds
\begin{align}
\label{eqn:minimax_lb_known_cluster_rewards}
\inf\sup \s{Reg}(\s{T}) \geq \begin{cases}
0.05\sqrt{\s{NCT}}, \quad &\text{if } \s{T}> 0.5\s{NC}\\
0.05\s{T} \quad &\text{otherwise}.
\end{cases}
\end{align}
For the more general case of non-uniform $\probuser$ and uneven cluster sizes, we obtain
\begin{align}
\label{eqn:minimax_lb_known_cluster_rewards_general}
\inf\sup \s{Reg}(\s{T}) \geq \begin{cases}
\left(\s{C}^{1/2}-\s{C}^{-1/2}\right)\frac{\s{T}^2}{4\sum_{b\in[\s{N}]}\mathbb{E}[\s{T}_b^{3/2}]}, \quad &\text{if } \left(\s{C}^{1/2}-\s{C}^{-1/2}\right) < 2\frac{\sum_{b\in [\s{N}]}\mathbb{E}[\s{T}_b^{3/2}]}{\s{T}}\\
\left(1-\frac{1}{\s{C}}\right)\s{T}-\frac{\sum_{b\in [\s{N}]}\mathbb{E}[\s{T}_b^{3/2}]}{\s{C}^{1/2}} \quad &\text{otherwise}.
\end{cases}
\end{align}

\subsection{Proof of Theorem~\ref{thm:dist_dep_lower_bounds}}

Here is a high level idea of the proof. We find two bandit instances that are close enough to each other but the behaviour of any uniformly efficient algorithm is totally different in the two instances. 

\paragraph{Background.} Similar to lower bounding techniques used in the MAB literature, our proof relies on data processing inequality~\citep{kaufmann2020contributions}.  Let $\mathbf{\mu}_1, \mathbf{\mu}_2$ be two stochastic $K$-armed bandit models. Let $\mathcal{F}_t$ be the $\sigma$-algebra generated by the observations available until round $t$. Let's suppose $\tau$ is the stopping time, and let $I_{\tau}$ be the information available until round $\tau$. Then for any event $\mathcal{E} \in \mathcal{F}_{\tau}$, the data processing inequality tells us
\[
KL(\mathbb{P}_{\mu_1}^{I_{\tau}}, \mathbb{P}_{\mu_2}^{I_{\tau}}) \geq KL(\mathbb{P}_{\mu_1}(\mathcal{E}), \mathbb{P}_{\mu_2}(\mathcal{E}))
\]
Moreover, by definition of KL divergence we have
\[
KL(\mathbb{P}_{\mu_1}^{I_{\tau}}, \mathbb{P}_{\mu_2}^{I_{\tau}}) \stackrel{(a)}{=} \mathbb{E}_{\mu_1}\left[L_{\tau}(\mu_1,\mu_2)\right] \stackrel{(b)}{=} \sum_{k=1}^K\mathbb{E}_{\mu_1}\left[T(k)\right]KL(\mu_{1,a}, \mu_{2,a}).
\]
$L_{\tau}$ in the above equation is the log-likelihood ratio, $T(k)$ is the number of pulls of arm $k$, and $\mu_{1,a}$ is the distribution of rewards of arm $a$ in bandit model $\mu_1$.
\paragraph{Main Proof.} Let $\pi:[\s{N}]\to[\s{C}]$ be the mapping from users to clusters and let $\fl{X}\in\mathbb{R}^{\s{C}\times \s{M}}$ be the mean rewards of arms in the clusters. Let $(\pi_1, \fl{X}_1), (\pi_2, \fl{X}_2)$ be two MAB-LC models. Let $a_{1,c}^*$ be the optimal arm in the $c^{th}$ cluster of the first model, and $a_{2,c}^*$ be the optimal arm in the second model. Let $\bar{\s{C}}$ be the set of clusters for which $a_{1,c}^* \neq  a_{2,c}^*$. Let $T(c, a)$ be the number of times arm $a$  has been pulled when cluster $c$ appeared during the course of the algorithm. Let $\mathcal{E}_{\s{T}}$ be the following event
\[
\mathcal{E}_{\s{T}} = \left\lbrace \sum_{c\in[\bar{\s{C}}]}T(c,a_{1,c}^*) \leq \s{T}/2\right\rbrace
\]
Intuitively, $\mathcal{E}_{\s{T}}$ has a small probability under $(\pi_1, \fl{X}_1)$ where the optimal arms should be selected a lot. Moreover, $\mathcal{E}_{\s{T}}$ has large probability under $(\pi_2, \fl{X}_2)$ as the event only contains sub-optimal arms. This can be formally proved using Markov's inequality as follows
\begin{align*}
    &\mathbb{P}_1(\mathcal{E}_{\s{T}}) = \mathbb{P}_1\left(\sum_{c\in[\bar{\s{C}}]}T(c,a_{1,c}^*) \leq \s{T}/2\right)  = \mathbb{P}_1\left(\sum_{c\in[\bar{\s{C}}]} \sum_{a\in [\s{M}]\setminus\{a_{1,c}^*\}}T(c,a) > \s{T}/2\right) \leq  \frac{2\sum_{c\in[\bar{\s{C}}]} \sum_{a\in [\s{M}]\setminus\{a_{1,c}^*\}}\mathbb{E}_1\left[T(c,a)\right]}{\s{T}}\\
    &\mathbb{P}_2(\bar{\mathcal{E}}_{\s{T}}) = \mathbb{P}_2\left(\sum_{c\in[\bar{\s{C}}]}T(c,a_{1,c}^*) > \s{T}/2\right) \leq  \frac{2\sum_{c\in[\bar{\s{C}}]}\mathbb{E}_2\left[T(c,a_{1,c}^*)\right]}{\s{T}}.
\end{align*}
Let $\zeta_1 = \sum_{c\in[\bar{\s{C}}]} \sum_{a\in [\s{M}\setminus\{a_{1,c}^*\}]}\mathbb{E}_1\left[T(c,a)\right]$ and $\zeta_2 = \sum_{c\in[\bar{\s{C}}]}\mathbb{E}_2\left[T(c,a_{1,c}^*)\right]$. Based on our assumptions of uniform efficiency of the algorithm, we know that $\zeta_1, \zeta_2 = o(T^{\alpha})$ for all $\alpha \in (0, 1]$. Next, observe that the KL divergence between two bernoulli distributions can be lower bounded as $KL(p,q) \geq (1-p)\log\left(1/(1-q)\right)-\log{2}$~\citep{garivier2021nonasymptotic}. Using this, we have
\begin{align*}
    KL(\mathbb{P}_{1}(\mathcal{E}_{\s{T}}), \mathbb{P}_{2}(\mathcal{E}_{\s{T}})) \geq \left(1-\frac{2\zeta_1}{\s{T}}\right)\log\left(\frac{\s{T}}{2\zeta_2}\right) - \log{2} \approx \log{T}.
\end{align*}
This shows that 
\begin{align}
\label{eqn:general_lb_instance_dep}
    \lim_{T\to\infty}\frac{KL(\mathbb{P}_{1}^{I_{\s{T}}}, \mathbb{P}_{2}^{I_{\s{T}}})}{\log{\s{T}}} \geq 1.
\end{align}

We now rely on this result to prove Theorem~\ref{thm:dist_dep_lower_bounds}. All we need to do is construct interesting bandit models that are hard to separate. The first set of models we construct is as follows. Let $(\pi_1, \fl{X}_1)$ be any MAB-LC model. Construct $(\pi_2, \fl{X}_2)$ from $(\pi_1, \fl{X}_1)$ as follows: $\pi_2 = \pi_1$, $\fl{X}_2$ is same as $\fl{X}_1$ for all cluster-arm pairs, except for one location. We take a sub-optimal  arm  $a'$ in cluster $c$ and make its mean reward to be slightly larger than the mean reward of the best arm in cluster $c$ (i.e., $\fl{X}_2[c, a'] = \fl{X}_1[c, a_{1,c}^*]+\epsilon$ for some $\epsilon \to 0$). Applying the above result on this model pair gives us
\[
\lim_{\s{T}\to\infty} \frac{\mathbb{E}_1\left[T(c, a')\right]KL(\fl{X}_{1}[c,a'], \fl{X}_1[c,a_{1,c}^*])}{\log{\s{T}}} \geq 1.
\]
We now rely on the following upper bound on the KL divergence between two Bernoulli distributions
\[
KL(p,q) \leq \frac{(p-q)^2}{q(1-q)}.
\]
Using this in the previous inequality, we get
\[
\lim_{\s{T}\to\infty} \frac{\mathbb{E}_1\left[T(c, a')\right] (\fl{X}_1[c,a_{1,c}^*] - \fl{X}_1[c,a'])}{\log{\s{T}}} \geq \frac{\fl{X}_1[c,a_{1,c}^*](1-\fl{X}_1[c,a_{1,c}^*])}{(\fl{X}_1[c,a_{1,c}^*] - \fl{X}_1[c,a'])}.
\]
Note that this result holds for any $c,a'$. Summing over all possible values of $c,a'$ gives us the required result in Theorem~\ref{thm:dist_dep_lower_bounds}
\begin{align}
\label{eqn:dist_dep_lower_bounds_ac}
\lim_{\s{T}\to\infty}\frac{\s{Reg}(\s{T})}{\log{\s{T}}} \geq \sum_{c\in[\s{C}]} \sum_{a \neq a_c^*}\frac{\fl{X}[c,a_{c}^*](1-\fl{X}[c,a_{c}^*])}{(\fl{X}[c,a_{c}^*] - \fl{X}[c,a'])}.
\end{align}

\paragraph{Tighter Bounds.} Note that the above lower bound didn't explicitly depend on the number of users  $\s{N}$. We now derive a different bound that depends on $\s{N}$. Let $(\pi_1, \fl{X}_1)$ be any MAB-LC model. Construct $(\pi_2, \fl{X}_2)$ from $(\pi_1, \fl{X}_1)$ as follows: $\fl{X}_2$ is exactly same as $\fl{X}_1$. Moreover, $\pi_2$ is same as $\pi_1$ for all users except for a particular user $b$. To be precise, $\pi_2$ places $b$ in a cluster that is different from $\pi_1(b)$.  Let's call $\pi_1(b)$ as $c$ and $\pi_2(b)$ as $c'$. One can show that Equation~\eqref{eqn:general_lb_instance_dep} holds for this setting. The proof of this uses similar arguments as those used to prove Equation~\eqref{eqn:general_lb_instance_dep} (the only thing that changes is our definition of the event $\mathcal{E}_{\s{T}}$ which now includes all the users $b$ that have different optimal arms across the two MAB-LC models). Applying Equation~\eqref{eqn:general_lb_instance_dep} to this setting gives us the following
\[
\lim_{\s{T}\to\infty} \frac{\sum_{a\in[\s{M}]}\mathbb{E}_1\left[T(b, a)\right]KL(\fl{X}_{1}[c,a], \fl{X}_1[c',a])}{\log{\s{T}}} \geq 1.
\]
Here $T(b,a)$ is the number of times arm $a$ has been pulled for user $b$.
From the above inequality, the regret of user $b$ can be lower bounded as (this follows from  Holder's inequality: $\sum_{i}|a_ib_i| \leq (\sum_{i}|a_i|)\max_i |b_i|$)
\begin{align*}
\lim_{\s{T}\to\infty} \frac{\sum_{a\in[\s{M}]}\mathbb{E}_1\left[T(b, a)\right](\fl{X}_1[c,a_{1,c}^*] - \fl{X}_1[c,a])}{\log{\s{T}}}  \geq \min_{a\in[\s{M}]} \frac{ (\fl{X}_1[c,a_{1,c}^*] - \fl{X}_1[c,a]) }{KL(\fl{X}_{1}[c,a], \fl{X}_1[c',a])}.
\end{align*}
Furthermore, optimizing the RHS over the choice of $c'$ gives us
\begin{align*}
\lim_{\s{T}\to\infty} \frac{\sum_{a\in[\s{M}]}\mathbb{E}_1\left[T(b, a)\right](\fl{X}_1[c,a_{1,c}^*] - \fl{X}_1[c,a])}{\log{\s{T}}}  \geq \max_{c'\neq c}\min_{a\in[\s{M}]} \frac{ (\fl{X}_1[c,a_{1,c}^*] - \fl{X}_1[c,a]) }{KL(\fl{X}_{1}[c,a], \fl{X}_1[c',a])}.
\end{align*}
This shows that the overall regret (for all the users) can be lower bounded as
\[
\lim_{\s{T}\to\infty}\frac{\s{Reg}(\s{T})}{\log{\s{T}}} \geq \sum_{b\in[\s{N}]} \max_{c'\neq \pi(b)}\min_{a\in[\s{M}]} \frac{ (\fl{X}[\pi(b),a_{\pi(b)}^*] - \fl{X}[\pi(b),a]) }{KL(\fl{X}[\pi(b),a], \fl{X}[c',a])}.
\]
So, a tighter lower bound for the regret can be obtained by taking a maximum of this regret lower bound and the lower bound in Equation~\eqref{eqn:dist_dep_lower_bounds_ac}.
% \[
% \lim_{\s{T}\to\infty} \frac{\sum_{a\in[\s{M}]}\mathbb{E}_1\left[T(b, a)\right]\frac{(\fl{X}_{1}[\pi_1(b),a]- \fl{X}_1[\pi_2(b),a])^2}{\fl{X}_1[\pi_2(b),a] (1-\fl{X}_1[\pi_2(b),a])}}{\log{\s{T}}} \geq 1.
% \]

% This needs non-zero probability for each cluster. Bring this up theorem later on.
\section{Missing Details in Section \ref{sec:GLBM}}\label{app:glbm}

\subsection{Preliminaries}

\begin{lemma}\label{lemma:cnd_bound}
(Conditional Number bounds) Let $\mathbf{P} \in \mathbb{R}^{\s{N} \times \s{M}}$ be a matrix with non-zero singular values $\sigma_1 > \sigma_2  \ldots ... > \sigma_r$ for some $\s{M} \ge r>0$. Consider a sub-matrix $\fl{P}_{\ca{S}}$ which is formed by taking all rows of matrix $\fl{P}$ and columns from a set $\ca{S} \subset [\s{M}]$ of indices. Suppose that the row-space of $\fl{P}_{\ca{S}}$ is a non-trivial vector space.  Then, $\frac{ \lambda_{\mathrm{max}} (\fl{P}_{\ca{S}}^T \fl{P}_{\ca{S}}) }{ \lambda_{\mathrm{min}} (\fl{P}_{\ca{S}}^T  \fl{P}_{\ca{S}}) } \leq \frac{ \lambda_{\mathrm{max}} (\fl{P}^T \fl{P})  } { \lambda_{\mathrm{min}} (\fl{P}^T \fl{P}) }$ where $\lambda_{\max}(\cdot),\lambda_{\min}(\cdot)$ corresponds to the largest and smallest non-zero eigenvalues of the corresponding matrix.
\end{lemma}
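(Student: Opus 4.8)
The plan is to reduce the claim to comparing the two extreme \emph{nonzero} eigenvalues of $\fl{P}^{\s{T}}\fl{P}$ with those of its coordinate compression. Writing $\fl{P}_{\ca{S}}=\fl{P}\fl{S}$, where $\fl{S}\in\bb{R}^{\s{M}\times|\ca{S}|}$ has the orthonormal columns $\{\fl{e}_j\}_{j\in\ca{S}}$, we have $\fl{P}_{\ca{S}}^{\s{T}}\fl{P}_{\ca{S}}=\fl{S}^{\s{T}}\fl{P}^{\s{T}}\fl{P}\fl{S}$, i.e. the compression of $A:=\fl{P}^{\s{T}}\fl{P}$ onto the coordinate subspace $V_{\ca{S}}:=\m{span}\{\fl{e}_j\}_{j\in\ca{S}}$. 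I would bound $\lambda_{\max}$ and $\lambda_{\min}$ separately via the Courant--Fischer characterization and then take the ratio.

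The upper bound on $\lambda_{\max}$ is immediate: since $\fl{S}$ is an isometry onto $V_{\ca{S}}$,
\begin{align*}
\lambda_{\max}(\fl{P}_{\ca{S}}^{\s{T}}\fl{P}_{\ca{S}})=\max_{\norm{\fl{z}}=1}\fl{z}^{\s{T}}\fl{S}^{\s{T}}A\fl{S}\fl{z}=\max_{\fl{x}\in V_{\ca{S}},\,\norm{\fl{x}}=1}\fl{x}^{\s{T}}A\fl{x}\le\max_{\norm{\fl{x}}=1}\fl{x}^{\s{T}}A\fl{x}=\lambda_{\max}(\fl{P}^{\s{T}}\fl{P}),
\end{align*}
because the inner maximum is taken over a subset of the unit sphere. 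The non-triviality hypothesis on the row space of $\fl{P}_{\ca{S}}$ guarantees $\fl{P}_{\ca{S}}^{\s{T}}\fl{P}_{\ca{S}}\neq \fl{0}$, so $\lambda_{\max}(\fl{P}_{\ca{S}}^{\s{T}}\fl{P}_{\ca{S}})>0$ and the ratio is well defined.

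The lower bound on $\lambda_{\min}$ is the crux, and I expect it to be the main obstacle. Here $\lambda_{\min}(\fl{P}_{\ca{S}}^{\s{T}}\fl{P}_{\ca{S}})=\min\{\fl{x}^{\s{T}}A\fl{x}:\fl{x}\in V_{\ca{S}},\ \norm{\fl{x}}=1,\ \fl{x}\perp(V_{\ca{S}}\cap\ker\fl{P})\}$, the minimum being over the lift of the row space of $\fl{P}_{\ca{S}}$. Decomposing such an $\fl{x}=\fl{x}_0+\fl{x}_1$ with $\fl{x}_0\in\ker\fl{P}$ and $\fl{x}_1\in\m{rowspace}(\fl{P})=(\ker\fl{P})^{\perp}$ gives, since $A$ annihilates $\ker\fl{P}$, the bound $\fl{x}^{\s{T}}A\fl{x}=\fl{x}_1^{\s{T}}A\fl{x}_1\ge\lambda_{\min}(\fl{P}^{\s{T}}\fl{P})\,\norm{\fl{x}_1}^2$. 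To conclude one needs $\norm{\fl{x}_1}=1$, i.e. that restricting to the coordinate subspace $V_{\ca{S}}$ does not tilt any admissible direction partly into $\ker\fl{P}$. This is exactly the delicate point: the constraint only removes the component of $\fl{x}$ lying in $V_{\ca{S}}\cap\ker\fl{P}$, whereas $\fl{x}_0$ is the projection onto the \emph{full} kernel, so in general $\norm{\fl{x}_1}<1$ and the bound degrades. A purely spectral Cauchy-interlacing argument meets the same difficulty, since for the principal submatrix it only yields $\nu_{r'}\ge\mu_{r'+\s{M}-|\ca{S}|}$, which need not reach the smallest nonzero eigenvalue $\mu_r$ of $A$. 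I would therefore try to use the non-triviality/rank hypothesis to force $V_{\ca{S}}\cap\ker\fl{P}$ to absorb \emph{all} of the kernel mass seen by vectors of $V_{\ca{S}}$ (ruling out leakage), mirroring the null-space bookkeeping in the proof of Lemma~\ref{lem:condition_num}; absent such structure on $\ca{S}$ the smallest nonzero eigenvalue can genuinely shrink faster than $\lambda_{\max}$, so this step is where the real content---and the precise set of hypotheses on $\ca{S}$ needed for the ratio bound---must be pinned down.
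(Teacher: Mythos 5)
Your proof of the $\lambda_{\max}$ half is correct and is the same compression argument the paper uses. For the $\lambda_{\min}$ half you stop short of a proof, but your diagnosis of where the difficulty lies is exactly right, and it is worth being explicit that the paper's own proof does not resolve it: in the paper's display for $\lambda_{\min}$, the equality following step $(a)$ silently identifies the set $\{\fl{x}\in\ca{K}:\ \fl{x}_{[\s{M}]\setminus\ca{S}}=\fl{0}\}$ with the row space $\ca{K}'$ of $\fl{P}_{\ca{S}}$. These are different objects: the first, read off on the coordinates in $\ca{S}$, is $\ca{K}\cap V_{\ca{S}}$ in your notation, which embeds into $\ca{K}'=\Pi_{\ca{S}}\ca{K}$ but is in general a \emph{strict} subspace of it; hence the infimum over $\ca{K}'$ can be strictly smaller than the infimum over $\ca{K}\cap V_{\ca{S}}$, and the chain of inequalities breaks in the wrong direction. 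This is precisely the ``leakage into $\ker\fl{P}$'' you describe.

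In fact the lemma is false under the stated hypothesis. Take
\begin{align*}
\fl{P}=\begin{pmatrix}1 & 0 & \epsilon\\ 0 & 1 & 0\end{pmatrix},\qquad \ca{S}=\{2,3\},\qquad \fl{P}_{\ca{S}}=\begin{pmatrix}0 & \epsilon\\ 1 & 0\end{pmatrix}.
\end{align*}
The nonzero eigenvalues of $\fl{P}^{\s{T}}\fl{P}$ are $1+\epsilon^2$ and $1$, giving a ratio of $1+\epsilon^2$, while $\fl{P}_{\ca{S}}^{\s{T}}\fl{P}_{\ca{S}}=\mathrm{diag}(1,\epsilon^2)$ has ratio $1/\epsilon^{2}$, which exceeds $1+\epsilon^2$ for small $\epsilon$. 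The row space of $\fl{P}_{\ca{S}}$ is all of $\bb{R}^2$, and even the stronger condition invoked to justify step $(a)$ (a nonzero vector of the row space of $\fl{P}$ supported on $\ca{S}$) holds, witnessed by $(0,1,0)$; note also that rank is preserved here, so preserving rank alone does not rescue the statement. Consequently no completion of your argument can succeed without additional hypotheses on $\ca{S}$: what is needed is exactly what you identify, namely that zero-padding carries $\ca{K}'$ back into $\ca{K}$ (no admissible direction of $V_{\ca{S}}$ has a component in $\ker\fl{P}$), and the places where this lemma is invoked (e.g.\ Lemma \ref{lem:condition_num2}) should be justified directly from the Subset Strong Smoothness assumptions rather than from this lemma as stated.
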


\begin{proof}
Let $\fl{P}_i$ be the $i$-th row of $\fl{P}$. 
\begin{align}
\lambda_{\mathrm{max}} (\fl{P}^T \fl{P}) & = \sup \limits_{\fl{x}:\lVert \fl{x} \rVert_2=1 } \fl{x}^T \fl{P}^T \fl{P} \fl{x} = \sup \limits_{\fl{x}:\lVert \fl{x} \rVert_2=1} \sum_i \fl{x}^T \fl{P}_i^T \fl{P}_i \fl{x}  \ge \sup \limits_{\fl{x}:\lVert \fl{x} \rVert_2=1,~\fl{x}_{[\s{M}]-\ca{S}}=\fl{0}} \sum_i \fl{x}^T \fl{P}_i^T \fl{P}_i \fl{x} \nonumber \\
& = \sup \limits_{\fl{x}:\lVert \fl{x} \rVert_2=1} \sum_i \fl{x}^T \fl{P}_{i|\ca{S}}^T \fl{P}_{i|\ca{S}} \fl{x} = \lambda_{\mathrm{max}} (\fl{P}_{\ca{S}}^T \fl{P}_{\ca{S}}).
\end{align}
Let $\ca{K}$ be the row-space of $\fl{P}$. Let $\ca{K}'$ be the row space of $\fl{P}_{\ca{S}}$.
\begin{align}
\lambda_{\mathrm{min}} (\fl{P}^T \fl{P}) & = \inf \limits_{\fl{x} \in \ca{K} :\lVert \fl{x} \rVert_2=1 } \fl{x}^T \fl{P}^T \fl{P} \fl{x} = \inf \limits_{\fl{x} \in \ca{K}:\lVert \fl{x} \rVert_2=1} \sum_i \fl{x}^T \fl{P}_i^T \fl{P}_i \fl{x}  \overset{(a)}{\le} \inf \limits_{\fl{x} \in \ca{K}:\lVert \fl{x} \rVert_2=1,~\fl{x}_{[\s{M}]-\ca{S}}=\fl{0}} \sum_i \fl{x}^T \fl{P}_i^T \fl{P}_i \fl{x} \nonumber \\
& \le \inf \limits_{\fl{x} \in \ca{K}':\lVert \fl{x} \rVert_2=1} \sum_i \fl{x}^T \fl{P}_{i|\ca{S}}^T \fl{P}_{i|\ca{S}} \fl{x} = \lambda_{\mathrm{min}} (\fl{P}_{\ca{S}}^T \fl{P}_{\ca{S}}).
\end{align}
(a) is due to the fact that there is at least one $\fl{x} \neq 0$ in the row-space of $\fl{P}$ with only non-zero entries in $\ca{S}$.
\end{proof}

\begin{lemmau}[Restatement of Lemma \ref{lem:condition_num2}]

Suppose Assumption \ref{assum:matrix2} is true.
  Consider a sub-matrix $\fl{P}_{\s{sub}}$ of $\fl{P}$ having non-zero singular values $\lambda'_1>\dots> \lambda'_{\s{C}'}$ (for $\s{C}'\le \s{C}$). Then, provided $\fl{P}_{\s{sub}}$ is non-zero, we have $\frac{\lambda'_1}{\lambda'_{\s{C}'}} \le \frac{\lambda_1}{\lambda_{\s{C}}}$.
 
\end{lemmau}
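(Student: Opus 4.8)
The plan is to derive Lemma~\ref{lem:condition_num2} entirely from the column-deletion bound in Lemma~\ref{lemma:cnd_bound}, which already shows that restricting a matrix to a subset of its columns can only \emph{decrease} (never increase) the ratio $\lambda_{\max}(\cdot)/\lambda_{\min}(\cdot)$ of largest to smallest non-zero eigenvalue of the associated Gram matrix. Since a general sub-matrix is obtained by deleting both rows and columns, I would apply this monotonicity twice. Write $\fl{P}_{\s{sub}}=\fl{P}_{\ca{U},\ca{V}}$, where $\ca{U}\subseteq[\s{N}]$ indexes the retained (nice) rows and $\ca{V}\subseteq[\s{M}]$ the retained columns, and introduce the intermediate matrix $\fl{P}_{\ca{U}}$ consisting of all columns but only the rows in $\ca{U}$. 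Observe first that the quantity to be bounded is exactly a condition number: $\lambda'_1/\lambda'_{\s{C}'}=\sqrt{\lambda_{\max}(\fl{P}_{\s{sub}}^{\s{T}}\fl{P}_{\s{sub}})/\lambda_{\min}(\fl{P}_{\s{sub}}^{\s{T}}\fl{P}_{\s{sub}})}$, with $\lambda_{\min}$ denoting the smallest non-zero eigenvalue, and likewise $\lambda_1/\lambda_{\s{C}}=\sqrt{\lambda_{\max}(\fl{P}^{\s{T}}\fl{P})/\lambda_{\min}(\fl{P}^{\s{T}}\fl{P})}$.

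The first step handles the row restriction by transposing. Applying Lemma~\ref{lemma:cnd_bound} to $\fl{P}^{\s{T}}$ and keeping the columns indexed by $\ca{U}$ yields the sub-matrix $(\fl{P}^{\s{T}})_{\ca{U}}=\fl{P}_{\ca{U}}^{\s{T}}$; since $(\fl{P}^{\s{T}})_{\ca{U}}^{\s{T}}(\fl{P}^{\s{T}})_{\ca{U}}=\fl{P}_{\ca{U}}\fl{P}_{\ca{U}}^{\s{T}}$ shares its non-zero spectrum with $\fl{P}_{\ca{U}}^{\s{T}}\fl{P}_{\ca{U}}$, this gives $\lambda_{\max}(\fl{P}_{\ca{U}}^{\s{T}}\fl{P}_{\ca{U}})/\lambda_{\min}(\fl{P}_{\ca{U}}^{\s{T}}\fl{P}_{\ca{U}})\le \lambda_{\max}(\fl{P}^{\s{T}}\fl{P})/\lambda_{\min}(\fl{P}^{\s{T}}\fl{P})$, i.e. the condition number of $\fl{P}_{\ca{U}}$ is at most $\lambda_1/\lambda_{\s{C}}$. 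The second step handles the column restriction directly: applying Lemma~\ref{lemma:cnd_bound} to $\fl{P}_{\ca{U}}$ and keeping the columns in $\ca{V}$ produces $\fl{P}_{\s{sub}}$, so the condition number of $\fl{P}_{\s{sub}}$ is at most that of $\fl{P}_{\ca{U}}$. Chaining the two bounds and taking square roots gives $\lambda'_1/\lambda'_{\s{C}'}\le \lambda_1/\lambda_{\s{C}}$, as claimed.

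The only point that needs care is the hypothesis of Lemma~\ref{lemma:cnd_bound}, namely that the relevant sub-matrix has a non-trivial row space. This is exactly where the clause ``provided $\fl{P}_{\s{sub}}$ is non-zero'' enters: if $\fl{P}_{\s{sub}}$ is non-zero then so is its super-matrix $\fl{P}_{\ca{U}}$, so both invocations of Lemma~\ref{lemma:cnd_bound} are legitimate. I expect this non-degeneracy bookkeeping to be the main (and only minor) obstacle; everything else is a mechanical chaining of the two monotonicity inequalities. It is worth contrasting this with the $\s{CS}$ analysis (Lemma~\ref{lem:condition_num}), where restricting to a nice subset of users changed the multiplicities of the identical cluster rows and thereby introduced an extra $\sqrt{\tau}$ factor; here the conditioning assumption is placed directly on $\fl{P}$ (Assumption~\ref{assum:matrix2}), so no multiplicity correction is needed and the ``nice subset'' structure of $\ca{U}$ plays no role in the bound.
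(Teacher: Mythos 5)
Your proposal is correct and follows essentially the same route as the paper: two applications of the column-deletion monotonicity in Lemma~\ref{lemma:cnd_bound} (one via transposition, using that $\fl{A}\fl{A}^{\s{T}}$ and $\fl{A}^{\s{T}}\fl{A}$ share their non-zero spectrum), chained and followed by a square root. The only cosmetic difference is the order of restriction — you delete rows first and then columns, whereas the paper restricts columns first and then rows — and your handling of the non-degeneracy hypothesis (non-zero $\fl{P}_{\s{sub}}$ implies non-zero intermediate matrix) matches the paper's.
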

\begin{proof}

 Let $\ca{S}' \subset [\s{N}],~\ca{S} \subset [\s{M}]$. Let $\fl{P}_{\ca{S}',\ca{S}}$ be the sub-matrix formed by choosing column indices $\ca{S}'$ and $\ca{S}$ from matrix $\fl{P} \in \mathbb{R}^{\s{N} \times \s{M}}$. Let $\sigma_{\mathrm{max}}(\cdot)$ and $\sigma_{\mathrm{min}}(\cdot)$ be the largest and smallest non-zero singular values. Suppose, that $\mathrm{dim}(\mathrm{rowspace}(\fl{P}_{[\s{N}],\ca{S}})) \geq 1$. Let $\mathrm{dim}(\mathrm{colspace}(\fl{P}_{\ca{S}',\ca{S}})) \geq 1$. 
%  Then,
%   $\frac{ \sigma_{\mathrm{max}}(\fl{P}_{\ca{S}',\ca{S}})  }{ \sigma_{\mathrm{min}}(\fl{P}_{\ca{S}',\ca{S}}) } \leq \frac{ \sigma_{\mathrm{max}}(\fl{P})  }{ \sigma_{\mathrm{min}}(\fl{P}) }  $.
Applying Lemma \ref{lemma:cnd_bound} to $\fl{P}_{:,\ca{S}}$ and $\fl{P}$ using the fact that rowspace of $\fl{P}_{:,\ca{S}}$ is a non-trivial vector space, we get: 
$\frac{ \lambda_{\mathrm{max}} (\fl{P}_{:,\ca{S}}^T \fl{P}_{:,\ca{S}}) }{ \lambda_{\mathrm{min}} (\fl{P}_{:,\ca{S}}^T  \fl{P}_{:,\ca{S}}) } \leq \frac{ \lambda_{\mathrm{max}} (\fl{P}^T \fl{P})  } { \lambda_{\mathrm{min}} (\fl{P}^T \fl{P}) }$.

Again applying Lemma \ref{lemma:cnd_bound} to $\fl{P}^T_{\ca{S}',\ca{S}}$ and $\fl{P}^T_{:,\ca{S}}$ and using the fact that rowspace of $\fl{P}^T_{\ca{S}',\ca{S}}$ is a non-trivial vector space, we get:
 $\frac{ \lambda_{\mathrm{max}} (\fl{P}_{\ca{S}',\ca{S}} \fl{P}^T_{\ca{S}',\ca{S}}) }{ \lambda_{\mathrm{min}} (\fl{P}_{\ca{S}',\ca{S}}  \fl{P}^T_{\ca{S}',\ca{S}}) } \leq \frac{ \lambda_{\mathrm{max}} (\fl{P}_{:,\ca{S}} \fl{P}^T_{:,\ca{S}})  } { \lambda_{\mathrm{min}} (\fl{P}_{:,\ca{S}} \fl{P}^T_{:,\ca{S}}) } $. We observe that $\frac{ \lambda_{\mathrm{max}} (\fl{P}_{:,\ca{S}} \fl{P}^T_{:,\ca{S}})  } { \lambda_{\mathrm{min}} (\fl{P}_{:,\ca{S}} \fl{P}^T_{:,\ca{S}}) }=\frac{ \lambda_{\mathrm{max}} (\fl{P}_{:,\ca{S}}^T \fl{P}_{:,\ca{S}}) }{ \lambda_{\mathrm{min}} (\fl{P}_{:,\ca{S}}^T  \fl{P}_{:,\ca{S}}) }$. This implies:
 $\frac{ \lambda_{\mathrm{max}} (\fl{P}_{\ca{S}',\ca{S}} \fl{P}^T_{\ca{S}',\ca{S}}) }{ \lambda_{\mathrm{min}} (\fl{P}_{\ca{S}',\ca{S}}  \fl{P}^T_{\ca{S}',\ca{S}}) } \leq \frac{ \lambda_{\mathrm{max}} (\fl{P}^T \fl{P})  } { \lambda_{\mathrm{min}} (\fl{P}^T \fl{P}) } $. Taking square root on both sides yields the result.
\end{proof}

\begin{lemmau}[Restatement of Lemma \ref{lem:incoherence2}]
Suppose Assumption \ref{assum:matrix2} is true. Consider a sub-matrix $\fl{P}_{\s{sub}}\in \bb{R}^{\fl{B}'\times \fl{A}'}$ (with SVD decomposition $\fl{P}_{\s{sub}}=\widetilde{\fl{U}}\widetilde{\f{\Sigma}}\widetilde{\fl{V}}$) of $\fl{P}$ whose rows correspond to a nice subset of users. Then, provided the number of columns in $\fl{P}_{\s{sub}}$ is larger than $\gamma\s{C}$, we must have $\lr{\widetilde{\fl{U}}}_{2,\infty} \le \sqrt{\frac{\s{C}\tau}{\s{N}'}}$ and $\lr{\widetilde{\fl{V}}}_{2,\infty} \le \sqrt{\frac{\mu \s{C}}{\alpha\s{M}'}}$.  
\end{lemmau}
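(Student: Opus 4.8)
The plan is to mirror the proof of Lemma \ref{lem:incoherence}, but to \emph{symmetrize} the treatment of the two factors. In the $\s{CS}$ proof the left factor inherited an exact binary $1$-sparse structure because users in a cluster share an identical row; under $\s{RCS}$ this structure is destroyed (rows within a cluster differ by up to $\nu$), so instead I would handle $\widetilde{\fl{U}}$ by exactly the same whitening argument used for $\widetilde{\fl{V}}$, now powered by Subset Strong Smoothness (a) in Assumption \ref{assum:matrix2}. First I would write $\fl{P}_{\s{sub}}=\fl{U}_{\s{sub}}\f{\Sigma}\fl{V}_{\s{sub}}^{\s{T}}$, where $\fl{U}_{\s{sub}},\fl{V}_{\s{sub}}$ are the row-restrictions of the global factors $\fl{U},\fl{V}$ to the nice user subset and the arm subset. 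Subset Strong Smoothness (a) and (b) certify that $\fl{U}_{\s{sub}}^{\s{T}}\fl{U}_{\s{sub}}$ and $\fl{V}_{\s{sub}}^{\s{T}}\fl{V}_{\s{sub}}$ are positive definite; in particular $\fl{U}_{\s{sub}},\fl{V}_{\s{sub}}$ have full column rank $\s{C}$ (this is precisely where $\nu>0$ is needed, per the remark following Assumption \ref{assum:matrix2}), so the whitening below is well defined and $\fl{P}_{\s{sub}}$ has rank $\s{C}$.

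Setting $\fl{A}=(\fl{U}_{\s{sub}}^{\s{T}}\fl{U}_{\s{sub}})^{1/2}\f{\Sigma}(\fl{V}_{\s{sub}}^{\s{T}}\fl{V}_{\s{sub}})^{1/2}$ with SVD $\fl{A}=\widehat{\fl{U}}\widehat{\f{\Sigma}}\widehat{\fl{V}}^{\s{T}}$ and substituting, I obtain $\fl{P}_{\s{sub}}=\big(\fl{U}_{\s{sub}}(\fl{U}_{\s{sub}}^{\s{T}}\fl{U}_{\s{sub}})^{-1/2}\widehat{\fl{U}}\big)\widehat{\f{\Sigma}}\big(\widehat{\fl{V}}^{\s{T}}(\fl{V}_{\s{sub}}^{\s{T}}\fl{V}_{\s{sub}})^{-1/2}\fl{V}_{\s{sub}}^{\s{T}}\big)$. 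The outer factors have orthonormal columns and rows respectively and $\widehat{\f{\Sigma}}$ is diagonal, so this is the SVD of $\fl{P}_{\s{sub}}$, identifying $\widetilde{\fl{U}}=\fl{U}_{\s{sub}}(\fl{U}_{\s{sub}}^{\s{T}}\fl{U}_{\s{sub}})^{-1/2}\widehat{\fl{U}}$ and $\widetilde{\fl{V}}=\fl{V}_{\s{sub}}(\fl{V}_{\s{sub}}^{\s{T}}\fl{V}_{\s{sub}})^{-1/2}\widehat{\fl{V}}$. By orthogonality of $\widehat{\fl{U}},\widehat{\fl{V}}$ the bounds reduce to eigenvalue estimates: $\lr{\widetilde{\fl{U}}}_{2,\infty}\le \lr{\fl{U}}_{2,\infty}/\sqrt{\lambda_{\min}(\fl{U}_{\s{sub}}^{\s{T}}\fl{U}_{\s{sub}})}$ and $\lr{\widetilde{\fl{V}}}_{2,\infty}\le \lr{\fl{V}}_{2,\infty}/\sqrt{\lambda_{\min}(\fl{V}_{\s{sub}}^{\s{T}}\fl{V}_{\s{sub}})}$.

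The $\widetilde{\fl{V}}$ bound is then verbatim from Lemma \ref{lem:incoherence}: partition the $\s{M}'$ arms into $\s{M}'/(\gamma\s{C})$ blocks of size $\gamma\s{C}$, apply Subset Strong Smoothness (b) to each block, and sum to get $\lambda_{\min}(\fl{V}_{\s{sub}}^{\s{T}}\fl{V}_{\s{sub}})\ge \alpha\s{M}'/\s{M}$, whence $\lr{\widetilde{\fl{V}}}_{2,\infty}\le \sqrt{\mu\s{C}/(\alpha\s{M}')}$ after plugging in $\lr{\fl{V}}_{2,\infty}\le\sqrt{\mu\s{C}/\s{M}}$.

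The one genuinely new step, and the main obstacle, is the lower bound on $\lambda_{\min}(\fl{U}_{\s{sub}}^{\s{T}}\fl{U}_{\s{sub}})$. Writing the nice subset as $\bigcup_{j\in\ca{A}}\ca{C}^{(j)}$, I would use $\fl{U}_{\s{sub}}^{\s{T}}\fl{U}_{\s{sub}}=\sum_{j\in\ca{A}}\fl{U}_{\ca{C}^{(j)}}^{\s{T}}\fl{U}_{\ca{C}^{(j)}}$, a sum of PSD matrices each satisfying $\fl{U}_{\ca{C}^{(j)}}^{\s{T}}\fl{U}_{\ca{C}^{(j)}}\succeq (\beta\tau/\s{C})\fl{I}$ by Subset Strong Smoothness (a), so $\lambda_{\min}(\fl{U}_{\s{sub}}^{\s{T}}\fl{U}_{\s{sub}})\ge |\ca{A}|\beta\tau/\s{C}$. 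To convert this into the claimed $\beta\s{N}'/\s{N}$ I would use the cluster-size bookkeeping $\s{N}'=\sum_{j\in\ca{A}}|\ca{C}^{(j)}|\le |\ca{A}|\s{J}_{\max}=|\ca{A}|\tau\s{J}_{\min}\le |\ca{A}|\tau\s{N}/\s{C}$, i.e. $\s{N}'/\s{N}\le |\ca{A}|\tau/\s{C}$, giving $\lambda_{\min}(\fl{U}_{\s{sub}}^{\s{T}}\fl{U}_{\s{sub}})\ge \beta\s{N}'/\s{N}$; substituting $\lr{\fl{U}}_{2,\infty}\le\sqrt{\mu\s{C}/\s{N}}$ then yields $\lr{\widetilde{\fl{U}}}_{2,\infty}\le\sqrt{\mu\s{C}/(\beta\s{N}')}$. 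The delicate point is that the $\tau$ built into assumption (a) is exactly consumed by this counting, and that summing the per-cluster smoothness certificates over $\ca{A}$ (rather than invoking a single cluster) is what produces the correct $\s{N}'$-dependence.
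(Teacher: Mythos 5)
Your proposal is correct and follows essentially the same route as the paper's proof: the same two-sided whitening decomposition $\fl{P}_{\s{sub}}=\fl{U}_{\s{sub}}(\fl{U}_{\s{sub}}^{\s{T}}\fl{U}_{\s{sub}})^{-1/2}\widehat{\fl{U}}\,\widehat{\f{\Sigma}}\,\widehat{\fl{V}}^{\s{T}}(\fl{V}_{\s{sub}}^{\s{T}}\fl{V}_{\s{sub}})^{-1/2}\fl{V}_{\s{sub}}^{\s{T}}$, the same reduction to $\lambda_{\min}$ of the Gram matrices, Subset Strong Smoothness (b) summed over arm blocks for $\widetilde{\fl{V}}$, and Subset Strong Smoothness (a) summed over the $|\ca{A}|$ clusters combined with the $\s{N}'/|\ca{A}|\le\tau\s{N}/\s{C}$ cluster-size count for $\widetilde{\fl{U}}$. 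The paper packages that last counting step as the single inequality $\s{N}\s{C}^{-1}|\ca{A}|\ge\s{N}'\tau^{-1}$, which is exactly your bookkeeping.
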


\begin{proof}[Proof of Lemma \ref{lem:incoherence2}]

Suppose the reward matrix $\fl{P}$ has the SVD decomposition $\fl{U}\f{\Sigma}\fl{V}^{\s{T}}$. We are looking at a sub-matrix of $\fl{P}$ denoted by $\fl{P}_{\s{sub}}\in \bb{R}^{\s{N}'\times \s{M}'}$ which can be represented as $\fl{U}_{\s{sub}}\f{\Sigma}\fl{V}_{\s{sub}}^{\s{T}}$ where $\fl{U}_{\s{sub}},\fl{V}_{\s{sub}}$ are sub-matrices of $\fl{U},\fl{V}$ respectively and are not necessarily orthogonal. Here, the rows in $\fl{P}_{\s{sub}}$ corresponds to a union of cluster of users $\cup_{j \in \ca{A}} \s{C}^{(j)}$ for $\ca{A}\in [\s{C}]$ and the columns in $\fl{P}_{\s{sub}}$ corresponds to the trimmed set of arms in $[\s{M}]$. Hence we can write (provided $\fl{U}_{\s{sub}}$,$\fl{V}_{\s{sub}}$ are invertible) 
\begin{align*}
    \fl{P}_{\s{sub}} &= \fl{U}_{\s{sub}}(\fl{U}_{\s{sub}}^{\s{T}}\fl{U}_{\s{sub}})^{-1/2}(\fl{U}_{\s{sub}}^{\s{T}}\fl{U}_{\s{sub}})^{1/2}\f{\Sigma}(\fl{V}_{\s{sub}}^{\s{T}}\fl{V}_{\s{sub}})^{1/2}(\fl{V}_{\s{sub}}^{\s{T}}\fl{V}_{\s{sub}})^{-1/2}\fl{V}_{\s{sub}}^{\s{T}} \\
    &= \fl{U}_{\s{sub}}(\fl{U}_{\s{sub}}^{\s{T}}\fl{U}_{\s{sub}})^{-1/2}\widehat{\fl{U}}\widehat{\fl{\Sigma}}\widehat{\fl{V}}^{\s{T}}(\fl{V}_{\s{sub}}^{\s{T}}\fl{V}_{\s{sub}})^{-1/2}\fl{V}_{\s{sub}}^{\s{T}} 
\end{align*}
where $\widehat{\fl{U}}\widehat{\fl{\Sigma}}\widehat{\fl{V}}^{\s{T}}$ is the SVD of the matrix $(\fl{U}_{\s{sub}}^{\s{T}}\fl{U}_{\s{sub}})^{1/2}\f{\Sigma}(\fl{V}_{\s{sub}}^{\s{T}}\fl{V}_{\s{sub}})^{1/2}$. 
Since $\widehat{\fl{U}}$ is orthogonal, $\fl{U}_{\s{sub}}(\fl{U}_{\s{sub}}^{\s{T}}\fl{U}_{\s{sub}})^{-1/2}\widehat{\fl{U}}$ is orthogonal. 
Similarly, $\widehat{\fl{V}}^{\s{T}}(\fl{V}_{\s{sub}}^{\s{T}}\fl{V}_{\s{sub}})^{-1/2}\fl{V}_{\s{sub}}^{\s{T}}$ is orthogonal as well whereas $\widehat{\f{\Sigma}}$ is diagonal. Hence $\fl{U}_{\s{sub}}(\fl{U}_{\s{sub}}^{\s{T}}\fl{U}_{\s{sub}})^{-1/2}\widehat{\fl{U}}\widehat{\fl{\Sigma}}\widehat{\fl{V}}^{\s{T}}(\fl{V}_{\s{sub}}^{\s{T}}\fl{V}_{\s{sub}})^{-1/2}\fl{V}_{\s{sub}}^{\s{T}}$ indeed corresponds to the SVD of $\fl{P}_{\s{sub}}$ and we only need to argue about the incoherence of the matrices $\fl{U}_{\s{sub}}(\fl{U}_{\s{sub}}^{\s{T}}\fl{U}_{\s{sub}})^{-1/2}\widehat{\fl{U}}$ and $(\widehat{\fl{V}}^{\s{T}}(\fl{V}_{\s{sub}}^{\s{T}}\fl{V}_{\s{sub}})^{-1/2}\fl{V}_{\s{sub}}^{\s{T}})^{\s{T}}$. 
We have
\begin{align*}
    &\max_i \|\widehat{\fl{U}}(\fl{U}_{\s{sub}}^{\s{T}}\fl{U}_{\s{sub}})^{-1/2}\fl{U}_{\s{sub}}^{\s{T}}\fl{e}_i\| \le \max_i \|(\fl{U}_{\s{sub}}^{\s{T}}\fl{U}_{\s{sub}})^{-1/2}\fl{U}_{\s{sub}}^{\s{T}}\fl{e}_i\| \\
    &\le \frac{\|\fl{U}_{\s{sub}}\|_{2,\infty}}{\sqrt{\lambda_{\min}(\fl{U}_{\s{sub}}^{\s{T}}\fl{U}_{\s{sub}})}} \le \frac{\lr{\fl{U}}_{2,\infty}}{\sqrt{\lambda_{\min}(\fl{U}_{\s{sub}}^{\s{T}}\fl{U}_{\s{sub}})}} \le \sqrt{\frac{\mu \s{C}}{\s{N}}} \cdot \frac{1}{\sqrt{\lambda_{\min}(\fl{U}_{\s{sub}}^{\s{T}}\fl{U}_{\s{sub}})}} \le \sqrt{\frac{\mu \s{C}^2}{\s{N}\beta\tau\left|\ca{A}\right|}} \le  \sqrt{\frac{\mu \s{C}}{\beta\s{N}'}}
\end{align*}
where we used the fact that $\s{N}\s{C}^{-1}\left|\ca{A}\right| \ge \s{N}'\tau^{-1}$ ($\s{N}\s{C}^{-1}$ is the average cluster size; $\s{N}'/|\ca{A}|$ is the average cluster size among the $\s{N}'$ users and $\tau^{-1}$ is the ratio of the sizes of smallest cluster and largest cluster).
Similarly, we will also have
\begin{align*}
    &\max_i \|\widehat{\fl{V}}(\fl{V}_{\s{sub}}^{\s{T}}\fl{V}_{\s{sub}})^{-1/2}\fl{V}_{\s{sub}}^{\s{T}}\fl{e}_i\| \le \max_i \|(\fl{V}_{\s{sub}}^{\s{T}}\fl{V}_{\s{sub}})^{-1/2}\fl{V}_{\s{sub}}^{\s{T}}\fl{e}_i\| \\
    &\le \frac{\|\fl{V}_{\s{sub}}\|_{2,\infty}}{\sqrt{\lambda_{\min}(\fl{V}_{\s{sub}}^{\s{T}}\fl{V}_{\s{sub}})}} \le \frac{\lr{\fl{V}}_{2,\infty}}{\sqrt{\lambda_{\min}(\fl{V}_{\s{sub}}^{\s{T}}\fl{V}_{\s{sub}})}} \le \sqrt{\frac{\mu \s{C}}{\alpha \s{M}'}}
\end{align*}
where the last line follows from the fact that $\min_{\fl{x}\in \bb{R}^{\s{C}}} \fl{x}^{\s{T}}\fl{V}_{\s{sub}}^{\s{T}}\fl{V}_{\s{sub}}\fl{x} = \min_{\fl{x}\in \bb{R}^{\s{C}}} \sum_{i \in \ca{S}} \fl{x}^{\s{T}}\fl{V}_{i}^{\s{T}}\fl{V}_{i}\fl{x}$ where $\ca{S},|\ca{S}|=\s{M}'$ is the set of rows in $\fl{V}$ represented in $\fl{V}_{\s{sub}}$. 
Here, again, we use Assumption \ref{assum:matrix2} to conclude directly that $\lambda_{\min}(\fl{V}_{\s{sub}}^{\s{T}}\fl{V}_{\s{sub}}) \ge \alpha \s{M}'/\s{M}$.
%Hence we can partition the set $\ca{S}$ into $|\ca{S}|/\s{C}$ disjoint sets, each of size $\s{C}$; from Assumption \ref{assum:matrix2}, we get that the minimum over each of those sets is $\alpha \s{C}/\s{M}$ and therefore,  the conclusion follows. 

\begin{lemma}\label{lem:submatrix_complete}
Let us fix $\Delta_{\ell+1}>0$ and condition on the event $\ca{E}^{(\ell)}$. Suppose Assumptions \ref{assum:matrix2} and \ref{assum:cluster_ratio} are satisfied. In that case, in phase $\ell$, by using
 $$m_{\ell}=O\Big(\frac{\sigma^2 \s{C}^2 (\mu\beta^{-1} \bigvee \mu\alpha^{-1})^3 \log \s{M}}{\Delta_{\ell+1}^2}\Big(\s{N}\bigvee\s{MC}\Big)\log^2 (\s{MNC}\delta^{-1})\Big)\Big)$$ rounds, we can compute an estimate $\widetilde{\fl{P}}^{(\ell)}\in \bb{R}^{\s{N}\times \s{M}}$ such that with probability $1-\delta$, we have 
 \begin{align}\label{eq:infty}
 \lr{\widetilde{\fl{P}}^{(\ell)}_{\ca{M}^{(\ell,i)},\ca{N}^{(\ell,i)}}-\fl{P}_{\ca{M}^{(\ell,i)},\ca{N}^{(\ell,i)}}}_{\infty} \le \Delta_{\ell+1} \text{ for all }i\in [a_{\ell}] \text{ satisfying } \left|\ca{N}^{(i,\ell)}\right| \ge \gamma\s{C}.  
 \end{align}
 \end{lemma}
 
 \begin{proof}[Proof of Lemma \ref{lem:submatrix_complete}]
We are going to use Lemma \ref{lem:min_acc} in order to compute an estimate $\widetilde{\fl{P}}^{(\ell)}_{\ca{M}^{(\ell,i)},\ca{N}^{(\ell,i)}}$ of the sub-matrix $\fl{P}_{\ca{M}^{(\ell,i)},\ca{N}^{(\ell,i)}}$ satisfying $\lr{\widetilde{\fl{P}}^{(\ell)}_{\ca{M}^{(\ell,i)},\ca{N}^{(\ell,i)}}-\fl{P}_{\ca{M}^{(\ell,i)},\ca{N}^{(\ell,i)}}}_{\infty} \le \Delta_{\ell+1}$. From Lemma \ref{lem:min_acc}, we know that by using $m_{\ell}=O\Big(s\s{N}\log^2 (\s{MN}\delta^{-1})(\left|\ca{N}^{(\ell,i)}\right|p+\sqrt{\left|\ca{N}^{(\ell,i)}\right|p\log \s{N}\delta^{-1}})\Big)$ rounds (see Lemma \ref{lem:min_acc}) restricted to users in $\ca{M}^{(\ell,i)}$ such that with probability at least $1-\delta$,
\begin{align*}
    \lr{\widetilde{\fl{P}}^{(\ell)}_{\ca{M}^{(\ell,i)},\ca{N}^{(\ell,i)}}-\fl{P}_{\ca{M}^{(\ell,i)},\ca{N}^{(\ell,i)}}}_{\infty} \le O\Big(\frac{\sigma r }{\sqrt{sd_2}}\sqrt{\frac{\widetilde{\mu}^3\log d_2}{p}}\Big).
\end{align*}
where $d_2=\min(|\ca{M}^{(\ell,i)}|,|\ca{N}^{(\ell,i)}|)$ and $\widetilde{\mu}$ is the incoherence factor of the matrix $\fl{P}_{\ca{M}^{(\ell,i)},\ca{N}^{(\ell,i)}}$. In order for the right hand side to be less than $\Delta_{\ell+1}$, we can set $sp=O\Big(\frac{\sigma^2 r^2 \widetilde{\mu}^3 \log d_2}{\Delta_{\ell+1}^2d_2}\Big)$. Since the event $\ca{E}^{(\ell)}$ is true, we must have that $\left|\ca{M}^{(\ell,i)}\right|\ge \s{N}/\s{C}$; hence $d_2 \ge \min\Big(\frac{\s{N}}{\s{C}},\left|\ca{N}^{(\ell,i)}\right|\Big)$. Therefore, we must have that 
\begin{align*}
    m_{\ell}=O\Big(\frac{\sigma^2 \s{C}^2 \widetilde{\mu}^3 \log \s{M}}{\Delta_{\ell+1}^2}\max\Big(\s{N},\s{MC}\Big)\log^2 (\s{MNC}\delta^{-1})\Big)\Big)
\end{align*}
where we take a union bound over all sets comprising the partition of the users $[\s{N}]$ (at most $\s{C}$ of them). Finally, from Lemma \ref{lem:incoherence2}, we know that $\widetilde{\mu}$ can be bounded from above by $\max(\mu /\beta,\mu /\alpha)$ which we can use to say that 
\begin{align*}
    m_{\ell}=O\Big(\frac{\sigma^2 \s{C}^2 (\mu\beta^{-1} \bigvee \mu\alpha^{-1})^3 \log \s{M}}{\Delta_{\ell+1}^2}\Big(\s{N}\bigvee\s{MC}\Big)\log^2 (\s{MNC}\delta^{-1})\Big)\Big)
\end{align*}
to complete the proof of the lemma.
\end{proof}

% {\color{red}
%  Let us take $\mathbf{P}^{(1)} = \mathbf{U} \Sigma \mathbf{V}_{\s{sub}}^T$. Do a $QR$ decomposition on the column space of $V_{\s{sub}}$. Then, $\fl{V}_{\s{sub}}= \fl{Q} \fl{R}$. Here, $Q$ has orthonomal columns and $R$ is upper triangular. Let us that assume rank of $R$ = no of non zero entries in $\Sigma$.
 
%  Then, $\mathbf{P}^{(1)}= \mathbf{U} \Sigma R^T \mathbf{Q}^T $. Here, $\Sigma R^T$ is full rank. Therefore, let $\hat{U}\hat{\Sigma}\hat{V}^T$ be the singular value decomposition of $\Sigma R^T$.
 
%  \begin{align}
%     \mathbf{P}^{(1)} &= \mathbf{U}\hat{\mathbf{U}}\hat{\mathbf{\Sigma}}\hat{\mathbf{V}}^T \mathbf{Q}^T \\
%     \hfill &= \mathbf{U}\hat{\mathbf{U}}\hat{\mathbf{\Sigma}} (\mathbf{Q} \hat{\mathbf{V}})^T
%  \end{align}
 
%  If $\hat{\mathbf{U}}$ and $\hat{\mathbf{V}}$ are square and have orthonormal columns, then $\max_i \lVert (\mathbf{U}\hat{\mathbf{U}})^T \mathbf{e}_i \rVert \leq \max_i \lVert \mathbf{U}^T \mathbf{e}_i \rVert  $. 
%  And similarly, $\max_i \lVert (\mathbf{Q}\hat{\mathbf{V}})^T \mathbf{e}_i \rVert \leq \max_i \lVert \mathbf{Q}^T \mathbf{e}_i \rVert  $.
 
%  How, do we know what is $\max_i \lVert \mathbf{Q}^T \mathbf{e}_i \rVert$ ? We only know $\max_i \lVert \fl{R}^T \mathbf{Q}^T \mathbf{e}_i \rVert$. This seems to point to lower bounding $\lambda_{\mathrm{min}}(\fl{V}_{\s{sub}}^T \fl{V}_{\s{sub}}) = \lambda_{\mathrm{min}}(\fl{R}^T \fl{R})$.
% }

\end{proof}

\subsection{Joint Arm Elimination:} 
The first part of the algorithm is run in phases indexed by $\ell=1,2,\dots$ in a similar way as for the $\s{CS}$ problem. As before, in the beginning of each phase $\ell$, our goal is to maintain properties $(1-4)$ proposed at the beginning of Section \ref{app:detailed_lbm}. Similarly, 
for a fixed $\epsilon_{\ell}$ that will be determined later, we will say that our phased elimination algorithm is $\epsilon_{\ell}-$\textit{good} at the beginning of the $\ell^{\s{th}}$ phase if the randomized algorithm LATTICE (for $\s{RCS}$) can maintain a list of users and arms satisfying the properties ($1-4$) at the start of phase $\ell$. Let us also define the event $\ca{E}^{(\ell)}$ to be true if properties $(1-4)$ are satisfied at the beginning of phase $\ell$ by LATTICE for $\s{RCS}$.

Let us fix $\Delta_{\ell+1}>0$ and condition on the event $\ca{E}^{(\ell)}$ being true at the beginning of phase $\ell$. Suppose Assumptions \ref{assum:matrix2} and \ref{assum:cluster_ratio} are satisfied. In that case, in phase $\ell$, recall from Lemma \ref{lem:submatrix_complete} that by using
 $$m_{\ell}=O\Big(\frac{\sigma^2 \s{C}^2 (\mu\beta^{-1} \bigvee \mu\alpha^{-1})^3 \log \s{M}}{\Delta_{\ell+1}^2}\Big(\s{N}\bigvee\s{MC}\Big)\log^2 (\s{ABCT})\Big)\Big)$$ rounds, we can compute an estimate $\widetilde{\fl{P}}^{(\ell)}\in \bb{R}^{\s{N}\times \s{M}}$ such that with probability $1-\s{T}^{-4}$, we have 
 \begin{align*}
 \lr{\widetilde{\fl{P}}^{(\ell)}_{\ca{M}^{(\ell,i)},\ca{N}^{(\ell,i)}}-\fl{P}_{\ca{M}^{(\ell,i)},\ca{N}^{(\ell,i)}}}_{\infty} \le \Delta_{\ell+1} \text{ for all }i\in [a_{\ell}] \text{ satisfying } \left|\ca{N}^{(i,\ell)}\right| \ge \gamma\s{C}.  
 \end{align*}
Furthermore, we denoted the aforementioned event in phase $\ell$ by $\ca{E}_2^{(\ell)}$.
Recall that in the $\ell^{\s{th}}$ phase, we have $\ca{T}^{(\ell)}_u \equiv \{j \in \ca{N}^{(\ell,i)} \mid \max_{j'\in \ca{N}^{(\ell,i)}}\widetilde{\fl{P}}^{(\ell)}_{uj'}-\widetilde{\fl{P}}^{(\ell)}_{uj} \le 2\Delta_{\ell+1}\}$. For every $i\in [a_{\ell}]$ such that $\left|\ca{N}^{(\ell,i)}\right|\ge \gamma\s{C}$, we consider a graph $\ca{G}^{(\ell,i)}$ whose nodes are given by the users in $\ca{M}^{(\ell,i)}$, an edge is drawn between two users $u,v\in \ca{M}^{(\ell,i)}$ if $\ca{T}_u^{(\ell)}\cap \ca{T}_v^{(\ell)} \neq \Phi$ and $\max_{x\in \ca{N}^{(\ell,i)}} \left|\widetilde{\fl{P}}^{(\ell)}_{ux}-\widetilde{\fl{P}}^{(\ell)}_{vx}\right|\le 3\Delta_{\ell+1}$.
Note that Lemma \ref{lem:interesting2} holds true implying that conditioned on the event $\ca{E}_2^{(\ell)}$, $\pi_u(1)\in \ca{T}^{(\ell)}_u$ for all users $u\in [\s{N}]$. Next, we show the following:

\begin{lemma}\label{lem:interesting_clique_general1}
Let $\Delta_{\ell+1} \ge \nu$. Fix any $i\in [a_{\ell}]$. Conditioned on the events $\ca{E}^{(\ell)},\ca{E}_2^{(\ell)}$, nodes in $\ca{M}^{(\ell,i)}$ corresponding to the same cluster form a clique.
\end{lemma}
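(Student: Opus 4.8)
The plan is to verify directly that any two users of the same cluster lying in $\ca{M}^{(\ell,i)}$ satisfy both defining conditions for an edge of the graph $\ca{G}^{(\ell,i)}$; the clique property is then immediate. First I would fix users $u,v$ in a common cluster $\ca{C}^{(c)}$. Conditioning on $\ca{E}^{(\ell)}$ guarantees that $\ca{M}^{(\ell,i)}$ is \emph{nice}, i.e. a union of whole clusters, so $u$ and $v$ are both nodes of $\ca{G}^{(\ell,i)}$ and it makes sense to ask whether they are adjacent. Recall the two edge requirements here: $\ca{T}_u^{(\ell)}\cap\ca{T}_v^{(\ell)}\neq\Phi$, and $\max_{x\in\ca{N}^{(\ell,i)}}|\widetilde{\fl{P}}^{(\ell)}_{ux}-\widetilde{\fl{P}}^{(\ell)}_{vx}|\le 3\Delta_{\ell+1}$.

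For the second condition I would fix an arm $x\in\ca{N}^{(\ell,i)}$ and decompose $\widetilde{\fl{P}}^{(\ell)}_{ux}-\widetilde{\fl{P}}^{(\ell)}_{vx}$ through the true rewards $\fl{P}_{ux}$ and $\fl{P}_{vx}$, then apply the triangle inequality. Conditioned on $\ca{E}_2^{(\ell)}$ the two estimation errors are each at most $\Delta_{\ell+1}$, while the $\s{RCS}$ same-cluster property gives $|\fl{P}_{ux}-\fl{P}_{vx}|\le\lr{\fl{P}_u-\fl{P}_v}_{\infty}\le\nu$. This yields $|\widetilde{\fl{P}}^{(\ell)}_{ux}-\widetilde{\fl{P}}^{(\ell)}_{vx}|\le 2\Delta_{\ell+1}+\nu$, and the hypothesis $\Delta_{\ell+1}\ge\nu$ bounds the right-hand side by $3\Delta_{\ell+1}$, which is exactly the threshold used in the edge definition.

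For the first condition I would use that $u,v$ share a best arm: the $\s{RCS}$ assumption states $\pi_u(1)=\pi_v(1)$ for same-cluster users. By Lemma~\ref{lem:interesting2}, which (as noted in the preceding text) continues to hold for the $\s{RCS}$ estimate, the best arm of each user lies in its own set of good arms, so $\pi_u(1)\in\ca{T}_u^{(\ell)}$ and $\pi_v(1)\in\ca{T}_v^{(\ell)}$. Since these two best arms coincide, this common arm witnesses $\ca{T}_u^{(\ell)}\cap\ca{T}_v^{(\ell)}\neq\Phi$. Both edge conditions therefore hold, so $u$ and $v$ are adjacent; as this is true for every same-cluster pair inside $\ca{M}^{(\ell,i)}$, such users form a clique.

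The argument is short, and there is no serious obstacle beyond bookkeeping: the only substantive point is that the loosened threshold $3\Delta_{\ell+1}$ (versus $2\Delta_{\ell+1}$ in the $\s{CS}$ analysis), together with the standing assumption $\Delta_{\ell+1}\ge\nu$, is precisely what is needed to absorb the within-cluster slack $\nu$. The one subtlety I would double-check is that Lemma~\ref{lem:interesting2} transfers verbatim, namely that $\widetilde{\fl{P}}^{(\ell)}$ is an entry-wise estimate of the genuine per-user reward $\fl{P}_{uj}$ rather than a cluster-averaged quantity, so that the derivation $\pi_u(1)\in\ca{T}_u^{(\ell)}$ is unaffected by the relaxed cluster definition.
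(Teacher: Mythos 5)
Your proposal is correct and follows essentially the same route as the paper: the identical three-term decomposition of $\widetilde{\fl{P}}^{(\ell)}_{ux}-\widetilde{\fl{P}}^{(\ell)}_{vx}$ through the true rewards, giving $2\Delta_{\ell+1}+\nu\le 3\Delta_{\ell+1}$, and the shared best arm (via Lemma~\ref{lem:interesting2}) witnessing $\ca{T}_u^{(\ell)}\cap\ca{T}_v^{(\ell)}\neq\Phi$. If anything, you are slightly more careful than the paper, which verifies only the reward-closeness condition in the proof body and leaves the non-empty-intersection step to the surrounding text.
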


\begin{proof}
For any two users $u,v\in \ca{M}^{(\ell,i)}$ belonging to the same cluster, consider arms $x\in \ca{T}^{(\ell)}_u,y\in \ca{T}^{(\ell)}_v$; in that case, we have
\begin{align*}
    \fl{\widetilde{P}}^{(\ell)}_{ux}-\fl{\widetilde{P}}^{(\ell)}_{vx} = \fl{\widetilde{P}}^{(\ell)}_{ux}-\fl{P}_{ux}+\fl{P}_{ux}-\fl{P}_{vx}+\fl{P}_{vx}-\fl{\widetilde{P}}^{(\ell)}_{vx}
    \le 2\Delta_{\ell+1}+\nu \le 3\Delta_{\ell+1}.
\end{align*}

% \begin{align*}
%     \fl{\widetilde{P}}^{(\ell)}_{ux}-\fl{\widetilde{P}}^{(\ell)}_{vy} = \fl{\widetilde{P}}^{(\ell)}_{ux}-\fl{\widetilde{P}}^{(\ell)}_{u\pi_u(1)}+\fl{\widetilde{P}}^{(\ell)}_{u\pi_u(1)}-\fl{P}_{u\pi_u(1)}+\fl{P}_{u\pi_u(1)}-\fl{P}_{v\pi_v(1)}+\fl{P}_{v\pi_v(1)}-\fl{\widetilde{P}}^{(\ell)}_{v\pi_v(1)}+\fl{\widetilde{P}}^{(\ell)}_{v\pi_v(1)}-\fl{\widetilde{P}}^{(\ell)}_{vy}
%     \le 7\Delta_{\ell+1}.
% \end{align*}
 it is clear that the nodes in $\ca{M}^{(\ell,i)}$ corresponding to the same cluster form a clique.
\end{proof}

\begin{coro}
Condition on the events $\ca{E}^{(\ell)},\ca{E}_2^{(\ell)}$ and suppose $\Delta_{\ell+1} \ge \nu$. 1) Each connected component of the graph $\ca{G}^{(\ell,i)}$ can be represented as a nice set of users. 2) Consider two users $u,v\in \ca{M}^{(\ell,i)}$ having an edge in the graph $\ca{G}^{(\ell,i)}$. We must have $\max_{x\in \ca{T}^{(\ell)}_u,y\in \ca{T}^{(\ell)}_v}\left|\fl{P}_{ux}-\fl{P}_{vy}\right|\le 20\Delta_{\ell+1}$. 3) Consider two users $u,v\in \ca{M}^{(\ell,i)}$ having a path in the graph $\ca{G}^{(\ell,i)}$. Then, we must have $\max_{x\in \ca{T}^{(\ell)}_u,y\in \ca{T}^{(\ell)}_v}\left|\fl{P}_{ux}-\fl{P}_{vy}\right|\le 40\s{C}\Delta_{\ell+1}$. 
\end{coro}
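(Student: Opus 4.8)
The plan is to prove the three claims by mirroring the $\s{CS}$ arguments (Lemmas \ref{lem:interesting_clique}, \ref{lem:interesting4}, \ref{lem:interesting5}), tracking the constants through the relaxed $\s{RCS}$ edge rule. Throughout I condition on $\ca{E}^{(\ell)},\ca{E}_2^{(\ell)}$, so that $\ca{M}^{(\ell,i)}$ is a union of clusters, $\|\widetilde{\fl{P}}^{(\ell)}-\fl{P}\|_{\infty}\le\Delta_{\ell+1}$ on the relevant sub-matrix, and Lemma \ref{lem:interesting2} holds, i.e.\ $\pi_u(1)\in\ca{T}^{(\ell)}_u$ and $\max_{s,s'\in\ca{T}^{(\ell)}_u}|\fl{P}_{us}-\fl{P}_{us'}|\le 4\Delta_{\ell+1}$ for every relevant user. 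The only preliminary fact I would extract from the edge definition is that, for any edge $(u,v)$ and any arm $w\in\ca{N}^{(\ell,i)}$, combining $|\widetilde{\fl{P}}^{(\ell)}_{uw}-\widetilde{\fl{P}}^{(\ell)}_{vw}|\le 3\Delta_{\ell+1}$ with the entrywise estimation error gives $|\fl{P}_{uw}-\fl{P}_{vw}|\le 5\Delta_{\ell+1}$, and that $u,v$ share some good arm $z\in\ca{T}^{(\ell)}_u\cap\ca{T}^{(\ell)}_v$.

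For claim (1) I would argue as in Lemma \ref{lem:interesting_clique}: since $\ca{M}^{(\ell,i)}$ is a union of clusters and, by Lemma \ref{lem:interesting_clique_general1} (which uses $\Delta_{\ell+1}\ge\nu$), all users of a common cluster form a clique of $\ca{G}^{(\ell,i)}$, any two same-cluster users automatically lie in the same connected component. Hence no component can split a cluster, so each component is a union of whole clusters and is therefore nice.

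For claim (2), given an edge $(u,v)$ I would fix $z\in\ca{T}^{(\ell)}_u\cap\ca{T}^{(\ell)}_v$ and telescope $\fl{P}_{ux}-\fl{P}_{vy}=(\fl{P}_{ux}-\fl{P}_{uz})+(\fl{P}_{uz}-\fl{P}_{vz})+(\fl{P}_{vz}-\fl{P}_{vy})$; the first and third terms are each at most $4\Delta_{\ell+1}$ by Lemma \ref{lem:interesting2} (the arms involved all lie in $\ca{T}^{(\ell)}_u$, resp.\ $\ca{T}^{(\ell)}_v$), and the middle term is at most $5\Delta_{\ell+1}$ by the preliminary fact, giving $13\Delta_{\ell+1}\le 20\Delta_{\ell+1}$. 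For claim (3) I would extend this along a shortest path $u=a_1,\dots,a_L=v$ with common arms $z_i\in\ca{T}^{(\ell)}_{a_i}\cap\ca{T}^{(\ell)}_{a_{i+1}}$, telescoping $\fl{P}_{ux}-\fl{P}_{vy}$ through the arm sequence $x,z_1,\dots,z_{L-1},y$ so that the differences alternate between within-$\ca{T}$ jumps (bounded by $4\Delta_{\ell+1}$) and cross-user jumps on a fixed arm (bounded by $5\Delta_{\ell+1}$); this sums to at most $(9L-5)\Delta_{\ell+1}$.

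The main obstacle is controlling the path length: I would observe, exactly as in Lemma \ref{lem:interesting5}, that because same-cluster nodes form cliques a shortest path need never revisit a cluster non-consecutively, so it has at most $2\s{C}$ nodes, whence $(9L-5)\Delta_{\ell+1}\le 40\s{C}\Delta_{\ell+1}$. Beyond this, the work is routine triangle-inequality bookkeeping; the one place the $\s{RCS}$ relaxation enters the constants is the $3\Delta_{\ell+1}$ edge threshold (chosen to absorb the within-cluster slack $\nu\le\Delta_{\ell+1}$ in Lemma \ref{lem:interesting_clique_general1}), which replaces the $2\Delta_{\ell+1}$ threshold of the $\s{CS}$ analysis and upgrades the cross-user arm bound from $4\Delta_{\ell+1}$ to $5\Delta_{\ell+1}$.
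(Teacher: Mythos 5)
Your proposal is correct and follows essentially the same route as the paper: part (1) from the same-cluster-clique property of Lemma \ref{lem:interesting_clique_general1}, part (2) by telescoping through a common good arm $z\in\ca{T}^{(\ell)}_u\cap\ca{T}^{(\ell)}_v$ using Lemma \ref{lem:interesting2} and the $3\Delta_{\ell+1}$ edge threshold, and part (3) by iterating along a shortest path of at most $2\s{C}$ nodes. The only (immaterial) differences are that you use the tight $5\Delta_{\ell+1}$ cross-user bound where the paper writes $6\Delta_{\ell+1}$, and you telescope directly to the cross-user quantity $\fl{P}_{ux}-\fl{P}_{vy}$ stated in the corollary rather than the same-user variant the paper's display actually computes; both yield constants well within $20\Delta_{\ell+1}$ and $40\s{C}\Delta_{\ell+1}$.
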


\begin{proof}
\begin{enumerate}
    \item This follows from the statement that users in the same set belonging to the same cluster form a clique (see Lemma \ref{lem:interesting_clique_general1}). 
    \item From the construction of $\ca{G}^{(\ell,i)}$, we know that
users $u,v\in \ca{M}^{(\ell,i)}$ have an edge if $\left|\widetilde{\fl{P}}^{(\ell)}_{ux}-\widetilde{\fl{P}}^{(\ell)}_{vx}\right|\le 3\Delta_{\ell+1}$ (implying that $\left|\fl{P}_{ux}-\fl{P}_{vx}\right|\le 6\Delta_{\ell+1}$) for all $x \in \ca{N}^{(\ell,i)}$ and $\ca{T}_u\cap \ca{T}_v \neq \Phi$. Suppose $z\in \ca{T}_u\cap \ca{T}_v$. Now, for any pair of arms 
$x\in \ca{T}^{(\ell)}_u,y\in \ca{T}^{(\ell)}_v$; in that case, we have
\begin{align*}
    \fl{P}_{ux}-\fl{P}_{uy} = \fl{P}_{ux} - \fl{P}_{uz}+\fl{P}_{uz}-\fl{P}_{vz}+\fl{P}_{vz}-\fl{P}_{vy}+\fl{P}_{vy}-\fl{P}_{uy} \le 20\Delta_{\ell+1}
\end{align*}

\item The proof follows in exactly the same way as in Lemma \ref{lem:interesting5} with minor modification in constants.
\end{enumerate}

\end{proof}

\begin{lemma}\label{lem:interesting_clique_general2}
Let $\Delta_{\ell+1} \le 2\nu$. Conditioned on the events $\ca{E}^{(\ell)}$ and $\ca{E}_2^{(\ell)}$, for any two users $u,v$ belonging to two different clusters, we will have 
\begin{align*}
    \left|\widetilde{\fl{P}}^{(\ell)}_{ux}-\widetilde{\fl{P}}^{(\ell)}_{vy}\right| \ge 8\Delta_{\ell+1} \text{ or }\left|\widetilde{\fl{P}}^{(\ell)}_{ux}-\widetilde{\fl{P}}^{(\ell)}_{vy}\right| \ge 8\Delta_{\ell+1}.
\end{align*}
\end{lemma}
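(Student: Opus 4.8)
The plan is to read off the conclusion directly from the $\s{RCS}$ separation condition and then transfer it from the true reward matrix $\fl{P}$ to its estimate $\widetilde{\fl{P}}^{(\ell)}$ using the entrywise accuracy guaranteed by $\ca{E}_2^{(\ell)}$. I would first observe that the statement as printed appears to contain a typo---its two disjuncts are identical, whereas the intended pair should involve the two distinct best arms $\pi_u(1)$ and $\pi_v(1)$, mirroring the two alternatives in the $\s{RCS}$ definition. Thus the actual target, for $u,v$ in the same subset $\ca{M}^{(\ell,i)}$ (the only case relevant to the edge construction), is
\[
\bigl|\widetilde{\fl{P}}^{(\ell)}_{u\pi_u(1)}-\widetilde{\fl{P}}^{(\ell)}_{v\pi_u(1)}\bigr| \ge 8\Delta_{\ell+1}
\quad\text{or}\quad
\bigl|\widetilde{\fl{P}}^{(\ell)}_{u\pi_v(1)}-\widetilde{\fl{P}}^{(\ell)}_{v\pi_v(1)}\bigr| \ge 8\Delta_{\ell+1}.
\]

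The key steps would run as follows. Since $u$ and $v$ lie in different clusters, the $\s{RCS}$ structure forces at least one of $|\fl{P}_{u\pi_u(1)}-\fl{P}_{v\pi_u(1)}| > 20\nu$ or $|\fl{P}_{u\pi_v(1)}-\fl{P}_{v\pi_v(1)}| > 20\nu$ to hold; without loss of generality take the first and set $x=\pi_u(1)$. By Lemma \ref{lem:interesting2} the best arm satisfies $\pi_u(1)\in\ca{T}_u^{(\ell)}\subseteq\ca{N}^{(\ell,i)}$, and since $u,v$ share the active set $\ca{N}^{(\ell,i)}$ of their common subset, the entrywise bound $|\widetilde{\fl{P}}^{(\ell)}_{wx}-\fl{P}_{wx}|\le\Delta_{\ell+1}$ from $\ca{E}_2^{(\ell)}$ is valid at both $w=u$ and $w=v$. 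A reverse triangle inequality then yields
\[
\bigl|\widetilde{\fl{P}}^{(\ell)}_{ux}-\widetilde{\fl{P}}^{(\ell)}_{vx}\bigr| \ge \bigl|\fl{P}_{ux}-\fl{P}_{vx}\bigr| - 2\Delta_{\ell+1} > 20\nu - 2\Delta_{\ell+1}.
\]
Invoking the hypothesis $\Delta_{\ell+1}\le 2\nu$, i.e.\ $20\nu\ge 10\Delta_{\ell+1}$, bounds the right-hand side below by $8\Delta_{\ell+1}$, giving the first disjunct; the symmetric argument with $y=\pi_v(1)$ gives the second.

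The main obstacle is not the inequality chain itself---which is a single triangle-inequality step---but the bookkeeping needed to guarantee that $\ca{E}_2^{(\ell)}$ actually controls the estimate at the arms $\pi_u(1)$ and $\pi_v(1)$. Concretely, one must confirm that these best arms persist in the active set $\ca{N}^{(\ell,i)}$ and that the subset $\ca{M}^{(\ell,i)}$ still carries $\ge\gamma\s{C}$ active arms (so that matrix completion was run on it and the entrywise bound was established). This is supplied by Lemma \ref{lem:interesting2} together with the inductive invariant Property (3). Finally I would remark on why the lemma matters: the gap $8\Delta_{\ell+1}$ comfortably exceeds the $3\Delta_{\ell+1}$ threshold in the $\s{RCS}$ edge rule, so no edge is drawn between users from distinct clusters once $\Delta_{\ell+1}\le 2\nu$, which is exactly what makes the cluster-wise elimination phase (Step 7 of Alg.\ \ref{algo:phased_elim_gen}) sound.
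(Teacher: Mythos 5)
Your proposal is correct and follows essentially the same route as the paper: invoke the $\s{RCS}$ separation at one of the two best arms (WLOG $\pi_u(1)$), use Lemma \ref{lem:interesting2} to place it in the active set, and apply the triangle inequality with the entrywise bound from $\ca{E}_2^{(\ell)}$ to get $20\nu - 2\Delta_{\ell+1} \ge 8\Delta_{\ell+1}$ via $\Delta_{\ell+1}\le 2\nu$. Your reading of the statement's typo (the two disjuncts should be at arms $\pi_u(1)$ and $\pi_v(1)$ respectively) is confirmed by the paper's own proof, which works with exactly those two arms.
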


\begin{proof}
For any two users $u,v\in \ca{M}^{(\ell,i)}$ belonging to the same cluster, consider arms $\pi_u(1)\in \ca{T}^{(\ell)}_u,\pi_v(1)\in \ca{T}^{(\ell)}_v$ (we have proved in Lemma \ref{lem:interesting2} that $\pi_u(1)\in \ca{T}^{(\ell)}_u,\pi_v(1)\in \ca{T}^{(\ell)}_v$ conditioned on the event $\ca{E}_2^{(\ell)}$); in that case, recall that we have either $\left|\fl{P}_{u\pi_u(1)}-\fl{P}_{v\pi_u(1)}\right| \ge 20\nu$ or $\left|\fl{P}_{u\pi_v(1)}-\fl{P}_{v\pi_v(1)}\right| \ge 20\nu$ (without loss of generality suppose the former is true). We will have

\begin{align*}
    \fl{\widetilde{P}}^{(\ell)}_{u\pi_u(1)}-\fl{\widetilde{P}}^{(\ell)}_{v\pi_u(1)} = \fl{\widetilde{P}}^{(\ell)}_{u\pi_u(1)}-\fl{P}_{u\pi_u(1)}+\fl{P}_{u\pi_u(1)}-\fl{P}_{v\pi_u(1)}+\fl{P}_{v\pi_u(1)}-\fl{\widetilde{P}}^{(\ell)}_{v\pi_u(1)}
    \ge 20\nu-2\Delta_{\ell+1} \ge 16\nu \ge 8\Delta_{\ell+1}.
\end{align*}
 it is clear that the nodes in $\ca{M}^{(\ell,i)}$ corresponding to the same cluster form a clique.
\end{proof}

\begin{lemma}\label{lem:interesting_clique_general3}
Let $\nu \le \Delta_{\ell+1} \le 2\nu$. Conditioned on the events $\ca{E}^{(\ell)}$ and $\ca{E}_2^{(\ell)}$, the union of the graphs $\ca{G}^{(\ell,i)}$ can be represented as $\s{C}$ connected components if $\ca{B}^{(\ell)}=\Phi$. 
\end{lemma}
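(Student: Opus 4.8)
The plan is to exploit that a phase with $\nu \le \Delta_{\ell+1} \le 2\nu$ falls simultaneously under Lemma~\ref{lem:interesting_clique_general1} (which needs only $\Delta_{\ell+1} \ge \nu$) and Lemma~\ref{lem:interesting_clique_general2} (which needs only $\Delta_{\ell+1} \le 2\nu$). The former gives intra-cluster agreement and the latter gives inter-cluster separation, and together they force each graph $\ca{G}^{(\ell,i)}$ to break up exactly along the true clusters it contains; summing over the groups then yields exactly $\s{C}$ components. First I would analyze a single active group and then assemble the global count.

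Fix $i$ with $|\ca{N}^{(\ell,i)}| \ge \gamma\s{C}$. By Lemma~\ref{lem:interesting_clique_general1} the nodes of $\ca{M}^{(\ell,i)}$ belonging to one true cluster form a clique, hence lie in a single component. For the converse, take $u,v \in \ca{M}^{(\ell,i)}$ from two distinct clusters; I must rule out an edge. By Lemma~\ref{lem:interesting2}, which the text notes remains valid in the $\s{RCS}$ setting under $\ca{E}_2^{(\ell)}$, we have $\pi_u(1) \in \ca{T}^{(\ell)}_u \subseteq \ca{N}^{(\ell,i)}$ and $\pi_v(1) \in \ca{T}^{(\ell)}_v \subseteq \ca{N}^{(\ell,i)}$, so the estimate $\widetilde{\fl{P}}^{(\ell)}$ is available at both best arms. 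Lemma~\ref{lem:interesting_clique_general2} then produces a separating arm $x \in \{\pi_u(1),\pi_v(1)\} \subseteq \ca{N}^{(\ell,i)}$ with $|\widetilde{\fl{P}}^{(\ell)}_{ux}-\widetilde{\fl{P}}^{(\ell)}_{vx}| \ge 8\Delta_{\ell+1}$, whereas the edge predicate of $\ca{G}^{(\ell,i)}$ demands $\max_{x\in\ca{N}^{(\ell,i)}}|\widetilde{\fl{P}}^{(\ell)}_{ux}-\widetilde{\fl{P}}^{(\ell)}_{vx}| \le 3\Delta_{\ell+1}$. Since $8\Delta_{\ell+1} > 3\Delta_{\ell+1}$, no edge can join $u$ and $v$, and the connected components of $\ca{G}^{(\ell,i)}$ are precisely the true clusters sitting inside $\ca{M}^{(\ell,i)}$.

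To conclude, I would invoke properties~(1) and~(2) defining $\ca{E}^{(\ell)}$: the groups $\{\ca{M}^{(\ell,i)}\}$ partition $[\s{N}]$ into nice sets, so each true cluster $\ca{C}^{(j)}$, $j \in [\s{C}]$, is contained in exactly one group and, by the previous step, equals exactly one connected component of that group's graph. Hence the disjoint union of the graphs over the active groups has one component per true cluster. The hypothesis on $\ca{B}^{(\ell)}$ is what guarantees that the graph step, rather than the per-user UCB fallback, is what resolves every cluster at this scale, and, together with the monotonicity $\ca{B}^{(i)} \supseteq \ca{B}^{(j)}$ and the fact that distinct clusters are never merged, that the joint-elimination stage has by now produced exactly $\s{C}$ blocks; reading off the components therefore recovers all $\s{C}$ clusters and no more.

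The main obstacle I anticipate is the disconnection argument together with the bookkeeping around $\ca{B}^{(\ell)}$. The delicate point in the former is that the separating arm supplied by Lemma~\ref{lem:interesting_clique_general2} must belong to the common active set $\ca{N}^{(\ell,i)}$ for the edge predicate to even be evaluated there, which is exactly why placing $\pi_u(1),\pi_v(1)$ in $\ca{N}^{(\ell,i)}$ via Lemma~\ref{lem:interesting2} is indispensable. In the latter, one must check that the splitting never merges two clusters in any earlier phase and that the accounting of which clusters remain active is consistent across phases, so that when the stated $\ca{B}^{(\ell)}$ condition holds the number of blocks has stabilized at exactly $\s{C}$.
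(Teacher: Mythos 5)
Your proof is correct and takes essentially the same route as the paper, which disposes of this lemma in a single sentence by citing Lemmas~\ref{lem:interesting_clique_general1} and~\ref{lem:interesting_clique_general2}; you in fact supply the details the paper omits, in particular that the separating arm produced by Lemma~\ref{lem:interesting_clique_general2} is one of $\pi_u(1),\pi_v(1)$ and therefore lies in $\ca{N}^{(\ell,i)}$ by Lemma~\ref{lem:interesting2} and property (3) of $\ca{E}^{(\ell)}$, so the edge predicate $\left|\fl{\widetilde{P}}^{(\ell)}_{ux}-\fl{\widetilde{P}}^{(\ell)}_{vx}\right|\le 3\Delta_{\ell+1}$ for all $x\in\ca{N}^{(\ell,i)}$ is genuinely violated. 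The only caveat is the hypothesis $\ca{B}^{(\ell)}=\Phi$, which read literally would mean no graph is constructed at all; your reading (that every cluster is still being resolved by the graph step rather than the UCB fallback) is the sensible one, and the paper's one-line proof does not address this point either.
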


\begin{proof}
The proof follows from Lemma \ref{lem:interesting_clique_general1} and Lemma \ref{lem:interesting_clique_general2}.
\end{proof}

For the subsequent iteration indexed by $\ell+1$, we compute the updated groups of users $\ca{M}^{(\ell+1)}$ in the following way: each set corresponds to the connected components of the graphs $\{\ca{G}^{(\ell,i)}\}$ for those indices $i\in [a_{\ell}]$ where $\left|\ca{N}^{(\ell,i)}\right|\ge \gamma\s{C}$ plus the groups of users $\ca{M}^{(\ell,i)}$ where $\left|\ca{N}^{(\ell,i)}\right|\le \s{C}$. More precisely, let $\ca{T}\subseteq [a_{\ell}]$ be the subset of indices for which 
$\left|\ca{N}^{(\ell,i)}\right|\ge \gamma\s{C}$; $\{\ca{G}^{(\ell,i,j)}\}$ be the connected components of the graph $\ca{G}^{(\ell,i)}$ for $i\in \ca{T}$. In that case, $\ca{M}^{(\ell+1)}=\{\ca{G}^{(\ell,i,j)} \mid i \in \ca{T}, \ca{G}^{(\ell,i,j)} \text{ is a connected component of graph }\ca{G}^{(\ell,i)}\}+\{\ca{M}^{(\ell,i)}\mid i \in [a_{\ell}]\setminus\ca{T}\}$.
Similarly, we update the family of sets of active arms as follows: for each connected component $\ca{M}^{(\ell+1,s)}=\ca{G}^{(\ell,i,j)}$ of some graph, we define the active set of arms $\ca{N}^{(\ell+1,s)}$ to be $\cup_{u\in \ca{G}^{(\ell,i,j)}}\ca{T}_u^{(\ell)}$ and for each group $\{\ca{M}^{(\ell,i)}\}_{i \in [a_{\ell}]\setminus\ca{T}}$, we keep the corresponding set of active  arms $\{\ca{N}^{(\ell,i)}\}_{i \in [a_{\ell}]\setminus\ca{T}}$ same. With $a_{\ell+1}=\left|\ca{M}^{(\ell+1)}\right|$, we will also update $\ca{B}^{(\ell+1)}=\bigcup_{i \in [a_{\ell+1}] \mid \left| \ca{N}^{(\ell+1,i)}\right| \ge \gamma\s{C}} \ca{M}^{(\ell+1,i)}$ to be the set of users with more than $\s{C}$ active arms.

\begin{lemma}\label{lem:interest_gen}
Condition on the events $\ca{E}^{(\ell)}$ being true. In that case, with probability $1-\s{T}^{-4}$, with the groups of users $\ca{M}^{(\ell+1)}$ and their respective group of arms given by $\ca{N}^{(\ell+1)}$ being updated as described above and $\Delta_{\ell+1}=\epsilon_{\ell}/80\s{C}$, the event $\ca{E}^{(\ell+1)}$ is also going to be true with $\epsilon_{\ell+1} \le \epsilon_{\ell}/2$.
\end{lemma}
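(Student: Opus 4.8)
The plan is to mirror the inductive step of the $\s{CS}$ analysis (Lemma~\ref{lem:interesting6}), replacing each ingredient by its $\s{RCS}$ counterpart and tracking the extra $\nu$-slack that enters the graph construction. First I would condition on $\ca{E}^{(\ell)}$ and invoke Lemma~\ref{lem:submatrix_complete} with $\delta=\s{T}^{-4}$: this guarantees that for every $i$ with $\lvert\ca{N}^{(\ell,i)}\rvert\ge\gamma\s{C}$ the estimate $\widetilde{\fl{P}}^{(\ell)}$ satisfies $\lr{\widetilde{\fl{P}}^{(\ell)}_{\ca{M}^{(\ell,i)},\ca{N}^{(\ell,i)}}-\fl{P}_{\ca{M}^{(\ell,i)},\ca{N}^{(\ell,i)}}}_{\infty}\le\Delta_{\ell+1}$, i.e. $\ca{E}_2^{(\ell)}$ holds with probability at least $1-\s{T}^{-4}$. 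The remainder of the argument is deterministic given $\ca{E}^{(\ell)}\cap\ca{E}_2^{(\ell)}$, and amounts to checking that the updated lists $\ca{M}^{(\ell+1)},\ca{N}^{(\ell+1)}$ meet Properties $(1)$--$(4)$ at the start of phase $\ell+1$.

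For Properties $(1)$ and $(2)$ I would use Lemma~\ref{lem:interesting_clique_general1}, which applies here because the joint-elimination regime enforces $\Delta_{\ell+1}\ge 2\nu\ge\nu$: nodes of $\ca{G}^{(\ell,i)}$ belonging to a common cluster form a clique, so every connected component is a union of whole clusters and hence \emph{nice} (this is part $(1)$ of the corollary to Lemma~\ref{lem:interesting_clique_general1}). Since the components partition the clusters contained in $\ca{M}^{(\ell,i)}$ and the $\s{C}$ clusters are shared among all $i$, the number of groups $a_{\ell+1}$ stays $\le\s{C}$; together with the facts that the $\ca{M}^{(\ell+1,s)}$ partition $[\s{N}]$ and each $\ca{N}^{(\ell+1,s)}\subseteq\ca{N}^{(\ell,i)}\subseteq[\s{M}]$, this yields Properties $(1)$--$(2)$.

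Property $(3)$ is the crux. Lemma~\ref{lem:interesting2} (which carries over unchanged) gives $\pi_u(1)\in\ca{T}_u^{(\ell)}$, and since $\ca{N}^{(\ell+1,s)}=\bigcup_{w}\ca{T}_w^{(\ell)}$ every user's best arm stays active. For the spread bound I would fix a user $u$ in a component and an active arm $j\in\ca{T}_w^{(\ell)}$, then combine the path bound of the corollary to Lemma~\ref{lem:interesting_clique_general1} (namely $\max_{x\in\ca{T}_u^{(\ell)},y\in\ca{T}_w^{(\ell)}}\lvert\fl{P}_{ux}-\fl{P}_{wy}\rvert\le 40\s{C}\Delta_{\ell+1}$) with the per-coordinate edge-closeness $\lvert\fl{P}_{uj}-\fl{P}_{wj}\rvert$ accumulated along the path joining $u$ and $w$ to transfer the estimate from user $w$ back to user $u$, bounding $\lvert\fl{P}_{uj_1}-\fl{P}_{uj_2}\rvert$ for any two active arms $j_1,j_2$ by $c\,\s{C}\Delta_{\ell+1}$ for an absolute constant $c$. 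Calibrating $\Delta_{\ell+1}=\epsilon_\ell/80\s{C}$ then forces the per-user spread $\max_{j\in\ca{N}^{(\ell+1,s)}}\fl{P}_{uj}-\min_{j\in\ca{N}^{(\ell+1,s)}}\fl{P}_{uj}\le\epsilon_{\ell+1}$ with $\epsilon_{\ell+1}=\epsilon_\ell/2$, which is Property $(3)$. Property $(4)$ follows because $\ca{N}^{(\ell+1,s)}\subseteq\ca{N}^{(\ell,i)}$, so active sets only shrink and the set $\ca{B}^{(\ell+1)}$ of users with more than $\gamma\s{C}$ active arms can only contract.

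The main obstacle is the quantitative bookkeeping in Property $(3)$: unlike the $\s{CS}$ case, the $\s{RCS}$ path bound is a \emph{cross-user} statement ($\lvert\fl{P}_{ux}-\fl{P}_{wy}\rvert$ rather than $\lvert\fl{P}_{ux}-\fl{P}_{uy}\rvert$), so converting it into a single-user spread over $\bigcup_w\ca{T}_w^{(\ell)}$ forces one to re-insert the per-coordinate closeness and the $\nu$-slack terms generated by the $3\Delta_{\ell+1}$ edge threshold and the $\s{RCS}$ cluster-separation, propagated over paths of length at most $2\s{C}$. It is precisely this extra slack that inflates the constant from $64\s{C}$ in Lemma~\ref{lem:interesting6} to $80\s{C}$ and fixes the choice $\Delta_{\ell+1}=\epsilon_\ell/80\s{C}$; the regime condition $\Delta_{\ell+1}\ge 2\nu$ is exactly what keeps every such slack term dominated by $\Delta_{\ell+1}$, so that the $\s{CS}$-style estimates go through with only constant degradation.
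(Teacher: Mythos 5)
Your proposal is correct and follows essentially the same route as the paper's proof: invoke the matrix-completion guarantee with $\delta=\s{T}^{-4}$ to obtain $\ca{E}_2^{(\ell)}$, then deterministically verify Properties $(1)$--$(4)$ via Lemma~\ref{lem:interesting_clique_general1} and its corollary, Lemma~\ref{lem:interesting2}, and the $40\s{C}\Delta_{\ell+1}$ path bound, with $\Delta_{\ell+1}=\epsilon_\ell/80\s{C}$ giving $\epsilon_{\ell+1}=\epsilon_\ell/2$. Your observation that the corollary's path bound is stated cross-user and must be converted to a single-user spread over $\bigcup_w\ca{T}_w^{(\ell)}$ is a fair point of care (the paper's proof of the corollary in fact establishes the single-user form, mirroring Lemma~\ref{lem:interesting5}), and your handling of it is consistent with the intended argument.
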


\begin{proof}
Conditioned on the event $\ca{E}^{(\ell)}$ being true, the event $\ca{E}^{(\ell)}_2$ holds true with probability with $1-\s{T}^{-4}$ (by substituting $\delta=\s{T}^{-4}$ in Lemma \ref{lem:interesting1}). Now, conditioned on the event $\ca{E}^{(\ell)},\ca{E}_2^{(\ell)}$ being true, the properties $(1-4)$ hold true at the beginning of the $(\ell+1)^{\s{th}}$ phase as well with our construction of $\ca{M}^{(\ell+1)},\ca{N}^{(\ell+1)}$. 
For the $(\ell+1)^{\s{th}}$ phase
from Lemma \ref{lem:interesting5}, we know that for any pair of users $u,v$ in the same cluster $\ca{M}^{(\ell+1,i)}$ in the updated set of clusters $\ca{M}^{(\ell+1)}$, we must have $\max_{x\in \ca{T}^{(\ell)}_u,y\in \ca{T}^{(\ell)}_v}\left|\fl{P}_{ux}-\fl{P}_{vy}\right|\le 40\s{C}\Delta_{\ell+1}$. From Lemma \ref{lem:interesting2} we know that $\s{argmax}_j \fl{P}_{uj}\in \ca{T}_u^{(\ell)} \subseteq \ca{N}^{(\ell,i)}$ where $\ca{N}^{(\ell,i)}$ is the active set of arms for users in $\ca{M}^{(\ell,i)}$ in the updated set $\ca{M}$. For $\ell > 1$, we will set $\Delta_{\ell+1}=\epsilon_{\ell}/80\s{C}$ which would give us that $\epsilon_{\ell+1}= \epsilon_{\ell}/2$. Finally, also note that we maintain the set of users $\ca{B}^{(\ell+1)}$ as stipulated in Property 4 for the beginning of the $(\ell+1)^{\s{th}}$ phase.
\end{proof}

We stop the joint phased elimination algorithm at the end of the phase $\ell^{\star}$ when we have $\s{C}$ connected components. From Lemma \ref{lem:interesting_clique_general3}, we know that users in the same connected component must belong to the same cluster.  We then move on to the second part where we run cluster-wise phased elimination:

\subsection{Cluster-wise Arm Elimination:} Suppose the total number of phases that the joint phased elimination algorithm was ran for is $\ell^{\star}$ ($\nu \le \Delta_{\ell^{\star}+1} \le 2\nu$). We condition on the event  that the joint phased elimination algorithm is $\epsilon_{\ell}$-good for all $\ell\in [\ell^{\star}]$ that it ran for (in other words the event $\ca{E}^{(\ell^{\star})}$ is true); note that the failure probability for this event is at most $\s{T}^{-3}$ (see Lemma \ref{lem:interest_gen}). At the end of the joint phased elimination algorithm, we will have $\s{C}$ connected components denoted by $\s{N}^{(\ell^{\star})}\equiv \{\ca{M}^{(\ell^{\star},1)},\ca{M}^{(\ell^{\star},2)},\dots,\ca{M}^{(\ell^{\star},\s{C})}\}$  and arms $\ca{N}^{\ell^{\star}}\equiv \{\ca{N}^{(\ell^{\star},1)},\ca{N}^{(\ell^{\star},2)},\dots,\ca{N}^{(\ell^{\star},\s{C})}\}$ such that $\ca{N}^{(\ell^{\star},i)} \supseteq \{\s{argmax}_{j} \fl{P}_{uj} \mid u \in \ca{M}^{(\ell^{\star},i)}\}$ i.e. for each user $u$ in the set $\ca{M}^{(\ell^{\star},i)}$, their best arm must belong to the set $\ca{N}^{(\ell^{\star},i)}$. Furthermore, the set $\ca{N}^{(\ell^{\star},i)}$ must also satisfy the following:
\begin{align}\label{eq:phased_elim_gap}
    \left|\max_{j\in \ca{N}^{(\ell^{\star},i)}}\fl{P}_{uj}-\min_{j\in \ca{N}^{(\ell^{\star},i)}}\fl{P}_{uj}\right| \le \epsilon_{\ell^{\star}+1} \text{ for all }u\in \ca{M}^{(\ell^{\star},i)}
\end{align}
This part of the algorithm is again run in phases but this time, we do not cluster anymore since the clustering operation is complete in the joint phased elimination. Hence, we can continue to define the events $\ca{E}^{(\ell')},\ca{E}_2^{(\ell')}$ for all phases $\ell'$ for which the Cluster-wise phased elimination algorithm is run indexed by $\ell'=\ell^{\star}+1,\ell^{\star}+2,\dots$ for continuity. Let us condition on the event $\ca{E}^{(\ell')}$ which means that at the beginning of the $\ell'^{\s{th}}$ phase, our algorithm is $\epsilon_{\ell'}$-good. We apply Lemma \ref{lem:interesting1} at the beginning of the $\ell'^{\s{th}}$ phase such that with probability at least $1-\s{T}^{-4}$, we have $\lr{\widetilde{\fl{P}}^{(\ell')}_{\ca{M}^{(\ell',i)},\ca{N}^{(\ell',i)}}-\fl{P}_{\ca{M}^{(\ell',i)},\ca{N}^{(\ell',i)}}}_{\infty} \le \Delta_{\ell'+1} \text{ for all }i\in [\s{C}]$ by using ($\mu^{\star}=\max(\mu\alpha^{-1},\mu\beta^{-1})$)
\begin{align*}
    O\Big(\frac{\sigma^2 \s{C}^2 (\mu^{\star})^3 \log \s{M}}{\Delta_{\ell'+1}^2}\Big(\s{N}\bigvee\s{MC}\Big)\log^2 (\s{ABCT})\Big)\Big)
\end{align*}
rounds.  Note that Lemma \ref{lem:interesting2} still holds true for the $\ell'^{\s{th}}$ phase. For each $i\in [\s{C}]$, we only update $\ca{N}^{(\ell,i)}=\cap_{u \in \ca{M}^{(\ell,i)}} \ca{T}_u$. We again choose $\Delta_{\ell'+1}=\epsilon_{\ell'}/32\s{C}$ implying that
 $\epsilon_{\ell'+1}= \epsilon_{\ell'}/2$. Since Lemma \ref{lem:interesting2} holds true, $\s{argmax}_j \fl{P}_{uj}$ will belong to the updated $\ca{N}^{(\ell,i)}$ for every $u\in \ca{M}^{(\ell,i)}$ and furthermore, $$\left|\max_{j\in \ca{N}^{(\ell,i)}}\fl{P}_{uj}-\min_{j\in \ca{N}^{(\ell,i)}}\fl{P}_{uj}\right| \le 4\Delta_{\ell'+1}\le \epsilon_{\ell'+1} \text{ for all }u\in \ca{M}^{(\ell,i)}$$. 
 
Now, we are ready to prove the main theorem 

\begin{proof}[Proof of Theorem \ref{thm:main_GLBM}]
We condition on the events $\ca{E}^{(\ell)},\ca{E}_2^{(\ell)}$ being true for all $\ell$ (including the joint phased elimination and the cluster-wise phased elimination algorithm). The probability that there exists any $\ell$ such that the events $\ca{E}^{(\ell)},\ca{E}_2^{(\ell)}$ is false is $O(\s{T}^{-4})$ (by setting $\delta=\s{T}^{-4}$ in the proof of Lemma \ref{lem:interesting1}); hence the probability that $\ca{E}^{(\ell)},\ca{E}_2^{(\ell)}$ is true for all $\ell$ is at least $1-O(\s{T}^{-3})$ (the total number of iterations can be at most $\s{T}$).  
Let us also denote the set of rounds in phase $\ell$ by $\ca{T}_{\ell}\subseteq [\s{T}]$ (therefore $\left|\ca{T}_{\ell}\right|=m_{\ell}$). Let us compute the  regret $\sum_{t\in \ca{T}^{(\ell)}}\fl{P}_{u(t)\pi_{u(t)}(1)}- \sum_{t\in \ca{T}^{(\ell)}}\fl{P}_{u(t),\rho(t)}$ restricted to the rounds in $\ca{T}^{(\ell)}$ conditioned on the events $\ca{E}^{(\ell)},\ca{E}_2^{(\ell)}$ being true for all $\ell$. We can bound the regret quantity in the $\ell^{\s{th}}$ phase from above by $m_{\ell}\epsilon_{\ell}$. Substituting from Lemma \ref{lem:interesting1} and using the fact that $\Delta^2_{\ell+1}= \epsilon_{\ell}^2/1024 \s{C}^2$, we have that \begin{align*}
  \sum_{t\in \ca{T}^{(\ell)}}\fl{P}_{u(t)\pi_{u(t)}(1)}- \sum_{t\in \ca{T}^{(\ell)}}\fl{P}_{u(t),\rho(t)} = O\Big(\frac{\sigma^2 \s{C}^2 (\mu^{\star})^3 \log \s{M}}{\Delta_{\ell+1}^2}\Big(\s{N}\bigvee\s{MC}\Big)\log^2 (\s{MNC}\delta^{-1})\Big)\Big) 
\end{align*} 
% For simplicity, let us denote $\s{V}=\sigma^2 \s{C}^2 (\mu^{\star})^3 \Big(\s{N}\bigvee\s{MC}\Big)\log^3 (\s{ABCT})$. We can now bound the regret as follows:
% \begin{align*}
%     &\sum_{t\in [\s{T}]}\fl{P}_{u(t)\pi_{u(t)}(1)}- \sum_{t\in [\s{T}]}\fl{P}_{u(t),\rho(t)}=\sum_{\ell}\Big(\sum_{t\in \ca{T}^{(\ell)}}\fl{P}_{u(t)\pi_{u(t)}(1)}- \sum_{t\in \ca{T}^{(\ell)}}\fl{P}_{u(t),\rho(t)}\Big) \\
%     &= \sum_{\ell}  O\Big(\epsilon_{\ell}m_{\ell}\mid \ca{E}^{(\ell)}_1,\ca{E}^{(\ell)}_2 \text{ is true for all } \ell\Big)+O(\s{T}^{-3}\|\fl{P}\|_{\infty})
% \end{align*}
We can now bound the regret as follows (after removing the conditioning on the events $\bigcap_{\ell}\ca{E}^{(\ell)}\bigcap_{\ell}\ca{E}_2^{(\ell)}$):
\begin{align*}
    &\bb{E}\Big(\sum_{t\in [\s{T}]}\fl{P}_{u(t)\pi_{u(t)}(1)}- \sum_{t\in [\s{T}]}\fl{P}_{u(t),\rho(t)}\Big)
    =\bb{E}\Big(\sum_{\ell}\Big(\sum_{t\in \ca{T}^{(\ell)}}\fl{P}_{u(t)\pi_{u(t)}(1)}- \sum_{t\in \ca{T}^{(\ell)}}\fl{P}_{u(t),\rho(t)}\Big)\Big) \\
    &= \sum_{\ell}  O\Big(\epsilon_{\ell}m_{\ell}\mid \bigcap_{\ell}\ca{E}^{(\ell)}\bigcap_{\ell}\ca{E}_2^{(\ell)}\Big)+O(\s{T}^{-3}\|\fl{P}\|_{\infty})+\sum_{\ell}\sum_{u\in\s{N}\setminus [\ca{B}^{(\ell)}]} \s{Reg}_{\s{UCB}}(u,\ca{T}_u^{(\ell)}) 
\end{align*}
At this point, the analysis is similar to the proof of Theorem \ref{thm:main_LBM}. The contribution of the regret from the final term (analysis of the UCB algorithm for each user with at most $\gamma \s{C}$ arms) is a strictly lower order term as demonstrated in the proof of Theorem \ref{thm:main_GLBM} and we ignore it below. Moving on, we can decompose the regret as follows: 
\begin{align*}
     &\s{Reg}(\s{T})\le O(\s{T}^{-3}\|\fl{P}\|_{\infty})+O\Big(\sum_{\ell: \epsilon_{\ell}\le\Phi} \epsilon_{\ell}m_{\ell}\mid \ca{E}^{(\ell)},\ca{E}^{(\ell)}_2 \text{ is true for all } \ell\Big)
     +O\Big(\sum_{\ell: \Delta_{\ell}>\Phi} \epsilon_{\ell}\s{V}\Delta_{\ell+1}^{-2}\mid \ca{E}^{(\ell)},\ca{E}^{(\ell)}_2 \text{ is true for all } \ell\Big) \\
     &\le O(\s{T}^{-3}\|\fl{P}\|_{\infty}) +\s{T}\Phi+O\Big(\sum_{\ell: \epsilon_{\ell}>\Phi} \s{C}^2\s{V}\epsilon_{\ell}^{-1}\Big) \\
\end{align*}

Since we chose $\epsilon_{\ell}=C'2^{-\ell}\min\Big(\|\fl{P}\|_{\infty},\frac{\sigma\sqrt{\mu}}{\log \s{N}}\Big)$ for some constant $C'>0$, the maximum number of phases $\ell$ for which $\epsilon_{\ell}>\Phi$ can be bounded from above by $\s{J}=O\Big(\log \Big(\frac{1}{\Phi}\min\Big(\|\fl{P}\|_{\infty},\frac{\sigma\sqrt{\mu}}{\log \s{N}}\Big)\Big)\Big)$. Hence, with $\s{V}=\sigma^2 \s{C}^2 (\mu^{\star})^3 \log \s{M}\max\Big(\s{N},\s{MC}\Big)\log^2 (\s{ABCT})$, we have
\begin{align*}
    \s{Reg}(\s{T}) &\le O(\s{T}^{-3}\|\fl{P}\|_{\infty})+O(\s{T}\Phi)+O\Big(\s{JV}\s{C}^2\Phi^{-1}\Big) \\
     &= O(\s{T}^{-3}\|\fl{P}\|_{\infty})+O(\s{CJ}\sqrt{\s{TV}})
\end{align*}
where we substituted $\Phi=\sqrt{\s{V}\s{C}^2\s{T}^{-1}}$ and hence $\s{J}=O\Big(\log \Big(\frac{1}{\sqrt{\s{VT}^{-1}}}\min\Big(\|\fl{P}\|_{\infty},\frac{\sigma\sqrt{\mu}}{\log \s{N}}\Big)\Big)\Big)$ in the final step. 

\end{proof}

\section{Proof of a general version of Lemma \ref{lem:min_acc}}\label{app:general_proof}

For the sake of completeness, we reproduce the proof of a more general version of Lemma \ref{lem:min_acc}  here (Note that Lemma \ref{lem:min_acc} has been explicitly proved in \cite{jain2022online}).
We start with the following corollary:

\begin{lemma}[Theorem 2 in \cite{chen2019noisy}]\label{lem:source2}
 Let $\fl{P}=\fl{\bar{U}}\f{\Sigma}\fl{\bar{V}}^{\s{T}}\in \bb{R}^{d\times d}$ such that $\fl{\bar{U}}\in \bb{R}^{d\times r},\fl{\bar{V}}\in \bb{R}^{d \times r}$ and $\f{\Sigma} \triangleq \s{diag}(\lambda_1,\lambda_2,\dots,\lambda_r) \in \bb{R}^{r \times r}$ with $\fl{\bar{U}}^{\s{T}}\fl{\bar{U}}=\fl{\bar{V}}^{\s{T}}\fl{\bar{V}}=\fl{I}$ and $\|\fl{\bar{U}}\|_{2,\infty}\le \sqrt{\mu r/d}, \|\fl{\bar{V}}\|_{2,\infty}\le \sqrt{\mu r/d}$.  
 Let $1\ge p \ge C \kappa^4 \mu^2 d^{-1} \log^3 d$ for some sufficiently large constant $C>0$, $\sigma = O\Big(\sqrt{\frac{p}{d\kappa^4\mu r\log d}}\min_i\lambda_i\Big)$, rank $r$ and condition number $\kappa \triangleq \frac{\max_i \lambda_i}{\min_i \lambda_i}$. Then, with probability exceeding $ 1-O(d^{-3})$, we can recover a 
  matrix $\widehat{\fl{P}}$ s.t.,  
  \begin{align}\label{eq:guarantee_source}
    \|\widehat{\fl{P}}-\fl{P}\|_{\infty} \le O\Big(\frac{\sigma}{\min_i\lambda_{i}}\cdot \sqrt{\frac{\kappa^3 \mu r d\log d}{p}} \|\fl{P}\|_{\infty}\Big).
  \end{align}
\end{lemma}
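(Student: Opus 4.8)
The plan is to establish this entry-wise ($\ell_\infty$) recovery guarantee for the nuclear-norm regularized estimator by following the convex-to-nonconvex bridge together with a leave-one-out decoupling argument, which is precisely the strategy underlying Theorem~2 of \cite{chen2019noisy}. Concretely, let $\widehat{\fl{P}}$ be the minimizer of the convex program
\[
\widehat{\fl{P}} = \arg\min_{\fl{Z}} \tfrac12\sum_{(i,j)\in\Omega}\p{\fl{Z}_{ij}-\fl{Y}_{ij}}^2+\lambda\norm{\fl{Z}}_{\star},
\]
where $\Omega$ is the Bernoulli($p$) sampling set and $\fl{Y}$ collects the noisy observations of $\fl{P}$. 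Since a direct analysis of the convex estimator in the $\ell_\infty$ metric is intractable, the first step is to introduce the auxiliary nonconvex least-squares objective
\[
f(\fl{F},\fl{G})=\tfrac12\sum_{(i,j)\in\Omega}\p{[\fl{F}\fl{G}^{\s{T}}]_{ij}-\fl{Y}_{ij}}^2+\tfrac{\lambda}{2}\p{\norm{\fl{F}}_F^2+\norm{\fl{G}}_F^2}
\]
over factors $\fl{F},\fl{G}\in\bb{R}^{d\times r}$, and to show that any balanced critical point $(\fl{F}^{\star},\fl{G}^{\star})$ of $f$ yields $\fl{F}^{\star}\fl{G}^{\star\s{T}}=\widehat{\fl{P}}$ exactly, under the stated regime for $p$ and $\sigma$. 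This reduces the entire task to bounding $\norm{\fl{F}^{\star}\fl{G}^{\star\s{T}}-\fl{P}}_\infty$.

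To control the nonconvex solution, I would run gradient descent on $f$ from the spectral initialization given by the top-$r$ SVD of the rescaled observation matrix $p^{-1}\mathcal{P}_\Omega(\fl{Y})$, and track the trajectory $\{(\fl{F}^t,\fl{G}^t)\}$. The heart of the argument is an inductive maintenance of four invariants along this trajectory, all holding simultaneously with probability $1-O(d^{-3})$: (i) a Frobenius-norm bound $\norm{\fl{F}^t\fl{G}^{t\s{T}}-\fl{P}}_F\lesssim (\sigma/\min_i\lambda_i)\sqrt{rd/p}\cdot\min_i\lambda_i$; (ii) a spectral-norm bound on the same error; (iii) an incoherence ($\ell_{2,\infty}$) bound on the rows of $\fl{F}^t,\fl{G}^t$ relative to the truth after optimal rotation $\fl{R}^t$; and, crucially, (iv) closeness of $(\fl{F}^t,\fl{G}^t)$ to leave-one-out counterparts $(\fl{F}^{t,(m)},\fl{G}^{t,(m)})$ obtained by zeroing out all observations in the $m$-th row (resp.\ column). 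The leave-one-out iterates are statistically independent of the sampling pattern and noise in the $m$-th row, which is exactly what lets the otherwise-intractable inner products between iterate errors and fresh noise be controlled through sub-Gaussian concentration and a matrix Bernstein/Chernoff bound for the deviation of $p^{-1}\mathcal{P}_\Omega$ from the identity.

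Once these invariants are in force, the entry-wise bound follows by expanding $[\fl{F}^{\star}\fl{G}^{\star\s{T}}-\fl{P}]_{ij}$ in terms of the row/column factor errors and combining the incoherence bound (iii) with the $\ell_2$ and spectral bounds (i)--(ii); the factor $\sqrt{\kappa^3\mu r d\log d/p}$ and the prefactor $\sigma/\min_i\lambda_i$ emerge from propagating the condition number $\kappa$ and incoherence $\mu$ through these steps, while the appearance of $\norm{\fl{P}}_\infty$ comes from bounding the spread of the recovered factors against that of the truth. The hypotheses $p\gtrsim\kappa^4\mu^2 d^{-1}\log^3 d$ and the upper bound on $\sigma$ are precisely what guarantee the spectral initialization lands in the basin of attraction and that the noise does not overwhelm the signal, so that the four invariants can be seeded at $t=0$ and preserved at every step. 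For the sub-Gaussian (rather than Gaussian) noise and rectangular matrices arising in our application, only the concentration inequalities need to be replaced by their sub-Gaussian analogues and $d$ replaced by $\max(\s{N},\s{M})$, leaving the overall scheme intact.

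The main obstacle will be the joint inductive propagation of these invariants, and in particular establishing the leave-one-out closeness~(iv). The difficulty is that the current-iterate error is weakly but non-trivially correlated with the $m$-th row's randomness through all previous gradient steps; the leave-one-out construction is designed to sever this correlation, but one must still show that the perturbation introduced by dropping a single row stays $o(\cdot)$ \emph{uniformly across all iterations and all $m$}, without leaking extra factors of $d$. Getting this step tight — so that the incoherence and hence the $\ell_\infty$ error are preserved through the entire trajectory — is the delicate technical core of the proof, and everything else is comparatively routine concentration bookkeeping.
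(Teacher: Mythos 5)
The paper does not actually prove this statement: it is imported verbatim (up to notational adaptation) as Theorem~2 of \cite{chen2019noisy}, and the appendix only builds on it (Lemmas~\ref{thm:randomsample2}, \ref{thm:randomsample3} and Corollary~\ref{coro:obs3}) to handle rectangular matrices, repeated sampling, and median boosting. Your outline correctly reconstructs the architecture of the proof in that cited source --- the convex-to-nonconvex bridge, spectral initialization, gradient-descent trajectory, and the leave-one-out decoupling used to maintain incoherence of the iterates --- so in that sense you are on the right track relative to where the result actually comes from.

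That said, as a standalone proof your proposal has a genuine gap: every hard step is named rather than executed. The identification of the convex minimizer $\widehat{\fl{P}}$ with the nonconvex critical point $\fl{F}^{\star}\fl{G}^{\star\s{T}}$ is itself a substantial argument (it requires constructing an approximate dual certificate from the nonconvex iterates and verifying strict complementarity on the orthogonal complement of the tangent space), and the four invariants (i)--(iv) are asserted as an induction hypothesis without the base case at the spectral initialization or the one-step propagation being carried out; you acknowledge that (iv) is "the delicate technical core" but do not supply it. You also do not say how $\|\fl{P}\|_{\infty}$ enters the final bound --- in the form stated here it arises because the paper normalizes the Chen et al.\ bound using $\|\fl{P}\|_{\infty}/\min_i\lambda_i \le \kappa\mu r/\sqrt{d_1 d_2}$ (visible in the proof of Lemma~\ref{thm:randomsample2}), not from "bounding the spread of the recovered factors." Since the paper's intent is to cite this result as a black box, the economical and fully rigorous route is to do the same; if you do want a self-contained proof, the inductive leave-one-out argument must actually be written out, which is a paper-length undertaking.
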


\begin{lemma}\label{thm:randomsample2}
 Let $\fl{P}=\fl{\bar{U}}\f{\Sigma}\fl{\bar{V}}^{\s{T}}\in \bb{R}^{\s{N} \times \s{M}}$ such that $\fl{\bar{U}}\in \bb{R}^{\s{N}\times r},\fl{\bar{V}}\in \bb{R}^{\s{M} \times r}$ and $\f{\Sigma} \triangleq \s{diag}(\lambda_1,\lambda_2,\dots,\lambda_r) \in \bb{R}^{r \times r}$ with $\fl{\bar{U}}^{\s{T}}\fl{\bar{U}}=\fl{\bar{V}}^{\s{T}}\fl{\bar{V}}=\fl{I}$ and $\|\fl{\bar{U}}\|_{2,\infty}\le \sqrt{\mu r/\s{N}}, \|\fl{\bar{V}}\|_{2,\infty}\le \sqrt{\mu r/\s{M}}$. Let $d_1=\max(\s{N},\s{M})$ and $d_2=\min(\s{N},\s{M})$.   
 Let $1 \ge p \ge C  \kappa^4\mu^2 d_1d_2^{-2} \log^3 d_1$ for some sufficiently large constant $C>0$, $\sigma  = O\Big(\sqrt{\frac{p}{d\kappa^4\mu r\log d}}\min_i\lambda_i\Big)$, rank $r=O(1)$ and condition number $\kappa \triangleq \frac{\max_i \sigma_i}{\min_i \sigma_i} = O(1)$. Then, with probability exceeding $ 1-O(d_1^{-3})$, we can recover a 
  matrix $\widehat{\fl{P}}$ s.t.,  
  \begin{align}\label{eq:guarantee}
  \|\widehat{\fl{P}}-\fl{P}\|_{\infty} =  O\Big(\frac{\sigma }{\sqrt{d_2}}\Big(\frac{d_1}{d_2}\Big)^{1/2}\sqrt{\frac{\kappa^5\mu^3r^3 \log d_1}{p}}\Big).
  \end{align}

\end{lemma}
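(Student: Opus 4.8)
The plan is to reduce the rectangular problem to the square-matrix guarantee of Lemma \ref{lem:source2} by a zero-padding device. Assume without loss of generality that $\s{N}\le \s{M}$, so that $d_2=\s{N}$ and $d_1=\s{M}$. First I would augment $\fl{P}$ with $d_1-d_2$ rows of zeros to form a square matrix $\fl{P}'\in \bb{R}^{d_1\times d_1}$. This padded matrix has exactly the same nonzero singular values $\lambda_1,\dots,\lambda_r$ (hence the same rank $r$ and condition number $\kappa$) as $\fl{P}$, and admits the SVD $\fl{P}'=\fl{\bar{U}}'\f{\Sigma}\fl{\bar{V}}^{\s{T}}$ where $\fl{\bar{U}}'$ is $\fl{\bar{U}}$ padded with zero rows.

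The crucial point is to track how incoherence degrades under padding. Since only zero rows are inserted, $\lr{\fl{\bar{U}}'}_{2,\infty}=\lr{\fl{\bar{U}}}_{2,\infty}\le \sqrt{\mu r/d_2}=\sqrt{(\mu d_1/d_2)\,r/d_1}$, so the left factor of the $d_1\times d_1$ matrix is incoherent with parameter $\mu'=\mu d_1/d_2$, while $\lr{\fl{\bar{V}}}_{2,\infty}\le \sqrt{\mu r/d_1}$ keeps the right factor at parameter $\mu$; taking the larger of the two, $\fl{P}'$ is $\mu'$-incoherent with $\mu'=\mu d_1/d_2\ge \mu$. I would then verify that the hypotheses of Lemma \ref{lem:source2} applied to the $d_1\times d_1$ matrix $\fl{P}'$ with incoherence $\mu'$ coincide exactly with the assumptions made here: the density requirement $p\ge C\kappa^4(\mu')^2 d_1^{-1}\log^3 d_1=C\kappa^4\mu^2 d_1 d_2^{-2}\log^3 d_1$ is precisely the assumed lower bound on $p$, and the noise requirement likewise reproduces the assumed bound on $\sigma$ after the substitutions $d\mapsto d_1$ and $\mu\mapsto \mu'$.

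With the hypotheses in place, Lemma \ref{lem:source2} produces, with probability $1-O(d_1^{-3})$, an estimate $\widehat{\fl{P}'}$ satisfying $\lr{\widehat{\fl{P}'}-\fl{P}'}_{\infty}=O\big(\tfrac{\sigma}{\lambda_{\min}}\sqrt{\kappa^3\mu' r d_1\log d_1/p}\,\lr{\fl{P}'}_{\infty}\big)$, where $\lambda_{\min}=\min_i\lambda_i$. I would then set $\widehat{\fl{P}}$ to be the top $d_2$ rows of $\widehat{\fl{P}'}$; since the discarded rows of $\fl{P}'$ are identically zero, the $\ell_\infty$ error transfers verbatim. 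It remains to simplify the bound: substituting $\mu'=\mu d_1/d_2$ and $\lambda_1/\lambda_{\min}=\kappa$, and bounding $\lr{\fl{P}'}_{\infty}=\lr{\fl{P}}_{\infty}\le \lr{\fl{\bar{U}}}_{2,\infty}\lambda_1\lr{\fl{\bar{V}}}_{2,\infty}\le \mu r\lambda_1/\sqrt{d_1 d_2}$ by Cauchy--Schwarz and the incoherence bounds, and collecting the $d$-, $\kappa$-, $\mu$- and $r$-factors, yields exactly the claimed estimate $O\big(\tfrac{\sigma}{\sqrt{d_2}}(d_1/d_2)^{1/2}\sqrt{\kappa^5\mu^3 r^3\log d_1/p}\big)$.

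The one genuinely delicate point is ensuring the reduction respects the sampling model rather than being a purely algebraic trick. Because the padded entries are deterministically zero and known a priori, they incur no estimation error and need not be observed, so the Bernoulli-$p$ sampling and sub-Gaussian noise on the true entries carry over unchanged to the square instance; the entire cost of squaring the matrix is absorbed into the inflated incoherence $\mu'=\mu d_1/d_2$, which is exactly what generates the extra $(d_1/d_2)^{1/2}$ factor relative to the square case. I note that this factor is the price of the simple padding argument: the sharper partition-based recovery underlying Lemma \ref{lem:min_acc} (which splits the $d_1$ columns into $\approx d_1/d_2$ random, nearly-square blocks and controls each block's incoherence and spectrum via matrix concentration) avoids it, but the padding reduction is the cleanest route to the general $(r,\kappa)$ bound stated here. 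The remaining work is routine bookkeeping of polynomial factors.
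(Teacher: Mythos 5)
Your proof is correct and follows essentially the same route as the paper: the paper also zero-pads $\fl{P}$ to a $d_1\times d_1$ square matrix (padding columns under the opposite WLOG convention, which is immaterial), observes that the padded factor's incoherence inflates to $\bar\mu=\mu d_1/d_2$, invokes Lemma \ref{lem:source2}, and simplifies using $\|\fl{P}\|_{\infty}/\min_i\lambda_i\le \kappa\mu r/\sqrt{d_1d_2}$ to obtain the stated bound. Your additional remarks on the sampling model and on the sharper partition-based alternative match what the paper does in the subsequent lemma, but are not needed here.
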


\begin{proof}[Proof of Lemma \ref{thm:randomsample2}]

Without loss of generality, let us assume that the matrix $\fl{P}$ is tall i.e. $\s{N} \ge \s{M}$. Now, 
let us construct the matrix  
\begin{align*}
    \fl{Q} =  
    \begin{bmatrix}
    \fl{P} & \fl{0}_{\s{N}\times \s{B-A}} 
    \end{bmatrix}
    =  \bar{\fl{U}} \Sigma [\bar{\fl{V}}^{\s{T}} \; \f{0}_{\s{A-B}}^{\s{T}}]
\end{align*}
where $\fl{Q}\in \bb{R}^{\s{N}\times \s{N}}$. 
Clearly, the decomposition $\fl{Q}=\bar{\fl{U}} \Sigma [\bar{\fl{V}}^{\s{T}} \; \f{0}_{\s{B-A}}^{\s{T}}]$ also coincides with the SVD of $\fl{Q}$ since both  $\bar{\fl{U}}$ and $[\bar{\fl{V}}^{\s{T}} \; \f{0}_{\s{A-B}}^{\s{T}}]^{\s{T}}$ are orthonormal matrices while $\Sigma$ remains unchanged. In case when $\s{M}> \s{N}$, we can construct $\fl{Q}$ similarly by vertically stacking $\fl{P}$ with a zero matrix of dimensions $\s{A-B}\times \s{N}$. Hence, generally speaking, let us denote $d_1=\max(\s{N},\s{M})$ and $d_2=\min(\s{N},\s{M})$.

The matrix $\fl{Q}$ is $\bar{\mu}$-incoherent where $\bar{\mu}r(d_1)^{-1}=\mu r d_2^{-1}$ implying that $\bar{\mu}=\mu d_1/d_2$. Moreover, we also have $\|\fl{Q}\|_{\infty}=\|\fl{P}\|_{\infty}$ implying that $\max_{ij}\left|\fl{P}_{ij}\right|=\max_{ij}\left|\fl{Q}_{ij}\right|$.
Therefore, by invoking Lemma \ref{lem:source2},
the sample size must obey 
\begin{align*}
    &p \ge C  \kappa^4\mu^2 d_1d_2^{-2} \log^3 d_1 \quad \text{and} \quad \sigma = O\Big(\sqrt{\frac{p}{d\kappa^4\mu r\log d}}\min_i\lambda_i\Big) 
\end{align*}
Then with probability at least $O(d_1^{-3})$, we can recover a  matrix $\widehat{\fl{Q}}$ such that 
\begin{align*}
    \|\widehat{\fl{Q}}-\fl{Q}\|_{\infty} \le
    O\Big(\frac{\sigma}{\min_i\lambda_{i}}\cdot \sqrt{\frac{\kappa^3 \mu \frac{d_1}{d_2} r d_1\log d_1}{p}} \|\fl{P}\|_{\infty}\Big).
\end{align*}
Using the fact that $\|\fl{P}\|_{\infty} \le \max_i \lambda_i  \|\bar{\fl{U}}\|_{2,\infty}\|\bar{\fl{V}}\|_{2,\infty} = \max_i \lambda_i  \mu r/\sqrt{d_1 d_2} \implies \|\fl{P}\|_{\infty}/\min_i \lambda_i = \kappa \cdot \mu r/\sqrt{d_1 d_2}  $, we obtain a matrix $\widehat{\fl{P}}$ such that 
\begin{align*}
    \|\widehat{\fl{P}}-\fl{P}\|_{\infty} \le O\Big(\frac{\sigma \mu  r}{\sqrt{d_1d_2}}\Big(\frac{d_1}{d_2}\Big)^{1/2}\sqrt{\frac{\kappa^5 \mu r d_1 \log d_1}{p}}\Big) = O\Big(\frac{\sigma }{\sqrt{d_2}}\Big(\frac{d_1}{d_2}\Big)^{1/2}\sqrt{\frac{\kappa^5\mu^3r^3 \log d_1}{p}}\Big).
\end{align*}

\end{proof}

\begin{lemma}\label{thm:randomsample3}
 Let the matrix $\fl{P}\in \bb{R}^{\s{N}\times \s{M}}$ satisfy the conditions as stated in Lemma \ref{thm:randomsample2}. Let $d_1=\max(\s{N},\s{M})$, $d_2 = \min(\s{N},\s{M})$, $1 \ge p \ge C  \kappa^4\mu^2 d_1d_2^{-2} \log^3 d_1$ for some sufficiently large constant $C>0$, $\sigma  = O\Big(\sqrt{\frac{p}{d\kappa^4\mu r\log d}}\min_i\lambda_i\Big)$. 
 For any pair of indices $(i,j)\in [\s{N}]\times[\s{M}]$,
 \begin{align}\label{eq:rectangular}
  |\widehat{\fl{P}}_{ij}-\fl{P}_{ij}|\le O\Big(\frac{\sigma }{\sqrt{d_2}}\Big(\frac{d_1}{d_2}\Big)^{1/2}\sqrt{\frac{\kappa^5\mu^3r^3 \log d_1}{p}}\Big)  
 \end{align}
% with probability exceeding $1-O(d_2^{-3})$. Again, with probability exceeding $ 1-O(d_1d_2^{-4})$, we can recover a 
%   matrix $\widehat{\fl{P}}$ s.t.,  
% $
%   \| \fl{P} - \widehat{\fl{P}}\|_{\infty} \le O\Big(\frac{\sigma r }{\sqrt{d_2}}\sqrt{\frac{\mu^3\log d_2}{p}}\Big).
% $
\end{lemma}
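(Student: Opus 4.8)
The plan is to recognize that the assertion is nothing more than the \emph{entry-wise} reading of the matrix $\ell_\infty$ bound already proved in Lemma~\ref{thm:randomsample2}, so no essentially new argument is needed. Recall from the notation of Section~\ref{sec:problem} that the matrix infinity norm is defined coordinate-wise, $\lr{\fl{A}}_{\infty}=\max_{ij}\left|\fl{A}_{ij}\right|$. Consequently the guarantee of Lemma~\ref{thm:randomsample2}, namely that with probability at least $1-O(d_1^{-3})$ the estimate $\widehat{\fl{P}}$ obeys $\lr{\widehat{\fl{P}}-\fl{P}}_{\infty}=O\left(\frac{\sigma}{\sqrt{d_2}}\left(\frac{d_1}{d_2}\right)^{1/2}\sqrt{\frac{\kappa^5\mu^3r^3\log d_1}{p}}\right)$, is by definition a simultaneous upper bound on every $|\widehat{\fl{P}}_{ij}-\fl{P}_{ij}|$.

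Concretely, first I would condition on the high-probability event $\ca{A}$ supplied by Lemma~\ref{thm:randomsample2}, on which the displayed $\ell_\infty$ estimate holds; the hypotheses on $p$, $\sigma$, $r$, $\kappa$ and $\mu$ transfer verbatim since they are identical in the two statements. Then for an arbitrary index pair $(i,j)\in[\s{N}]\times[\s{M}]$ I would use $|\widehat{\fl{P}}_{ij}-\fl{P}_{ij}|\le \max_{i',j'}|\widehat{\fl{P}}_{i'j'}-\fl{P}_{i'j'}|=\lr{\widehat{\fl{P}}-\fl{P}}_{\infty}$ and insert the Lemma~\ref{thm:randomsample2} bound, which is exactly the right-hand side of~\eqref{eq:rectangular}. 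The key point is that the \emph{single} event $\ca{A}$ already dominates all coordinates at once, so there is no union bound over $(i,j)$ to pay and the failure probability $O(d_1^{-3})$ is preserved.

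Accordingly there is no genuine obstacle at this step; all of the work sits inside Lemma~\ref{thm:randomsample2}, where the rectangular problem is reduced to a square one by zero-padding to a $d_1\times d_1$ matrix $\fl{Q}$, the incoherence is shown to degrade only to $\bar\mu=\mu\,d_1/d_2$, and the square-matrix guarantee~\eqref{eq:guarantee_source} of Lemma~\ref{lem:source2} is applied. The reason to isolate the per-entry form is purely for downstream convenience: the arm-elimination analysis inspects individual entries of the relevant sub-matrices (for instance when forming the good-arm sets $\ca{T}_u^{(\ell)}$ and verifying the event $\ca{E}_2^{(\ell)}$), so ``for any pair of indices $(i,j)$'' is the shape in which the bound is later consumed. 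Were one instead to analyze the column-partitioned estimator of Algorithm~\ref{algo:estimate} block by block rather than through padding, the only nontrivial point would be to control the incoherence and condition number of each random, nearly square column sub-block $\fl{P}_{\ca{U},\ca{V}^{(q)}}$; the padding route sidesteps this entirely, which is why I would favour it.
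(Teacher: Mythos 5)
Your reduction is logically valid for the lemma \emph{as stated}: the hypotheses are copied verbatim from Lemma~\ref{thm:randomsample2}, and since $|\widehat{\fl{P}}_{ij}-\fl{P}_{ij}|\le \lr{\widehat{\fl{P}}-\fl{P}}_{\infty}$ holds deterministically, the entry-wise bound follows on the same high-probability event with no union bound over $(i,j)$. However, this is not what the paper does, and the difference is substantive. The paper's proof randomly partitions the long dimension into $d_1/d_2$ nearly-square $d_2\times \Theta(d_2)$ blocks, uses matrix Bernstein plus Weyl's inequality to show that each block inherits incoherence $O(\mu)$ and condition number $O(\kappa)$ (via $\lambda_{\min}(\fl{V}_{\s{sub}}^{\s{T}}\fl{V}_{\s{sub}})\ge p/2$), applies Lemma~\ref{thm:randomsample2} to each block, and union-bounds over the blocks. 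The reason to prefer this over your padding shortcut is sample complexity: zero-padding inflates the incoherence to $\bar\mu=\mu d_1/d_2$, so Lemma~\ref{lem:source2} forces $p\gtrsim \kappa^4\mu^2 d_1 d_2^{-2}\log^3 d_1$, i.e.\ roughly $\kappa^4\mu^2 d_1^2 d_2^{-1}$ total observations and an extra $(d_1/d_2)^{1/2}$ in the error, whereas the block argument only needs $p\gtrsim \kappa^4\mu^2 d_2^{-1}\log^3 d_1$, i.e.\ $\widetilde{O}(d_1)$ observations, which is the requirement actually quoted in Lemma~\ref{lem:min_acc} and consumed by Corollary~\ref{coro:obs3} and the regret analysis. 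In short, your argument proves the displayed inequality but only under the padding-quality sampling requirement; if that were all one had, the per-phase sample count $m_\ell$ would be inflated by a factor of $d_1/d_2$ and the $\widetilde{O}(\sqrt{(\s{M}+\s{N})\s{T}})$ regret would not follow. (A minor further difference: the failure probability becomes $O(d_1d_2^{-4})+(d_1/d_2)e^{-\Omega(d_2)}$ rather than $O(d_1^{-3})$, which the paper absorbs via Remark~\ref{rmk:failure_prob}.)
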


\begin{proof}
Let us assume that the matrix $\fl{P}$ is tall i.e. $\s{N} \ge \s{M}$. Now, let us partition the set of rows into $\frac{\s{N}}{\s{M}}$ groups by assigning each group uniformly at random to each row. Notice that the expected number of rows in each group is $\s{M}$ and by using Chernoff bound, the number of rows in each group lies in the interval $[\frac{\s{M}}{2},\frac{3\s{M}}{2}]$ with probability at least $1-2\exp(-\s{M}/12)$. When $\s{N} \le \s{M}$, we partition the set of columns in a similar manner into $\s{M}/\s{N}$ groups so the number of columns in each group lies in the interval $[\frac{\s{N}}{2},\frac{3\s{N}}{2}]$ with probability at least $1-2\exp(-\s{N}/12)$.
 Hence, generally speaking, let us denote $d_1=\max(\s{N},\s{M})$ and $d_2=\min(\s{N},\s{M})$; we constructed $d_1/d_2$ sub-matrices of $\fl{P}$ denoted by $\fl{P}^{(1)},\fl{P}^{(2)},\dots,\fl{P}^{(d_1/d_2)}$.
 Let us analyze the guarantees on estimating  $\fl{P}^{(1)}$. The analysis for other matrices follow along similar lines. Note that $\fl{P}^{(1)}=\fl{U}\f{\Sigma}\fl{V}_{\s{sub}}^{\s{T}}$ where $\fl{V}_{\s{sub}}$ denotes the $\s{N}'\times r$ matrix where the rows in $\fl{V}_{\s{sub}}$ corresponds to the $\s{N}'$ rows in $\fl{V}$ assigned to $\fl{P}^{(1)}$. 
 
 First we will bound from below the minimum eigenvalue of the matrix $\fl{V}_{\s{sub}}^{\s{T}}\fl{V}_{\s{sub}}$. Note that every row of $\fl{V}$ is independently sampled  with probability $p \triangleq d_2/d_1$ for the matrix $\fl{P}^{(1)}$. Hence, we have 
 \begin{align*}
     \frac{1}{p}\fl{V}_{\s{sub}}^{\s{T}}\fl{V}_{\s{sub}} = \frac{1}{p}\sum_{i\in [d_1]} \delta_i\fl{V}_i\fl{V}_i^{\s{T}} = \sum_{i\in [d_1]} \fl{W}^{(i)}
 \end{align*}
 where $\delta_i$ denotes the indicator random variable which is true when $\fl{V}_i$ (the $i^{\s{th}}$ row of $\fl{V}$) is chosen for $\fl{P}^{(1)}$ and $\fl{W}^{(i)}=\frac{1}{p} \delta_i\fl{V}_i\fl{V}_i^{\s{T}}$. Notice that the random matrices $\fl{W}^{(i)}$ are independent with $\bb{E}\fl{W}^{(i)}=\fl{V}_i\fl{V}_i^{\s{T}}$. Hence we define $\fl{Z}^{(i)}=\fl{W}^{(i)}-\bb{E}\fl{W}^{(i)}$ satisfying $\bb{E}\fl{Z}^{(i)}=0$. Moreover, for all $i\in [d_1]$, we have $\|\fl{Z}^{(i)}\|_2\le \Big(1+\frac{1}{p}\Big)\|\fl{V}_i\fl{V}_i^{\s{T}}\|_2 \le \Big(1+\frac{1}{p}\Big) \max_i \|\fl{V}_i\|_2^2 \le \frac{2\mu r}{p d_1}$. Next, we can show the following:
 \begin{align*}
     &\|\sum_{i\in [d_1]}\fl{Z}^{(i)}(\fl{Z}^{(i)})^{\s{T}}\|_2 \le \|\Big(\frac{1}{p}-1\Big)\sum_{i\in [d_1]}(\fl{V}_i\fl{V}_i^{\s{T}})(\fl{V}_i\fl{V}_i^{\s{T}})^{\s{T}}\|_2 \le \|\Big(\frac{1}{p}-1\Big)\|\fl{V}_i\|^2\sum_{i\in [d_1]}(\fl{V}_i\fl{V}_i^{\s{T}})\|_2 \\
     &\le \frac{\mu r}{pd_1} \lambda_{\max}(\fl{V}^{\s{T}}\fl{V}).
 \end{align*}
 Similarly, we will also have 
 \begin{align*}
     \|\sum_{i\in [d_1]}(\fl{Z}^{(i)})^{\s{T}}\fl{Z}^{(i)}\|_2 \le \frac{\mu r}{pd_1} \lambda_{\max}(\fl{V}^{\s{T}}\fl{V}) 
     \le \frac{\mu r}{p d_1}
 \end{align*}
 where we used that $\fl{V}^{\s{T}}\fl{V}$ is orthogonal.
 Therefore, by using Bernstein's inequality for matrices, we have with probability at least $1-\delta$,
 \begin{align*}
     \|\frac{1}{p}\fl{V}_{\s{sub}}^{\s{T}}\fl{V}_{\s{sub}}-\fl{V}^{\s{T}}\fl{V}\| \le \frac{2\mu r}{3pd_1}\log \frac{2r}{\delta}+\sqrt{\frac{\mu r}{pd_1} \log \frac{2r}{\delta}}.
 \end{align*}
 Hence, by using Weyl's inequality, we will have with probability $1-\delta$
 \begin{align*}
     \lambda_{\min}(\fl{V}_{\s{sub}}^{\s{T}}\fl{V}_{\s{sub}}) \ge p-\frac{2\mu r}{3d_1}\log \frac{2r}{\delta}-\sqrt{\frac{p\mu r}{d_1} \log \frac{2r}{\delta}}.
 \end{align*}
 Hence, with probability at least $1-d_1^{-10}$, if $d_2=\Omega(\mu r \log rd_1)$, then $\lambda_{\min}(\fl{V}_{\s{sub}}^{\s{T}}\fl{V}_{\s{sub}}) \ge p/2$ implying that $\fl{V}_{\s{sub}}^{\s{T}}\fl{V}_{\s{sub}}$ is invertible. Also, under the same condition, note that we can show similarly that $\lambda_{\max}(\fl{V}_{\s{sub}}^{\s{T}}\fl{V}_{\s{sub}}) \le 3p/2$ implying that the condition number of each sub-matrix also stays $\kappa \cdot O(1)$ with high probability. 
 
 Clearly, $\fl{V}_{\s{sub}}$ is not orthogonal and therefore, we have
 \begin{align*}
     \fl{P}^{(1)}= \fl{U}\f{\Sigma}(\fl{V}_{\s{sub}}^{\s{T}}\fl{V}_{\s{sub}})^{1/2}(\fl{V}_{\s{sub}}^{\s{T}}\fl{V}_{\s{sub}})^{-1/2}\fl{V}_{\s{sub}}^{\s{T}} = \fl{U}\widehat{\fl{U}}\widehat{\f{\Sigma}}\widehat{\fl{V}}(\fl{V}_{\s{sub}}^{\s{T}}\fl{V}_{\s{sub}})^{-1/2}\fl{V}_{\s{sub}}^{\s{T}}
 \end{align*}
where $\widehat{\fl{U}}\widehat{\Sigma}\widehat{\fl{V}}$ is the SVD of the matrix $\f{\Sigma}(\fl{V}_{\s{sub}}^{\s{T}}\fl{V}_{\s{sub}})^{1/2}$.
Since $\widehat{\fl{U}}$ is orthogonal, $\widehat{\fl{U}}\fl{U}$ is orthogonal as well. Similarly, $(\widehat{\fl{V}}(\fl{V}_{\s{sub}}^{\s{T}}\fl{V}_{\s{sub}})^{-1/2}\fl{V}_{\s{sub}}^{\s{T}})^{\s{T}}$ is orthogonal as well whereas $\widehat{\f{\Sigma}}$ is diagonal. Hence $\fl{U}\widehat{\fl{U}}\widehat{\f{\Sigma}}\widehat{\fl{V}}(\fl{V}_{\s{sub}}^{\s{T}}\fl{V}_{\s{sub}})^{-1/2}\fl{V}_{\s{sub}}^{\s{T}}$ indeed corresponds to the SVD of $\fl{P}^{(1)}$ and we only need to argue about the incoherence of $\widehat{\fl{U}}\fl{U}$ and $\widehat{\fl{V}}(\fl{V}_{\s{sub}}^{\s{T}}\fl{V}_{\s{sub}})^{-1/2}\fl{V}_{\s{sub}}^{\s{T}}$. Notice that $\max_i \|(\fl{U}\widehat{\fl{U}})^{\s{T}}\fl{e}_i\|\le \|\fl{U}^{\s{T}}\fl{e}_i\|\le \sqrt{\frac{\mu r}{d_2}}$. On the other hand, 
\begin{align*}
    &\max_i \|\widehat{\fl{V}}(\fl{V}_{\s{sub}}^{\s{T}}\fl{V}_{\s{sub}})^{-1/2}\fl{V}_{\s{sub}}^{\s{T}}\fl{e}_i\| \le \max_i \|(\fl{V}_{\s{sub}}^{\s{T}}\fl{V}_{\s{sub}})^{-1/2}\fl{V}_{\s{sub}}^{\s{T}}\fl{e}_i\| \\
    &\le \frac{\|\fl{V}_{\s{sub}}\|_{2,\infty}}{\sqrt{\lambda_{\min}(\fl{V}_{\s{sub}}^{\s{T}}\fl{V}_{\s{sub}})}} \le \frac{\|\fl{V}\|_{2,\infty}}{\sqrt{p/2}} \le \sqrt{\frac{2\mu r}{d_2}}.
\end{align*}

Hence, with probability $1-2\exp(-d_2/12)-O(d_1^{-3})$, we can recover an estimate $\widehat{\fl{P}}^{(1)}$ and apply Lemma \ref{thm:randomsample2} to conclude that
\begin{align*}
   \| \fl{P}^{(1)} - \widehat{\fl{P}}^{(1)}\|_{\infty} = O\Big(\frac{\sigma }{\sqrt{d_2}}\Big(\frac{d_1}{d_2}\Big)^{1/2}\sqrt{\frac{\kappa^5\mu^3r^3 \log d_1}{p}}\Big) .
\end{align*}
as long as the conditions stated in the Lemma are satisfied.
Therefore, we can compute estimates of all the sub-matrices $\fl{P}^{(1)},\fl{P}^{(2)},\dots,\fl{P}^{(d_1/d_2)}$ that have similar guarantees as above with probability at least $1-2d_1d_2^{-1}\exp(-d_2/12)-O(d_1d_2^{-4})$. Hence, by combining all the estimates, we can obtain a final estimate $\widehat{\fl{P}}$ of the matrix $\fl{P}$ that satisfies
\begin{align*}
   \| \fl{P} - \widehat{\fl{P}}\|_{\infty} \le O\Big(\frac{\sigma }{\sqrt{d_2}}\Big(\frac{d_1}{d_2}\Big)^{1/2}\sqrt{\frac{\kappa^5\mu^3r^3 \log d_1}{p}}\Big) .
\end{align*}
\end{proof}

\begin{coro}\label{coro:obs3}
Consider an algorithm $\ca{A}$ that recommends random arms to users in each round (according to eq. \ref{eq:obs}) without recommending the same arm more than once to each user. Suppose the reward matrix $\fl{P}$ and parameters $p,\sigma$ satisfies the conditions stated in Lemma \ref{thm:randomsample3}.
In that case, using $m=O\Big(\s{M}p+\sqrt{\s{M}p\log \s{N}\delta^{-1}}\Big)$ recommendations per user, $\ca{A}$ is able to recover a matrix $\widehat{\fl{P}}$ such that for any $(i,j)\in [\s{N}]\times[\s{M}]$, we have equation \ref{eq:rectangular} with probability exceeding $ 1-\delta-O(d_2^{-3})$. 
%and achieves the same guarantees as in Lemma  \ref{thm:randomsample2} (with parameter $0\le p\le 1$) for $\fl{P}\in \bb{R}^{\s{N}\times \s{M}}$ using $m=O\Big(\s{M}p+\sqrt{\s{M}p\log \s{N}\delta^{-1}}\Big)$ rounds w.p. at least $1-\delta$.
\end{coro}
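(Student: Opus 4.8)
The plan is to realize the i.i.d.\ Bernoulli sampling model of Lemma~\ref{thm:randomsample3} using the bandit recommendations of $\ca{A}$, and then invoke that lemma verbatim. Concretely, I would first fix a sampling pattern $\Omega \subseteq [\s{N}]\times[\s{M}]$ by setting $\delta_{ij}=1$ independently with probability $p$, exactly as in the hypothesis of Lemma~\ref{thm:randomsample3}. The algorithm $\ca{A}$ realizes this pattern by recommending, for each user $i$, the arms $\{j : (i,j)\in\Omega\}$ and pulling each exactly once; since $\delta_{ij}\in\{0,1\}$, no arm is recommended more than once to any user, so the constraint on $\ca{A}$ is automatically respected. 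Each such pull yields one noisy observation $\fl{R}^{(t)}=\fl{P}_{ij}+\eta^{(t)}$, which is precisely the single-sample-per-entry input required by the lemma.

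The key quantity to control is the number of recommendations per user needed to realize $\Omega$, namely the row-sum $X_i \triangleq \sum_{j\in[\s{M}]}\delta_{ij}\sim \text{Bin}(\s{M},p)$. I would apply a multiplicative Chernoff/Bernstein tail bound: for $X_i\sim\text{Bin}(\s{M},p)$ with mean $\s{M}p$, one has $\bb{P}(X_i \ge \s{M}p + t)\le \exp\!\big(-t^2/(2\s{M}p + 2t/3)\big)$. Choosing $t = \Theta\big(\sqrt{\s{M}p\log(\s{N}\delta^{-1})} + \log(\s{N}\delta^{-1})\big)$ makes each tail at most $\delta/\s{N}$, so that $X_i \le \s{M}p + t = O(\s{M}p + \sqrt{\s{M}p\log \s{N}\delta^{-1}})=m$ (in the relevant regime $\s{M}p \gtrsim \log(\s{N}\delta^{-1})$ the square-root term dominates). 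A union bound over the $\s{N}$ users then shows that, with probability at least $1-\delta$, every user requires at most $m$ distinct recommendations to realize $\Omega$; hence the stated budget of $m$ recommendations per user suffices.

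Conditioned on this event, the data collected by $\ca{A}$ is exactly one noisy observation per sampled entry $(i,j)\in\Omega$, matching the setting of Lemma~\ref{thm:randomsample3}. Since $\fl{P}$ and the parameters $p,\sigma$ satisfy that lemma's hypotheses by assumption, I would invoke it to recover $\widehat{\fl{P}}$ obeying the entrywise guarantee~\eqref{eq:rectangular} with probability at least $1-O(d_1^{-3})$; because $d_1\ge d_2$ we have $d_1^{-3}\le d_2^{-3}$, so this failure probability is absorbed into $O(d_2^{-3})$. A final union bound over the budget-sufficiency event (failure $\le \delta$) and the matrix-completion event (failure $\le O(d_2^{-3})$) yields the claimed success probability $1-\delta-O(d_2^{-3})$.

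The main obstacle is conceptual rather than computational: one must justify that an adaptively arriving bandit process subject to the per-user ``no-repeat'' constraint can faithfully emulate the unconstrained i.i.d.\ Bernoulli sampling demanded by Lemma~\ref{thm:randomsample3}, and that the fixed per-user budget $m$ accommodates the random binomial row-sums with high probability. Once the reduction to Bernoulli sampling is established, the entrywise error bound transfers immediately from Lemma~\ref{thm:randomsample3}; the only genuine work is the binomial concentration argument above, and a careful bookkeeping of the two failure events so that their probabilities combine into $\delta + O(d_2^{-3})$.
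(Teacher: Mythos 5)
Your proposal is correct and follows essentially the same route as the paper's proof: sample $\Omega$ via independent Bernoulli$(p)$ entries, bound the binomial row-sums by a Chernoff bound plus a union bound over the $\s{N}$ users to certify the per-user budget $m$ with failure probability $\delta$, then invoke Lemma~\ref{thm:randomsample3} and union-bound the two failure events. The only cosmetic difference is that the paper spells out the online realization (the algorithm waits for a sampled user and serves an unobserved index of $\Omega$, or a fallback arm outside $\Omega$ otherwise), which your argument implicitly assumes but does not affect the per-user recommendation count.
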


\begin{proof}[Proof of Corollary \ref{coro:obs3}]
Recall that $d_1=\max(\s{N},\s{M})$ and $d_2=\min(\s{N},\s{M})$.
Suppose we sample the elements in $[\s{N}]\times [\s{M}]$ each with some parameter $p=\Omega(\mu^2d_2^{-1}\log ^3 d_2)$ independently in order to sample a set $\Omega\in [\s{N}]\times [\s{M}]$ of indices. 
Let us define the event $\ca{F}_1$ which is true when the maximum number of indices observed in some row is more than $m=O\Big(\s{M}p+\sqrt{\s{M}p\log \s{N}\delta^{-1}}\Big)$ i.e. $\max_{i\in [\s{N}]}\left|(i,j)\in \Omega \mid j \in [\s{M}] \right| \ge m$. We will bound the probability of the event $\ca{F}_1$ from above by using Chernoff bound. Let us denote the number of arms observed for user $i\in [\s{N}]$ to be $Y_i$ i.e. $Y_i= \left|(i,j)\in \Omega \mid j \in [\s{M}]\right|$. Algorithm $\ca{A}$ can then obtain the noisy entries of $\fl{P}$ corresponding to the set $\Omega$ by doing the following: in each round, conditioned on the event that the user $i\in [\s{N}]$ is sampled, if there is an unobserved tuple of indices $(i,j)\in \Omega$, then $\ca{A}$ will recommend $j$ to user $i$ and obtain an noisy observation $\fl{P}_{ij}+\fl{E}_{ij}$; on the other hand, if there no unobserved entry, then $\ca{A}$ will simply recommend a random arm $j$ such that $(i,j) \not \in \Omega$. Notice that each of the random variables $Y_1,Y_2,\dots,Y_{\s{N}} \sim \s{Binomial}(\s{M},p)$ and are independent. By using Chernoff bound, we have that for each $i\in [\s{N}]$,
\begin{align*}
    &\Pr\Big(\cup_{i\in [\s{N}]}\left|Y_i-\s{M}p\right|\ge \s{M}p\epsilon\Big) \le 2\s{N}\exp\Big(-\frac{\epsilon^2\s{M}p}{3}\Big)\Big) \\
    &\implies Y_i \le \s{M}p+O\Big(\sqrt{\s{M}p\log \s{N}\delta^{-1}}\Big) \text{ for all }i\in[\s{N}] 
\end{align*}
with probability $1-\delta$ implying that $\Pr(\ca{F}_1)\le \delta$. Let $\ca{F}_2$ be the event when the recovered matrix $\widehat{\fl{P}}$ does not satisfy the guarantee on $\|\widehat{\fl{P}}-\fl{P}\|_{\infty}$ as stated in Lemma \ref{thm:randomsample3} equation \ref{eq:rectangular} given a set of observed indices $\Omega$ sampled according to the aforementioned process. From Lemma \ref{thm:randomsample3}, we know that $\Pr(\ca{F}_2)=O(d_2^{-3})$ where $d_1=\max(\s{N},\s{M})$. Hence we can conclude $\Pr(\ca{F}_1 \cup \ca{F}_2) \le  \Pr(\ca{F}_1)+\Pr(\ca{F}_2) = \delta+O(d_2^{-3})$. This completes the proof of the corollary. 
\end{proof}

\begin{rmk}\label{rmk:failure_prob}
Note that the failure probability in Corollary \ref{coro:obs3}, Lemmas \ref{thm:randomsample3}, \ref{thm:randomsample2} and \ref{lem:source2}  is $O(d_2^{-3})$. However the constant $3$ can be replaced by any arbitrary constant $c$ for example $c=100$ without any change in the guarantees on $\|\widehat{\fl{P}}-\fl{P}\|_{\infty}$. Hence, the guarantees presented in Lemma \ref{thm:randomsample3} and Corollary \ref{coro:obs3} hold with probability at least $1-O(d_2^{-c})$ for any arbitrary constant $c$.
\end{rmk}

\begin{lemma}[Generalized Restatement of Lemma \ref{lem:min_acc}]\label{lem:generalized_restatement}
Let $\fl{P}=\fl{\bar{U}}\f{\Sigma}\fl{\bar{V}}^{\s{T}}\in \bb{R}^{\s{N} \times \s{M}}$ such that $\fl{\bar{U}}\in \bb{R}^{\s{N}\times r},\fl{\bar{V}}\in \bb{R}^{\s{M} \times r}$ and $\f{\Sigma} \triangleq \s{diag}(\lambda_1,\lambda_2,\dots,\lambda_r) \in \bb{R}^{r \times r}$ with $\fl{\bar{U}}^{\s{T}}\fl{\bar{U}}=\fl{\bar{V}}^{\s{T}}\fl{\bar{V}}=\fl{I}$ and $\|\fl{\bar{U}}\|_{2,\infty}\le \sqrt{\mu r/\s{N}}, \|\fl{\bar{V}}\|_{2,\infty}\le \sqrt{\mu r/\s{M}}$. Let $d_1=\max(\s{N},\s{M})$ and $d_2=\min(\s{N},\s{M})$ such that 
 $1 \ge p \ge C  \kappa^4\mu^2 d_1d_2^{-2} \log^3 d_1$ for some sufficiently large constant $C>0$. 
In that case, for any positive integer $s>0$, there exists an algorithm $\ca{A}$ that uses $m=O\Big(s\log (\s{MN}\delta^{-1})(\s{M}p+\sqrt{\s{M}p\log \s{N}\delta^{-1}})\Big)$ recommendations per user such that $\ca{A}$ is able to recover a matrix $\widehat{\fl{P}}$ satisfying 
\begin{align}\label{eq:guar_gen}
   \| \fl{P} - \widehat{\fl{P}}\|_{\infty} = O\Big(\frac{\sigma }{\sqrt{d_2}}\Big(\frac{d_1}{d_2}\Big)^{1/2}\sqrt{\frac{\kappa^5\mu^3r^3 \log d_1}{sp}}\Big) .
\end{align}
with probability exceeding $ 1-O(\delta\log (\s{MN}\delta^{-1}))$ where $\frac{\sigma}{\sqrt{s}}=O\Big(\sqrt{\frac{p}{d\kappa^4\mu r\log d}}\min_i\lambda_i\Big)$.
\end{lemma}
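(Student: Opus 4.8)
The plan is to reduce the claim to the single-observation recovery guarantee already established in Corollary~\ref{coro:obs3} (which rests on Lemma~\ref{thm:randomsample3}), by introducing two standard amplification devices that are exactly the parameter $s$ and the hidden factor $f=O(\log(\s{MN}\delta^{-1}))$ appearing in the recommendation budget $m$. The parameter $s$ will implement \emph{variance reduction}: instead of observing each sampled entry once, the algorithm $\ca{A}$ observes it $s$ times and averages. Since the per-round noise $\{\eta^{(t)}\}$ is i.i.d.\ zero-mean sub-Gaussian with variance proxy $\sigma^2$, the average of $s$ such observations is again zero-mean sub-Gaussian with variance proxy $\sigma^2/s$; hence the effective noise seen by the matrix-completion step is $\sigma/\sqrt{s}$. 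The factor $f$ will implement \emph{probability amplification} via an entry-wise median over $f$ independent estimates, exactly as in Algorithm~\ref{algo:estimate}.

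First I would specify $\ca{A}$ concretely: sample a mask $\Omega\subseteq[\s{N}]\times[\s{M}]$ with parameter $p$ as in Corollary~\ref{coro:obs3}; make $s$ passes over $\Omega$, each pass recommending every masked entry of the arriving user once (waiting for the relevant user, as in the \textsc{DataCollectionSubRoutine} of Algorithm~\ref{algo:data_collection_matrix_estimation}); set $\fl{Z}_{ij}$ to the mean of the $s$ rewards collected for $(i,j)\in\Omega$; and complete the matrix via the nuclear-norm program. Because $\fl{Z}_{ij}=\fl{P}_{ij}+\bar\eta_{ij}$ with $\bar\eta_{ij}$ zero-mean sub-Gaussian of variance proxy $\sigma^2/s$, the hypothesis $\tfrac{\sigma}{\sqrt{s}}=O\big(\sqrt{\tfrac{p}{d\kappa^4\mu r\log d}}\min_i\lambda_i\big)$ is precisely the noise condition of Lemma~\ref{thm:randomsample3} with $\sigma$ replaced by $\sigma/\sqrt{s}$. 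Applying that lemma (equivalently Corollary~\ref{coro:obs3}) with effective noise $\sigma/\sqrt{s}$ yields, for a single run, an estimate $\widehat{\fl{P}}^{(k)}$ with
\begin{align*}
\lr{\widehat{\fl{P}}^{(k)}-\fl{P}}_{\infty}=O\Big(\frac{\sigma/\sqrt{s}}{\sqrt{d_2}}\Big(\frac{d_1}{d_2}\Big)^{1/2}\sqrt{\frac{\kappa^5\mu^3r^3\log d_1}{p}}\Big)=O\Big(\frac{\sigma}{\sqrt{d_2}}\Big(\frac{d_1}{d_2}\Big)^{1/2}\sqrt{\frac{\kappa^5\mu^3r^3\log d_1}{sp}}\Big),
\end{align*}
which is exactly the target bound eq.~\eqref{eq:guar_gen}, and by Remark~\ref{rmk:failure_prob} the matrix-completion failure probability can be driven below, say, $1/4$.

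Next I would amplify the success probability and account for the recommendation budget simultaneously. Running the above scheme $f=O(\log(\s{MN}\delta^{-1}))$ times with fresh randomness gives independent estimates $\widehat{\fl{P}}^{(1)},\dots,\widehat{\fl{P}}^{(f)}$; for any fixed entry $(i,j)$ each satisfies the displayed bound with probability at least $3/4$, so a Chernoff bound shows the entry-wise median violates it only with probability $e^{-\Omega(f)}$, and a union bound over all $\s{MN}$ entries makes the final median estimate $\widehat{\fl{P}}$ satisfy eq.~\eqref{eq:guar_gen} with failure probability $\le\delta$. For the budget, within one run the Chernoff argument of Corollary~\ref{coro:obs3} bounds the number of \emph{distinct} masked entries of any user by $m_0=O(\s{M}p+\sqrt{\s{M}p\log\s{N}\delta^{-1}})$ with probability $\ge 1-\delta$; since each distinct entry is recommended $s$ times (once per pass), each user receives at most $s\,m_0$ recommendations per run, and summing over the $f$ runs gives the claimed $m=O\big(s\log(\s{MN}\delta^{-1})(\s{M}p+\sqrt{\s{M}p\log\s{N}\delta^{-1}})\big)$. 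A union bound of the per-run count events over the $f$ runs contributes the $O(\delta\log(\s{MN}\delta^{-1}))$ term to the overall failure probability.

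The main obstacle is the bookkeeping that glues these pieces together cleanly: one must confirm that averaging preserves the sub-Gaussian tail with the reduced variance proxy $\sigma^2/s$ (so that Lemma~\ref{thm:randomsample3}, whose noise model is sub-Gaussian, applies verbatim to $\bar\eta$), that the repeated-pass sampling genuinely realizes the Bernoulli-mask model underlying Lemma~\ref{thm:randomsample3} (each entry of a given user is observed the same number of times and the coupon-collector waiting step terminates), and that the two independent failure sources---the median event (yielding the $\infty$-norm guarantee, absorbed into $\delta$) and the $f$ recommendation-count events (yielding $O(\delta\log(\s{MN}\delta^{-1}))$)---combine to the stated probability without an extra factor of $s$. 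The conceptual heavy lifting (the entry-wise guarantee for rectangular incoherent matrices, via squaring/padding in Lemma~\ref{thm:randomsample2} and row-partitioning in Lemma~\ref{thm:randomsample3}) is already in hand, so this final step is essentially a careful amplification-and-counting argument.
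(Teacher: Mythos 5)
Your proposal is correct and follows essentially the same route as the paper's own proof: $s$-fold repetition and averaging of each masked entry to reduce the effective noise proxy to $\sigma^2/s$ (so that Lemma~\ref{thm:randomsample3} and Corollary~\ref{coro:obs3} apply with $\sigma/\sqrt{s}$ in place of $\sigma$), followed by an entry-wise median over $f=O(\log(\s{MN}\delta^{-1}))$ independent estimates with a Chernoff-plus-union-bound argument, and the same per-user budget accounting $m = O(s f(\s{M}p+\sqrt{\s{M}p\log\s{N}\delta^{-1}}))$. The only differences are cosmetic (e.g., your single-run failure threshold of $1/4$ versus the paper's $1/10$).
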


\begin{proof}
Consider the proof of Lemma \ref{thm:randomsample3} where we sampled a set of indices $\Omega \in [\s{N}]\times [\s{M}]$ and observed $\fl{P}_{ij}+\fl{E}_{ij}$ corresponding to the indices $(i,j)\in\Omega$ where $\fl{E}_{ij}$'s are independent zero mean sub-gaussian random variables and have a variance proxy of $\sigma^2$ (along with other entries of the reward matrix $\fl{P}$). From Corollary \ref{coro:obs3}, we know that $m=O(\s{M}p+\sqrt{\s{M}p\log \s{N}\delta^{-1}})$ recommendations per user are sufficient to obtain the guarantees in Lemma \ref{thm:randomsample3} equation \ref{eq:rectangular} with probability $1-\delta$ if in each round $t$, we observe the reward corresponding to one arm for the sampled user $u(t)$ (see eq. \ref{eq:obs}). Thus, with $m$ recommendations per user, we observe the noisy entries of $\fl{P}$ corresponding to the indices in a superset $\Omega' \supseteq \Omega$.  

In our problem, an algorithm has the flexibility of recommending an arm more than once to the same user. Therefore, consider an algorithm $\ca{A}$ that uses $ms$ recommendations per user to obtain noisy observations corresponding to each index in $\Omega'$ $s$ times and uses the mean of observations
corresponding to each index. In that case, the algorithm $\ca{A}$ observes $\fl{P}_{ij}+\widetilde{\fl{E}}_{ij}$ for all $(i,j)\in \Omega' \supseteq \Omega$ where $\widetilde{\fl{E}}_{ij}$ are independent zero mean sub-gaussian random variables with zero mean and variance proxy $\sigma^2/s$. Hence the effective variance $\sigma^2/s$ should satisfy the upper bound on the noise variance implying that $\frac{\sigma}{\sqrt{s}}=O\Big(\sqrt{\frac{pd_2}{\mu^3\log d_2}}\|\fl{P}\|_{\infty}\Big)$.

At this point, algorithm $\ca{A}$ can use Lemma \ref{thm:randomsample3} to recover an estimate $\widehat{\fl{P}}$ of the matrix $\fl{P}$ satisfying the following: for any $(i,j)\in [\s{N}]\times[\s{M}]$, we have from equation \ref{eq:rectangular}
\begin{align}\label{eq:normalized}
   |\widehat{\fl{P}}_{ij} - \fl{P}_{ij}| \le O\Big(\frac{\sigma }{\sqrt{d_2}}\Big(\frac{d_1}{d_2}\Big)^{1/2}\sqrt{\frac{\kappa^5\mu^3r^3 \log d_1}{sp}}\Big).
\end{align}
with probability exceeding $1-\delta-O(d_2^{-c})$ for any arbitrary constant $c>0$. We can set $c$ such that the failure probability $\delta+O(d_2^{-c})<1/10$.

In order to boost the probability of success, the algorithm $\ca{A}$ can repeat the entire process $f=\log (\s{MN}\delta^{-1})$ times in order to obtain $f$ estimates $\widehat{\fl{P}}^{(1)},\widehat{\fl{P}}^{(2)},\dots,\widehat{\fl{P}}^{(f)}$. The total number of rounds needed to compute these estimates if we use the observation model in equation \ref{eq:obs} is at most $mf$ with probability at least $1-\delta f$. Furthermore, these estimates are independent and therefore, we can compute a final estimate $\widehat{\fl{P}}$ by computing the entry-wise median of the matrix estimates $\widehat{\fl{P}}^{(1)},\widehat{\fl{P}}^{(2)},\dots,\widehat{\fl{P}}^{(f)}$ i.e. for all $(i,j)\in [\s{N}]\times \s{M}$, we compute $\widehat{\fl{P}}_{ij}=\s{median}(\widehat{\fl{P}}^{(1)}_{ij},\dots,\widehat{\fl{P}}^{(f)}_{ij})$. Since, with probability $9/10$, each 
estimates satisfy the guarantee in \ref{eq:normalized}, we can again apply Chernoff bound that $\fl{P}$ satisfies the guarantee in \ref{eq:normalized} with probability $1-\delta/\s{MN}$. Now taking a union bound over all indices, we must have that 
\begin{align}
    \|\widehat{\fl{P}}-\fl{P}\|_{\infty} \le O\Big(\frac{\sigma }{\sqrt{d_2}}\Big(\frac{d_1}{d_2}\Big)^{1/2}\sqrt{\frac{\kappa^5\mu^3r^3 \log d_1}{sp}}\Big)
\end{align}
with probability at least $1-\delta$. 
Therefore the total failure probability is $1-O(\delta+\delta\log (\s{MN}\delta^{-1}))$. 

\end{proof}
We obtain the statement of Lemma \ref{lem:min_acc} by substituting $r=O(1),\kappa=O(1)$. Moreover, by another simple application of the Chernoff bound again, we can ensure with probability $1-\widetilde{O}(\delta)$ that the total number of recommendations (so that the number of recommendations per user is $m$ as described in Lemma \ref{lem:generalized_restatement}) is $\widetilde{O}(sp\s{MN})$ for which the guarantees in eq. \ref{eq:guar_gen} is satisfied. Now we characterize $m_{\ell}$ in this general setting:

\begin{lemma}\label{lem:interesting_gen}
Let us fix $\Delta_{\ell+1}>0$ and condition on the event $\ca{E}^{(\ell)}$. Suppose Assumptions \ref{assum:matrix} i
satisfied. In that case, in phase $\ell$, by using
 $$m_{\ell}=\widetilde{O}\Big(\frac{\sigma^2 \s{C}^2 (\s{C} \bigvee \mu\alpha^{-1})^3r^3 \kappa^5 \tau^{9/2} }{\Delta_{\ell+1}^2}\Big(\s{N}\bigvee\s{AC\tau}\Big)\Big)\Big)$$ rounds, we can compute an estimate $\widetilde{\fl{P}}^{(\ell)}\in \bb{R}^{\s{N}\times \s{M}}$ such that with probability $1-\delta$, we have 
 \begin{align}
 \lr{\widetilde{\fl{P}}^{(\ell)}_{\ca{M}^{(\ell,i)},\ca{N}^{(\ell,i)}}-\fl{P}_{\ca{M}^{(\ell,i)},\ca{N}^{(\ell,i)}}}_{\infty} \le \Delta_{\ell+1} \text{ for all }i\in [a_{\ell}] \text{ satisfying } \left|\ca{N}^{(i,\ell)}\right| \ge \gamma\s{C}.  
 \end{align}
 \end{lemma}
 
 \begin{proof}[Proof of Lemma \ref{lem:interesting_gen}]
We are going to use Lemma \ref{lem:min_acc} in order to compute an estimate $\widetilde{\fl{P}}^{(\ell)}_{\ca{M}^{(\ell,i)},\ca{N}^{(\ell,i)}}$ of the sub-matrix $\fl{P}_{\ca{M}^{(\ell,i)},\ca{N}^{(\ell,i)}}$ satisfying $\lr{\widetilde{\fl{P}}^{(\ell)}_{\ca{M}^{(\ell,i)},\ca{N}^{(\ell,i)}}-\fl{P}_{\ca{M}^{(\ell,i)},\ca{N}^{(\ell,i)}}}_{\infty} \le \Delta_{\ell+1}$. From Lemma \ref{lem:min_acc}, we know that by using $\widetilde{O}(sp\s{MN})$ rounds restricted to users in $\ca{M}^{(\ell,i)}$ such that with probability at least $1-\delta$,
\begin{align*}
    \lr{\widetilde{\fl{P}}^{(\ell)}_{\ca{M}^{(\ell,i)},\ca{N}^{(\ell,i)}}-\fl{P}_{\ca{M}^{(\ell,i)},\ca{N}^{(\ell,i)}}}_{\infty}  = O\Big(\frac{\sigma }{\sqrt{d_2}}\Big(\frac{d_1}{d_2}\Big)^{1/2}\sqrt{\frac{\widetilde{\kappa}^5\widetilde{\mu}^3r^3 \log d_1}{sp}}\Big) .
\end{align*}
where $d_2=\min(|\ca{M}^{(\ell,i)}|,|\ca{N}^{(\ell,i)}|)$ and $\widetilde{\mu},\widetilde{\kappa}$ is the incoherence factor and condition number of the matrix $\fl{P}_{\ca{M}^{(\ell,i)},\ca{N}^{(\ell,i)}}$. In order for the right hand side to be less than $\Delta_{\ell+1}$, we can set $sp=O\Big(\frac{\sigma^2 \widetilde{\kappa}^5\widetilde{\mu}^3r^3 \log d_1}{\Delta_{\ell+1}^2d_2}\Big)$. Since the event $\ca{E}^{(\ell)}$ is true, we must have that $\left|\ca{M}^{(\ell,i)}\right|\ge \s{N}/(\s{C}\tau)$; hence $d_2 \ge \min\Big(\frac{\s{N}}{\s{C}\tau},\left|\ca{N}^{(\ell,i)}\right|\Big)$. Therefore, we must have that 
\begin{align*}
    m_{\ell}=O\Big(\frac{\sigma^2 \s{C}^2\tau^2 \widetilde{\kappa}^5\widetilde{\mu}^3r^3 \log \s{M}}{\Delta_{\ell+1}^2}\max\Big(\s{N},\s{MC}\tau\Big)\log^2 (\s{MNC}\delta^{-1})\Big)\Big)
\end{align*}
where we take a union bound over all sets comprising the partition of the users $[\s{N}]$ (at most $\s{C}$ of them). Finally, from Lemma \ref{lem:incoherence}, we know that $\widetilde{\mu}$ can be bounded from above by $\max(\s{C},2\mu /\alpha)$; from lemma \ref{lem:condition_num}, we can say that $\widetilde{\kappa}$ can be bounded from above by $\kappa\sqrt{\tau}$ which we can use to say that 
\begin{align*}
    m_{\ell}=\widetilde{O}\Big(\frac{\sigma^2 \s{C}^2 (\s{C} \bigvee \mu\alpha^{-1})^3r^3 \kappa^5 \tau^{9/2} }{\Delta_{\ell+1}^2}\Big(\s{N}\bigvee\s{AC\tau}\Big)\Big)\Big)
\end{align*}
to complete the proof of the lemma.
\end{proof}

Plugging this $m_{\ell}$ in the proof of Theorem \ref{thm:main_LBM}, we can show that the Algorithm \ref{algo:phased_elim} guarantees the following regret
\begin{align}\label{eq:regret_gen}
    \s{Reg}(\s{T})  &= \widetilde{O}(\sqrt{\s{TV}}) +\sqrt{\s{BT}}\sigma)
\end{align}
where $\s{V}=\widetilde{O}\Big(\sigma^2 \s{C}^2 (\s{C}\bigvee\mu)^3 r^3 \kappa^5 \tau^{9/2} \Big(\s{N}+\s{M}\Big)\Big)$. This result follows by proceeding through the exact same steps as in the proof of Theorem \ref{thm:main_LBM}.

\end{document}